\theoremstyle{definition}  
\newtheorem{lemma}{Lemma}
\newtheorem{corollary}{Corollary}
\newtheorem{proposition}{Proposition}
\newtheorem{assumption}{Assumption}
\theoremstyle{plain}
\newtheorem{remark}{Remark}
\newtheorem{example}{Example}
\newtheorem{theorem}{Theorem}
\newtheorem{definition}{Definition}
\newcommand{\lsz}{\ls_{01}}
\newcommand{\lsh}{\ls_{\mathrm{hinge}}}
\newcommand{\lsl}{\ls_{\mathrm{lin}}}
\newcommand{\lsa}{\ls_{\mathrm{abs}}}
\newcommand{\lsb}{\boldsymbol{\ls}}
\newcommand{\burk}{\mathbf{U}}
\newcommand{\burkp}{\mathbf{U}_{p}}
\newcommand{\burkbp}{\burk_{p}^{\Bspace}}
\newcommand{\Dconst}{\mathbf{D}(\F, n)}
\newcommand{\Dconstt}{\mathbf{D}}
\newcommand{\Cconstt}{\mathbf{C}}
\newcommand{\Tr}{\mathbf{Tr}}
\newcommand{\wt}[1]{\widetilde{#1}}
\newcommand{\lp}{\ls'}
\newcommand{\UMD}{\textsc{UMD}}
\newcommand{\UMDp}{\textsc{UMD}_{p}}
\newcommand{\xr}[1][n]{x_{1:#1}}
\newcommand{\lpr}[1][n]{\ls'_{1:#1}}
\newcommand{\er}[1][n]{\eps_{1:#1}}
\newcommand{\yr}[1][n]{y_{1:#1}}
\newcommand{\B}{\mathcal{B}}
\renewcommand{\H}{\mathcal{H}}
\newcommand{\M}{\mathcal{M}}
\newcommand{\Reg}{\mathrm{\mathbf{Reg}_{n}}}
\newcommand{\Rel}{\mathrm{\mathbf{Rel}}}
\newcommand{\Rad}{\mathrm{\mathbf{Rad}}}
\newcommand{\RadH}{\widehat{\Rad}_{\F}}
\newcommand{\zigzag}{\textsc{ZigZag}}
\newcommand{\Bspace}{\mathfrak{B}}
\newcommand{\tens}{\otimes{}}
\newcommand{\trn}{\intercal}
\newcommand{\yh}{\hat{y}}
\newcommand{\x}{\mathbf{x}}
\newcommand{\rank}{\mathrm{rank}}
\renewcommand{\trn}{\dagger}
\title{ZigZag: A new approach to adaptive online learning}
\author{
Dylan J. Foster \thanks{Department of Computer Science, Cornell University}
\and
Alexander Rakhlin \thanks{Department of Statistics, University of Pennsylvania}
\and
Karthik Sridharan \thanks{Department of Computer Science, Cornell University}
}
\date{}
\begin{document}

\maketitle

\begin{abstract}
	We develop a novel family of algorithms for the online learning setting with regret against any data sequence bounded by the \emph{empirical Rademacher complexity} of that sequence. To develop a general theory of when this type of adaptive regret bound is achievable we establish a connection to the theory of \emph{decoupling inequalities} for martingales in Banach spaces. When the hypothesis class is a set of linear functions bounded in some norm, such a regret bound is achievable if and only if the norm satisfies certain decoupling inequalities for martingales. Donald Burkholder's celebrated \emph{geometric characterization} of decoupling inequalities \citep{burkholder1984boundary} states that such an inequality holds if and only if there exists a special function called a \emph{Burkholder function}  satisfying certain restricted concavity properties. Our online learning algorithms are efficient in terms of queries to this function. 

We realize our general theory by giving novel efficient algorithms for classes including $\ls_p$ norms, Schatten $p$-norms, group norms, 
and reproducing kernel Hilbert spaces. The empirical Rademacher complexity regret bound implies --- when used in the i.i.d. setting --- a \emph{data-dependent} complexity bound for excess risk after online-to-batch conversion. 
To showcase the power of the empirical Rademacher complexity regret bound, we derive improved rates for a supervised learning generalization of the \emph{online learning with low rank experts} task and for the \emph{online matrix prediction} task.

In addition to obtaining tight data-dependent regret bounds, our algorithms enjoy improved efficiency over previous techniques based on Rademacher complexity, automatically work in the infinite horizon setting, and are scale-free.
To obtain such adaptive methods, we introduce novel machinery, and the resulting algorithms are not based on the standard tools of online convex optimization.
\end{abstract}

\newpage
\tableofcontents
\newpage

\section{Introduction}

In the \emph{online supervised learning} task, a learner receives data $(x_1, y_1),\ldots,(x_n, y_n)$ in a stream. At time $t$ they receive an instance $x_t$ and must predict $y_t$ given the instance and the previous observations $(x_1, y_1,)\ldots,(x_{t-1}, y_{t-1})$. The learner's prediction, denoted $\yh_t$, is evaluated against $y_t$ according to a loss function $\ls(\yh_t, y_t)$; for classification this is typically a convex surrogate for the zero-one loss $\lsz(\yh, y)=\ind\crl*{\yh\neq{}y}$ such as the hinge loss $\lsh(\yh, y)=\max\crl*{0, 1-\yh\cdot{}y}$. The learner's overall performance is measured in terms of their \emph{regret} against a benchmark function class $\F$:
\begin{equation}
\label{eq:regret}
\sum_{t=1}^{n}\ls(\yh_t, y_t) - \inf_{f\in\F}\sum_{t=1}^{n}\ls(f(x_t), y_t).
\end{equation}

In the \emph{statistical setting}, each pair $(x_t,y_t)$ is drawn i.i.d. from some joint distribution $\D$. In this case, a bound on \pref{eq:regret} is appealing because it immediately translates to an excess loss bound for the batch statistical learning  setting after online-to-batch conversion. At the other extreme is the \emph{fully adversarial} setting, where no generating assumptions on the data are made. We would like to develop methods that enjoy optimal guarantees in both worlds.

Our goal is to come up with prediction strategies that adapt to the ``difficulty'' of the sequence. In the statistical setting, optimal excess risk behavior has long been understood through empirical process theory and, in particular, Rademacher averages \citep{BarMed03}. Empirical Rademacher averages were shown to be an attractive data-dependent measure of complexity that can be used for model selection and for estimating the excess risk of empirical minimizers. The question considered in this paper is whether there exist prediction strategies such that empirical Rademacher averages control the  per-sequence regret \eqref{eq:regret}. As we show below, the empirical Rademacher average is the best sequence-based measure of complexity one can hope for.

Let us formally define the \emph{empirical Rademacher complexity} of the class $\F$:
\begin{equation}
\label{eq:emp_Rademacher}
\RadH(\xr[n]) = \En_{\eps}\sup_{f\in\F}\sum_{t=1}^{n}\eps_{t}f(x_t),
\end{equation}
where the Rademacher sequence $\eps\in\pmo^{n}$ is drawn uniformly at random and $x_{1:n}=\{x_1,\ldots,x_n\}$.

The questions studied in this paper are:
\textbf{
\begin{itemize}
\item When does there exist a strategy $(\yh_t)$ such that
\begin{equation}
\label{eq:regret_rad}
\sum_{t=1}^{n}\ls(\yh_t, y_t) - \inf_{f\in\F}\sum_{t=1}^{n}\ls(f(x_t), y_t) \leq{} \Dconst\cdot{}\RadH(\xr[n])
\end{equation}
for \emph{every sequence} $\xr[n],\yr[n]$? 
\item What is the best constant $\Dconst$? 
\item When can the strategy $(\yh_t)$ be efficiently computed?
\end{itemize}
}

We provide a characterization of when the bound \pref{eq:regret_rad} is possible, and, furthermore, develop efficient algorithms based on a new set of techniques. The algorithms are parametrized by a certain special function that has been studied in probability theory and harmonic analysis for the last three decades. Interestingly, the function is neither convex nor concave (see \pref{fig:burkholder}), yet it satisfies a property called ``zig-zag concavity''. The main message of this paper is that this special function can be used for algorithmic purposes and to answer the above questions.

We start our analysis by showing that $\RadH$ is an ``optimal'' data-dependent regret bound in the following sense:
\begin{lemma}[Sequence Optimality]
\label{lem:optimal_b}
Let $\ls$ be the absolute, hinge, or linear loss and let $\F$ be any class of functions with value bounded by $1$.
Let $\B(\xr[n])$ be a data-dependent regret bound for which there exists a strategy $(\yh_t)$ guaranteeing
\begin{equation}
\sum_{t=1}^{n}\ls(\yh_t, y_t) - \inf_{f\in\F}\sum_{t=1}^{n}\ls(f(x_t), y_t) \leq{} \B(\xr[n]).
\end{equation}
Then
\[
\RadH(\xr[n]) \leq{} \B(x_{1:n})\quad\forall{}\xr[n].
\]
The same result holds for the zero-one loss if we restrict to $\F$ and $(\yh_t)$ with range $\pmo$.
\end{lemma}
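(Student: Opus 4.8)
The statement is a lower bound on \emph{every} admissible data-dependent bound, so the plan is to pit the learner against a random adversary. Fix the instance sequence $\xr[n]$ and feed the learner i.i.d.\ Rademacher labels, $y_t=\eps_t$ with $\eps\in\pmo^{n}$ uniform. The one structural fact that drives everything is that the prediction $\yh_t$ depends only on the (fixed) instances and on the past labels $\eps_{1:t-1}$ --- together with internal randomness independent of $\eps$ --- so $\yh_t$ is independent of $\eps_t$ given $\eps_{1:t-1}$, and therefore $\En[\eps_t\yh_t\mid\eps_{1:t-1}]=0$.

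The next step is to put each loss into the affine form $\ls(a,b)=c-w\,ab$ in the product $ab$ of its prediction and label arguments. After replacing $\yh_t$ by its projection onto $[-1,1]$ --- which never increases $\lsa$ or $\lsh$ against a $\pmo$ label, hence costs nothing --- one has $\lsa(a,\eps)=\lsh(a,\eps)=1-a\eps$ for $a\in[-1,1]$ (so $c=w=1$), $\lsl(a,\eps)=-a\eps$ up to an overall sign (so $c=0$, $|w|=1$), and, using that $\F$ and $(\yh_t)$ are $\pmo$-valued, $\lsz(a,b)=\frac{1}{2}(1-ab)$ (so $c=w=\frac{1}{2}$). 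With this notation and the independence above, \emph{every} strategy satisfies $\En_\eps\sum_{t}\ls(\yh_t,\eps_t)=cn$, while $\En_\eps\inf_{f\in\F}\sum_t\ls(f(x_t),\eps_t)=cn-|w|\,\En_\eps\sup_{f\in\F}\sum_t\eps_t f(x_t)=cn-|w|\,\RadH(\xr[n])$; here one uses $\inf_f\sum_t(c-wf(x_t)\eps_t)=cn-w\sup_f\sum_t f(x_t)\eps_t$ and, when $w<0$, the $\eps\mapsto-\eps$ symmetry to fix the sign. Subtracting, $\En_\eps$ of the regret equals $|w|\,\RadH(\xr[n])$.

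To finish, the hypothesized inequality holds for \emph{every} label sequence, hence also in expectation over $y=\eps$, which gives $|w|\,\RadH(\xr[n])\le\B(\xr[n])$. For the absolute, hinge, and linear losses $|w|=1$ and this is exactly the claim; for the zero-one loss the same computation gives the bound with the indicator loss in its natural scaling, $|w|=\frac{1}{2}$, and this is precisely where the restriction to $\pmo$-valued $\F$ and $(\yh_t)$ enters (so that $\lsz$ is affine and the learner's per-round loss is the constant $1/2$, regardless of $\yh_t$).

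I do not expect a genuine obstacle here; the entire content is the observation that $\yh_t$ cannot anticipate $\eps_t$. The points that need care are the clipping step for $\lsa$ and $\lsh$ (needed so that the learner's cumulative loss is a strategy-independent constant in expectation), tracking the sign of $w$ for $\lsl$ via reflection symmetry of $\eps$, and the remark that randomization of the strategy changes nothing --- since we only ever pass to expectations, one further application of the tower rule over the label-independent internal randomness handles it.
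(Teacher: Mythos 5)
Your proof is correct and takes essentially the same route as the paper's: choose $y_t=\eps_t$ Rademacher, use that $\yh_t$ cannot anticipate $\eps_t$, and pass the hypothesized regret bound to expectation. Your affine bookkeeping $\ls(a,b)=c-wab$ is in fact more careful than the paper's --- the paper asserts $\En_\eps\brk*{-\inf_{f\in\F}\sum_t\ls(f(x_t),\eps_t)}=\RadH(\xr[n])$ for each loss, which silently drops the additive $cn$ that you carry (and that cancels against the learner's $cn$ on the other side), and in the zero-one case your computation correctly surfaces the extra factor $w=\frac{1}{2}$ (yielding $\frac{1}{2}\RadH(\xr[n])\le\B(\xr[n])$) that both the paper's proof and the lemma statement gloss over.
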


\pref{lem:optimal_b} reveals that no data-dependent regret bound can improve upon $\RadH$ beyond the factor $\Dconst$. As we will soon show, the question of identifying $\Dconst$ is an extremely rich one. When one restricts to linear function classes, this question is deeply tied to theory of Banach space geometry and, in particular, to martingales in Banach spaces. 

In Sections~\ref{sec:relaxations}-\ref{sec:algorithms} we assume that $\F$ is a class of linear functions indexed by a unit ball; \pref{sec:martingales2} will concern the general case. For the linear case, we assume that $x_{t}$'s lie in the unit ball of a separable Banach space $(\Bspace, \nrm{\cdot})$ and  $$\F=\crl*{x\mapsto{}\tri*{w,x}\mid{}w\in\Bspace^{\star}, \nrm{w}_{\star}\leq{}1},$$ 
with $\nrm{\cdot}_{\star}$ being the dual norm and $\Bspace^{\star}$ the dual space. We then observe that $$\RadH(\xr[n])=\En_{\eps}\sup_{\nrm{w}_{\star}\leq{}1}\sum_{t=1}^{n}\eps_{t}\tri*{w,x_t} =  \En_{\eps}\nrm*{\sum_{t=1}^{n}\eps_{t}x_t}.$$ Consider the Euclidean case where $\F$ is a unit $\ell_2$ ball. It is known that gradient descent with an adaptive step size yields a regret bound of order $\sqrt{\sum_{t=1}^n \nrm*{x_t}^2}$ for any sequence. Khintchine's inequality then gives a further upper bound of order  $\En_{\eps}\nrm*{\sum_{t=1}^{n}\eps_{t}x_t}$. Hence, adaptive gradient descent answers the questions posed earlier for the specific case of linear functions indexed by Euclidean ball. This is one of the very few cases known to us where the bound of $\RadH$ is available.\footnote{The other example is $\RadH$ for the $\ls_{\infty}$ ball, attained by diagonal AdaGrad \citep{duchi2011adaptive}.}

\section{Background}
\label{sec:background}

Let $(\Bspace, \nrm{\cdot})$ be a separable Banach space and $(\Bspace^{\star}, \nrm{\cdot}_{\star})$ denote its dual. This paper focuses on the problem of online supervised learning described in \pref{proto:linear_prediction}.  
Input instances belong to some subset $\X\subseteq{}\Bspace$ and predictions $\yh_t$ are real valued. Outcomes $y_t$'s are selected from some abstract label space $\mathcal{Y}$. 
Throughout this paper we assume that the loss $\ell(\hat{y},y)$ is convex and $1$-Lipschitz in its first argument. We also assume that there exists some bounded domain $[-B,B]$ such that for all $y \in \mathcal{Y}$, $\exists \hat{y} \in [-B,B]$ such that the derivative with respect to the first argument $\ell'(\hat{y},y) = 0$ (that is, minimum is achievable in the compact set). Call such a loss function \emph{well-behaved}. We remark that this bound $B$ never explicitly appears in our results, and its only purpose is to enable application of the Minimax Theorem, which requires compactness.

\begin{algorithm}\floatname{algorithm}{Protocol}\caption{Online Supervised Learning}
\begin{spacing}{.5}
\label{proto:linear_prediction}
\begin{itemize}
\item For $t=1,\ldots,n$:
\begin{itemize}
\item Nature provides $x_{t}\in\X$.
\item Learner selects randomized strategy $q_t\in\Delta(\R)$
\item Nature provides outcome $y_t\in\Y$.
\item Learner draws $\yh_t\sim{}q_t$ and incurs loss $\ls(\yh_t, y_t)$.
\end{itemize}
\end{itemize}	
\end{spacing}
\end{algorithm}

\paragraph{Definitions}
For $p\in(1,\infty)$, let $p'=p/(p-1)$ denote its conjugate, and $p^{\star}=\max\crl*{p, p'}$. An $\X$-valued tree $\x$ is a sequence of mappings $(\x_t)_{t=1}^{n}$ with $\x_{t}:\pmo^{t-1}\to{}\X$. When $\epsilon_1,\ldots,\epsilon_n$ are independent Rademacher random variables, the tree $\x$ is simply a predictable process with respect to the dyadic filtration. Recall that a sequence of random variables $(Z_t)_{t=1}^{n}$ is a \emph{martingale} if for each $t$, $\En\brk*{Z_t\mid{}Z_1,\ldots,Z_{t-1}}=Z_{t-1}$, and is called a \emph{martingale difference sequence} if $\En\brk*{Z_t\mid{}Z_1,\ldots,Z_{t-1}}=0$. For a given martingale $(Z_t)$, we let $(dZ_t)$ denote its corresponding martingale difference sequence, i.e. $dZ_{t}=Z_{t}-Z_{t-1}$. For a matrix $X\in\R^{d\times{}d}$, let $X_{i,\cdot}$ denote the $i$th row and $X_{\cdot{}j}$ denote the $j$th column. We define its $(p,q)$ group norm as $\nrm*{X}_{p,q} = \prn{\sum_{i\in\brk{d}}\nrm{X_{i,\cdot}}_{q}^{p}}^{1/p} = \nrm{(\nrm*{X_{i,\cdot}}_{q})_{i\in\brk{d}}}_{p}$. The Schatten $p$-norm is defined as $\nrm{X}_{S_p}=\Tr\prn{(XX^{\trn})^{\frac{p}{2}}}^{\frac{1}{p}}$. We let $\nrm{X}_{\sigma}$ denote the spectral norm (Schatten $S_{\infty}$) and $\nrm{X}_{\Sigma}$ denote the nuclear norm (Schatten $S_{1}$). For a set $\mc{A}\subseteq{}\R^{d}$, assumed to be symmetric, the atomic norm with respect to $\mc{A}$ is given by $\nrm{x}_{\mc{A}}=\min\crl*{\alpha\mid{} x\in \alpha\cdot{}\textrm{conv}(\mc{A})}$.

\section{Deriving algorithms: Adaptive relaxations and zig-zag concavity}
\label{sec:relaxations}
Let us propose a simple schema for designing algorithms to achieve \pref{eq:regret_rad}. It will turn out that considering this scheme naturally leads to us to decoupling inequalities for Banach space-valued martingales via a deep result of \cite{burkholder1984boundary}.

We start by observing that by convexity of the loss function, 
\begin{align}
	\ls(\yh_t, y_t) - \ls(\tri*{w,x_t}, y_t) \leq \ls'(\yh_t,y_t)\cdot (\yh_t-\tri*{w,x_t}) 
\end{align} 
and hence, denoting the derivative by $\ls'_t = \ls'(\yh_t,y_t)$,
\begin{align}
	\label{eq:linearize_and_dualize}
	\sum_{t=1}^n \ls(\yh_t, y_t) - \inf_{\nrm{w}_{\star}\leq{}1} \sum_{t=1}^n \ls(\tri*{w,x_t}, y_t) \leq \sum_{t=1}^n \yh_t\cdot \ls'_t + \nrm*{\sum_{t=1}^n \ls'_t x_t}.
\end{align} 
Rather than aiming for the adaptive bound of empirical Rademacher averages in \pref{eq:regret_rad}, we shall aim for $\RadH(\xr[n], \lpr[n])=\En_{\eps}\nrm*{\sum_{t=1}^{n}\eps_t{}\ls'_{t}x_t}$, a quantity that is always tighter than $\RadH(\xr[n])=\En_{\eps}\nrm*{\sum_{t=1}^{n}\eps_t{}x_t}$ because $\ls$ is 1-Lipschitz. 

\cite{FosRakSri15} proposed a general framework called \emph{adaptive relaxations} for deriving algorithms to achieve data-dependent regret bounds. Adaptive relaxations are a compact tool for reasoning about minimax strategies on a round-by-round basis.

\begin{definition}
\label{def:relaxation}
An admissible relaxation $\Rel:\bigcup_{t=0}^{n}\X^{t}\times{}[-1,1]^{t}\to{}\R$ satisfies the initial condition
\begin{equation}
\label{eq:rel_initial}
\Rel(\xr[n], \lpr[n]) \geq{} \nrm*{\sum_{t=1}^{n}\lp_tx_t} - \Dconstt\cdot{}\En_{\eps}\nrm*{\sum_{t=1}^{n}\eps_t{}\ls'_{t}x_t},
\end{equation}
and the recursive condition
\begin{equation}
\label{eq:rel_admissible}
\Rel(\xr[t-1], \lpr[t-1]) \geq{} \sup_{x_t\in\X}\inf_{\yh_t}\sup_{\lp_t\in [-1,1]}\brk*{\yh_t\cdot{}\lp_t + \Rel(\xr[t], \lpr[t])}.\footnote{In original game, $\ls'_{t}=\ls'(\yh_t, y_t)$. We have moved to an upper bound by allowing the adversary to choose $\ls'_t$ arbitrarily.}
\end{equation}
\end{definition}

\begin{proposition}
\label{prop:relaxation}
Suppose $\Rel$ is an admissible relaxation. If at each time $t$ the learner plays the strategy
\begin{equation}
\yh_{t}=\argmin_{\yh}\sup_{\lp_t\in [-1,1]}\brk*{\yh\cdot{}\lp_t + \Rel(\xr[t], \lpr[t])},
\end{equation}
regret is bounded as
\[
\sum_{t=1}^{n}\ls(\yh_t, y_t) - \inf_{f\in\F}\sum_{t=1}^{n}\ls(f(x_t), y_t) \leq{} \Dconstt\cdot{}\En_{\eps}\nrm*{\sum_{t=1}^{n}\eps_t{}\ls'_tx_t} + \Rel(\emptyset).
\]
\end{proposition}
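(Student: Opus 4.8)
The plan is the standard potential-telescoping argument for admissible relaxations: first reduce to the linearized game via \eqref{eq:linearize_and_dualize}, then peel off rounds one at a time using the recursive condition \eqref{eq:rel_admissible}, and close with the initial condition \eqref{eq:rel_initial}.

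First I would observe that it suffices to control the linearized quantity on the right-hand side of \eqref{eq:linearize_and_dualize}. Indeed, $\ls$ is convex and $1$-Lipschitz in its first argument, so the realized derivative satisfies $\ls'_t=\ls'(\yh_t,y_t)\in[-1,1]$, and \eqref{eq:linearize_and_dualize} (together with $\inf_{f\in\F}\sum_{t=1}^{n}\ls(f(x_t),y_t)=\inf_{\nrm{w}_{\star}\le1}\sum_{t=1}^{n}\ls(\tri*{w,x_t},y_t)$) reduces the claim to
\[
\sum_{t=1}^{n}\yh_t\cdot\ls'_t+\nrm*{\sum_{t=1}^{n}\ls'_tx_t}\;\le\;\Dconstt\cdot\En_{\eps}\nrm*{\sum_{t=1}^{n}\eps_t\ls'_tx_t}+\Rel(\emptyset).
\]

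The heart of the argument is a per-round inequality. Fix a round $t$ and the history, and write $(\lpr[t-1],\lp_t)$ for the $\Rel$-argument whose last coordinate is $\lp_t$. Since the realized $\ls'_t$ lies in $[-1,1]$,
\[
\yh_t\cdot\ls'_t+\Rel(\xr[t],\lpr[t])\;\le\;\sup_{\lp_t\in[-1,1]}\brk*{\yh_t\cdot\lp_t+\Rel(\xr[t],(\lpr[t-1],\lp_t))}\;=\;\inf_{\yh}\sup_{\lp_t\in[-1,1]}\brk*{\yh\cdot\lp_t+\Rel(\xr[t],(\lpr[t-1],\lp_t))},
\]
where the equality holds because $\yh_t$ is defined as the minimizer. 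Bounding this infimum by $\sup_{x_t\in\X}\inf_{\yh}\sup_{\lp_t\in[-1,1]}[\cdots]$ and invoking \eqref{eq:rel_admissible} gives $\yh_t\cdot\ls'_t+\Rel(\xr[t],\lpr[t])\le\Rel(\xr[t-1],\lpr[t-1])$ for every $t=1,\dots,n$. Summing over $t$, the relaxation terms telescope and leave $\sum_{t=1}^{n}\yh_t\cdot\ls'_t+\Rel(\xr[n],\lpr[n])\le\Rel(\emptyset)$.

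Finally I would apply the initial condition \eqref{eq:rel_initial}, which lower-bounds $\Rel(\xr[n],\lpr[n])$ by $\nrm*{\sum_{t=1}^{n}\ls'_tx_t}-\Dconstt\cdot\En_{\eps}\nrm*{\sum_{t=1}^{n}\eps_t\ls'_tx_t}$; substituting this into the telescoped bound and rearranging yields exactly the target inequality from the second paragraph, completing the proof. I do not expect a genuine obstacle — this is essentially bookkeeping. The two points to keep straight are: (i) that $1$-Lipschitzness of $\ls$ forces $\ls'_t\in[-1,1]$, so the adversarial supremum over $[-1,1]$ in \eqref{eq:rel_admissible} dominates the realized derivative; and (ii) that the quantifier order $\sup_{x_t}\inf_{\yh_t}\sup_{\lp_t}$ of \eqref{eq:rel_admissible} matches the protocol. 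In particular, the per-round bound holds pointwise in $\ls'_t$, so no minimax exchange (and hence no nontrivial randomization of $\yh_t$) is needed here; compactness and the bound $B$ will be used only later, when one actually constructs admissible relaxations.
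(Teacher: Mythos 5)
Your proof is correct and follows essentially the same route the paper takes: it is the standard linearize-then-telescope argument for admissible relaxations, which the paper omits for this proposition but carries out explicitly (in identical form) inside the proofs of Proposition~\ref{prop:burkholder} and Theorem~\ref{thm:alg_supervised_tight}. Your closing remark that no minimax swap or randomization is needed here — in contrast to Theorem~\ref{thm:umd_upper}, where compactness and the bound $B$ are invoked — is also an accurate reading of where those hypotheses are actually used.
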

The takeaway from \pref{prop:relaxation} is that if we can design an adaptive relaxation for which the end value $\Rel(\emptyset)$ is not too large, we will have succeeded in achieving the upper bound of empirical Rademacher complexity. But how should we find such a relaxation? Let us try the simplest possible choice:
\[
\Rel(\xr[t], \lpr[t]) =\nrm*{\sum_{s=1}^{t}\lp_sx_s} - \Dconstt\cdot{}\En_{\eps}\nrm*{\sum_{s=1}^{t}\eps_s{}\ls'_{s}x_s}.
\]
This relaxation clearly satisfies the initial condition, but it is not so clear how to demonstrate the recursive condition. The challenge in analyzing this relaxation is that the function $z\mapsto{}\nrm{A+z} - \Dconstt\nrm{B+\eps{}z}$ is neither convex nor concave. Virtually all potential functions used in online learning are convex and the absence of such a property makes it difficult to bound the relaxation's growth under possible outcomes for the gradient $\lp_{t}$. Let us propose a surrogate potential with more tractable analytical properties:
\begin{proposition}
\label{prop:burkholder}
Suppose there exists a function $\burk:\Bspace\times{}\Bspace\to\R$ satisfying
\begin{enumerate}
\item\label{def:burkholder:1} $\burk(x, x') \geq{} \nrm*{x} - \Dconstt\nrm*{x'}$.
\item\label{def:burkholder:2} $\burk$ is \textbf{\emph{zig-zag concave}}: $z\mapsto{}\burk(x+z, x'+\eps{}z)$ is concave for all $x,x'\in\Bspace$ and $\eps\in\pmo$.
\item\label{def:burkholder:3} $\burk(0,0)\leq{}0$.
\end{enumerate}
Then the adaptive relaxation
\begin{equation}
\Rel(\xr[t], \lpr[t]) = \En_{\er[t]}\burk\prn*{\sum_{s=1}^{t}\lp_sx_s, \sum_{s=1}^{t}\eps_s{}\ls'_{s}x_s}
\end{equation}
is admissible.
\end{proposition}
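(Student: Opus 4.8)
\emph{Proof plan.} The plan is to verify the two requirements of an admissible relaxation (\pref{def:relaxation}) for the relaxation of the statement, namely the initial condition \pref{eq:rel_initial} and the recursive condition \pref{eq:rel_admissible}. Note first that in $\Rel(\xr[t],\lpr[t])=\En_{\er[t]}\burk\bigl(\sum_{s=1}^{t}\lp_s x_s,\ \sum_{s=1}^{t}\eps_s\lp_s x_s\bigr)$ both arguments of $\burk$ are built from the same increments $\lp_s x_s$, the second one carrying the extra Rademacher signs $\eps_s$ --- this is exactly the structure that lets zig-zag concavity engage. The initial condition is little more than a restatement of property~1, so all of the work sits in the recursive step.

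For the initial condition I would apply property~1 pointwise in $\eps$, giving $\burk\bigl(\sum_{s=1}^{n}\lp_s x_s,\ \sum_{s=1}^{n}\eps_s\lp_s x_s\bigr)\ge\nrm*{\sum_{s=1}^{n}\lp_s x_s}-\Dconstt\nrm*{\sum_{s=1}^{n}\eps_s\lp_s x_s}$, then take $\En_{\er[n]}$ and pull the deterministic first term out of the expectation; this is exactly \pref{eq:rel_initial}. (Property~3 yields $\Rel(\emptyset)=\burk(0,0)\le 0$, which is what \pref{prop:relaxation} needs in order to convert admissibility into the desired regret bound, but it plays no role in admissibility itself.)

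For the recursive condition I would fix the history $\xr[t-1],\lpr[t-1]$ and a fresh instance $x_t\in\X$, abbreviate $A=\sum_{s<t}\lp_s x_s$ and $B_{\er[t-1]}=\sum_{s<t}\eps_s\lp_s x_s$, and peel off the last Rademacher variable via $\En_{\er[t]}=\En_{\er[t-1]}\En_{\eps_t}$, so that $\Rel(\xr[t],\lpr[t])=\En_{\er[t-1]}\En_{\eps_t}\burk\bigl(A+\lp_t x_t,\ B_{\er[t-1]}+\eps_t\lp_t x_t\bigr)=:\Phi(\lp_t)$. The crucial point is that $\Phi$ is concave in $\lp_t$: for each fixed realization of $\er[t-1]$ and each fixed sign $\eps_t$, the map $\lp_t\mapsto\burk(A+\lp_t x_t,\ B_{\er[t-1]}+\eps_t\lp_t x_t)$ is the composition of the linear map $\lp_t\mapsto\lp_t x_t$ with $z\mapsto\burk(A+z,\ B_{\er[t-1]}+\eps_t z)$, and the latter is concave by property~2 (with $x=A$ and $x'=B_{\er[t-1]}$); concavity survives the averages over $\eps_t$ and over $\er[t-1]$, and $\Phi:\R\to\R$ is finite since $\burk$ is real-valued. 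A finite concave function on $\R$ has a supergradient $g_t$ at the interior point $0$, so $\Phi(\lp_t)\le\Phi(0)+g_t\lp_t$ for all $\lp_t\in[-1,1]$. Letting the learner play $\yh_t=-g_t$ --- the $\argmin$ appearing in \pref{prop:relaxation} --- gives $\yh_t\lp_t+\Rel(\xr[t],\lpr[t])=-g_t\lp_t+\Phi(\lp_t)\le\Phi(0)$ for every $\lp_t\in[-1,1]$, while $\Phi(0)=\En_{\er[t-1]}\burk(A,B_{\er[t-1]})=\Rel(\xr[t-1],\lpr[t-1])$ (the $\eps_t$-average is trivial because $\burk(A,B_{\er[t-1]})$ does not involve $\eps_t$). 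Taking $\sup_{\lp_t\in[-1,1]}$, then $\inf_{\yh_t}$, then $\sup_{x_t\in\X}$ --- the bound $\Rel(\xr[t-1],\lpr[t-1])$ being independent of $x_t$ --- yields \pref{eq:rel_admissible}. Alternatively one could swap $\inf_{\yh_t}$ and $\sup_{\lp_t}$ by Sion's minimax theorem, the compactness of the $\yh_t$-domain being exactly what the ``well-behaved loss'' assumption provides; but the supergradient argument sidesteps this.

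I expect the concavity of $\Phi$ to be the only genuine obstacle, and it is conceptual rather than computational: a naive potential such as $z\mapsto\nrm{A+z}-\Dconstt\nrm{B+\eps z}$ is neither convex nor concave, which is precisely why the standard online-convex-optimization potential method does not apply here. What rescues the recursion is that the perturbation incurred at round $t$ adds $\lp_t x_t$ to the first argument of $\burk$ and $\pm\lp_t x_t$ to the second --- exactly the ``zig-zag'' direction along which $\burk$ is assumed concave --- so that after averaging over the fresh sign $\eps_t$ the round potential becomes concave and a routine Fenchel--Young/supergradient estimate closes the loop.
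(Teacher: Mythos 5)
Your proposal is correct and follows essentially the same line of argument as the paper's proof: define the round-$t$ potential as a function of $\lp_t$, invoke zig-zag concavity (composed with the linear map $\lp_t\mapsto\lp_t x_t$ and averaged over $\eps_{1:t}$) to get concavity, and close the recursion with the (super)gradient strategy $\yh_t=-g_t$. The only cosmetic difference is that you verify the initial condition of Definition~\ref{def:relaxation} directly from property~1 of $\burk$, whereas the paper's appendix proof additionally re-derives the linearization step from \pref{eq:linearize_and_dualize} (which is really the content of \pref{prop:relaxation}); you also make the supergradient explicit, which the paper only flags parenthetically in the algorithm description.
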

Property \ref{def:burkholder:1} of $\burk$ clearly implies that the relaxation satisfies the initial condition, and Property \ref{def:burkholder:3} ensures that the end value is at most $0$. The zig-zag concavity property \eqref{def:burkholder:2} is most critical, as it implies that the simple gradient-based strategy 
\begin{equation}
\label{eq:strategy_warmup}
\yh_t=-\left.\frac{d}{d\alpha}\En_{\er[t]}\burk\prn*{\sum_{s=1}^{t-1}\lp_sx_s + \alpha x_t, \sum_{s=1}^{t-1}\eps_s{}\ls'_{s}x_s + \eps_t{}\alpha x_t}\right|_{\alpha=0}
\end{equation}
achieves admissibility. We remark that this strategy is horizon-independent whenever $\burk$ does not depend on $n$ (which we will show is usually the case). Furthermore, one may avoid re-drawing the random signs, and, hence, the computation time is simply the evaluation of the derivative of $\burk$. 

The full description of the ZigZag algorithm is given in \pref{sec:algorithms}, but before that let us spend some time deriving such $\burk$ functions---called the Burkholder functions---and connecting their existence to other properties of the Banach space.\footnote{We omit proofs of \pref{prop:relaxation} and \pref{prop:burkholder} for space, but the proof of \pref{thm:alg_supervised_tight}, the main algorithm, uses the same techniques is self-contained.}

\section{Zig-Zag functions, regret, and UMD spaces}
\label{sec:martingales1}

What have we gained by reducing our problem to finding a $\burk$ function? 
We will now show that $\burk$ exists \emph{if and only if} $(\Bspace, \nrm{\cdot})$ is an \emph{Unconditional Martingale Difference} (UMD) space. Informally, in a UMD space lengths of martingales are comparable to those of random walks with independent increments (see \pref{def:umd}). We call $\burk$ a \emph{Burkholder function} in reference to Donald Burkholder's central result characterizing UMD spaces in terms of the existence of these functions \citep{burkholder1984boundary}. 

In \pref{prop:burkholder} we assumed that the Burkholder function $\burk$ satisfies $\burk(x, x') \geq{} \nrm*{x} - \Dconstt\nrm*{x'}$. We will soon see that it is often easier to find an efficiently computable zig-zag concave function $\burkp$ that, as before, satisfies $\burkp(0,0) \le0$, but the first requirement in Proposition~\ref{prop:burkholder} is replaced with
$$
\burkp(x, x') \geq{} \nrm*{x}^p - \Dconstt_{p}^p \nrm*{x'}^p
$$
for some $p > 1$ (i.e. $p \ne 1$). However, the simple observation that for any number $a > 0$, $a = \frac{1}{p}\inf_{\eta>0}\{ \eta a^p  + (p-1) \eta^{-1/(p-1)}\}$ will allow us to algorithmically use a $\burk_p$ function for any $p$ to obtain the desired regret bound $\RadH$ (this is described in detail in \pref{sec:algorithms}). This motivates our complete Burkholder function definition:
\begin{definition}
\label{def:burkholder}
A function $\burkbp:\Bspace\times{}\Bspace\to\R$ is \emph{Burkholder} for $(\nrm{\cdot}, p, \Dconstt_p)$ if 
\begin{enumerate}
\item\label{def:burkholder:1} $\burkbp(x, x') \geq{} \nrm*{x}^p - \Dconstt_p^p\nrm*{x'}^p$.
\item\label{def:burkholder:2} $\burkbp$ is \textbf{zig-zag concave}: $z\mapsto{}\burkbp(x+z, x'+\eps{}z)$ is concave for all $x,x'\in\Bspace$ and $\eps\in\pmo$.
\item\label{def:burkholder:3} $\burkbp(0,0)\leq{}0$.\footnote{This condition is without loss of generality.}
\end{enumerate}
\end{definition}
For concreteness, here is a simple example for the scalar case: 
The function 
\[
\burk_{2}^{\R}(x, x') = \abs{x}^{2}-\abs{x'}^{2}
\]
is Burkholder for $(\abs{\cdot}, 2, 1)$. The reader can easily verify that this function is zig-zag concave by observing that $\burk_{2}^{\R}(x+z, x'+z)$ is in fact linear in $z$.
Perhaps the most famous $\burk$ function is Burkholder's construction for general powers in the scalar case: For $p\in{}(1,\infty)$ the function
\[
\burk^{\R}_{p}(x, x')=\alpha_{p}\prn*{\abs{x}-\beta_{p}\abs{x'}}\prn*{\abs{x} + \abs{x'}}^{p-1},
\]
is a $(\abs{\cdot}, p, \beta_p)$ Burkholder function upper bounding $\abs{x}^{p}-\beta_{p}^{p}\abs{x'}^{p}$ for appropriate $\alpha_p,\beta_p$.

\subsection{When does a zig-zag concave $\burk$ function exist?}
It turns out that the most common Banach spaces used in machine learning settings --- such as $\ell_p$ spaces, group norms, Schatten-$p$ classes, and operator norms --- all happen to be UMD spaces, and that each UMD space comes with its own $\burk$ function. This leaves us with the exciting prospect of using their corresponding $\burk$ functions to develop new adaptive online learning algorithms with improved data-dependent regret bounds. Without further ado, let us define a UMD Banach space: 
\begin{definition}
\label{def:umd}
A Banach space $(\Bspace,\|\cdot\|)$ is called UMD${}_p$ for some $1 < p < \infty$, if there is a constant $\Cconstt_p$ such that for any finite $\Bspace$-valued martingale difference sequence $(X_t)_{t=1}^{n}$ in $L_p(\Bspace)$ and any fixed choice of signs $(\epsilon_t)_{t=1}^{n}$ (where each $\epsilon_t \in \{\pm1\}$),
\begin{equation}
\label{eq:umdp}
\En \left\|\sum_{t=1}^n \epsilon_t X_t \right\|^p  \le \Cconstt_p^p \En \left\|\sum_{t=1}^n X_t \right\|^p.
\end{equation}
The space $(\Bspace, \nrm{\cdot})$ is  called $\UMD_1$ if there is a constant $\Cconstt_1$ such that
\begin{equation}
\label{eq:umd1}
\En \sup_{\tau\leq{}n}\left\|\sum_{t=1}^{\tau} \epsilon_t X_t \right\|  \le \Cconstt_1 \En \sup_{\tau\leq{}n}\left\|\sum_{t=1}^{\tau} X_t \right\|.
\end{equation}
\end{definition}
\cite{burkholder1984boundary} proved the following geometric characterization of UMD spaces in terms of existence of appropriate zig-zag concave $\burk$ functions.\footnote{\cite{burkholder1984boundary} does not work with $\burk$ functions directly but rather an equivalent property called $\zeta$-convexity. The $\burk$ function presentation first appeared in \cite{burkholder1986martingales}. See \cite{veraar2015analysis} or \cite{osekowski2012sharp} for a modern exposition.}
\begin{theorem}[\cite{veraar2015analysis}, Theorem 4.5.6]
\label{thm:umd_burkholder}
For a Banach space  $(\Bspace,\|\cdot\|)$, the following are equivalent:
\begin{enumerate}
 \item $\Bspace$ is UMD${}_p$ with constant $\Cconstt_p$.
 \item There exists Burkholder function $\burk^{\Bspace}_p: \Bspace \times \Bspace \mapsto \mathbb{R}$ for $(\nrm{\cdot}, p, \Cconstt_p)$.

\end{enumerate}
\end{theorem}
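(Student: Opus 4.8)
The plan is to prove the two implications separately. The direction $(2)\Rightarrow(1)$ is a short argument in which zig-zag concavity plays exactly the role of a conditional Jensen inequality applied to a coupled pair of martingales, and $(1)\Rightarrow(2)$ proceeds by exhibiting an explicit candidate for the Burkholder function --- the value function of an associated one-player game --- and verifying the three properties of Definition~\ref{def:burkholder}, of which zig-zag concavity is the only nontrivial one.

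For $(2)\Rightarrow(1)$, suppose $\burk:=\burkbp$ is a Burkholder function for $(\nrm{\cdot},p,\Cconstt_p)$, let $(X_t)_{t=1}^{n}$ be an $\Bspace$-valued martingale difference sequence in $L_p(\Bspace)$, and fix signs $(\eps_t)_{t=1}^{n}$. Set $M_t=\sum_{s\le t}X_s$ and $N_t=\sum_{s\le t}\eps_sX_s$, and note that with $z_t:=\eps_tX_t$ we have $(N_t,M_t)=(N_{t-1}+z_t,\,M_{t-1}+\eps_tz_t)$ while $\En[z_t\mid X_1,\dots,X_{t-1}]=\eps_t\En[X_t\mid X_1,\dots,X_{t-1}]=0$. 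Applying conditional Jensen to the concave map $z\mapsto\burk(N_{t-1}+z,\,M_{t-1}+\eps_tz)$ (property~2) gives $\En[\burk(N_t,M_t)\mid X_1,\dots,X_{t-1}]\le\burk(N_{t-1},M_{t-1})$; iterating and invoking $\burk(0,0)\le0$ shows $\En\burk(N_n,M_n)\le0$, and combining with $\burk(x,x')\ge\nrm{x}^{p}-\Cconstt_p^{p}\nrm{x'}^{p}$ yields $\En\nrm{N_n}^{p}\le\Cconstt_p^{p}\En\nrm{M_n}^{p}$, which is \pref{eq:umdp}. The one point requiring care is integrability of $\burk(N_t,M_t)$ and the justification of the conditional Jensen step, which I would handle by a standard truncation-to-bounded-increments argument followed by a limiting step.

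For $(1)\Rightarrow(2)$, assume $\Bspace$ is UMD${}_p$ with constant $\Cconstt_p$ and define, for $(x,x')\in\Bspace\times\Bspace$,
\[
\burk(x,x'):=\sup\Big\{\En\big\|x+\textstyle\sum_{k=1}^{m}\eps_kd_k\big\|^{p}-\Cconstt_p^{p}\,\En\big\|x'+\textstyle\sum_{k=1}^{m}d_k\big\|^{p}\Big\},
\]
the supremum ranging over all $m\ge0$, all $\Bspace$-valued martingale difference sequences $(d_k)_{k=1}^{m}$ in $L_p$, and all sign sequences $(\eps_k)_{k=1}^{m}$. The $m=0$ term gives property~1. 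Property~3 is precisely the UMD${}_p$ inequality: for any admissible choice, \pref{eq:umdp} applied to the martingale difference sequence $(d_k)$ with signs $(\eps_k)$ reads $\En\|\sum\eps_kd_k\|^{p}\le\Cconstt_p^{p}\En\|\sum d_k\|^{p}$, so every term in the supremum defining $\burk(0,0)$ is $\le0$ and hence $\burk(0,0)\le0$. Separately I would check that $\burk$ is finite everywhere on $\Bspace\times\Bspace$, using a version of the UMD inequality that permits an arbitrary deterministic starting point together with the triangle inequality.

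The heart of the matter is property~2, the zig-zag concavity of this $\burk$, and this is where I expect the real work to lie. First I would establish \emph{midpoint} zig-zag concavity: $\burk(x,x')\ge\tfrac12\burk(x+h,x'+\eps h)+\tfrac12\burk(x-h,x'-\eps h)$ for all $x,x',h\in\Bspace$ and $\eps\in\pmo$. To do this, fix $\delta>0$, choose near-optimal data witnessing $\burk(x+h,x'+\eps h)-\delta$ and $\burk(x-h,x'-\eps h)-\delta$, and splice them into a single process whose first increment is $d_1=\sigma\eps h$ for a uniform random sign $\sigma$, with $\eps_1=\eps$, and which thereafter follows the first near-optimal strategy on $\{\sigma=+1\}$ and the second on $\{\sigma=-1\}$. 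Since $\eps_1d_1=\sigma h$, on $\{\sigma=+1\}$ the ``signed'' partial sum has moved to $x+h$ and the ``unsigned'' one to $x'+\eps h$, and symmetrically on $\{\sigma=-1\}$; thus this is an admissible process started from $(x,x')$ whose objective value is both at least $\tfrac12\burk(x+h,x'+\eps h)+\tfrac12\burk(x-h,x'-\eps h)-\delta$ (condition on $\sigma$) and at most $\burk(x,x')$ (by definition of the supremum). Letting $\delta\downarrow0$ proves midpoint concavity, and iterating it yields concavity along every zig-zag direction at all dyadic rationals. The remaining step is to upgrade this to genuine concavity of $z\mapsto\burk(x+z,x'+\eps z)$, which follows from local boundedness via the classical fact that a measurable, locally-bounded-above, midpoint-concave function is concave. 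I expect this final regularity step, together with the finiteness verification above, to be the main technical obstacle; the splicing construction that delivers midpoint concavity is the conceptual core and is comparatively clean.
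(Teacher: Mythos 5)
This theorem is cited in the paper from the reference \citep{veraar2015analysis} (Theorem 4.5.6) without proof, so there is no in-paper argument to compare against; I will instead assess your proposal against the standard textbook proof of Burkholder's characterization, which is what the paper points to. Your two directions are exactly the standard ones, and the overall plan is sound. The $(2)\Rightarrow(1)$ argument --- set up $(N_t,M_t)$, apply conditional Jensen along the concave zig-zag cross-section, telescope, and use the two pointwise properties of $\burk$ --- is correct and complete modulo the integrability caveat you already flag. Your $(1)\Rightarrow(2)$ construction via the supremum over martingale-transform witnesses, with a splicing argument for zig-zag concavity, is also the standard one; indeed the paper itself describes the generic $\burk$ as ``expressed in terms of a supremum over all martingale difference sequences.''

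Two remarks on the technical points you defer. First, the passage from midpoint zig-zag concavity to genuine concavity via the measurable-midpoint-concave-implies-concave lemma can be avoided entirely: in the splicing step, instead of a uniform sign $\sigma$, let the first increment be $d_1 = \eps\xi$ with $\xi = (1-\lambda)h$ with probability $\lambda$ and $\xi=-\lambda h$ with probability $1-\lambda$. Then $\En[d_1]=0$ still, and conditioning on the branch gives $\burk(x,x')\ge \lambda\,\burk(x+(1-\lambda)h,x'+(1-\lambda)\eps h)+(1-\lambda)\,\burk(x-\lambda h,x'-\lambda\eps h)$, which is the full concavity inequality with no regularity hypothesis needed. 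Second, and more substantively, I do not believe ``a version of the UMD inequality permitting a deterministic starting point together with the triangle inequality'' actually closes the finiteness question. The same-start bound $\En\nrm{x+g_n}^p\le \Cconstt_p^p\En\nrm{x+f_n}^p$ does follow from the stated $\UMD_p$ inequality by a sign-randomization of the initial block, but then comparing $\En\nrm{x+f_n}^p$ to $\En\nrm{x'+f_n}^p$ via the triangle inequality leaves an error term of order $\En\nrm{x'+f_n}^{p-1}$, which is not uniformly bounded over witnesses. A cleaner route is to observe first that $\burk(a,a)\le 0$ for all $a$ (by the same-start inequality), and then apply your own splicing inequality once in the anti-diagonal direction $\eps=-1$ with $x=x'=\tfrac{a+b}{2}$ and $h=\tfrac{a-b}{2}$, which gives $0\ge\burk(\tfrac{a+b}{2},\tfrac{a+b}{2})\ge\tfrac12\burk(a,b)+\tfrac12\burk(b,a)$; since $\burk(b,a)\ge\nrm{b}^p-\Cconstt_p^p\nrm{a}^p>-\infty$ this forces $\burk(a,b)\le \Cconstt_p^p\nrm{a}^p-\nrm{b}^p<\infty$. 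In other words, finiteness falls out of the splicing lemma you already prove rather than from a separate triangle-inequality estimate, and you should present it in that order.
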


\pref{thm:umd_burkholder} is strengthened considerably by the following fact:
\begin{theorem}
\label{thm:umd_equiv}
Let $p\in(1,\infty)$. If $\UMD_{p}$ holds with constant $\Cconstt_p$, then 
\begin{itemize}
\item For all $q\in(1,\infty)$, $\UMD_{q}$, holds with constant $\Cconstt_{q}\leq{} 100\prn*{\frac{q}{p} + \frac{q'}{p'}}\Cconstt_{p}$.
\item $\UMD_{1}$ holds with $\Cconstt_{1}=O(\Cconstt_p)$.
\end{itemize}
Furthermore, if $\UMD_{1}$ holds with constant $\Cconstt_{1}$, then for all $p\in(1,\infty)$ there is some constant $\Cconstt'_p$ for which $\UMD_p$ holds.
\end{theorem}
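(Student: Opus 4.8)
The plan is to pass to the standard reformulation of the UMD property and then run a Calder\'on--Zygmund/interpolation argument together with a duality symmetry. By \pref{def:umd}, $(\Bspace,\nrm{\cdot})$ is $\UMD_p$ with constant $\Cconstt_p$ precisely when the \emph{martingale transforms} $T_v\colon Z_n\mapsto\sum_t v_t\,dZ_t$, indexed by sign sequences $v\in\pmo^{n}$, are uniformly bounded on $L_p(\Bspace)$ with operator norm $\Cconstt_p$ (and $\UMD_1$ is the analogous statement for the associated maximal transforms $Z\mapsto\sup_{\tau\le n}\nrm{\sum_{t\le\tau}v_t\,dZ_t}$ in $L_1$). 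The argument then rests on three ingredients. The first is a \emph{duality symmetry}: $\Bspace$ is $\UMD_p$ with constant $\Cconstt_p$ iff $\Bspace^{\star}$ is $\UMD_{p'}$ with constant $\le\Cconstt_p$. Relative to a common filtration $T_v$ is self-adjoint --- orthogonality of martingale differences gives $\En\tri*{\sum_t v_t\,dZ_t, \sum_t dW_t}=\En\tri*{\sum_t dZ_t, \sum_t v_t\,dW_t}$ for a $\Bspace$-valued martingale $(Z_t)$ and a $\Bspace^{\star}$-valued martingale $(W_t)$ --- so pairing $L_p(\Bspace)$ against $L_{p'}(\Bspace^{\star})$ (one only uses that $\nrm{h}_{L_{p'}(\Bspace^{\star})}$ is computed by pairing against $L_p(\Bspace)$, so no Radon--Nikodym property is needed) transfers the bound. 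Combined with the trivial facts that closed subspaces inherit the $\UMD_p$ constant and that $\Bspace\hookrightarrow\Bspace^{\star\star}$ isometrically, this converts any ``extrapolate down in $p$'' statement into an ``extrapolate up in $p$'' statement.

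The second ingredient is a \emph{weak-type $(1,1)$ bound}: if $\Bspace$ is $\UMD_p$ with constant $\Cconstt_p$, then $T_v$, and likewise its maximal version, maps $L_1(\Bspace)$ into $L_{1,\infty}(\Bspace)$. One uses Gundy's martingale decomposition of $Z$ at height $\lambda$ into a ``good'' part $G$ (which can be taken with $\nrm{G}_{L_\infty}\lesssim\lambda$ and $\nrm{G}_{L_1}\lesssim\nrm{Z_n}_{L_1}$, so that $\nrm{G}_{L_p}^{p}\lesssim\lambda^{p-1}\nrm{Z_n}_{L_1}$ and $\UMD_p$ together with Chebyshev controls $T_v G$), a ``bad'' part $B$ whose increments live on the large-jump sets (so $T_v B$ is controlled directly in $L_1$ by the $\Bspace$-valued triangle inequality), and a ``singular'' part $H$ which, together with $T_v H$, is supported on a set of measure $\lesssim\nrm{Z_n}_{L_1}/\lambda$ by Doob's maximal inequality. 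Crucially, the $\UMD$ hypothesis enters only through the estimate for $G$.

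The third ingredient is \emph{interpolation}. Marcinkiewicz interpolation between the weak $(1,1)$ bound and the strong $(p,p)$ bound yields strong $(q,q)$ bounds --- that is, $\UMD_q$ --- for every $q\in(1,p)$; applying this to $\Bspace^{\star}$ at exponent $p'$ and dualizing back (ingredient one) then covers every $q>p$. Tracking the constants through Gundy's decomposition and Marcinkiewicz, and symmetrizing the two ranges via duality, is what produces a bound of the stated shape $100\prn*{\frac{q}{p}+\frac{q'}{p'}}\Cconstt_p$, the symmetry under $(p,q)\mapsto(p',q')$ reflecting the use of duality. For the $\UMD_1$ bullet, the forward direction $\UMD_p\Rightarrow\UMD_1$ is the $L_1$-endpoint form of the maximal estimate: using the atomic decomposition of the martingale Hardy space $H_1$ --- atoms are preserved by $T_v$, and the $L_p$-bound with Doob makes the transform of an atom $L_1$-normalized --- gives $\nrm{(T_v Z)^{*}}_{L_1}\lesssim\Cconstt_p\,\En\sup_{\tau\le n}\nrm{\sum_{t\le\tau}dZ_t}$, which is \eqref{eq:umd1} with $\Cconstt_1=O(\Cconstt_p)$. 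The converse --- $\UMD_1$ with constant $\Cconstt_1$ implies $\UMD_p$ for some $p>1$ --- is the softer direction: from \eqref{eq:umd1} one extracts a weak $(1,1)$ bound for $T_v$ and self-improves it (a good-$\lambda$ argument) to a strong $(p,p)$ bound for $p$ near $1$, after which the first part of the theorem propagates $\UMD_p$ to all $p\in(1,\infty)$.

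I expect the crux to be the weak-type $(1,1)$ estimate, and specifically its bookkeeping rather than any single conceptual step: Gundy's decomposition must be arranged so that the individual martingale increments --- which are unbounded and whose $L_1$ sums blow up with $n$ --- are quarantined into $B$ and $H$ in a horizon-free way, and one must check that the sole appearance of the $\UMD$ hypothesis (the $G$-estimate) survives interpolation with a constant that is \emph{linear} in $\Cconstt_p$, not the polynomial dependence a crude Chebyshev step would suggest. An alternative route, arguably cleaner for sharp constants, uses \pref{thm:umd_burkholder} directly: extract the Burkholder function $\burk_p$ from $\UMD_p$, construct from it a Burkholder function $\burk_q$ for the exponent $q$ (for instance by a Bellman-function construction seeded by $\burk_p$), and then invoke \pref{thm:umd_burkholder} in the reverse direction.
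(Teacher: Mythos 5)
The paper's proof is purely a citation: the $p\to q$ extrapolation is \pref{thm:umd_pq} (a quoted theorem of Hyt\"onen--van Neerven--Veraar--Weis), the forward $\UMD_1$ bound is obtained by passing first to $\UMD_2$ via \pref{thm:umd_pq} and then invoking \pref{thm:umd_L1} (Pisier, Theorem~8.23), and the converse is Pisier, Remark~8.2.4. You instead unpack these black boxes: Gundy decomposition for the weak-$(1,1)$ endpoint, Marcinkiewicz interpolation together with the duality symmetry (the paper's \pref{thm:umd_dual}) to extrapolate across $(1,\infty)$, atomic martingale $H_1$ decomposition for $\UMD_p\Rightarrow\UMD_1$, and a good-$\lambda$ self-improvement for the converse --- which is indeed the genuine content underlying the citations. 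Both routes are correct. The paper's approach buys brevity and explicit constants pinned to references; yours buys self-containment and isolates precisely where $\Cconstt_p$ enters (only in controlling the ``good'' part of Gundy's decomposition), which is the right explanation for why the extrapolated constant is linear in $\Cconstt_p$. One minor divergence: the paper reaches $\UMD_1$ via the intermediate exponent $2$, while you go directly from $p$ via atoms; both work since the $O(\cdot)$ is allowed to depend on $p$. You correctly flag that the specific constant $100\prn*{q/p+q'/p'}$ is a matter of careful bookkeeping in the decomposition thresholds, which your sketch (reasonably for a proposal, but it is still a gap relative to a full proof) does not carry out; the paper sidesteps this by quoting the numeric constant.
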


With these properties of $\UMD$ spaces established, we proceed to state our main theorem on achieving the $\RadH$ regret bound in these spaces.

\begin{theorem}
\label{thm:umd_upper}
Let $(\Bspace, \nrm{\cdot})$ satisfy $\UMDp$ with constant $\Cconstt_p$ for any $p\in[1,\infty)$. Then there exists some randomized strategy achieving the regret bound: 
\begin{align}
\En\brk*{\sum_{t=1}^{n}\ls(\yh_t, y_t) - \inf_{f\in\F}\sum_{t=1}^{n}\ls(f(x_t), y_t)} &\leq{} O\prn*{\Cconstt_{p}\En\En_{\eps}\sup_{\tau\leq{}n}\nrm*{\sum_{t=1}^{\tau}\eps_t\ls'(\yh_t, y_t)x_t}}\label{eq:ub_max}\\
&\leq{} O\prn*{\Cconstt_p\En\prn*{\En_{\eps}\nrm*{\sum_{t=1}^{n}\eps_t\ls'(\yh_t, y_t)x_t} + \max_{t\in\brk{n}}\nrm{x_t}\log(n)}}\label{eq:ub_logn}\\
&\leq{}O\prn*{\Cconstt_p\En\prn*{\En_{\eps}\nrm*{\sum_{t=1}^{n}\eps_tx_t} + \max_{t\in\brk{n}}\nrm{x_t}\log(n)}}.
\end{align}
\end{theorem}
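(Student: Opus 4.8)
The plan is to instantiate the relaxation machinery of Section~\ref{sec:relaxations} with the Burkholder function guaranteed by $\UMD_p$, handle the fact that the first property of Definition~\ref{def:burkholder} is stated for $p$-th powers rather than for the norm itself, and then pass from the $\RadH(\xr[n],\lpr[n])$-type bound to the three displayed upper bounds via standard comparison inequalities. First I would invoke Theorem~\ref{thm:umd_burkholder} (together with Theorem~\ref{thm:umd_equiv} to make sure we can choose whatever exponent is most convenient — in particular we may assume $p>1$, since $\UMD_1$ implies $\UMD_p$ for some $p$) to obtain a zig-zag concave $\burkbp$ with $\burkbp(x,x')\ge\nrm{x}^p-\Cconstt_p^p\nrm{x'}^p$ and $\burkbp(0,0)\le0$. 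The key device is the elementary identity $a=\tfrac1p\inf_{\eta>0}\{\eta a^p+(p-1)\eta^{-1/(p-1)}\}$ quoted in the text: applying it with $a=\nrm{\sum_s\lp_sx_s}$ lets us dominate $\nrm{\sum_s\lp_sx_s}$ by an infimum over $\eta$ of an affine function of $\burkbp$, so that $\Rel(\xr[t],\lpr[t])=\inf_{\eta}\{\tfrac{\eta}{p}\En_{\er[t]}\burkbp(\sum\lp_sx_s,\sum\eps_s\lp_sx_s)+\text{(terms in }\eta,\,\En_\eps\nrm{\sum\eps_s\lp_sx_s})\}$ is an admissible relaxation — the recursive inequality \eqref{eq:rel_admissible} follows because an infimum of admissible relaxations is admissible and each fixed-$\eta$ slice is admissible by the zig-zag concavity argument underlying Proposition~\ref{prop:burkholder}. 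Running the induced strategy \eqref{eq:strategy_warmup} through Proposition~\ref{prop:relaxation} then yields regret at most $O(\Cconstt_p\,\En_\eps\sup_{\tau\le n}\nrm{\sum_{t=1}^\tau\eps_t\lp_tx_t})+\Rel(\emptyset)$ with $\Rel(\emptyset)\le0$; the supremum over $\tau$ (as opposed to just $\tau=n$) appears because, to get a horizon-free and self-contained argument, one works with the $\UMD_1$-style stopped-martingale formulation, and the maximal inequality is exactly what the zig-zag concave function controls. Taking the outer expectation over the learner's randomization and the data gives \eqref{eq:ub_max}.

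To get from \eqref{eq:ub_max} to \eqref{eq:ub_logn}, the step is a maximal-inequality argument: the running maxima $\sup_{\tau\le n}\nrm{\sum_{t=1}^\tau\eps_tz_t}$ of a (conditionally) symmetric sum exceed the endpoint $\nrm{\sum_{t=1}^n\eps_tz_t}$ only by a $\log n$ factor times the largest single increment $\max_t\nrm{z_t}\le\max_t\nrm{x_t}$ (using $\abs{\lp_t}\le1$); concretely one can peel the dyadic blocks of $[1,n]$ and apply, within each block, the contraction principle or a doubling/telescoping bound, losing one factor of $\log n$ overall. The final inequality is immediate from the $1$-Lipschitz property of $\ls$ and the contraction principle of Rademacher complexity: $\En_\eps\nrm{\sum_t\eps_t\lp_tx_t}\le\En_\eps\nrm{\sum_t\eps_tx_t}$ since $\abs{\lp_t}=\abs{\ls'(\yh_t,y_t)}\le1$, which is precisely the observation already made in the text that $\RadH(\xr[n],\lpr[n])\le\RadH(\xr[n])$.

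The main obstacle I expect is making the $\eta$-optimization step fully rigorous inside the minimax/relaxation framework: one must check that taking an infimum over the auxiliary scalar $\eta>0$ preserves admissibility (the recursive condition \eqref{eq:rel_admissible} involves a $\sup_{x_t}\inf_{\yh_t}\sup_{\lp_t}$, and we need the $\inf_\eta$ to commute appropriately — it does, because $\inf_\eta$ on the right-hand side can always be matched by the same $\eta$ on the left, but the bookkeeping with the $\eta^{-1/(p-1)}$ penalty term and its accumulation across rounds needs care), and that the resulting strategy \eqref{eq:strategy_warmup} is well-defined (differentiability of $\burkbp$, or a subgradient version). A secondary but routine technical point is justifying the use of the Minimax Theorem at each round, which is where the "well-behaved loss" assumption and the compact domain $[-B,B]$ enter; this is handled exactly as in \cite{FosRakSri15} and contributes no new difficulty. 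Once these are in place, the three displayed bounds follow in the order above with only elementary inequalities.
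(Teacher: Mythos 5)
Your strategy is genuinely different from the paper's proof of this theorem, and it contains a real gap. The paper actually notes (right after the theorem statement) that there are two proofs available: a non-constructive one given in the appendix, and a constructive one running through \pref{sec:algorithms}. You chose the constructive route via $\burkbp$ and adaptive relaxations. The paper's appendix proof, by contrast, never touches Burkholder functions: it writes the minimax value $\V$ with the $\Cconstt\En_\eps\sup_{\tau\le n}\nrm{\cdot}$ penalty, performs the minimax swap, picks $\yh_t^\star$ so that $(\ls'(\yh_t^\star,y_t)x_t)$ is a martingale difference sequence, applies the $\UMD_1$ inequality \eqref{eq:umd1} directly to conclude $\V\le 0$, and only then passes to \eqref{eq:ub_logn} via a maximal-inequality corollary and to the last line via contraction. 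That is why the $\sup_\tau$ appears in \eqref{eq:ub_max}: it is baked into the $\UMD_1$ inequality, not into any stopped-martingale property of the zig-zag function, and the constructive route does not naturally produce it.

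The concrete gap in your argument is the $\inf_\eta$ step. You face a dilemma. If your $\eta$-slice is $\Rel_\eta = \frac{\eta}{p}\En_\eps\burkbp(\cdot,\cdot) + \frac{p-1}{p}\eta^{-1/(p-1)}$ (so that admissibility follows from Proposition~\ref{prop:burkholder}'s zig-zag argument), then after applying property~\ref{def:burkholder:1} of $\burkbp$ and taking $\inf_\eta$, the initial condition only yields $\Rel(\xr[n]) \ge \nrm{\sum_t\lp_t x_t} - \Dconstt\bigl(\En_\eps\nrm{\sum_t\eps_t\lp_tx_t}^p\bigr)^{1/p}$, which by Jensen is \emph{larger} than the desired $\nrm{\sum_t\lp_tx_t} - \Dconstt\En_\eps\nrm{\sum_t\eps_t\lp_tx_t}$, so the initial condition \eqref{eq:rel_initial} is not met with the correct penalty. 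If instead you pack the extra terms $\frac{\eta}{p}\Cconstt_p^p\En_\eps\nrm{\sum\eps\lp x}^p - \Dconstt\En_\eps\nrm{\sum\eps\lp x}$ into the slice to fix the initial condition, then the slice is no longer controlled by zig-zag concavity: $z\mapsto\nrm{B+\eps z}^p$ is convex, not concave, so admissibility breaks. The paper is explicit about this very obstruction (the discussion around \pref{eq:zigzag_regret1} and \pref{eq:var}) and resolves it in Lemma~\ref{lem:algorithm_supervised_doubling} by a doubling trick on $\eta$ combined with the crucial observation from Theorem~\ref{thm:alg_supervised_tight} that the algorithm's own Rademacher draws and the Rademacher sequence in the regret bound are the \emph{same} random variables, so $\eta$ can be tuned to the realized value of $\nrm{\sum_t\eps_t\lp_tx_t}$ per playout. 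Your sketch treats $\inf_\eta$ as a static operation commuting through the relaxation, which misses this coupling entirely; moreover, even the corrected constructive route yields $\log^2 n$ rather than the $\log n$ of \eqref{eq:ub_logn}, and does not yield the scale-free $\sup_\tau$ form \eqref{eq:ub_max} at all. Your final two reduction steps (maximal inequality to kill the $\sup_\tau$, contraction to drop $\ls'$) are fine and match the paper.
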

This shows that a bound on $\Cconstt_{p}$ for any $p$ gives $\Dconst\leq{}\Cconstt_{p}$ in \pref{eq:regret_rad}, up to an extra additive $\log{}n$ factor\footnote{All of the $\log{}n$ factors incurred in this paper arise when passing from bounds of the form $\En{}\sup_{\tau\leq{}n}F_{\tau}$ to those of the form $\En{}F_{n}$ for some random process $(F_t)$. This is notable technical issue with most martingale inequalities involving the $L_{1}(\Bspace)$ norm, including for instance Doob's maximal inequality. }.

An interesting feature of this theorem is that there are multiple ways through which it can be proven. In the appendix it is proven purely \emph{non-constructively} by plugging the $\UMD$ inequality \pref{eq:umd1} into the minimax analysis framework developed in \cite{FosRakSri15}. In \pref{sec:algorithms} it is proven \emph{constructively} by using the existence of the $\burk$ function to exhibit a particular strategy for the learner.

Let us remark that the bound in \pref{eq:ub_max} has the desirable property of being \emph{scale-free}, in that it can be achieved without an a-priori upper bound on the data norms $\max_{t\in\brk{n}}\nrm{x_t}$.

With \pref{thm:umd_upper} in mind, we finally state bounds on $\Cconstt_{p}$ for classes of interest. 
\begin{theorem}
The following $\UMD$ constants hold:\\
\label{thm:umd_constants}
\hspace{-0.2in}\begin{minipage}{.49\textwidth} %
\begin{itemize}
\item $(\R, \abs{\cdot})$: $\Cconstt_{p}= p^{\star}-1\;\forall{}p\in(1,\infty)$. 
\item $(\R^{d}, \nrm{\cdot}_{p})$, $p\in(1,\infty)$: $\Cconstt_{p}= p^{\star}-1$. 
\item $(\R^{d}, \nrm{\cdot}_{1}/\nrm{\cdot}_{\infty})$:  $\Cconstt_{2}=O(\log{}d)$.
\item $(\R^{d}, \nrm{\cdot}_{\mc{A}}/\nrm{\cdot}_{\mc{A}^{\star}})$: $\Cconstt_{2}=O(\log\abs{\mc{A}})$.
\end{itemize}
\end{minipage} %
\hspace{-0.2in}\begin{minipage}{.52\textwidth} %
\begin{itemize}
\item $(\R^{d\times{}d}, \nrm{\cdot}_{S_{p}})$, $p\in(1,\infty)$: $\Cconstt_{p}=O((p^{\star})^{2})$.
\item $(\R^{d\times{}d}, \nrm{\cdot}_{\sigma}/\nrm{\cdot}_{\Sigma})$: $\Cconstt_{2}=O(\log^{2}d)$.
\item $(\R^{d\times{}d}, \nrm{\cdot}_{p,q})$,  $p,q\in(1,\infty)$: $\Cconstt_{p}=O(p^{\star}{}q^{\star})$.
\item $(\H, \nrm{\cdot}_{\mc{H}})$ for Hilbert space $\mc{H}$: $\Cconstt_{2}=1$.
\end{itemize}
\end{minipage}

\end{theorem}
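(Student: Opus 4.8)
The plan is to reduce all eight entries to two base cases --- the scalar line $(\R,\abs{\cdot})$ and the Schatten class $S_p$ --- and to propagate the constant using three ``soft'' operations. The first is \emph{Banach--Mazur distortion}: if $T\colon X\to Y$ satisfies $\nrm{x}_X\le\nrm{Tx}_Y\le\lambda\nrm{x}_X$ then pushing any $X$-valued martingale difference sequence through $T$ shows $\Cconstt_p(X)\le\lambda\,\Cconstt_p(Y)$; in particular the $\UMD_p$ constant passes to subspaces and to isometric copies. The second is \emph{coordinatewise Fubini}: if $(d_t)_{t\le n}$ is an $L_p(S;X)$-valued martingale difference sequence then, since conditional expectations commute with integration over $S$, for a.e.\ $s\in S$ the $X$-valued sequence $(d_t(s))_t$ is again a martingale difference sequence; applying the $\UMD_p$ inequality for $X$ pointwise in $s$ and integrating $\nrm{\cdot}_X^p$ over $S$ gives $\Cconstt_p(L_p(S;X))\le\Cconstt_p(X)$. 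The third is the \emph{exponent-change estimate} of \pref{thm:umd_equiv}, which moves the exponent in the $\UMD$ inequality at the cost of a factor $100(q/p+q'/p')$.

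\emph{Base cases.} For a Hilbert space $\H$, the parallelogram law makes $z\mapsto\nrm{x+z}_\H^2-\nrm{x'+\eps z}_\H^2$ affine, so $\burk_2^{\H}(x,x')=\nrm{x}_\H^2-\nrm{x'}_\H^2$ is a Burkholder function for $(\nrm{\cdot}_\H,2,1)$ and \pref{thm:umd_burkholder} gives $\Cconstt_2=1$ (equivalently, orthogonality of the martingale differences gives $\En\nrm{\sum_t\eps_tX_t}_\H^2=\sum_t\En\nrm{X_t}_\H^2=\En\nrm{\sum_tX_t}_\H^2$). For $(\R,\abs{\cdot})$, take $\beta_p=p^{\star}-1$ in Burkholder's function $\burk^{\R}_p$ from \pref{sec:martingales1}: verifying zig-zag concavity and the lower bound $\burk^{\R}_p(x,x')\ge\abs{x}^p-(p^{\star}-1)^p\abs{x'}^p$ is Burkholder's classical, and sharp, computation, and \pref{thm:umd_burkholder} converts it to $\Cconstt_p=p^{\star}-1$. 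For $S_p$ we invoke the noncommutative martingale transform / Burkholder--Gundy inequalities (Pisier--Xu, Junge--Xu, Randrianantoanina), which bound the $\UMD_p$ constant of $S_p$ by $O((p^{\star})^2)$. This noncommutative input is the one non-elementary ingredient, and is the main obstacle; every other row follows by soft arguments from the two base cases.

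\emph{$\ell_p$ and group norms.} Since $(\R^d,\nrm{\cdot}_p)=L_p(\{1,\dots,d\};\R)$, coordinatewise Fubini applied to the scalar base case gives $\Cconstt_p\le p^{\star}-1$. The group norm is $(\R^{d\times d},\nrm{\cdot}_{p,q})=L_p(\{1,\dots,d\};(\R^d,\nrm{\cdot}_q))$, so Fubini reduces it to the $\UMD_p$ inequality for $(\R^d,\nrm{\cdot}_q)$; this we obtain from $\Cconstt_q((\R^d,\nrm{\cdot}_q))=q^{\star}-1$ via the exponent-change estimate, and a short computation of $100(p/q+p'/q')(q^{\star}-1)$ in the cases $q\ge 2$ and $q<2$ shows this is $O(p^{\star}q^{\star})$; hence $\Cconstt_p=O(p^{\star}q^{\star})$ for $\nrm{\cdot}_{p,q}$.

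\emph{The $\log$ spaces.} On $\R^d$ one has $\nrm{x}_\infty\le\nrm{x}_{2\log d}\le e^{1/2}\nrm{x}_\infty$, so $(\R^d,\nrm{\cdot}_\infty)$ is $e^{1/2}$-isomorphic to $(\R^d,\nrm{\cdot}_{2\log d})$; the latter has $\UMD_2$ constant $O(\log d)$ by the $\ell_p$ bound and the exponent-change estimate (with $q=2\log d$, $2/q+2/q'=O(1)$), so distortion gives $\Cconstt_2(\ell_\infty^d)=O(\log d)$, while the $\nrm{\cdot}_1$ side is the dual statement $\Cconstt_2(X)=\Cconstt_2(X^{\star})$. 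For a symmetric atom set $\mc A$, the map $\theta\mapsto(\tri*{\theta,a})_{a\in\mc A}$ isometrically embeds $(\R^d,\nrm{\cdot}_{\mc A^{\star}})$ into $\ell_\infty^{\abs{\mc A}}$, so the previous bound and distortion give $\Cconstt_2=O(\log\abs{\mc A})$, with $\nrm{\cdot}_{\mc A}$ again the dual side. Finally, since a $d\times d$ matrix has at most $d$ singular values, $\nrm{X}_\sigma\le\nrm{X}_{S_{2\log d}}\le e^{1/2}\nrm{X}_\sigma$, so $(\R^{d\times d},\nrm{\cdot}_\sigma)$ is $e^{1/2}$-isomorphic to $S_{2\log d}$; combining the Schatten bound $O((p^{\star})^2)$ at $p=2\log d$ with the exponent-change estimate and distortion yields $\Cconstt_2=O(\log^2 d)$, with $\nrm{\cdot}_\Sigma$ its dual.
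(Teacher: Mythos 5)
Your proof is correct and follows essentially the same route as the paper: scalar and Schatten base cases, coordinatewise summation for $\ell_p$ and group norms, a distortion argument via $\ell_{\Theta(\log d)}$ (resp.\ $S_{\Theta(\log d)}$) for the $\infty$-type norms with duality for the $1$-type norms, an isometric embedding into $\ell_\infty^{|\mc{A}|}$ for atomic norms, and the exponent-change estimate of \pref{thm:umd_equiv} throughout. Your packaging of these steps as three reusable operations (Banach--Mazur distortion, coordinatewise Fubini, exponent change) is a clean abstraction of what the paper carries out case by case.
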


\subsection{Efficient Burkholder functions}
Burkholder's geometric characterization, \pref{thm:umd_burkholder}, implies existence of a Burkholder function $\burk_{p}^{\Bspace}$ whenever a space $(\Bspace, \nrm{\cdot})$ has $\UMD$ constant $\Cconstt_{p}$. Unfortunately, the generic $\burk$ function construction (see \cite{veraar2015analysis}, Theorem 4.5.6) is not \emph{efficiently computable}; it is expressed in terms of a supremum over all martingale difference sequences. However, the construction of concrete $\burk$ functions has been an active area of research in the three decades since Burkholder's original construction. This is because one can exhibit a $\burk$ function to certify that a space is $\UMD$ for a specific constant $\Cconstt_{p}$, and discovering \emph{sharp} $\UMD$ constants is of general interest to the analysis community \citep{osekowski2012sharp}.

Let us begin by stating Burkholder's optimal $\burk$ function construction for the scalar setting. This function was originally obtained by solving a particular partial differential equation. This function is graphed in \pref{fig:burkholder}.
\begin{example}[$\abs{\cdot}^{p}$, \cite{veraar2015analysis}, Theorem 4.5.7]
\label{ex:burkholder_scalar}
For any $p\in(1,\infty)$, the function 
\begin{equation}
\burk^{\R}_{p}(x,y) \defeq \alpha_{p}\prn*{\abs{x}-\beta_{p}\abs{y}}\prn*{\abs{x} + \abs{y}}^{p-1}
\end{equation}
is Burkholder for $(\abs{\cdot}, p, \beta_{p})$ , where $\alpha_{p}= p\prn*{1-\frac{1}{p^{\star}}}^{p-1}$, $\beta_{p}=p^{\star}-1$. 
$\beta_{p}$ is the sharpest constant possible.
\end{example}
Observe that all of the Burkholder function properties (\pref{def:burkholder}) are preserved under addition. This leads us to a construction for $\ls_p$ norms in the vector setting, which inherits the optimal constants from Burkholder's scalar construction.
\begin{example}[$\ls_{p}$ norm]
\label{ex:burkholder_pp}
\begin{equation}
\burk^{\ls_p}_{p}(x,y) \defeq \sum_{i\in\brk{d}}\burk^{\R}_{p}(x_{i}, y_{i})
\end{equation}
is a Burkholder function for $(\nrm{\cdot}_{p}^{p}, p, \beta_p)$, with $\beta_p$ as in \pref{ex:burkholder_scalar}. $\burk^{\ls_p}_p$ can be computed in time $O(d)$.
\end{example}
\begin{example}[Weighted $\ls_{2}$ norm]
Let $\nrm{x}_{A}=\sqrt{\tri*{x, Ax}}$ for some PSD matrix $A$. Then
\[
\burk^{\ls_{2,A}}_{2}(x,y)\defeq{}U^{\ls_2}_{2}(A^{1/2}x, A^{1/2}y)
\]
is a Burkholder function for $(\ls_{2,A},2,  1)$. $\burk^{\ls_{2,A}}_{2}$ can be computed in time $O(d^2)$.
\end{example}

\begin{figure}[h]
\begin{center}
\includegraphics[scale=0.5]{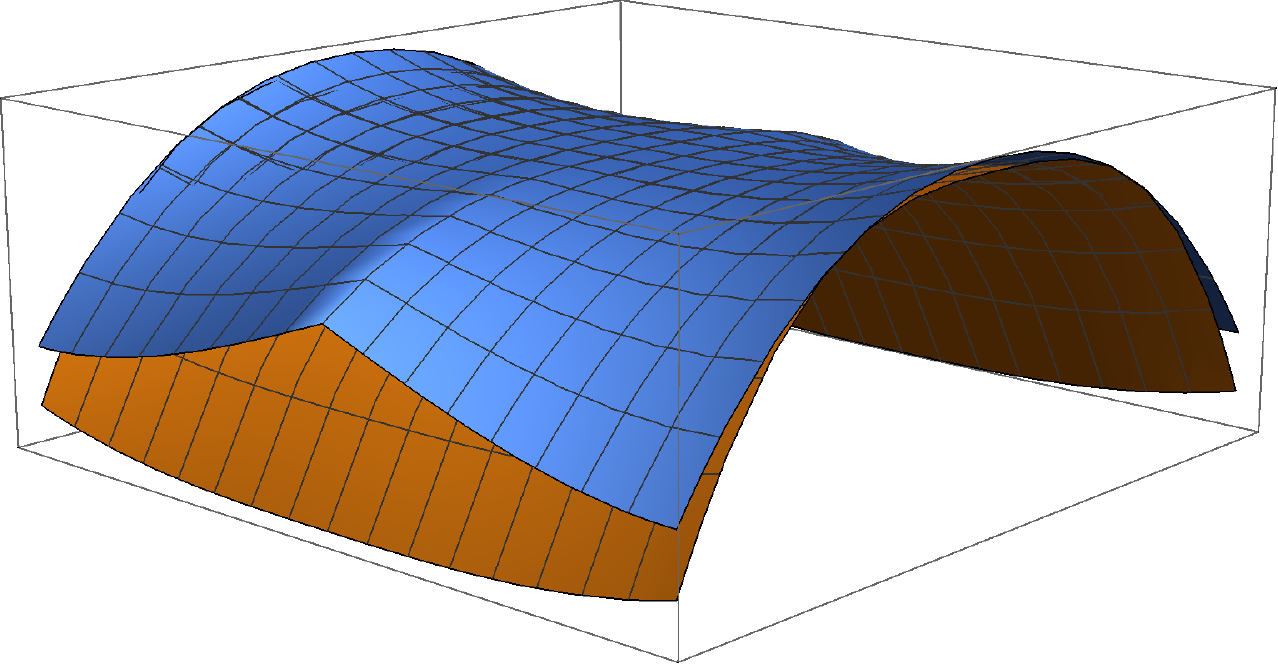}
\end{center}
\caption{$\burk^{\R}_{p}(x,x')$ (blue) and $\abs{x}^{p}-\beta_p^p\abs{x'}^{p}$ (orange) for $p=3$.}
\label{fig:burkholder}
\end{figure}
Another useful construction extends Burkholder's scalar function to general Hilbert spaces. This is useful as it applies even to infinite dimensional spaces such as RKHS.
\begin{example}[General Hilbert Space, \cite{veraar2015analysis}, Theorem 4.5.14]
\label{ex:burkholder_hilbert}
Let $\mc{H}$ be some Hilbert space whose norm will be denoted $\nrm{\cdot}_{\mc{H}}$.
\begin{equation}
\label{eq:burkholder_hilbert}
\burk^{\H}_{p}(x,y) \defeq \alpha_{p}\prn*{\nrm{x}_{\H}-\beta_{p}\nrm{y}_{\H}}\prn*{\nrm{x}_{\H} + \nrm{y}_{\H}}^{p-1}
\end{equation}
is a Burkholder function for $(\nrm{\cdot}_{\mc{H}}, p, \beta_{p})$ for each $p\in(1,\infty)$, where $\alpha_p$ and $\beta_p$, and are as in Example \ref{ex:burkholder_scalar}. This function works for all Hilbert spaces, even those of infinite dimension. For $p=2$ this function and its derivatives can be implemented efficiently using the Representer Theorem.
\end{example}
We can lift the former construction to a construction for group norms in the same fashion as in our construction for $\ls_p$ norms.
\begin{example}[$(p,2)$ Group Norm]
\label{ex:group}
In this example we consider group norms over matrices in $\R^{d\times{}d}$. The function,
\[
\burk^{(p,2)}_{p}(x,y)\defeq{}\sum_{i\in\brk{d}}\burk^{\ls_2}_{p}(x,y),
\]
where $\burk^{\ls_2,p}$ is the general Hilbert space Burkholder function \pref{eq:burkholder_hilbert},
is a Burkholder function for $(\nrm{\cdot}_{(p,2)}, p, \beta_p)$. $\burk^{(p,2)}_{p}$ can be computed in time $O(d^2)$.
\end{example}
Group norms are used in multi-task learning. Furthermore, \pref{ex:group} works not just for $\R^{d\times{}d}$, but more generally for $\R^{d}\times{}\H$ for any Hilbert space $\mc{H}$. This makes it well-suited to multiple kernel learning tasks.

As we will show in the sequel, there are a number of algorithmic tricks we can use to achieve $\RadH$-type bounds even when we do not exactly have a $\burk$ function for a class of interest.

\section{Algorithms and applications}
\label{sec:algorithms}

Recall that our goal is to design algorithms whose regret is bounded by $\RadH(\xr[n], \lpr[n])=\En_{\eps}\nrm*{\sum_{t=1}^{n}\eps_t{}\ls'_tx_t}$. Our first algorithm, \zigzag{} (\pref{alg:zigzag_supervised}), efficiently achieves a regret bound of this form whenever we have an efficient Burkholder function $\burk_{p}^{\Bspace}$ --- even if $p\neq{}1$. This notably yields an efficient algorithm for $\ls_p$ spaces by using the  Burkholder function $\burk_{p}^{\ls_p}$ from \pref{ex:burkholder_pp}.

\begin{algorithm}
\caption{\textsc{ZigZag}}\label{alg:zigzag_supervised}
\begin{algorithmic}[1]
\Procedure{ZigZag}{$\burk_{p}, p, \eta$}\Comment{$\burk_{p}$ is Burkholder for $(\nrm{\cdot}, p, \beta)$. $\eta>0$ is the learning rate.}\\
At time $t$:
\begin{enumerate}
\item Let $G_{t}(\alpha) = \En_{\sigma_{t}\in\pmo}\frac{\eta}{p}\burk_{p}\prn*{\sum_{s=1}^{t-1}\ls'_sx_s + \alpha x_t, \sum_{t=1}^{t-1}\eps_s\ls'_sx_s + \sigma_t \alpha x_t}$.
\item Predict $\yh_{t}=-G'_{t}(0)$. 
\Comment{More generally, use the supergradient.}
\item Draw independent Rademacher $\eps_{t}\in\pmo$.
\end{enumerate}
\EndProcedure
\end{algorithmic}
\end{algorithm}

\begin{theorem}
\label{thm:alg_supervised_tight}
	Denote the prediction of \pref{alg:zigzag_supervised} as $\yh_{t}^{\eps_{1:t-1}}$ to make the dependence on the sequence $(\eps_{t})_{t\leq{}n}$ explicit. \pref{alg:zigzag_supervised} enjoys the regret bound,
\begin{equation}
\label{eq:zigzag_regret2}
\En_{\eps}\brk*{\sum_{t=1}^{n}\ls(\yh^{\eps_{1:t-1}}_{t}, y_t) - \inf_{f\in\F}\sum_{t=1}^{n}\ls(f(x_t), y_t)-\frac{1}{p}\prn*{\eta\beta^{p}\nrm*{\sum_{t=1}^{n}\eps_{t}\ls'_{t}x_t}^{p} + \frac{1}{p'-1}\eta^{-(p'-1)}}}\leq{}0.
\end{equation}
\end{theorem}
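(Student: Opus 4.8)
The plan is to run the adaptive-relaxation argument behind \pref{prop:relaxation}--\pref{prop:burkholder} directly, in a ``pathwise'' form, but with the $p$-th power Burkholder function $\burk_p$ of \pref{def:burkholder} (which lower-bounds $\nrm{x}^p-\beta^p\nrm{x'}^p$) in place of the degree-one version, and then to convert the resulting $\nrm{\cdot}^p$ bound back into a $\nrm{\cdot}$ bound by the scalar inequality $a-\tfrac{\eta}{p}a^p\le\tfrac{p-1}{p}\eta^{-(p'-1)}$ (equivalently, the identity $a=\tfrac1p\inf_{\eta>0}\{\eta a^p+(p-1)\eta^{-1/(p-1)}\}$ evaluated at a fixed $\eta$) noted before \pref{def:burkholder}. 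Write $\ls'_t=\ls'(\yh_t,y_t)$, so $\abs{\ls'_t}\le1$ by $1$-Lipschitzness, and introduce the pathwise potential
\[
\Phi_t \defeq \frac{\eta}{p}\,\burk_p\prn*{\sum_{s=1}^{t}\ls'_s x_s,\ \sum_{s=1}^{t}\eps_s\ls'_s x_s}, \qquad \Phi_0\defeq\frac{\eta}{p}\burk_p(0,0)\le0,
\]
the last inequality by the normalization $\burk_p(0,0)\le0$. The function $G_t$ of \pref{alg:zigzag_supervised} is exactly $G_t(\alpha)=\En_{\sigma_t}\tfrac{\eta}{p}\burk_p\prn*{\sum_{s<t}\ls'_s x_s+\alpha x_t,\ \sum_{s<t}\eps_s\ls'_s x_s+\sigma_t\alpha x_t}$, so that, after conditioning on the realized signs $\eps_{1:t-1}$, one has $G_t(0)=\Phi_{t-1}$ and $G_t(\ls'_t)=\En_{\eps_t}\brk*{\Phi_t\mid\eps_{1:t-1}}$ (the fresh draws $\sigma_t$ and $\eps_t$ are interchangeable).

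The crux is a one-step inequality from zig-zag concavity. Fixing $\sigma\in\pmo$, restricting the concave map $z\mapsto\burk_p(x+z,x'+\sigma z)$ to the line $z=\alpha x_t$ shows $\alpha\mapsto\burk_p(A+\alpha x_t,B+\sigma\alpha x_t)$ is concave on $\R$ for any $A,B\in\Bspace$; averaging the two signs and multiplying by $\eta/p>0$ shows $G_t$ is concave (and finite, since $\burk_p$ is real-valued), hence has a supergradient at $0$. Taking $-\yh_t$ to be such a supergradient (this is $-G_t'(0)$ when $G_t$ is differentiable, as in the algorithm), $G_t(\ls'_t)\le G_t(0)-\yh_t\ls'_t$, i.e.
\[
\yh_t\ls'_t + \En_{\eps_t}\brk*{\Phi_t\mid\eps_{1:t-1}}\ \le\ \Phi_{t-1}.
\]
This holds \emph{surely}, for whatever $y_t$ the adversary picks (hence any resulting $\ls'_t$), since it is nothing but concavity of $G_t$. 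Summing over $t\in\brk{n}$, taking $\En_\eps$, and telescoping via the tower rule ($\En_\eps\En_{\eps_t}[\Phi_t\mid\eps_{1:t-1}]=\En_\eps\Phi_t$) gives $\En_\eps\sum_{t=1}^n\yh_t\ls'_t\le\En_\eps\Phi_0-\En_\eps\Phi_n\le-\En_\eps\Phi_n$.

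To finish, combine this with the linearize-and-dualize bound \pref{eq:linearize_and_dualize}, a pointwise consequence of convexity of $\ls$: $\sum_t\ls(\yh_t,y_t)-\inf_{\nrm{w}_\star\le1}\sum_t\ls(\tri*{w,x_t},y_t)\le\sum_t\yh_t\ls'_t+\nrm*{\sum_t\ls'_t x_t}$. Taking $\En_\eps$ and substituting the bound on $\En_\eps\sum_t\yh_t\ls'_t$ yields an upper bound of $\En_\eps\brk*{-\Phi_n+\nrm*{\sum_t\ls'_t x_t}}$. Now apply the lower bound $\burk_p(x,x')\ge\nrm{x}^p-\beta^p\nrm{x'}^p$ to get $-\Phi_n\le-\tfrac{\eta}{p}\nrm*{\sum_t\ls'_t x_t}^p+\tfrac{\eta\beta^p}{p}\nrm*{\sum_t\eps_t\ls'_t x_t}^p$, and then the elementary bound $a-\tfrac{\eta}{p}a^p\le\tfrac{p-1}{p}\eta^{-(p'-1)}=\tfrac1p\cdot\tfrac1{p'-1}\eta^{-(p'-1)}$ with $a=\nrm*{\sum_t\ls'_t x_t}$, which cancels the stray $\nrm*{\sum_t\ls'_t x_t}$ against $-\tfrac{\eta}{p}\nrm*{\sum_t\ls'_t x_t}^p$. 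Rearranging is exactly \pref{eq:zigzag_regret2}.

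The step needing care is the pathwise/expectation bookkeeping in the telescoping: unlike the relaxation of \pref{prop:burkholder}, which re-averages over all past signs, \pref{alg:zigzag_supervised} freezes the realized $\eps_{1:t-1}$, so $\yh_t$ and $\ls'_t$ are functions of $\eps_{1:t-1}$ (and of the adversary's moves). One must check that (i) the one-step inequality is genuinely sure --- concavity of $G_t$ is oblivious to how $\ls'_t$ arose --- and (ii) the conditional-expectation telescoping is valid since $\eps_t$ is drawn fresh and independent of everything up to round $t$, regardless of the (possibly adaptive) rule nature uses for $x_t,y_t$. The remaining ingredients are just the two defining inequalities for $\burk_p$ and the scalar Young bound.
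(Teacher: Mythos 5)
Your proof is correct and follows essentially the same argument as the paper's: the same potential $\Phi_t=\tfrac{\eta}{p}\burk_p(\sum_{s\le t}\ls'_s x_s,\sum_{s\le t}\eps_s\ls'_s x_s)$ (the paper calls it $\Rel$), the same one-step decrease via zig-zag concavity and the supergradient choice of $\yh_t$, and the same combination with linearize-and-dualize and the scalar bound $a\le\Psi_{\eta,p}(a^p)$. The only difference is presentational: you telescope pathwise with explicit conditional expectations, while the paper phrases it via the admissibility/relaxation framework and applies the scalar Young step at the initial-condition stage rather than at the end.
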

A few remarks are in order. A naive application of the relaxation technique would yield a bound
\begin{equation}
\label{eq:zigzag_regret1}
\En_{\eps}\brk*{\sum_{t=1}^{n}\ls(\yh_{t}, y_t) - \inf_{f\in\F}\sum_{t=1}^{n}\ls(f(x_t), y_t)} \leq{} \frac{1}{p}\prn*{\eta\beta^{p}\En_{\eps}\nrm*{\sum_{t=1}^{n}\eps_{t}\ls'_{t}x_t}^{p} + \frac{1}{p'-1}\eta^{-(p'-1)}},
\end{equation}
which falls short of the goal of achieving $\RadH$ for the following reason. Observe that for any $p>1$,
\begin{equation}
\label{eq:var}
x^{1/p} = \frac{1}{p}\inf_{\eta>0}\prn*{\eta{}x + \frac{1}{p'-1}\eta^{1-p'}}\defeq{}\inf_{\eta>0}\Psi_{\eta,p}(x).
\end{equation}
Recall that $\eta>0$ is a parameter of \pref{alg:zigzag_supervised}. \pref{eq:var} combined with \pref{eq:zigzag_regret1} suggest that if we chose the optimal $\eta$ in hindsight, the regret of \textsc{ZigZag} would be bounded by $\sqrt[p]{\En_{\eps}\nrm*{\sum_{t=1}^{n}\eps_{t}\ls'_{t}x_t}^{p}}$. However, this bound is always worse than $\RadH$ via Jensen's inequality, and is indeed sub-optimal for $\ls_p$ norms. Luckily, \pref{eq:zigzag_regret2} reveals that for \zigzag{}, the Rademacher sequence $(\eps_t)_{t\leq{}n}$ used by the algorithm and the Rademacher sequence appearing in the regret bound are one and the same, which allows us to adapt $\eta$ to $\nrm*{\sum_{t=1}^{n}\eps_{t}\ls'_{t}x_t}$ for a particular playout of the sequence $(\eps_t)_{t\leq{}n}$ to get the desired $\RadH$ bound. This tuning of $\eta$ via doubling is stated in the next result.

\begin{lemma}
\label{lem:algorithm_supervised_doubling}
Define 
\[
\Phi(x_{t_1:t_2},, \lp_{t_1:t_2}, \eps_{t_1:t_2}) = \beta^{p}\sup_{t_1\leq{}a\leq{}b\leq{}t_2}\nrm*{\sum_{t=a}^{b}\eps_t\lp_{t}x_t}^p.
\]
Consider the following strategy:
\begin{enumerate}
\item Choose $\eta_0=(\beta\cdot{}p)^{-p}$ for $p\geq{}2$ and $\eta_0=1$ for $p<2$. Update with $\eta_{i}=2^{-\frac{i}{p'-1}}\eta_{0}$.
\item In phase $i$, which consists of all $t\in\crl*{s_{i},\ldots,s_{i+1}-1}$, play \pref{alg:zigzag_supervised}, \textsc{ZigZag}, with  learning rate $\eta_{i}$.
\item Take $s_1=1$, $s_{N+1}=n+1$, and $s_{i+1}=\inf\crl{\tau\mid{}\eta_{i}\Phi(x_{s_i:\tau-1}, \lp_{s_i:\tau-1}, \eps_{s_i:\tau-1})>\eta_{i}^{-(p'-1)}}$, where $N$ is the index of the last phase (note that whether $t=s_{i+1}$ can be tested using only information available to the learner at time $t$).
\end{enumerate}
This strategy achieves {\small
\begin{align*}
&\En_{\eps}\brk*{\sum_{t=1}^{n}\ls(\yh_t, y_t) - \inf_{f\in\F}\sum_{t=1}^{n}\ls(f(x_t), y_t)}\leq{}
O\prn*{\beta^{2}\log^{2}n\En_{\eps}\nrm*{\sum_{t=1}^{n}\eps_t\lp_tx_t} + \min\crl*{\log{}n
 +  (p\cdot{}\beta)^{\frac{p}{p-1}}, \beta{}^p\log{}n}
 }.
\end{align*}}
\end{lemma}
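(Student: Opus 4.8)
Here is my plan for proving Lemma~\ref{lem:algorithm_supervised_doubling}.

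\textbf{Overall approach.} The proof combines the per-phase regret guarantee of \pref{thm:alg_supervised_tight} with a doubling argument on the learning rate, where the ``potential'' being doubled is $\Phi$, the (supremum-over-subintervals version of the) quantity $\beta^{p}\nrm*{\sum_{t}\eps_{t}\lp_{t}x_{t}}^{p}$ that appears inside the bound \pref{eq:zigzag_regret2}. First I would fix a realization of the Rademacher sequence $(\eps_t)_{t\leq n}$ and work pathwise; the expectation $\En_{\eps}$ is taken only at the very end. For each phase $i$ spanning rounds $\crl{s_i,\ldots,s_{i+1}-1}$, \pref{thm:alg_supervised_tight} applied to that subsequence (with the prefix $\sum_{s<s_i}\lp_s x_s$ and its signed counterpart folded into the $\burk_p$ arguments, which is exactly what \pref{alg:zigzag_supervised} does by carrying the running sums) gives that the within-phase regret is at most $\tfrac{1}{p}\prn*{\eta_i\beta^{p}\nrm*{\sum_{t=s_i}^{s_{i+1}-1}\eps_t\lp_t x_t}^{p} + \tfrac{1}{p'-1}\eta_i^{-(p'-1)}}$ in expectation over the fresh signs, but pathwise once we condition. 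The stopping rule in step 3 ensures $\eta_i\Phi(x_{s_i:s_{i+1}-2},\ldots)\leq \eta_i^{-(p'-1)}$ for all but the last round of the phase, so the first term in each phase's bound is controlled by $\eta_i^{-(p'-1)} + \eta_i\beta^p\max_t\nrm{\lp_t x_t}^p$; the residual $\max_t\nrm{x_t}^p$ term is where one $\log n$ factor (cf.\ the footnote after \pref{thm:umd_upper}) gets absorbed.

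\textbf{Key steps in order.} (1) Pathwise per-phase bound: sum the \pref{thm:alg_supervised_tight} guarantee over phases $i=1,\ldots,N$ to get total regret $\leq \tfrac{1}{p}\sum_i \prn*{\eta_i\beta^p \nrm*{\sum_{t\in\text{phase }i}\eps_t\lp_t x_t}^p + \tfrac{1}{p'-1}\eta_i^{-(p'-1)}}$. (2) Bound the number of phases $N$: since $\eta_i = 2^{-i/(p'-1)}\eta_0$, each completed phase forces $\Phi$ (over that phase) to exceed $\eta_i^{-p'}= 2^{i p'/(p'-1)}\eta_0^{-p'}$, i.e.\ $\Phi$ over the whole run exceeds a geometric sequence, so $N = O(\log(\Phi_{\text{total}}/\Phi_{\text{scale}})) = O(\log n)$ once one notes $\Phi$ ranges over at most a $\mathrm{poly}(n)$ factor (using $1$-Lipschitzness, $\abs{\lp_t}\leq 1$, and that $\nrm{\sum\eps_t\lp_t x_t}^p$ is between $\max_t\nrm{x_t}^p$ and $(n\max_t\nrm{x_t})^p$ up to the degenerate all-zero case, which is handled separately). (3) Sum the geometric series: $\sum_{i\leq N}\eta_i^{-(p'-1)} = \eta_0^{-(p'-1)}\sum_i 2^{i} = O(\eta_0^{-(p'-1)}2^N)$, and $2^N$ is itself $O(\text{final value of }\Phi \cdot \eta_0^{p'-1}\cdot\text{const})$ by the stopping rule at phase $N$, so this telescopes into $O(\beta^p\nrm*{\sum_t\eps_t\lp_t x_t}^p)^{1/?}$ — here is where \pref{eq:var} enters: the sum $\eta_i x + \eta_i^{-(p'-1)}$ over a doubling schedule reconstructs $x^{1/p}$ up to constants, yielding the linear-in-$\RadH$ term rather than the $p$-th-root term. (4) Handle the two regimes of $\eta_0$ (the $p\geq 2$ vs $p<2$ split) to get the $\min\crl*{\log n + (p\beta)^{p/(p-1)}, \beta^p\log n}$ additive term. (5) Take $\En_\eps$ of the pathwise bound; Jensen is not needed in the wrong direction because the leading term is already linear in the norm, but passing from $\sup_{a\leq b}$ inside $\Phi$ to the $L_1$ norm $\En_\eps\nrm*{\sum_t\eps_t\lp_t x_t}$ costs the advertised $\log^2 n$ (one $\log n$ from $\sup_\tau$-to-endpoint as in \pref{thm:umd_upper}, another from the phase count interacting with the per-phase maxima).

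\textbf{Main obstacle.} The delicate part is step (3)--(5): making the doubling telescope give a bound \emph{linear} in $\En_\eps\nrm*{\sum_t \eps_t\lp_t x_t}$ rather than in its $p$-th root. The naive bound \pref{eq:zigzag_regret1} would, after optimizing $\eta$, only give $(\En_\eps\nrm*{\cdot}^p)^{1/p}$, which Jensen shows is worse. The trick — and the reason \pref{thm:alg_supervised_tight} was stated with the \emph{same} $\eps$ sequence in the algorithm and in the bound — is that the adaptive doubling chooses $\eta$ as a function of the realized path of $\nrm*{\sum_t\eps_t\lp_t x_t}$, effectively performing the infimum in \pref{eq:var} pathwise \emph{before} taking $\En_\eps$, which by \pref{eq:var} turns the per-path bound into (a constant times) $\nrm*{\sum_t\eps_t\lp_t x_t}$ directly, modulo the $\log$ losses from the $\sup$-over-subintervals form of $\Phi$ and from the discrete (rather than continuous) choice of $\eta$. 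Verifying that the subinterval supremum in $\Phi$ — which is what makes the stopping rule computable online — only inflates things by a polylog, and carefully tracking the constant in front of the geometric sum so that the additive term comes out as the stated $\min\crl*{\cdot,\cdot}$, are the points demanding the most care.
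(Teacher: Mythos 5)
Your overall skeleton matches the paper's proof closely: sum the per-phase \pref{thm:alg_supervised_tight} bounds, use the stopping rule to show each $\eta_i\Phi$-term over a completed phase is controlled, collapse the geometric series $\sum_i\eta_i^{-(p'-1)}\approx 2^{N}\eta_0^{-(p'-1)}\lesssim\Phi^{1/p}$ so that the leading term comes out linear in the norm (your step (3) has a small typo — you want $\Phi^{1/p}$, not $\Phi$, but the idea is right), bound $N=O(\log n)$ via the geometric growth of $\Phi$, and split on $\eta_0$ for the additive term. All of this is the paper's argument.

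However, you are missing a step that is both conceptually important and responsible for a factor you cannot otherwise account for. After taking $\En_\eps$ you are left with $\En_\eps\sup_{1\le a\le b\le n}\nrm{\sum_{t=a}^b\eps_t\lp_t(\eps_{1:t-1})x_t(\eps_{1:t-1})}$, in which the adversary's choices $\lp_t, x_t$ depend on the algorithm's realized Rademacher bits $\eps_{1:t-1}$; this is a \emph{martingale}, not a Rademacher sum over fixed data. To get to the stated RHS you must decouple, and the paper does this by noting that the existence of $\burk_p$ implies the $\UMD_1$ inequality with constant $O(\beta_p)$ (via \pref{thm:umd_burkholder} and \pref{thm:umd_equiv}), then applying it once. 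That decoupling step is where the \emph{second} factor of $\beta$ in the coefficient $\beta^2$ comes from; your plan as written only produces one factor of $\beta$ (from the doubling telescope) and has no mechanism to produce the other. Relatedly, your accounting of $\log^2 n$ is off: neither $\log n$ comes from the phase count in the leading term (the phase count only feeds the additive $\beta^p\log n$). Rather, both $\log n$'s arise from two successive applications of the Doob-type estimate \pref{corr:doob_p1} — one to pass from $\sup_{1\le a\le b\le n}$ to $\sup_{1\le b\le n}$, and one to pass from $\sup_{1\le b\le n}$ to the fixed endpoint $n$ after the $\UMD_1$ decoupling.
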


\subsection{$\ls_p$ norms}
We now specialize our generic algorithm to the important special case of $\ls_{p}$ norms.
\begin{example}
\label{ex:lp_alg}
Fix $p\in(1,\infty)$. Let $\yh_t$ be the strategy produced by $\zigzag$ (\pref{alg:zigzag_supervised}) using the Burkholder function $\burk_{p}^{\ls_{p}}$ from \pref{ex:burkholder_pp} with the learning rate tuning strategy from \pref{lem:algorithm_supervised_doubling}. This strategy achieves
 {\small
\begin{align}
\En_{\eps}\brk*{\sum_{t=1}^{n}\ls(\yh_t, y_t) - \inf_{f\in\F}\sum_{t=1}^{n}\ls(f(x_t), y_t)}&\leq{}
O\prn*{\En_{\eps}\nrm*{\sum_{t=1}^{n}\eps_t\lp_tx_t}_{p}\cdot{}(p^{\star})^{2}\log^{2}n + (p^{\star})^{2}\log{}n
 }\label{eq:zigzag_lp}.
\end{align}}This algorithm serves as a generalization of AdaGrad to all powers of $p$.
If we take $p=2$, the result recovers the regret bound for full matrix AdaGrad \citep{duchi2011adaptive} up to logarithmic factors:
\begin{align}
 \En_{\eps}\brk*{\sum_{t=1}^{n}\ls(\yh_t, y_t) - \inf_{f\in\F}\sum_{t=1}^{n}\ls(f(x_t), y_t)}&\leq{}
\widetilde{O}\prn*{\sqrt{\sum_{t=1}^{n}\nrm*{x_t}_{2}^{2}}}.
\label{eq:zigzag_p2}
\intertext{We can also recover the regret bound for diagonal AdaGrad \citep{duchi2011adaptive} by taking $p=1+1/\log{}d$:}
\En_{\eps}\brk*{\sum_{t=1}^{n}\ls(\yh_t, y_t) - \inf_{f\in\F}\sum_{t=1}^{n}\ls(f(x_t), y_t)}  &\leq{}
\widetilde{O}\prn*{\sum_{i\in\brk{d}}\nrm{x_{1:n,i}}_{2}}\label{eq:zigzag_p1}.
\end{align}
Here $x_{1:n,i}$ denotes the $i$th row of the data matrix $(x_1, x_2, \ldots, x_n)\in\R^{d\times{}n}$
\end{example}
There is also a direct construction of a $\burk$ function for $\ls_{1}$ due to \cite{osekowski2016umd}, which is stated in the appendix as \pref{ex:zeta_l1}. Using this function we will achieve \pref{eq:zigzag_p1}, but without having to use the learning rate tuning strategy, and with only $O(\log{}d)$ terms in regret instead of $O(\log{}^2{}d)$.

\subsection{Online matrix prediction: Spectral norm}
We are not aware of an existing construction of an efficient Burkholder function for the spectral norm, trace norm, or more generally the Schatten $p$-norms. In spite if this difficulty we were able to design an algorithm that achieves the $\RadH$ rate for the setting of matrix prediction with rank $r$ trace norm-bounded matrices as the comparator class. This algorithm, \pref{alg:zigzag_spectral}, is described in the appendix.

In the online matrix prediction setting \citep{HazKalSha12} one takes $\X=\brk{d}\times{}\brk{d}$ and the hypothesis class $\F$ to be a set of $d\times{}d$ matrices. Writing $x_{t}=(i_t, j_t)$ for the $t$th input instance, we let $F(x_t) = F[i_t, j_t]$ denote the $(i_t, j_t)$'th entry of the matrix.

\pref{alg:zigzag_spectral} is a variant of \textsc{ZigZag} for matrix prediction where $\F$ is a set of low rank trace norm-bounded matrices:
\[
\F = \crl*{F\in\R^{d\times{}d}\mid{}\nrm*{F}_{\Sigma}\leq{}\tau, \mathrm{rank}(F)\leq{}r}.
\]
Suppose for concreteness that $\ls=\lsh$ is the hinge loss. Let $N_{\mathrm{row}}=\max_{i}\abs*{\crl*{t\mid{}i_t=i}}$ and $N_{\mathrm{col}}= \max_{j}\abs*{\crl*{t\mid{}j_t=j}}$; these are the maximum number of times an entry appears in a given row or column, respectively.
\begin{proposition}
\label{prop:spectral_final}
Let $\tau=\sqrt{r}d$, so that $\F$ contains all rank-$r$ matrices with entry magnitudes bounded by $1$. \pref{alg:zigzag_spectral} achieves the following regret bound:
\begin{equation}
\label{eq:spectral_final}
\sum_{t=1}^n \ell_{\mathrm{hinge}}(\hat{y}_t , y_t) - \inf_{F\in{}\F}\sum_{t=1}^{n} \ell_{\mathrm{hinge}}(F(x_t),y_t) \le \widetilde{O}\prn*{\sqrt{r}\cdot{}d\cdot{}\sqrt{\max\crl*{N_{\mathrm{row}}, N_{\mathrm{col}}}}}.
\end{equation}
\end{proposition}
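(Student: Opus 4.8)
The argument routes through empirical Rademacher complexity. \pref{alg:zigzag_spectral} (analyzed in the appendix) is a matrix-prediction variant of \zigzag{} which --- in the spirit of \pref{thm:umd_upper} and \pref{thm:alg_supervised_tight}, though \emph{without} an efficient Burkholder function for the spectral norm --- attains the $\RadH$ rate against $\F$, with its own Rademacher signs $(\eps_t)$ serving as those appearing in the bound: writing $\ls'_t=\ls'(\yh_t,y_t)$ and $M\defeq\sum_{t=1}^{n}\eps_t\ls'_t\,e_{i_t}e_{j_t}^{\trn}$ for the standard basis vectors $e_1,\dots,e_d$ of $\R^d$, and noting $\nrm*{e_{i_t}e_{j_t}^{\trn}}_{\sigma}=1$,
\[
\En_{\eps}\brk*{\sum_{t=1}^{n}\lsh(\yh_t,y_t) - \inf_{F\in\F}\sum_{t=1}^{n}\lsh(F(x_t),y_t)} \le \widetilde{O}\prn*{\tau\cdot\En_{\eps}\nrm*{M}_{\sigma} + \tau}.
\]
The first term equals $\widetilde O$ of the empirical Rademacher complexity of the trace-norm ball of radius $\tau$, which contains $\F$: indeed $\RadH(\xr[n],\lpr[n])=\En_{\eps}\sup_{F\in\F}\tri*{F,M}\le\En_{\eps}\sup_{\nrm*{F}_{\Sigma}\le\tau}\tri*{F,M}=\tau\cdot\En_{\eps}\nrm*{M}_{\sigma}$, using $F[i_t,j_t]=\tri*{F,e_{i_t}e_{j_t}^{\trn}}$ and nuclear/spectral duality; the additive $\widetilde O(\tau)$ is the usual logarithmic loss incurred in passing from a supremum-over-prefixes martingale bound to a non-maximal one (cf. the $\log n$ terms in \pref{thm:umd_upper}). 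It therefore remains to bound $\En_{\eps}\nrm*{M}_{\sigma}$ and set $\tau=\sqrt{r}\,d$.

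The summands of $M$ form a martingale difference sequence for the dyadic filtration generated by $(\eps_t)$, since $(i_t,j_t)$ and $\ls'_t$ are functions of the data and $\eps_{1:t-1}$ alone ($\yh_t$ depending on $\eps_{1:t-1}$). Each increment has spectral norm $\abs*{\ls'_t}\le 1$ --- the one place the hinge loss (or any $1$-Lipschitz loss) is used --- and, writing $\En_{t-1}$ for conditional expectation given $\eps_{1:t-1}$, the ``row-side'' predictable quadratic variation is dominated almost surely by a \emph{deterministic} matrix:
\[
\sum_{t=1}^{n}\En_{t-1}\brk*{\prn*{\eps_t\ls'_t e_{i_t}e_{j_t}^{\trn}}\prn*{\eps_t\ls'_t e_{i_t}e_{j_t}^{\trn}}^{\trn}} = \sum_{t=1}^{n}(\ls'_t)^{2}\,e_{i_t}e_{i_t}^{\trn} \preceq \sum_{t=1}^{n}e_{i_t}e_{i_t}^{\trn},
\]
a diagonal matrix of spectral norm $N_{\mathrm{row}}$; symmetrically, the ``column-side'' variation $\sum_t(\ls'_t)^2 e_{j_t}e_{j_t}^{\trn}$ is dominated by a diagonal matrix of spectral norm $N_{\mathrm{col}}$. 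A matrix Freedman / noncommutative-Khintchine inequality for matrix martingales (via the symmetric dilation of $M$) then gives $\En_{\eps}\nrm*{M}_{\sigma}=O\prn*{\sqrt{\max\crl*{N_{\mathrm{row}},N_{\mathrm{col}}}\cdot\log d}+\log d}=\widetilde O\prn*{\sqrt{\max\crl*{N_{\mathrm{row}},N_{\mathrm{col}}}}}$. Combining with the display above, substituting $\tau=\sqrt{r}\,d$, and absorbing all logarithmic factors in $d$ and $n$ --- as well as the $\widetilde O(\tau)=\widetilde O(\sqrt r\,d)$ term, harmless since $\max\crl*{N_{\mathrm{row}},N_{\mathrm{col}}}\ge 1$ --- into $\widetilde O(\cdot)$ yields exactly \pref{eq:spectral_final}.

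\textbf{Main obstacle.} The substantive step is the first: constructing \pref{alg:zigzag_spectral} and proving its $\RadH$-type guarantee \emph{efficiently}, despite the absence of an efficient Burkholder function for the spectral (or Schatten-$p$) norm. This is precisely where the entrywise structure of matrix prediction must be exploited --- one passes to a convex surrogate of the rank-$r$ trace-norm ball (a factorization / $\gamma_2$-type, i.e. $\max$-norm, relaxation) that \emph{does} admit an efficiently computable zig-zag-concave potential, e.g. one built from Hilbert-space/group-norm Burkholder functions as in \pref{ex:burkholder_hilbert} applied to the factors --- and it is carried out in the appendix. Everything else is routine: the reduction to $\En_{\eps}\nrm*{M}_{\sigma}$ is just nuclear/spectral duality, and the concentration estimate is standard matrix-martingale technology, its only mild wrinkle being that the coefficients $\ls'_t$ are predictable rather than fixed --- handled cleanly by the almost-sure domination of the predictable variation above.
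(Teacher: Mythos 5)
Your high-level plan matches the paper's: bound the algorithm's regret by a quantity controlled by the empirical Rademacher complexity of the trace-norm ball, namely $\tau\cdot\En_\eps\nrm{\sum_t\eps_tX_t}_\sigma$-type terms; then apply a matrix Bernstein/Khintchine-style concentration bound to reduce to $\max\crl{N_{\mathrm{row}},N_{\mathrm{col}}}$ via the diagonal structure of $\sum_t X_tX_t^\trn$ and $\sum_t X_t^\trn X_t$; and finally substitute $\tau=\sqrt{r}d$. The paper uses Tropp's Theorem 6.1 for matrix Rademacher sums and then squares and integrates; your matrix-Freedman-via-dilation argument and your explicit handling of the predictable $\ls'_t$ coefficients is a legitimate variant and arguably cleaner on the adaptivity front.

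However, two of your intermediate claims are inaccurate. First, the additive slack in \pref{alg:zigzag_spectral}'s regret guarantee (see \pref{thm:spectral_regret} and \pref{prop:spectral_doubling}) is $\widetilde O(\sqrt{nrd})$, coming from multiplicative-weights aggregation over an $\alpha$-net of the factor matrices $V$, \emph{not} a $\widetilde O(\tau)$ term from maximal-to-nonmaximal passage as you assert; the reason your (too-strong) surrogate still yields the advertised result is the pigeonhole observation $N_{\mathrm{row}},N_{\mathrm{col}}\ge n/d$, so $\sqrt{nrd}\le\sqrt{r}d\sqrt{\max\crl{N_{\mathrm{row}},N_{\mathrm{col}}}}$ — this absorption step is essential in the paper's proof and your writeup omits it. Second, your ``Main obstacle'' paragraph describes the algorithm as achieving the $\RadH$-type guarantee \emph{efficiently}; the paper is explicit that \pref{alg:zigzag_spectral} relies on an $\alpha$-net and hence runs in exponential time. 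Relatedly, the construction is not a $\max$-norm relaxation of the rank-$r$ trace-norm ball but rather a factorization $F=UV^\trn$ with $\nrm{U}_F,\nrm{V}_F\le\sqrt\tau$ where one enumerates $V$ over a net and runs a ZigZag sub-routine with a weighted-Euclidean Burkholder potential for each fixed $V$, aggregating via multiplicative weights. None of these issues affects the final bound, but they are material misstatements about where the additive term arises and about the algorithm's computational status.
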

\begin{remark}Consider the average regret $\Reg/n$, which appears as an upper bound on excess risk after online-to-batch conversion. 
\begin{itemize}
\item When entries are drawn from the uniform distribution, $N_{\mathrm{col}},N_{\mathrm{row}}\approx{}n/d$, which yields
\[
\frac{\Reg}{n}\approx{} \sqrt{\frac{rd}{n}}.
\]
This implies that the algorithm will begin to generalize after seeing a constant number of rows worth of entries, which is the best possible behavior in this setting.
\item Any entry pattern satisfying $N_{\mathrm{col}},N_{\mathrm{row}}\approx{}n/d$, is sufficient to obtain the optimistic $\Reg/n\approx{} \sqrt{rd/n}$ rate. Remarkably, this can happen even when the entries are chosen adaptively, so long as the condition is satisfied once the game ends.
\item In the worst case $\Reg/n\approx{}\sqrt{r}d/\sqrt{n}$, which is the standard worst-case Rademacher complexity bound for the trace norm, and is obtained when the entry distribution is too ``spiky''.
\end{itemize}
\end{remark}

The i.i.d./optimistic bound of $\sqrt{rd/n}$ matches that obtained by \cite[Theorem 4]{foygel2011concentration} for the statistical learning setting up to logarithmic factors, but the algorithm does not need to know in advance that the entries will be distributed i.i.d. 

The worst-case $\sqrt{r}d/\sqrt{n}$ bound is weaker than that of \cite{HazKalSha12}, which obtains worst-case regret of $\Reg/n\approx{}\sqrt{rd^{3/2}/n}$, because it does not fully exploit that well-behaved losses such as $\ls_{\mathrm{hinge}}$ are effectively bounded (see \cite{shamir2014matrix} for a discussion). One can achieve the best of both worlds by using the standard multiplicative weights strategy to combine the predictions of the two algorithms. One could also combine predictions with the transductive matrix prediction algorithm proposed in \cite{rakhlin2012relax}, which will obtain a tighter $\sqrt{r}d^{3/2}/n$ rate if there are no repetitions in the observed entries.

\pref{alg:zigzag_spectral} relies on an $\eps$-net and consequently runs in exponential time, but represents a substantial development in that the Burkholder's generic $\burk$ function construction is not clearly even computable. \pref{prop:spectral_final} is a corollary of \pref{thm:spectral_regret}, which is described in full in the appendix.

\section{Beyond linear classes: Necessary and sufficient conditions}
\label{sec:martingales2}

The aim of our paper is to analyze conditions for the existence of adaptive methods that enjoy per-sequence empirical Rademacher complexity as the regret bound. In this quest, we introduced the UMD property as a necessary and sufficient condition. In the present section, we consider arbitrary, possibly non-linear function classes $\F \subseteq [-1,1]^{\X}$ and show that a closely related one sided probabilistic UMD property is the analogous  necessary and sufficient condition.

For this section we restrict ourselves to absolute loss $\ell_{abs}(\hat{y},y) = |\hat{y} - y|$ and assume that $\Y = [-1,1]$. 

\begin{theorem}\label{thm:NScond}
Let $\ls_{abs}$ be the absolute loss and let $\F \subset [-1,1]^{\X}$ be any class of predictors. The following statements are equivalent:
\begin{enumerate}
\item  \label{thm:NScond:1} There exists a learning algorithm and constant $B$ such that the following regret bound against any adversary holds:
$$
 \sum_{t=1}^{n}\ls_{abs}(\hat{y}_t, y_t) - \inf_{f\in\F}\sum_{t=1}^{n}\ls_{abs}(f(x_t), y_t) \leq{} B\En_{\eps}\sup_{f \in \F} \sum_{t=1}^{n}\eps_tf(x_t) + b
$$
\item \label{thm:NScond:2}For any $\X$ valued tree $\x = (\x_1,\ldots,\x_n)$ where each $\x_t:\{\pm1\}^{t-1}\to \X$, there exists constant $C$ such that
\begin{align}\label{eq:UMDp}
\En_{\epsilon}\left[\sup_{f \in \F} \sum_{t=1}^n \epsilon_t f(\x_t(\epsilon_{1:t-1}))\right]  \le C \En_{\epsilon , \epsilon'}\left[\sup_{f \in \F} \sum_{t=1}^n \epsilon'_t f(\x_t(\epsilon_{1:t-1}))\right] + c,
\end{align}
where $\epsilon = (\epsilon_1,\ldots,\epsilon_n)$ and $\epsilon' = (\epsilon'_1,\ldots,\epsilon'_n)$ are independent Rademacher random variables.
\end{enumerate}
Moreover, $B=\Theta(C)$ and $b=\Theta(c)$. The same result holds if we replace the absolute loss with the hinge loss.
\end{theorem}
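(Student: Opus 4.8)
I would prove the two implications separately. The direction $\pref{thm:NScond:1}\Rightarrow\pref{thm:NScond:2}$ is a short "plug in the tree as an adversary" argument, and essentially all of the work is in $\pref{thm:NScond:2}\Rightarrow\pref{thm:NScond:1}$, which I would do constructively via the adaptive relaxation framework of \pref{def:relaxation}--\pref{prop:relaxation}.

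\textbf{$\pref{thm:NScond:1}\Rightarrow\pref{thm:NScond:2}$.} Fix a tree $\x=(\x_1,\dots,\x_n)$ and run the algorithm of \pref{thm:NScond:1} against the adaptive adversary that draws $\eps_1,\dots,\eps_n$ i.i.d.\ uniform on $\pmo$, presents $x_t=\x_t(\eps_{1:t-1})$, and, after $\yh_t$ is committed, reveals $y_t=\eps_t$. Since $\yh_t$ is committed before $\eps_t$ is drawn, for both the absolute and hinge losses $\En_{\eps_t}[\ls(\yh_t,\eps_t)\mid\text{history}]\ge 1$, so $\En\sum_t\ls(\yh_t,y_t)\ge n$; and since every $f\in\F$ takes values in $[-1,1]$ one has $\ls(f(x_t),\eps_t)=1-\eps_tf(x_t)$ pointwise, hence $\inf_f\sum_t\ls(f(x_t),y_t)=n-\sup_f\sum_t\eps_tf(\x_t(\eps_{1:t-1}))$. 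Thus the left side of the regret bound in \pref{thm:NScond:1} is, in expectation, at least $\En_\eps\sup_f\sum_t\eps_tf(\x_t(\eps_{1:t-1}))$, while its right side, averaged over the adversary's randomness, is $B\,\En_\eps\En_{\eps'}\sup_f\sum_t\eps'_tf(\x_t(\eps_{1:t-1}))+b$ (the $\eps'$ being the independent Rademacher sequence appearing inside $\RadH$). This is precisely \eqref{eq:UMDp} with $C=B$ and $c=b$.

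\textbf{$\pref{thm:NScond:2}\Rightarrow\pref{thm:NScond:1}$.} First linearize exactly as in \pref{eq:linearize_and_dualize}: by convexity and $1$-Lipschitzness, putting $s_t=\ls'(\yh_t,y_t)\in[-1,1]$, the regret is at most $\sum_t s_t\yh_t-\inf_f\sum_t s_tf(x_t)$, so it suffices to win the linear game in which Nature presents $x_t$ and then $s_t\in[-1,1]$ and the target is $B\,\RadH(x_{1:n})+b$. I would invoke \pref{prop:relaxation} with the relaxation
\[
\Rel(x_{1:t},s_{1:t})=\sup_{\x}\Bigg\{\En_\eps\sup_f\bigg(-\sum_{\tau=1}^{t}s_\tau f(x_\tau)+\sum_{\tau=1}^{n-t}\eps_\tau f(\x_\tau(\eps_{1:\tau-1}))\bigg)-B\,\En_{\eps,\eps'}\sup_f\bigg(\sum_{\tau=1}^{t}\eps'_\tau f(x_\tau)+\sum_{\tau=1}^{n-t}\eps'_\tau f(\x_\tau(\eps_{1:\tau-1}))\bigg)\Bigg\},
\]
the supremum running over all $\X$-valued trees $\x$ of depth $n-t$. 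At $t=n$ the tree is empty and $\Rel(x_{1:n},s_{1:n})=-\inf_f\sum_t s_tf(x_t)-B\,\RadH(x_{1:n})$, which is exactly the terminal (initial) condition demanded by \pref{prop:relaxation}. At $t=0$ the relaxation is $\sup_{\x}\{\En_\eps\sup_f\sum_t\eps_tf(\x_t(\eps_{1:t-1}))-B\,\En_\eps\En_{\eps'}\sup_f\sum_t\eps'_tf(\x_t(\eps_{1:t-1}))\}$, and \eqref{eq:UMDp} bounds the first term inside each bracket by $C$ times the second plus $c$; hence for any $B\ge C$ the end value $\Rel(\emptyset)$ is at most $c$. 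Taking $B=C$, $b=c$ then gives \pref{thm:NScond:1}, and combined with the forward direction this shows the optimal constants obey $B=\Theta(C)$ and $b=\Theta(c)$.

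\textbf{The main obstacle: admissibility of $\Rel$.} What remains --- and what I expect to be the crux --- is the recursive inequality $\Rel(x_{1:t-1},s_{1:t-1})\ge\sup_{x_t}\inf_{\yh_t}\sup_{s_t\in[-1,1]}[\yh_t s_t+\Rel(x_{1:t},s_{1:t})]$. Since $\Rel(x_{1:t},(s_{1:t-1},\cdot))$ is convex in its last argument, the inner supremum is attained at $s_t\in\pmo$; the minimizing prediction is $\yh_t=\tfrac12(\Rel(x_{1:t},(s_{1:t-1},-1))-\Rel(x_{1:t},(s_{1:t-1},+1)))$, achieving value $\En_{\sigma}\Rel(x_{1:t},(s_{1:t-1},\sigma))$ for $\sigma$ uniform on $\pmo$. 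The key identity is that averaging this over $\sigma$ and \emph{absorbing $\sigma$ into the tree as a new root level carrying the presented point $x_t$} converts the depth-$(n-t)$ supremum into a depth-$(n-t+1)$ supremum: in the first bracket the new term $-\sigma f(x_t)$ has precisely the unit coefficient of the remaining Rademacher sum, and in the second bracket the new root level simply acquires the fresh coefficient $\eps'$ already present there. One obtains $\En_\sigma\Rel(x_{1:t},(s_{1:t-1},\sigma))$ equal to the defining expression for $\Rel(x_{1:t-1},s_{1:t-1})$ restricted to trees whose root is the realized $x_t$; dropping that restriction only enlarges the supremum, which is admissibility. The point to watch is exactly this coefficient bookkeeping: the textbook symmetrization that turns a minimax value into a sequential Rademacher complexity loses a factor of $2$, which would here destroy the coefficient matching and block \eqref{eq:UMDp} from absorbing $\Rel(\emptyset)$ --- avoiding it depends on the game having been linearized first, so the $\pm1$ signs originate from $\ls'$ rather than from symmetrization. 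The hinge case is handled identically once one checks $\En_{\eps_t}[\lsh(\yh_t,\eps_t)]\ge 1$ and $\lsh(f(x_t),\eps_t)=1-\eps_tf(x_t)$ for $f(x_t)\in[-1,1]$ and $\eps_t\in\pmo$.
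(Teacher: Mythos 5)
Your $\pref{thm:NScond:1}\Rightarrow\pref{thm:NScond:2}$ argument coincides with the paper's: run the algorithm against the tree with $y_t=\eps_t$, use $\En_{\eps_t}\ls(\yh_t,\eps_t)\ge 1$ and $\ls(f(x_t),\eps_t)=1-\eps_t f(x_t)$ for $f(x_t)\in[-1,1]$, and average over $\eps$. For $\pref{thm:NScond:2}\Rightarrow\pref{thm:NScond:1}$, however, you take a genuinely different route. The paper writes down the minimax value of the game directly and pushes through the standard Rakhlin--Sridharan symmetrization: after the minimax swap, pick the conditionally optimal $\yh_t^*$ so that $\En_{y_t}\ls'(\yh_t^*,y_t)=0$, introduce a ghost label $y_t'$ and a Rademacher sign, and bound $\abs{\ls'(\cdot,y_t')-\ls'(\cdot,y_t)}\le 2$; this compares a sequential Rademacher complexity to a decoupled one and delivers the result with $B=2C$. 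You instead linearize first and then construct an explicit admissible relaxation --- a conditional sequential Rademacher complexity minus $B$ times a conditional decoupled Rademacher complexity --- so that the terminal value is exactly $-\inf_f\sum_t s_t f(x_t)-B\RadH(x_{1:n})$ and the end value $\Rel(\emptyset)$ is controlled by \eqref{eq:UMDp}. The admissibility step via the identity $\En_\sigma\sup_{\x}G(\sigma,\x)=\sup_{\x(\cdot)}\En_\sigma G(\sigma,\x(\sigma))$ and root absorption is sound, and the crucial point you identify is correct: because you linearize before symmetrizing, the $\pm 1$ sign on the current instance comes directly from $\sigma\in\pmo$ rather than from a bounded loss difference, so the coefficient matching is exact with no factor of $2$, yielding $B=C$; both conclusions are consistent with the theorem's $B=\Theta(C)$. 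Your route is also more in the spirit of the paper's relaxation/zig-zag machinery than the paper's own proof of this theorem, which stays with the non-constructive minimax-value calculus. One small notational wrinkle in your displayed relaxation: the second bracket reuses $\eps'_\tau$ for both the past indices $\tau\le t$ and the future tree indices $\tau\le n-t$; these should be disjoint fresh Rademacher variables (e.g.\ $\eps'_1,\dots,\eps'_t$ and $\eps'_{t+1},\dots,\eps'_n$), though this does not affect the substance of the argument.
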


\subsection{Function classes with the generalized UMD property}
We now show that there are indeed \emph{nonlinear} function classes that satisfy the generalized UMD inequality \pref{eq:UMDp}.
\begin{example}[Kernel Classes]
\label{ex:umd_general_kernel}
Let $\H$ be a Reproducing Kernel Hilbert Space with kernel $K$ such that $\sup_{x\in\X}\sqrt{K(x,x)}\leq{}B$, and let $\F=\crl*{f\in\H\mid{}\nrm{f}_{\H}\leq{}1}$. Then there are constants $K_1, K_2$ such that the generalized UMD property \pref{eq:UMDp} holds with 
\[
\En_{\eps}\sup_{f\in\F}\sum_{t=1}^{n}\eps_{t}f(\x_{t}(\eps_{1:t-1}))\leq{} K_1\En_{\eps,\eps'}\sup_{f\in\F}\sum_{t=1}^{n}\eps'_{t}f(\x_{t}(\eps_{1:t-1})) + K_{2}B\log(n).
\]
\end{example}

The next example is that of homogenous polynomial classes under an injective tensor norm. The full description of this setting is deferred to \pref{app:proofs}.

\begin{example}[Homogeneous Polynomials]
\label{ex:polynomials}
Consider homogeneous polynomials of degree $2k$, with coefficients under the unit ball of the norm $(\nrm{\cdot}_{\crl{1,\ldots,k},\crl{k+1,\ldots,2k}})_{\star}$ in $(\R^{d})^{\tens{}2k}$. Then there exist constants $K_1, K_2$ such that the generalized UMD property \pref{eq:UMDp} holds with 
\[
\En_{\eps}\sup_{f\in\F}\sum_{t=1}^{n}\eps_{t}f(\x_{t}(\eps_{1:t-1}))\leq{} K_1k^{2}\log^{2}(d)\En_{\eps,\eps'}\sup_{f\in\F}\sum_{t=1}^{n}\eps'_{t}f(\x_{t}(\eps_{1:t-1})) + K_{2}k^{2}\log^{2}(d)\log(n).
\]
\end{example}

\subsection{Necessary versus sufficient conditions}

 When we take $\F$ to be the unit ball of the dual norm $\nrm*{\cdot}_{\star}$ as in previous sections, the inequality in \pref{eq:UMDp} becomes:
 \begin{equation}
 \label{eq:UMD_one_sided}
 \En_{\epsilon}\left\| \sum_{t=1}^n \epsilon_t \x_t(\epsilon_{1:t-1})\right\|  \le C \En_{\epsilon , \epsilon'}\left\| \sum_{t=1}^n \epsilon'_t \epsilon_t \x_t(\epsilon_{1:t-1})\right\|.
\end{equation}
This condition is sometimes referred to as a \emph{probabilistic one-sided UMD inequality} for Paley-Walsh martingales \citep{veraar2015analysis}.  Comparing the condition to the $\UMD_1$ inequality \pref{eq:umd1} one observes three differences: The Rademacher sequence $\eps'$ is drawn uniformly rather than being fixed, we only consider Paley-Walsh martingales (trees), and there is no supremum over end times. The supremum in \pref{eq:umd1} does not present a significant difference, as it can be removed from $\UMD_{1}$ at a multiplicative cost of $O(\log{}n)$. The randomization over $\eps'$ is more interesting. It turns out that if in addition to \pref{eq:UMD_one_sided} we require the opposite direction of this inequality to hold, i.e.  
\[
\En_{\epsilon , \epsilon'}\left\| \sum_{t=1}^n \epsilon'_t \epsilon_t \x_t(\epsilon_{1:t-1})\right\| \leq{} C'\En_{\epsilon}\left\| \sum_{t=1}^n \epsilon_t \x_t(\epsilon_{1:t-1})\right\|,
\] then this is equivalent to the full UMD property \pref{eq:umd1} up to the presence of the supremum \citep[Theorem 4.2.5]{veraar2015analysis}. Thus, \pref{eq:UMD_one_sided} can be thought of as a \emph{one-sided} version of the $\UMD$ inequality. 

There are indeed classes for which one-sided $\UMD$ inequality holds but the full $\UMD$ property does not. A result due to \cite{hitczenko1994domination} shows that there is a mild separation between these conditions even in the scalar setting:\footnote{See also \cite{hitczenko1993domination,cox2007some,cox2011vector}.}
\begin{theorem}[\cite{hitczenko1994domination}]
\label{thm:hitczenko_scalar}
 There exists a constant $K$ independent of $p$ such that for all $p\in[1,\infty)$,
  \begin{equation}
 \label{eq:hitczenko_scalar}
 \En_{\epsilon}\abs*{\sum_{t=1}^n \epsilon_t \x_t(\epsilon_{1:t-1})}^{p}  \leq K^{p} \En_{\epsilon , \epsilon'}\abs*{\sum_{t=1}^n \epsilon'_t \epsilon_t \x_t(\epsilon_{1:t-1})}^{p}.
\end{equation}
\end{theorem}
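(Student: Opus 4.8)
The plan is to read the right-hand side of \eqref{eq:hitczenko_scalar} as $\En\abs{\sum_t e_t}^p$, where $(e_t)$ is the \emph{decoupled tangent sequence} of the dyadic martingale whose difference sequence is $d_t := \eps_t\x_t(\eps_{1:t-1})$, and then to invoke the one-sided scalar decoupling inequality for \emph{conditionally symmetric} martingale differences, whose defining feature is that its constant does not depend on $p$. Concretely: $(d_t)_{t\le n}$ is a martingale difference sequence for the dyadic filtration $\mathcal F_t=\sigma(\eps_{1:t})$ and is conditionally symmetric, since $d_t\mid\mathcal F_{t-1}$ is uniform on $\crl*{\pm\x_t(\eps_{1:t-1})}$. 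With $e_t := \eps'_t\eps_t\x_t(\eps_{1:t-1})$, and using that the $\eps'_t$ are i.i.d.\ signs independent of $\sigma(\eps_{1:n})$, one checks that conditionally on $\sigma(\eps_{1:n})$ the variables $(e_t)_{t\le n}$ are independent and each $e_t$ has the conditional law of $d_t$ given $\mathcal F_{t-1}$. Hence $(e_t)$ is a decoupled tangent sequence to $(d_t)$, and \eqref{eq:hitczenko_scalar} is exactly the assertion $\En\abs{\sum_t d_t}^p\le K^p\,\En\abs{\sum_t e_t}^p$ with $K$ independent of $p$ and of $n$.

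The mechanism for a $p$-uniform constant is to descend to distribution functions. I would establish a weak-type comparison of the \emph{maximal} processes: there is an absolute constant $c$ with
\[
\mathbb{P}\prn*{\sup_{k\le n}\,\abs*{\textstyle\sum_{t\le k}d_t}\ge\lambda}\ \le\ c\,\mathbb{P}\prn*{\sup_{k\le n}\,\abs*{\textstyle\sum_{t\le k}e_t}\ge\lambda/c}\qquad\forall\,\lambda>0.
\]
Integrating this against $p\lambda^{p-1}\,d\lambda$ yields $\En\prn*{\sup_k\abs{\sum_{t\le k}d_t}}^p\le c^{p+1}\,\En\prn*{\sup_k\abs{\sum_{t\le k}e_t}}^p$ for all $p\ge1$ simultaneously, which is the source of the $p$-independence. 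Two standard reductions then finish: on the left, $\abs{\sum_t d_t}\le\sup_k\abs{\sum_{t\le k}d_t}$; on the right, conditionally on $\sigma(\eps_{1:n})$ the partial sums $\sum_{t\le k}e_t$ are partial sums of independent symmetric variables, so L\'evy's maximal inequality gives $\En\prn*{\sup_k\abs{\sum_{t\le k}e_t}}^p\le 2\,\En\abs{\sum_t e_t}^p$. Combining, $\nrm{\sum_t d_t}_{L_p}\le 2c^2\,\nrm{\sum_t e_t}_{L_p}$; in particular the range $1\le p<2$ needs no separate treatment.

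To prove the weak-type maximal comparison I would stop the dyadic martingale at $\tau:=\inf\crl*{k\le n:\abs{\sum_{t\le k}d_t}\ge\lambda}$, so that the left event is $\crl*{\tau\le n}$. Conditional symmetry allows one to re-randomize the increments from time $\tau$ onward with fresh signs without altering the conditional law given $\mathcal F_{\tau-1}$, and a conditional-expectation/exchange argument then transfers the excursion $\abs{\sum_{t\le\tau}d_t}\ge\lambda$ onto the decoupled process $\prn*{\sum_{t\le k}e_t}_k$ at the cost of only an absolute factor in $\lambda$ and in probability. This is precisely the decoupling inequality for conditionally symmetric tangent sequences (de la Pe\~na; Hitczenko); equivalently, $\sum_t d_t$ is dominated by $K\sum_t e_t$ in the convex order with $K$ absolute, and conditional symmetry is exactly what forces the constant to be absolute rather than to depend on the Young function (hence on $p$).

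The step I expect to be the crux is this last one: obtaining the $p$-free constant. The naive approach --- bounding $\nrm{\sum_t d_t}_{L_p}$ above by its square function via Burkholder--Davis--Gundy and bounding $\nrm{\sum_t e_t}_{L_p}$ below by its square function via Khintchine --- loses a factor of order $\sqrt p$ on each side and fails to close, since these losses do not cancel in a controllable way. Honest matching requires either the exact Montgomery--Smith decreasing-rearrangement formula for $L_p$ Rademacher sums, which is uniform in $p$, paired with a correspondingly sharp martingale moment bound, or the distributional route above, in which the single absolute constant in the weak-type inequality is the only constant that ever appears. Either way the argument must exploit conditional symmetry of $(d_t)$ --- available here because the tree-indexed martingale has symmetric conditional increments --- to avoid a $p$-dependent loss.
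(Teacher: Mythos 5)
Your proposal is correct and takes essentially the same route as the paper: both reduce the claim to Hitczenko's scalar decoupling theorem by recognizing $(\eps'_t\eps_t\x_t(\eps_{1:t-1}))_{t\le n}$ as a decoupled tangent sequence to the dyadic martingale difference sequence $(\eps_t\x_t(\eps_{1:t-1}))_{t\le n}$. You additionally sketch how the $p$-uniform constant in Hitczenko's theorem itself arises (weak-type maximal comparison plus L\'evy's inequality, integrated against $p\lambda^{p-1}\,d\lambda$), whereas the paper treats that result as a black-box citation to \citet{hitczenko1994domination}, Theorem 1.1.
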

When $p=1$ this result is exactly the generalized UMD inequality \pref{eq:UMDp}, and for $p>1$ it gives a one-sided version of the $\UMD_{p}$ condition. This bound is quantitatively stronger than what one would obtain from the $\UMD_{p}$ property, since \citep{burkholder1984boundary} shows that the full two-sided $\UMD_{p}$ condition requires $K\geq{}p^{\star}-1$. 
In the next section we show that the stronger constants in the one-sided inequality \pref{eq:hitczenko_scalar} can be used to obtain improved rates for the low-rank experts setting of \cite{hazan2016online} The full $\UMD_{p}$ inequality would not be sufficient for this task due to its larger constant. However, we remark that the gap here is only in logarithmic factors, and that the separation between the one-sided and full UMD properties is very mild for all examples we are aware of.

\subsection{Application: Low-rank experts}
In this section we consider a supervised learning generalization of the problem of online learning with low-rank experts \citep{hazan2016online}. Within \pref{proto:linear_prediction}, we take $\X=\crl*{x\in\R^{d}\mid{}\nrm{x}_{\infty}\leq{}1}$ and take our set of predictors to be the simplex: $\F=\crl*{x\mapsto{}\tri*{w,x}\mid{}w\in\Delta_{d}}$. We let $\Y=\brk{-1, +1}$ and take $\ls$ to be any well-behaved loss. 

The challenge stated in \citep{hazan2016online} is to develop algorithms for this setting whose regret scales not with the dimension $d$ (as in the standard experts bound of $O(\sqrt{n\log{}d})$), but rather scales with the rank of the observed data matrix $X_{1:n}=(x_{1}\mid{}\ldots\mid{}x_{n})\in\R^{d\times{}n}$. \cite{hazan2016online} gave an algorithm obtaining regret $O(\sqrt{n}\cdot\rank(X_{1:n}))$ and showed a lower bound of $\Omega(\sqrt{n\cdot\rank(X_{1:n})})$. Note that these bounds differ by a factor of $\sqrt{\rank(X_{1:n})}$; improving this gap was stated in \citep{hazan2016online} as Open Problem (1). Using Hitczenko's decoupling inequality, this gap can be closed for the supervised setting.
\begin{theorem}
\label{thm:low_rank}
For the supervised experts setting, there exists a strategy $(\yh_{t})$ that attains
\begin{equation}
\label{eq:regret_low_rank}
\sum_{t=1}^{n}\ls(\yh_t, y_t) - \inf_{f\in\F}\sum_{t=1}^{n}\ls(f(x_t), y_t) \leq{} O\prn*{\sqrt{n\cdot\rank(X_{1:n})}} + O(\log{}n\log{}d).
\end{equation}
\end{theorem}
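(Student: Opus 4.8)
I would derive the bound from the general equivalence \pref{thm:NScond}, using Hitczenko's scalar decoupling inequality (\pref{thm:hitczenko_scalar}) to certify its tree-theoretic side condition, and then bound the empirical Rademacher complexity of the \emph{observed} data matrix by $\sqrt{n\cdot\rank(X_{1:n})}$.

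\textbf{Reduction via \pref{thm:NScond}.} Take $\F=\crl*{x\mapsto\tri*{w,x}\mid w\in\Delta_d}$, so that $\sup_{f\in\F}\sum_t\eps_tf(x_t)=\max_{i\in\brk{d}}\sum_t\eps_tx_{t,i}$, hence $\RadH(\xr[n])=\En_\eps\max_{i\in\brk{d}}\sum_t\eps_tx_{t,i}$, which is within a factor $2$ of $\En_\eps\nrm*{\sum_t\eps_tx_t}_\infty$ (symmetrize $\eps\mapsto-\eps$). By \pref{thm:NScond} (with the resulting constants obeying $B=\Theta(C)$, $b=\Theta(c)$), it suffices to verify the one-sided tree inequality \pref{eq:UMDp} for this $\F$ with $C=O(1)$ and $c=O(\log n\log d)$, i.e.\ for every $\X$-valued tree $\x=(\x_t)$ with $\X=\crl*{\nrm*{x}_\infty\le1}$,
\[
\En_\eps\max_{i\in\brk{d}}\abs*{\textstyle\sum_t\eps_t\x_{t,i}(\eps_{1:t-1})}\;\le\;C\,\En_{\eps,\eps'}\max_{i\in\brk{d}}\abs*{\textstyle\sum_t\eps'_t\eps_t\x_{t,i}(\eps_{1:t-1})}\;+\;c .
\]
Applying \pref{thm:NScond} then yields a strategy whose regret against $\F$ is at most $O(1)\cdot\RadH(\xr[n])+O(\log n\log d)$.

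\textbf{The tree inequality --- the main obstacle.} I would prove it by applying \pref{thm:hitczenko_scalar} \emph{coordinatewise}: for each fixed $i$, $(\x_{t,i}(\eps_{1:t-1}))_t$ is a scalar predictable sequence, so for every exponent $p$, $\En_\eps\abs*{\sum_t\eps_t\x_{t,i}}^p\le K^p\,\En_{\eps,\eps'}\abs*{\sum_t\eps'_t\eps_t\x_{t,i}}^p$ with $K$ an \emph{absolute} constant. Choosing $p=\Theta(\log d)$, summing over the $d$ coordinates, and using $\bigl(\sum_i a_i^p\bigr)^{1/p}\le d^{1/p}\max_i a_i=O(1)\cdot\max_i a_i$ converts this into a comparison of $L^p$-norms of the two maxima, after which it remains to replace the $p$-th moment of the \emph{decoupled} maximum by its mean. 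Here the decoupled structure is essential: conditioned on $\eps$, the sum $\sum_t\eps'_t\eps_t\x_t(\eps_{1:t-1})$ has independent increments of $\nrm*{\cdot}_\infty$-norm at most $1$, so a Doob/Talagrand-type maximal and concentration argument dominates its higher moments by its mean at an \emph{additive} cost of $O(\log n)$ (passing from prefix-supremum to endpoint quantities, as in the footnote following \pref{thm:umd_upper}) and $O(\log d)$ (the union bound over coordinates). The delicate point --- and what I expect to be the crux of the whole argument --- is carrying out this bookkeeping so that the multiplicative loss stays $O(1)$: a naive reduction through $\nrm*{\cdot}_p$ as a surrogate for $\nrm*{\cdot}_\infty$ would replace the decoupled maximum by its worst-case value $\sqrt{n\log d}$ and destroy the adaptivity, so one must keep $(\En\|{\cdot}\|_p^p)^{1/p}$ tied to $\En\max_i|\cdot|$ throughout. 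This is exactly where Hitczenko's $p$-free constant is needed (so $p\sim\log d$ is free), and why the two-sided $\UMD_p$ inequality, whose constant is $\ge p^\star-1\sim\log d$, does not suffice.

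\textbf{Bounding $\RadH(\xr[n])$ by the rank, and general losses.} Writing $\sum_t\eps_tx_{t,i}=\tri*{X_{i,\cdot},\eps}$ with $X_{i,\cdot}\in\R^n$ the $i$-th row of $X_{1:n}$, all $d$ rows lie in the row space $V\subseteq\R^n$ of $X_{1:n}$, with $\dim V=r:=\rank(X_{1:n})$; hence $\tri*{X_{i,\cdot},\eps}=\tri*{X_{i,\cdot},\Pi_V\eps}$, and since $\nrm*{X_{i,\cdot}}_2\le\sqrt n$ (every entry is at most $1$ in absolute value), Cauchy--Schwarz gives
\[
\RadH(\xr[n])\;\le\;2\,\En_\eps\max_{i\in\brk{d}}\abs*{\tri*{X_{i,\cdot},\eps}}\;\le\;2\sqrt n\,\En_\eps\nrm*{\Pi_V\eps}_2\;\le\;2\sqrt n\,\sqrt{\En_\eps\nrm*{\Pi_V\eps}_2^2}\;=\;2\sqrt{n\cdot\rank(X_{1:n})} .
\]
Combining the three steps gives regret $O(\sqrt{n\cdot\rank(X_{1:n})})+O(\log n\log d)$. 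The hinge-loss case is identical, and a general well-behaved loss follows by the linearization of \pref{eq:linearize_and_dualize} (which uses only $\abs*{\ls'_t}\le1$), reducing it to the absolute-loss game handled above.
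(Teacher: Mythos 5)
Your reduction via \pref{thm:NScond} and your rank bound are fine in isolation (the projection argument $\En_\eps\max_i\abs{\tri{X_{i,\cdot},\eps}}\le\sqrt n\,\En_\eps\nrm{\Pi_V\eps}_2\le\sqrt{nr}$ is a correct, and arguably cleaner, alternative to the paper's route through John's ellipsoid in \pref{corr:infinity_norm_low_rank}), but there is a genuine gap in the step you flag as "the crux": passing from Hitczenko's $L^p$ comparison to the $L^1$ tree inequality \pref{eq:UMDp} with $C=O(1)$, $c=O(\log n\log d)$. After applying \pref{thm:hitczenko_scalar} coordinatewise with $p\sim\log d$ you arrive at $\En_\eps\nrm{\sum_t\eps_t\x_t}_\infty\le O(1)\cdot\bigl(\En_{\eps,\eps'}\nrm{\sum_t\eps'_t\x_t(\eps_{1:t-1})}_\infty^p\bigr)^{1/p}$, and you then claim the decoupled $p$-th moment can be replaced by its mean at additive cost $O(\log n\log d)$. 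Conditionally on $\eps$ this is true (the inner sum is a Rademacher process with fixed coefficients, and \pref{lem:concentration_rademacher} applies), but that only controls $(\En_{\eps'}Z^p)^{1/p}$ in terms of the \emph{conditional} mean $m(\eps)=\En_{\eps'}\nrm{\cdot}_\infty$. To finish you would need $(\En_\eps m(\eps)^p)^{1/p}\le O(\En_\eps m(\eps))+O(\log n\log d)$, and $m(\eps)$---the empirical Rademacher complexity of the path---has no such concentration in general. A "committing" tree (spend the first $k=\Theta(\log d)$ rounds with $\x_t=0$, then on the single all-$(+1)$ branch play cycling basis vectors and on every other branch play $0$) has $m(\eps)\approx\sqrt{(n/d)\log d}$ with probability $\delta=2^{-k}$ and $m(\eps)=0$ otherwise, giving $(\En m^p)^{1/p}/\En m\approx\delta^{1/p-1}$, which is polynomially large in $d$. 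So your intermediate quantity $(\En_{\eps,\eps'}\nrm{\cdot}_\infty^p)^{1/p}$ can be far larger than the target $\En_{\eps,\eps'}\nrm{\cdot}_\infty$, even though the tree inequality itself still holds (as it must, by the converse direction of \pref{thm:NScond}); the loss is in your proof, not the statement.

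The paper sidesteps this by \emph{not} establishing the $L^1$ tree inequality directly. \pref{lem:low_rank_regret} runs the minimax argument entirely in $L^p$: the regret benchmark is $\Psi_{\eta,p}\bigl(C\En_\eps\nrm{\sum_t\eps_tx_t}_\infty^p\bigr)$, so the only inequality needed inside the minimax swap is Hitczenko's $L^p$ one-sided comparison (\pref{corr:lp_one_sided}) --- no concentration over trees. The doubling trick (\pref{lem:algorithm_lp1_doubling}) then tunes $\eta$, yielding regret $O\bigl((\En_\eps\nrm{\sum_t\eps_tx_t}_\infty^p)^{1/p}\bigr)$ where now $(x_t)$ is the \emph{observed} sequence, not a tree. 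Only at that stage is concentration invoked, to a genuine Rademacher sum with fixed vectors, which is exactly where the high-probability bound converts the $p$-th moment to a mean plus $O(p\log n)=O(\log n\log d)$. If you want to keep your overall structure, the fix is to state and prove the tree inequality in $L^p$ (matching \pref{corr:lp_one_sided}) and push the $L^p\to L^1$ conversion to after the minimax/doubling steps, rather than trying to verify \pref{eq:UMDp} in its $L^1$ form.
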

This bound matches the lower bound given in \citep{hazan2016online} up to a low-order additive $\log{}d$ term. The result has two main ingredients: First, using Hitczenko's inequality, we show that there exists an algorithm whose regret is bounded by a quantity that closely approximates the empirical Rademacher complexity $\RadH$ for the class $\F$. Then, following \cite{hazan2016online}, we show that the empirical Rademacher complexity of $\F$ on a sequence $\xr[n]$ can be bounded as $O(\sqrt{n\cdot\rank(X_{1:n})})$.

Our approach also yields improved rates in terms of \emph{approximate rank} of the matrix $X_{1:n}$, which was stated as Open Problem (3) in \citep{hazan2016online}. Define the $\gamma$-approximate rank of $X$ via $\rank_{\gamma}(X)=\min\crl*{\rank(X')\mid{}\nrm{X-X'}_{\infty}\leq{}\gamma, \nrm{X'}_{\infty}\leq{}1}$.

\begin{theorem}
\label{thm:low_rank_approx}
There exists a strategy $(\yh_{t})$ that for all $\gamma>0$ attains
\begin{equation}
\label{eq:regret_low_rank_approx}
\sum_{t=1}^{n}\ls(\yh_t, y_t) - \inf_{f\in\F}\sum_{t=1}^{n}\ls(f(x_t), y_t) \leq{} O\prn*{\sqrt{n\cdot\rank_{\gamma}(X_{1:n})} + \gamma\sqrt{n\log{}d}} + O(\log{}n\log{}d).
\end{equation}
Furthermore, the strategy is the same as that of \pref{thm:low_rank}.
\end{theorem}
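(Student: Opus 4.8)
The plan is to reuse verbatim the strategy and most of the analysis of \pref{thm:low_rank}, changing only the final bound on the empirical Rademacher complexity so that it adapts to approximate rather than exact rank. Concretely, I would first recall from the proof of \pref{thm:low_rank} that the ($\gamma$-independent) strategy there has regret at most $c\cdot\RadH(\xr[n])+O(\log n\log d)$, where $\RadH(\xr[n])$ is the empirical Rademacher complexity of the simplex class $\F=\crl*{x\mapsto\tri*{w,x}\mid w\in\Delta_{d}}$; since $\Delta_{d}$ sits inside the $\ls_{1}$ ball and $\ls$ is $1$-Lipschitz, both $\RadH(\xr[n],\lpr[n])$ and $\RadH(\xr[n])$ are dominated by $\En_{\eps}\nrm*{\sum_{t=1}^{n}\eps_{t}x_{t}}_{\infty}=\En_{\eps}\max_{i\in\brk{d}}\abs*{\tri*{(X_{1:n})_{i,\cdot},\eps}}$. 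Because the strategy never looks at $\gamma$, it then suffices to prove, for \emph{every} $\gamma>0$,
\begin{equation}
\label{eq:target_approx}
\En_{\eps}\max_{i\in\brk{d}}\abs*{\tri*{(X_{1:n})_{i,\cdot},\eps}}\;\leq\;O\!\prn*{\sqrt{n\cdot\rank_{\gamma}(X_{1:n})}+\gamma\sqrt{n\log d}},
\end{equation}
and \pref{eq:regret_low_rank_approx} follows (simultaneously for all $\gamma$).

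To establish \pref{eq:target_approx} I would fix $\gamma>0$, use the definition of $\rank_{\gamma}$ to pick $X'\in\R^{d\times n}$ with $r:=\rank(X')=\rank_{\gamma}(X_{1:n})$, $\nrm{X'}_{\infty}\leq1$, and $\nrm{X_{1:n}-X'}_{\infty}\leq\gamma$, and set $E=X_{1:n}-X'$. Splitting inside the max via the triangle inequality, $\abs*{\tri*{(X_{1:n})_{i,\cdot},\eps}}\leq\abs*{\tri*{X'_{i,\cdot},\eps}}+\abs*{\tri*{E_{i,\cdot},\eps}}$ for every $i$ and every $\eps$, so
\begin{equation}
\label{eq:split_approx}
\En_{\eps}\max_{i\in\brk{d}}\abs*{\tri*{(X_{1:n})_{i,\cdot},\eps}}\;\leq\;\En_{\eps}\max_{i\in\brk{d}}\abs*{\tri*{X'_{i,\cdot},\eps}}+\En_{\eps}\max_{i\in\brk{d}}\abs*{\tri*{E_{i,\cdot},\eps}}.
\end{equation}
For the low-rank term I would repeat the argument of \cite{hazan2016online} already used in \pref{thm:low_rank}: the $d$ rows of $X'$ span a subspace $V\subseteq\R^{n}$ of dimension $r$, so with $P_{V}$ the orthogonal projection onto $V$, for all $i$ at once $\abs*{\tri*{X'_{i,\cdot},\eps}}=\abs*{\tri*{X'_{i,\cdot},P_{V}\eps}}\leq\nrm{X'_{i,\cdot}}_{2}\nrm{P_{V}\eps}_{2}\leq\sqrt{n}\,\nrm{P_{V}\eps}_{2}$, whence $\En_{\eps}\max_{i}\abs*{\tri*{X'_{i,\cdot},\eps}}\leq\sqrt{n}\,\sqrt{\En_{\eps}\nrm{P_{V}\eps}_{2}^{2}}=\sqrt{n}\,\sqrt{\mathrm{tr}(P_{V})}=\sqrt{n\cdot\rank_{\gamma}(X_{1:n})}$. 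For the perturbation term I would use that each row of $E$ has $\nrm{E_{i,\cdot}}_{2}\leq\gamma\sqrt{n}$ together with the standard sub-Gaussian maximal inequality (Massart's finite-class lemma, applied to $\crl*{\pm E_{i,\cdot}}_{i\in\brk{d}}$) to get $\En_{\eps}\max_{i\in\brk{d}}\abs*{\tri*{E_{i,\cdot},\eps}}\leq\sqrt{2\log(2d)}\max_{i}\nrm{E_{i,\cdot}}_{2}\leq\gamma\sqrt{2n\log(2d)}$. Combining these two estimates in \pref{eq:split_approx} yields \pref{eq:target_approx}.

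I do not expect a real obstacle here: \pref{thm:low_rank_approx} is a routine strengthening of \pref{thm:low_rank}, and both halves of \pref{eq:split_approx} are governed by classical facts (the subspace-projection bound of \cite{hazan2016online} and Massart's lemma). The only conceptual step is the decomposition of the data matrix into its best entrywise rank-$\rank_{\gamma}$ approximant plus an $\ls_{\infty}$-small perturbation, together with the observation that such a perturbation contributes only the vanilla experts rate $O(\gamma\sqrt{n\log d})$ to the Rademacher complexity. The single point worth double-checking is that the regret of \pref{thm:low_rank}'s strategy is really controlled by $\RadH$ of the simplex rather than by some Rademacher complexity with a worse, dimension-dependent constant --- but that is exactly what Hitczenko's inequality (\pref{thm:hitczenko_scalar}) delivers inside the proof of \pref{thm:low_rank}, and nothing about that step is affected by moving to approximate rank.
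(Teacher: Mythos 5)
Your proof is correct and follows essentially the same route as the paper's, with two light reorganizations. The paper's proof of \pref{thm:low_rank_approx} keeps the $L_{\log d}$ moment from \pref{lem:low_rank_regret}, splits $X_{1:n}=X'+Z$ and applies the triangle inequality inside the $L_{\log d}$ norm, then handles each moment with a concentration argument (reproducing the \pref{thm:low_rank} argument for the rank-$r$ term, and a subgaussian-max plus \pref{lem:concentration_rademacher} tail bound for the $Z$ term). You instead apply the concentration step once at the outset — using $\sup_\eps Z\le n$ and $\delta=n^{-\log d}$ to reduce $\bigl(\En_\eps Z^{\log d}\bigr)^{1/\log d}$ to $O(\En_\eps Z + \log n\log d)$ — and then bound $\En_\eps\nrm*{\sum_t\eps_t x_t}_\infty$ directly via the triangle inequality, the projection-onto-row-span bound for $X'$, and Massart's lemma for $E$. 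This is a valid and slightly cleaner factoring of the same idea; the only substantive deviation is that you use the orthogonal-projection argument ($\En_\eps\nrm{P_V\eps}_2\le\sqrt{r}$) for the low-rank term, whereas the paper routes through John's ellipsoid (\pref{corr:infinity_norm_low_rank}). For this application the projection bound is an elementary shortcut yielding exactly the same $\sqrt{nr}$ estimate, so either is fine. Your observation that the strategy is $\gamma$-free and hence the bound holds simultaneously for all $\gamma$ is also correct and matches the theorem statement.
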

A bound matching \pref{eq:regret_low_rank_approx} up to log factors was given in \citep{hazan2016online}, but only for the stochastic setting.

Lastly, we give improved rates for Open Problem (2) of \citep{hazan2016online}, which asks for experts bounds that only depend on the max norm of $X_{1:n}$. Recall that \[\nrm{X}_{\mathrm{max}}=\min_{U\in\R^{d\times{}d},V\in\R^{n\times{}d}, X=UV^{\trn}}\nrm{U}_{\infty,2}\nrm{V}_{\infty,2},\] where $\nrm{\cdot}_{\infty,2}$ denotes the group norm.

\begin{theorem}
\label{thm:experts_max_norm}
There exists a strategy $(\yh_{t})$ that attains
\begin{equation}
\label{eq:regret_low_rank}
\sum_{t=1}^{n}\ls(\yh_t, y_t) - \inf_{f\in\F}\sum_{t=1}^{n}\ls(f(x_t), y_t) \leq{} O\prn*{\sqrt{n}\cdot\nrm{X_{1:n}}_{\mathrm{max}}}  + O(\log{}n\log{}d).
\end{equation}
Furthermore, the strategy is the same as that of \pref{thm:low_rank} and \pref{thm:low_rank_approx}.
\end{theorem}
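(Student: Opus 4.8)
The plan is to reuse the learning strategy behind \pref{thm:low_rank} without change and replace only its final complexity estimate. Recall that the strategy of \pref{thm:low_rank} is obtained by linearizing the loss as in \pref{eq:linearize_and_dualize} and feeding Hitczenko's one-sided decoupling inequality \pref{thm:hitczenko_scalar} into the generalized-UMD/minimax machinery of \pref{thm:NScond}, and that this produces, for \emph{every} sequence $\xr[n]$, a bound of the form
\begin{equation}
\label{eq:maxnorm_reduction}
\sum_{t=1}^{n}\ls(\yh_t, y_t) - \inf_{f\in\F}\sum_{t=1}^{n}\ls(f(x_t), y_t) \;\leq\; O\prn*{\En_{\eps}\sup_{w\in\Delta_d}\sum_{t=1}^{n}\eps_t\ls'_t\tri*{w,x_t}} + O(\log n\log d),
\end{equation}
where $\ls'_t=\ls'(\yh_t,y_t)\in[-1,1]$ depends only on $\eps_{1:t-1}$. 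In \pref{thm:low_rank} the leading term is bounded by $O(\sqrt{n\cdot\rank(X_{1:n})})$; here I will instead bound it by $O(\sqrt{n}\,\nrm{X_{1:n}}_{\mathrm{max}})$. Since nothing else in the algorithm changes, this is exactly what it means for the strategy to be the same as in \pref{thm:low_rank} and \pref{thm:low_rank_approx}.

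Thus the task reduces to the estimate $\En_{\eps}\sup_{w\in\Delta_d}\sum_{t}\eps_t\ls'_t\tri*{w,x_t}\leq\sqrt{n}\,\nrm{X_{1:n}}_{\mathrm{max}}$. A linear functional on $\Delta_d$ is maximized at a vertex, so the left side equals $\En_{\eps}\max_{i\in\brk{d}}\sum_{t}\eps_t\ls'_t x_{t,i}\leq\En_{\eps}\max_{i\in\brk{d}}\abs{\sum_{t}\eps_t\ls'_t x_{t,i}}$. Fix any factorization $X_{1:n}=UV^{\trn}$ with $U\in\R^{d\times d}$, $V\in\R^{n\times d}$; then $x_{t,i}=\tri*{U_{i,\cdot},V_{t,\cdot}}$, so for every realization of $\eps$ and every $i$, Cauchy--Schwarz gives
\begin{align*}
\abs*{\sum_{t=1}^{n}\eps_t\ls'_t x_{t,i}}
&= \abs*{\tri*{U_{i,\cdot},\;\sum_{t=1}^{n}\eps_t\ls'_t V_{t,\cdot}}}
\;\leq\; \nrm{U_{i,\cdot}}_2\cdot\nrm*{\sum_{t=1}^{n}\eps_t\ls'_t V_{t,\cdot}}_2 \\
&\leq\; \nrm{U}_{\infty,2}\cdot\nrm*{\sum_{t=1}^{n}\eps_t\ls'_t V_{t,\cdot}}_2 .
\end{align*}
Taking $\En_{\eps}$, applying Jensen's inequality, and expanding the square --- the cross terms vanish because $\ls'_t$ is a function of $\eps_{1:t-1}$ only --- we obtain
\[
\En_{\eps}\nrm*{\sum_{t=1}^{n}\eps_t\ls'_t V_{t,\cdot}}_2 \;\leq\; \prn*{\sum_{t=1}^{n}\En\brk*{(\ls'_t)^2}\nrm{V_{t,\cdot}}_2^2}^{1/2} \;\leq\; \sqrt{n}\,\nrm{V}_{\infty,2},
\]
using $\abs{\ls'_t}\leq 1$. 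Combining the two displays uniformly in $i$ bounds the Rademacher term by $\sqrt{n}\,\nrm{U}_{\infty,2}\nrm{V}_{\infty,2}$; taking the infimum over factorizations $X_{1:n}=UV^{\trn}$ gives $\sqrt{n}\,\nrm{X_{1:n}}_{\mathrm{max}}$, and substituting into \pref{eq:maxnorm_reduction} proves the theorem.

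I do not anticipate a real obstacle. The loss-derivative weights cause no trouble --- rescaling the columns of $X_{1:n}$ by scalars of modulus at most $1$ cannot increase $\nrm{\cdot}_{\mathrm{max}}$, since they can be absorbed into the rows of $V$ --- and the displayed estimate is simply the classical fact that the max-norm dominates the Rademacher complexity of the associated linear class (in the spirit of Srebro--Shraibman). The only point requiring any care is the bookkeeping inside \pref{eq:maxnorm_reduction}: checking that the \pref{thm:low_rank} strategy genuinely attains an $\RadH$-type bound with merely an additive $O(\log n\log d)$ slack --- the $\log n$ from removing the running maximum $\sup_{\tau\leq n}$ present in the one-sided $\UMD$ inequality, and the $\log d$ from handling $\max_{i\in\brk{d}}$ when \pref{thm:hitczenko_scalar} is applied coordinatewise --- but that is precisely the computation already carried out for \pref{thm:low_rank}, so it is inherited here for free.
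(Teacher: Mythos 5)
Your proposal is correct and takes essentially the same route as the paper: reduce via \pref{lem:low_rank_regret} to a bound on the $\ell_\infty$ Rademacher average, factor $X_{1:n}=UV^\trn$, and use Cauchy--Schwarz together with the Euclidean Rademacher estimate to get $\sqrt{n}\,\nrm{X_{1:n}}_{\mathrm{max}}$; the paper phrases the identical step via $\nrm{U}_{\infty,2}=\nrm{U}_{2\to\infty}$. One small imprecision in the bookkeeping is worth flagging: the bound that \pref{lem:low_rank_regret} actually certifies for the strategy is $O\prn*{\En_\eps\nrm*{\sum_{t}\eps_tx_t}_\infty^{\log d}}^{1/\log d}$, with no $\ls'_t$ weights and no additive slack; the form you assert in your displayed reduction, an $O\prn*{\En_\eps\nrm*{\sum_t\eps_tx_t}_\infty}$ leading term plus $O(\log n\log d)$, is what emerges only after a concentration argument (apply Lemma~\ref{lem:concentration_rademacher}, take the law of total expectation, and set $\delta=n^{-\log d}$ so that $\log(1/\delta)=\log n\log d$), not from removing $\sup_{\tau\le n}$ or from a coordinatewise application of Hitczenko as you suggest. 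Since that concentration step is carried out identically in the paper's proof of \pref{thm:low_rank} and transfers verbatim here, your conclusion is unaffected.
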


For \pref{thm:low_rank}, \pref{thm:low_rank_approx}, and \pref{thm:experts_max_norm}, the key idea is to (almost) achieve the empirical Rademacher complexity in the \emph{online setting}, then apply bounds that had previously been used in the \emph{statistical setting} to get tight data-dependent bounds. Since all of of these theorems are derived as upper bounds on the empirical Rademacher complexity, they are actually achieved simultaneously by a single algorithm, and this algorithm needs no knowledge of the rank, approximate rank parameter $\gamma$, or max norm a-priori.

While our bounds depend on the ambient dimension $d$, they do so only weakly, through an additive $\log{}d$ term that does not depend on, for example, $\sqrt{n}$. Therefore, they improve on \citep{hazan2016online} as long as the dimension $d$ is at most exponential in $\sqrt{n}$.

It is important to note that the new bounds we have stated do not immediately transfer to the online linear optimization setting considered in \citep{hazan2016online} due to the condition on the loss $\ls$. Rather, they act as supervised analogues to the results in that paper. We do not yet have an efficient algorithm that obtains \pref{eq:regret_low_rank} because we do not have an efficient $\burk$ function analogue for the one-sided $\UMD$ inequality.

\subsection{Empirical covering number bounds}

Having developed online learning algorithms for which regret is bounded by the empirical Rademacher complexity, we are in the appealing position of being able to apply empirical process tools designed for the \emph{statistical setting} to derive tight regret bounds for the \emph{adversarial setting}. One particularly powerful set of tools is those based on covering numbers and, in particular, chaining.

\begin{definition}[Empirical Cover]
\label{def:cover}
For a hypothesis class $\F:\X\to{}\R$, data sequence $\xr[n]$, and $\alpha>0$, a set $\mc{V}\subseteq{}\R^{n}$ is called an empirical covering with respect to $\ls_{p}$, $p\in(1,\infty)$, if
\begin{equation}
\forall{}f\in\F\;\exists{}v\in\mc{V}\mathrm{~~s.t.~~}\prn*{\frac{1}{n}\sum_{t=1}^{n}(f(x_t)-v_t)^{p}}^{1/p}\leq{}\alpha.
\end{equation}
The set $\mc{V}$ is a cover with respect to $\ls_{\infty}$ if $\forall{}f\in\F\;\exists{}v\in\mc{V}\mathrm{~~s.t.~~}\abs{f(x_t)-v_t}\leq{}\alpha\;\forall{}t\in\brk{n}$.
\end{definition}
We let the \emph{empirical covering number} $\mc{N}_{p}(\F, \alpha, \xr[n])$ denote the size of the smallest $\alpha$-empirical cover for $\F$ on $\xr[n]$ with respect to $\ls_p$.

Because our task is simply to obtain bounds on the empirical Rademacher complexity on a particular sequence $\xr[n]$, we can obtain regret bounds that depend on the data-dependent \emph{empirical} covering number defined above, instead of a \emph{worst-case} covering number. Such bounds have proved elusive in the adversarial setting, where most existing results are based on worst-case covering numbers (e.g. \cite{RakSriTew10}). In particular, we derive two regret bounds based on the classical covering number bound \citep{pollard90empproc} and Dudley Entropy Integral bound \citep{Dudley67} for Rademacher complexity.
\begin{theorem}[Empirical covering bound]
\label{thm:general_cover}
For any class $\F\subseteq{}[-1,+1]^{\mc{X}}$ satisfying the generalized $\UMD$ inequality \pref{eq:UMDp} with constant $C$, there exists a strategy $(\yh_{t})$ that attains
\begin{equation}
\label{eq:general_cover}
\sum_{t=1}^{n}\ls(\yh_t, y_t) - \inf_{f\in\F}\sum_{t=1}^{n}\ls(f(x_t), y_t) \leq{} O\prn*{C\cdot{}\inf_{\alpha>0}\crl*{\alpha{}n
 + \sqrt{\log{}\mc{N}_1(\F,\alpha,\xr[n])n}}}.
\end{equation}
\end{theorem}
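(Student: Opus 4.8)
The plan is to prove the theorem as an essentially immediate corollary of two facts: (a) the generalized $\UMD$ property already furnishes an algorithm whose regret is controlled by the empirical Rademacher complexity of the \emph{realized} sequence $\xr[n]$, and (b) the classical bound of \cite{pollard90empproc} (Massart's finite-class lemma) converts the empirical covering number of \pref{def:cover} into a bound on that Rademacher complexity. For the first fact I would invoke \pref{thm:NScond}: applied to $\lsa$ or $\lsh$ it directly gives a strategy $(\yh_t)$ whose regret is at most $O(C)\cdot\RadH(\xr[n])$ plus lower-order additive terms; for a general well-behaved $\ls$ one runs the same argument after the linearization of \pref{sec:relaxations}, which replaces $\ls$ by a linear game with gradients $\ls'_t\in[-1,1]$ and produces the analogous bound $O(C)\cdot\En_\eps\sup_{f\in\F}\sum_{t=1}^n\eps_t\ls'(\yh_t,y_t)f(x_t)$ plus lower-order terms. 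Since $|\ls'_t|\le 1$, a standard Rademacher contraction (write each $\ls'_t$ as a convex combination of $+1$ and $-1$, pull the expectation out of the $\sup$ by Jensen, and use that $\eps_t\cdot(\pm1)$ is again a uniform sign) gives $\En_\eps\sup_f\sum_t\eps_t\ls'_tf(x_t)\le\En_\eps\sup_f\sum_t\eps_tf(x_t)=\RadH(\xr[n])$. It therefore remains only to bound $\RadH(\xr[n])$ by $O\big(\inf_{\alpha>0}\{\alpha n+\sqrt{n\log\mc{N}_1(\F,\alpha,\xr[n])}\}\big)$.

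For this last bound, fix $\alpha>0$ and let $\mc{V}$ be a smallest $\ls_1$-empirical cover of $\F$ on $\xr[n]$ at scale $\alpha$; clipping each $v\in\mc{V}$ coordinatewise into $[-1,1]$ only decreases its $\ls_1$ distance to any $f\in\F\subseteq[-1,1]^{\X}$, so without loss of generality $\mc{V}\subseteq[-1,1]^n$, $|\mc{V}|=\mc{N}_1(\F,\alpha,\xr[n])$, and $\nrm{v}_2\le\sqrt n$ for every $v\in\mc{V}$. For each $f$ pick $v(f)\in\mc{V}$ with $\frac{1}{n}\sum_{t=1}^n|f(x_t)-v(f)_t|\le\alpha$ and split $\sum_t\eps_tf(x_t)=\sum_t\eps_t(f(x_t)-v(f)_t)+\sum_t\eps_tv(f)_t\le\alpha n+\sup_{v\in\mc{V}}\sum_t\eps_tv_t$, where the first piece is bounded deterministically by $\sum_t|f(x_t)-v(f)_t|\le\alpha n$. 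Taking $\sup_f$, then $\En_\eps$, and applying Massart's lemma, $\En_\eps\sup_{v\in\mc{V}}\sum_t\eps_tv_t\le\max_{v\in\mc{V}}\nrm{v}_2\sqrt{2\log|\mc{V}|}\le\sqrt{2n\log\mc{N}_1(\F,\alpha,\xr[n])}$, so $\RadH(\xr[n])\le\alpha n+\sqrt{2n\log\mc{N}_1(\F,\alpha,\xr[n])}$. Since $\alpha>0$ was arbitrary, take the infimum; combined with the reduction above this yields the theorem.

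Almost everything here is routine — the contraction step, the clipping of the cover, and Massart's lemma are classical — so I do not expect a genuine obstacle; the theorem is really a packaging result. The one point deserving care is the first fact: one needs the regret bounded by the Rademacher complexity of the \emph{actual} data sequence $\xr[n]$, not by a tree-indexed sequential Rademacher complexity as in standard worst-case online learning. This is precisely the extra strength of the one-sided/generalized $\UMD$ inequality (via \pref{thm:NScond}) over sequential-complexity bounds, and it is exactly what allows us to bring in \emph{empirical} (rather than worst-case) covering numbers. If one is content to state the result only for $\ls\in\{\lsa,\lsh\}$, \pref{thm:NScond} applies verbatim and the linearization detour can be skipped.
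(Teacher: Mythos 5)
Your proof follows the same two-step route as the paper: invoke \pref{thm:NScond} to obtain a strategy whose regret is $O(C)\cdot\RadH(\xr[n])$ plus a lower-order additive term, then bound $\RadH(\xr[n])$ by the Pollard/Massart empirical covering estimate. The paper simply cites the covering bound as a black box whereas you unpack it (clipping the cover, the decomposition into approximation and fluctuation terms, Massart's lemma); you are also slightly more explicit about the additive $b=\Theta(c)$ slack from \pref{thm:NScond}, which the paper silently absorbs.
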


\begin{theorem}[Empirical Dudley Entropy bound]
\label{thm:general_chaining}
For any class $\F\subseteq{}[-1,+1]^{\mc{X}}$ satisfying the generalized $\UMD$ inequality \pref{eq:UMDp} with constant $C$, there exists a strategy $(\yh_{t})$ that attains
\begin{equation}
\label{eq:general_chaining}
\sum_{t=1}^{n}\ls(\yh_t, y_t) - \inf_{f\in\F}\sum_{t=1}^{n}\ls(f(x_t), y_t) 
\leq{} 
O\prn*{
C\cdot{}\inf_{\alpha>0}\crl*{\alpha\cdot{}n + \int_{\alpha}^{1}\sqrt{\log{}\mc{N}_2(\F,\delta,\xr[n])n}d\delta}
}.
\end{equation}
\end{theorem}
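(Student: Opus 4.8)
The plan is to reduce the theorem to the classical Dudley chaining estimate applied to the empirical Rademacher complexity of the realized data sequence, using the algorithm already supplied by \pref{thm:NScond}.

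\textbf{Step 1: reduce to a bound on $\RadH(\xr[n])$.} Since $\F$ satisfies the generalized $\UMD$ inequality \pref{eq:UMDp} with constant $C$, the direction \pref{thm:NScond:2} $\Rightarrow$ \pref{thm:NScond:1} of \pref{thm:NScond} produces a learning algorithm --- in fact the same one used for \pref{thm:general_cover}, which needs no advance knowledge of the data --- whose regret against any adversary is at most $O(C)\cdot\En_{\eps}\sup_{f\in\F}\sum_{t=1}^{n}\eps_tf(x_t) + b$ for a universal lower-order term $b$. Because $\RadH(\xr[n])=\En_{\eps}\sup_{f\in\F}\sum_{t=1}^{n}\eps_tf(x_t)$ depends only on the realized sequence, it suffices to prove, for every fixed $\xr[n]$,
\[
\En_{\eps}\sup_{f\in\F}\sum_{t=1}^{n}\eps_tf(x_t)\ \le\ O\prn*{\inf_{\alpha>0}\crl*{\alpha n + \int_{\alpha}^{1}\sqrt{n\log\mc{N}_{2}(\F,\delta,\xr[n])}\,d\delta}},
\]
and then absorb the additive $b$ (which is dominated by the right-hand side) into the $O(\cdot)$.

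\textbf{Step 2: chaining with respect to the empirical $\ls_2$ metric.} Fix $\alpha>0$ and dyadic scales $\delta_j=2^{-j}$. Since every $f\in\F$ has empirical $\ls_2$-norm at most $1$, the class is covered at scale $1$ by the single function $0$. For each $j\ge1$ let $V_j$ be a minimal $\delta_j$-cover, so $\abs{V_j}=\mc{N}_{2}(\F,\delta_j,\xr[n])$, and let $v_j(f)\in V_j$ be a nearest element to $f$. Let $N$ be least with $2^{-N}\le\alpha$ and telescope $f=\sum_{j=1}^{N}\prn*{v_j(f)-v_{j-1}(f)}+\prn*{f-v_N(f)}$ with $v_0(f)=0$. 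Taking $\sup_f$ and then $\En_\eps$ of $\sum_t\eps_tf(x_t)$, each link $v_j(f)-v_{j-1}(f)$ has empirical $\ls_2$-norm at most $\delta_j+\delta_{j-1}=3\delta_j$ and ranges over at most $\abs{V_j}\cdot\abs{V_{j-1}}\le\mc{N}_{2}(\F,\delta_j,\xr[n])^2$ values (using monotonicity of $\mc{N}_2$ in the scale), so Massart's finite-class lemma gives $\En_{\eps}\sup_f\sum_t\eps_t\prn{v_j(f)(x_t)-v_{j-1}(f)(x_t)}\le 6\delta_j\sqrt{n\log\mc{N}_{2}(\F,\delta_j,\xr[n])}$; the remainder is controlled deterministically by Cauchy--Schwarz, $\sup_f\sum_t\abs{f(x_t)-v_N(f)(x_t)}\le n\delta_N\le\alpha n$. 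Summing the link bounds and using monotonicity once more to compare the dyadic sum $\sum_j\delta_j\sqrt{\log\mc{N}_{2}(\F,\delta_j,\xr[n])}$ with the integral $\int_{\alpha}^{1}\sqrt{\log\mc{N}_{2}(\F,\delta,\xr[n])}\,d\delta$, then taking the infimum over $\alpha$, yields the displayed bound; combining with Step 1 finishes the proof.

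\textbf{Expected obstacle.} I do not anticipate a genuine obstacle here --- the real content is upstream in \pref{thm:NScond}. The only points that need care are routine: (i) the normalization mismatch between the $1/n$-normalized empirical $\ls_2$ metric of \pref{def:cover} and the unnormalized Rademacher sum; (ii) the fact that cover elements may fall outside $[-1,1]$, which is harmless since only empirical $\ls_2$-norms of \emph{differences} of cover elements enter the Massart step; and (iii) checking that the constant $b$ from \pref{thm:NScond} and the constant factors from the dyadic-to-integral passage are all absorbed by the $O(\cdot)$.
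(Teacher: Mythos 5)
Your proposal is correct and mirrors the paper's proof: the paper likewise invokes \pref{thm:NScond} to reduce to bounding the empirical Rademacher complexity $\En_{\eps}\sup_{f\in\F}\sum_t\eps_tf(x_t)$ of the realized sequence, then appeals to the classical Dudley entropy integral bound (the paper simply cites it, while you re-derive it via dyadic chaining and Massart's lemma, which is a harmless elaboration). The normalization, clipping-outside-$[-1,1]$, and lower-order-term points you flag are indeed the only routine details to check, and you resolve them correctly.
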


More generally, since our upper bounds depend on the empirical Rademacher complexity conditioned on the data $\xr[n]$, more powerful techniques --- such as Talagrand's generic chaining --- may be applied to derive even tighter data-dependent covering bounds than those implied by \pref{eq:general_chaining}.

\cite{cohen2017online} recently obtained bounds in the online learning with expert advice setting that scale with the empirical covering number of the class $\F=\Delta_{\mathbb{N}}$ (the simplex on countably many experts) on the data sequence. They derive regret bounds that scale as 
\[
  \inf_{\alpha>0}\crl*{\alpha{}n + \mc{N}_{\infty}(\Delta_{\mathbb{N}}, \alpha, \xr[n]) + \sqrt{\mc{N}_{\infty}(\Delta_{\mathbb{N}}, \alpha, \xr[n])n}}.
  \]

This bound falls short of the Pollard-style covering bound \pref{eq:general_cover}, which enjoys \emph{logarithmic} scaling in the covering number $\mc{N}$. As a corollary of our empirical Rademacher complexity regret bound, we derive a rate with the correct dependence on $\mc{N}$ for the supervised learning generalization of the experts setting described in the previous section.

\begin{theorem}
\label{thm:cover_bound}
For the supervised experts setting, there exists a strategy $(\yh_{t})$ that attains
\begin{equation}
\label{eq:cover_pollard}
\sum_{t=1}^{n}\ls(\yh_t, y_t) - \inf_{f\in\Delta_d}\sum_{t=1}^{n}\ls(f(x_t), y_t) \leq{} O\prn*{\inf_{\alpha>0}\crl*{\alpha{}n
 + \sqrt{\log{}\mc{N}_1(\Delta_d,\alpha,\xr[n])n}}} + O(\log{}n\log{}d).
\end{equation}
\end{theorem}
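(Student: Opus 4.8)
The plan is to derive this as a corollary of two facts. First, in the supervised experts setting there is a single strategy whose regret against any adversary is controlled by the empirical Rademacher complexity $\RadH(\Delta_d,\xr[n])=\En_{\eps}\sup_{w\in\Delta_d}\sum_{t=1}^{n}\eps_t\tri*{w,x_t}$ up to an additive $O(\log n\log d)$ --- this is the algorithmic core that already underlies \pref{thm:low_rank}, \pref{thm:low_rank_approx}, and \pref{thm:experts_max_norm}. Second, the classical single-scale (Pollard) covering bound gives $\RadH(\Delta_d,\xr[n])\le O(\inf_{\alpha>0}\{\alpha n+\sqrt{n\log\mc{N}_1(\Delta_d,\alpha,\xr[n])}\})$. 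Chaining these yields the theorem. Equivalently, one may phrase the whole argument as an instance of \pref{thm:general_cover} applied to $\F=\{x\mapsto\tri*{w,x}\mid w\in\Delta_d\}$: verify that this class satisfies the generalized $\UMD$ inequality \pref{eq:UMDp} with a \emph{dimension-free} constant $C=O(1)$ and additive term $c=O(\log n\log d)$, then invoke that theorem and carry the additive error through its proof.

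The substantive step is the generalized $\UMD$ inequality for the simplex class over $\X=\{x:\nrm{x}_{\infty}\le1\}$. Since $\sup_{w\in\Delta_d}\sum_t\eps_t\tri*{w,\x_t(\eps_{1:t-1})}=\max_{i}\sum_t\eps_t\x_t(\eps_{1:t-1})_i$, which up to a constant factor is $\En_{\eps}\nrm{\sum_t\eps_t\x_t(\eps_{1:t-1})}_{\infty}$, \pref{eq:UMDp} is exactly a one-sided probabilistic $\UMD$ inequality for $\ell_{\infty}^{d}$ of the form \pref{eq:UMD_one_sided}. The two-sided $\UMD$ constant of $\ell_1^{d}/\ell_{\infty}^{d}$ is $\Theta(\log d)$ (\pref{thm:umd_constants}), so the $\burk$-function route would only give $C=O(\log d)$. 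Instead I would use Hitczenko's inequality \pref{thm:hitczenko_scalar}, whose crucial feature is that \pref{eq:hitczenko_scalar} holds for \emph{every} $p\in[1,\infty)$ with a single universal constant $K$: applying it coordinatewise with $p=\log d$ and passing between $\ell_{\log d}^{d}$ and $\ell_{\infty}^{d}$ (which agree up to $d^{1/\log d}=e$) gives $\En_{\eps}\nrm{\sum_t\eps_t\x_t}_{\infty}\le O(1)(\En_{\eps,\eps'}\nrm{\sum_t\eps'_t\eps_t\x_t}_{\infty}^{\log d})^{1/\log d}$. It then remains to replace the $(\log d)$-th moment on the right by its first moment plus a low-order additive term. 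This is where care is needed: a black-box bounded-differences concentration argument would give an additive error $\Theta(\sqrt{n\log d})$, which is far too large. The fix is to fold the $d$ coordinates into the non-constructive minimax analysis of \cite{FosRakSri15} used for \pref{thm:low_rank}; because $\nrm{x_t}_{\infty}\le1$, the effective boundedness makes the price of the $d$ coordinates only $O(\log d)$ additively, while the extra $\log n$ arises, as everywhere in the paper, from converting an $\En\sup_{\tau\le n}$ bound to an $\En(\cdot)_n$ bound. I expect this lifting step --- keeping the multiplicative constant universal while controlling the additive error at the level $O(\log n\log d)$ rather than $\Theta(\sqrt{n\log d})$ --- to be the main obstacle.

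For the covering step, fix $\alpha>0$ and an $\ls_1$ empirical $\alpha$-cover $\mc{V}$ of $\Delta_d$ on $\xr[n]$, which we may take inside $[-1,1]^{n}$ by truncation (this only shrinks distances to $f(x_{1:n})\in[-1,1]^{n}$). For $w\in\Delta_d$ with nearest $v\in\mc{V}$, $\sum_t\eps_t\tri*{w,x_t}\le\sum_t\eps_t v_t+\sum_t\abs{\tri*{w,x_t}-v_t}\le\sum_t\eps_t v_t+\alpha n$; taking the supremum over $w$, then over the finite set $\mc{V}$, and applying Massart's finite-class lemma with $\nrm{v}_2\le\sqrt{n}$ gives $\RadH(\Delta_d,\xr[n])\le\alpha n+\sqrt{2n\log\abs{\mc{V}}}$. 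Optimizing over $\alpha$ and over covers produces the claimed form, and combining with the first ingredient gives the theorem. This step is entirely routine and, being already subsumed in \pref{thm:general_cover}, need not be reproven from scratch.
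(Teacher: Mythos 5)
Your outline tracks the paper's proof quite closely, and you have correctly identified the two main ingredients: the regret bound $O\bigl((\En_{\eps}\|\sum_{t}\eps_t x_t\|_{\infty}^{\log d})^{1/\log d}\bigr)$ coming from \pref{lem:low_rank_regret} (whose engine is indeed Hitczenko's scalar decoupling applied coordinatewise at $p=\log d$, exactly as you sketch via \pref{corr:lp_one_sided} and the $d^{1/\log d}=e$ comparison of $\ell_{\log d}$ and $\ell_{\infty}$), and the classical Pollard single-scale covering bound for $\RadH(\Delta_d,\xr[n])$. The gap you flag is also the genuine crux: converting the $(\log d)$-th moment $(\En_{\eps}Z^{\log d})^{1/\log d}$, $Z=\|\sum_t\eps_t x_t\|_{\infty}$, into $\En_{\eps}Z+O(\log n\log d)$. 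Where the proposal goes wrong is in how you propose to close it.

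Your suggested fix, ``fold the $d$ coordinates into the non-constructive minimax analysis,'' is not what the paper does and it is not clear how it could produce the claimed additive term: the minimax argument underlying \pref{lem:low_rank_regret} terminates at the moment quantity $(\En_{\eps}Z^{\log d})^{1/\log d}$ and never re-examines the $d$ coordinates. You also attribute the extra $\log n$ to the $\En\sup_{\tau\le n}\to\En(\cdot)_n$ conversion; that particular conversion happens inside \pref{lem:low_rank_regret} via Doob's inequality and, because $p=\log d$ gives $p'=O(1)$, it contributes only an $O(1)$ factor there. The $\log n$ in the final statement has a different source. The paper's device is \pref{lem:concentration_rademacher} (a Bousquet/Talagrand-type \emph{self-bounding} concentration inequality for Rademacher averages, cited from \cite{bartlett2005local}). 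Because $\|x_t\|_{\infty}\le1$, it yields $Z\le 2\En_{\eps}Z+\log(1/\delta)$ with probability $1-\delta$: the variance proxy is $\En Z\cdot\max_t\|x_t\|$ rather than $n\max_t\|x_t\|^2$, which is precisely how the $\sqrt{n\log d}$ you worried about is avoided --- a bounded-differences argument is not sharp enough, but this self-bounding form is. Combining with the a.s. bound $Z\le n$ and the law of total expectation gives
\[
\bigl(\En_{\eps}Z^{\log d}\bigr)^{1/\log d}\;\le\;2\En_{\eps}Z+\log(1/\delta)+n\,\delta^{1/\log d},
\]
and choosing $\delta=n^{-\log d}$ produces $2\En_{\eps}Z+O(\log n\log d)$. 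So the $\log n\log d$ is purely a concentration artifact, $\log(1/\delta)=\log d\cdot\log n$ for the natural choice of $\delta$. With this substitution in place of your minimax sketch, the rest of your argument (the Pollard covering bound for $\En_{\eps}Z$) is fine and matches the paper.
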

This bound does not apply to the countable simplex $\Delta_{\mathbb{N}}$ due to the low-order additive $\log(d)$ term, but offers an improvement on two fronts: First, it has the correct logarithmic dependence on the empirical cover, and second, it scales with the $\ls_1$-cover instead of the $\ls_{\infty}$-cover. Note that one always has $\mc{N}_{1}\leq{}\mc{N}_{\infty}$. 

We remark that the extraneous $\log(d)$ can be replaced by the worst-case data-independent covering number (i.e. $\sup_{\xr[n]\in\mc{X}^{n}}\log{}\mc{N}_1(\Delta_{\mathbb{N}},\alpha,\xr[n])$), and so can apply to the countable simplex $\Delta_{\mathbb{N}}$ if $\mc{X}$ possesses additional structure a-priori. We leave replacing $\log(d)$ with an empirical covering number or removing it entirely as an open question.

We conclude this section by noting that one can further derive an improvement on \pref{eq:cover_pollard} based on the data-dependent Dudley chaining.

\begin{theorem}
\label{thm:cover_chaining}
For the supervised experts setting, there exists a strategy $(\yh_{t})$ that attains
\begin{equation}
\label{eq:cover_chaining}
\sum_{t=1}^{n}\ls(\yh_t, y_t) - \inf_{f\in\Delta_d}\sum_{t=1}^{n}\ls(f(x_t), y_t) 
\leq{} 
O\prn*{
\inf_{\alpha>0}\crl*{\alpha{}n + \int_{\alpha}^{1}\sqrt{\log{}\mc{N}_2(\Delta_d,\delta,\xr[n])n}d\delta}
} 
 + O(\log{}n\log{}d).
\end{equation}
\end{theorem}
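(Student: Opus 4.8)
The plan is to decouple two essentially independent ingredients: an \emph{online} reduction already furnished by the proof of \pref{thm:low_rank}, and a \emph{purely statistical} chaining bound on the empirical Rademacher complexity of the simplex. For the first ingredient, recall that the proof of \pref{thm:low_rank} produces a strategy $(\yh_t)$ --- the very one referenced in the statement --- with
\[
\sum_{t=1}^{n}\ls(\yh_t, y_t) - \inf_{f\in\Delta_d}\sum_{t=1}^{n}\ls(f(x_t),y_t) \;\le\; O\Big(\En_{\eps}\sup_{f\in\Delta_d}\sum_{t=1}^{n}\eps_t f(x_t)\Big) + O(\log n\log d);
\]
that is, its per-sequence regret is controlled by the empirical Rademacher complexity of the realized sequence $x_{1:n}$ up to the additive $\log n\log d$ term. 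All the genuine difficulty lives here and is already done: one cannot simply apply \pref{thm:general_chaining}, since $\Delta_d$ (the $\ell_1^d$ ball) has $\UMD_2$ constant $\Cconstt_2=\Theta(\log d)$ and that theorem charges this factor multiplicatively. The point of \pref{thm:low_rank}'s analysis is that Hitczenko's \emph{one-sided} decoupling inequality (\pref{thm:hitczenko_scalar}), whose constant is independent of the exponent $p$, lets one trade $\max_{i\in[d]}$ for an $\ell_p$-average at the cost of $d^{1/p}$, taken at $p=\log d$, and so pay the dimension only \emph{additively}. I would quote this bound and take it as the starting point.

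It then remains to bound $\En_{\eps}\sup_{f\in\Delta_d}\sum_{t}\eps_t f(x_t)$ by the Dudley entropy integral against the data-dependent $\ell_2$ covers, which, conditioned on $x_{1:n}$, is the classical chaining estimate. Writing $D(f,g)^2=\tfrac1n\sum_{t}(f(x_t)-g(x_t))^2$, every $f\in\Delta_d$ has $f(x_t)=\tri*{w,x_t}\in[-1,1]$, so $D(f,0)\le1$ and hence $\mc{N}_2(\Delta_d,\delta,x_{1:n})=1$ for all $\delta\ge1$. For dyadic scales $\delta_j=2^{-j}$, $j\ge0$, fix a near-minimal $\delta_j$-cover $V_j$ of size $\mc{N}_2(\Delta_d,\delta_j,x_{1:n})$ with projections $\pi_j$ (and $\pi_0\equiv0$), telescope $f=\sum_{j\ge1}(\pi_j(f)-\pi_{j-1}(f))$, truncate the chain at scale $\alpha$, bound the residual by Cauchy--Schwarz by $O(\alpha n)$, bound each link $\pi_j(f)-\pi_{j-1}(f)$ --- which ranges over at most $\mc{N}_2(\Delta_d,\delta_j,x_{1:n})^2$ vectors of $D$-norm $O(\delta_j)$ --- by Massart's finite-class lemma by $O(\delta_j\sqrt{n\log\mc{N}_2(\Delta_d,\delta_j,x_{1:n})})$, and convert the dyadic sum to an integral. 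This gives
\[
\En_{\eps}\sup_{f\in\Delta_d}\sum_{t=1}^{n}\eps_t f(x_t) \;\le\; O\Big(\inf_{\alpha>0}\Big\{\alpha n + \int_{\alpha}^{1}\sqrt{\log\mc{N}_2(\Delta_d,\delta,x_{1:n})\,n}\;d\delta\Big\}\Big),
\]
and substituting into the regret bound of the first step yields \pref{eq:cover_chaining}. The algorithm needs no knowledge of $\alpha$ or of the covers: the $\inf_{\alpha}$ is merely an analytic upper bound on a quantity fixed by $x_{1:n}$.

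So relative to \pref{thm:low_rank} the only new element is the appeal to chaining in place of the rank bound used there, and this step is routine; I do not expect a real obstacle beyond what is already resolved in that theorem. The one point requiring care is that the chaining must be carried out against the \emph{empirical} covering numbers $\mc{N}_2(\Delta_d,\cdot,x_{1:n})$ rather than worst-case ones --- which is legitimate precisely because the object being bounded, $\En_{\eps}\sup_{f}\sum_{t}\eps_t f(x_t)$, is conditioned on the realized data. As a by-product, specializing the same argument to a single scale (using the $\ell_1$ cover with H\"older in place of the $\ell_2$ cover with Cauchy--Schwarz, together with $\max_{v\in V}\nrm{v}_2\le O(\sqrt n)$) reproves the Pollard-type bound $\En_{\eps}\sup_{f}\sum_{t}\eps_t f(x_t)\le O(\inf_{\alpha>0}\{\alpha n + \sqrt{\log\mc{N}_1(\Delta_d,\alpha,x_{1:n})\,n}\})$, i.e.\ \pref{thm:cover_bound}.
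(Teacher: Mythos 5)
Your proposal is correct and follows essentially the same route as the paper's proof: invoke \pref{lem:low_rank_regret} together with the concentration / law-of-total-expectation step from the proof of \pref{thm:low_rank} to reduce the regret to $O\bigl(\En_{\eps}\sup_{f\in\Delta_d}\sum_{t}\eps_t f(x_t)\bigr) + O(\log n \log d)$, then apply the data-conditional Dudley entropy integral to the realized empirical Rademacher complexity. The only differences are cosmetic --- you re-derive the dyadic chaining estimate where the paper simply cites it, and you make explicit the (correct) observation that \pref{thm:general_chaining} would charge the $\UMD_2$ constant $\Theta(\log d)$ multiplicatively, which is exactly why the Hitczenko-based one-sided decoupling route through \pref{thm:low_rank} is used instead.
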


\section{Discussion and further directions}

We considered the task of achieving regret bounded by the empirical Rademacher complexity $\RadH$ in the adversarial online learning setting. We showed that $\RadH$ satisfies a notion of \emph{sequence optimality}, and derived necessary and sufficient conditions under which this bound can be achieved based on a connection to decoupling inequalities for martingales, namely the \emph{UMD property}. We leveraged Burkholder's geometric characterization of $\UMD$ spaces to derive efficient algorithms based on Burkholder/Bellman functions. Most importantly, we showed that achieving tight data-dependent regret bounds such as $\RadH$ reduces to the crisp mathematical task of exhibiting a Burkholder function with the \emph{zig-zag concavity} property. We used this observation to give efficient algorithms for classes based on $\ls_p$ norms and group norms, and to derive improved rates for settings such as matrix prediction and learning with low-rank experts.

This work leaves open a plethora of new directions centered around applying the Burkholder function method in online learning and optimization.

\paragraph{Related work}
\citep{FosRakSri15} was the first work to explore data-dependent regret bounds via symmetrization techniques, but focused on non-constructive results instead of developing efficient algorithms. The present work extends the algorithmic directions proposed in that paper. 
\paragraph{General function classes}
Much of the existing work on adapting to data in online learning focuses on the experts setting, where of particular interest are small loss or $L^{\star}$-type bounds. Existing $\UMD$ results fall short in this setting because they have only been developed for the symmetric setting of the $\ls_{1}$ ball, a superset of the probability simplex, thus leading to looser bounds. Extending our algorithmic results to non-symmetric sets like the simplex and more generally abstract function classes as in \pref{eq:UMDp} is an interesting direction for future research. 

\paragraph{Designing $\burk$ functions}
The design of $\burk$ functions and related objects called Bellman functions has witnessed significant research activity in areas from harmonic analysis to optimal stopping and stochastic optimal control \citep{osekowski2012sharp,nazarov1996hunt,nazarov2001bellman}. The applicability to our setting has been limited so far by a focus on bounds that have sharp constants and are dimension- and horizon-independent. We anticipate that designing new $\burk$ functions from a computer science perspective --- for example, exploiting that we are tolerant to logarithmic factors in most settings --- will allow us to unlock the full power of these techniques for learning applications. 
\dfedit{One such example --- an elementary derivation of a scalar $\burk$ function with sub-optimal constants --- is given in the appendix as  \pref{thm:burkholder_elementary}.}

\paragraph{Beyond UMD}
$\UMD$ is far from the only martingale inequality that can be certified using Burkholder functions. For example, the textbook \citep{osekowski2012sharp} applies the Burkholder technique to inequalities all across probability, in both discrete and continuous time. We anticipate that this technique will find extensive application in and around online learning for a wide range of settings and performance measures.

\paragraph{Online linear and online convex optimization}
All of the algorithmic techniques proposed in this paper immediately extend to the online linear optimization and online convex optimization settings to yield analogous results, but their applicability is currently limited by the fact that the predictions made by these algorithms do not lie in a fixed range. The necessary and sufficient conditions extend as well, and we will flesh out these results in the full version of the paper.

\paragraph{Strongly convex losses}
The $\RadH$ bound is not tight for strongly convex losses such as the square loss. \emph{Offset rademacher complexity} techniques have been used to obtain tight worst-case rates in this case \citep{RakSri14a}. Developing $\UMD$-type inequalities for the offset Rademacher complexity will yield tighter distribution-dependent rates for regression tasks where strong convexity plays an important role.

\section*{Acknowledgements}
We thank Elad Hazan and Adam Os\k{e}kowski for helpful discussions. D.F. is supported in part by the NDSEG fellowship. Research is supported in part by the NSF under grants no. CDS\&E-MSS 1521529 and 1521544. Part of this work was performed while D.F. and K.S. were visiting the Simons Institute for the Theory of Computing and A.R. was visiting MIT.

\bibliography{paper}

\appendix

\section{Proofs}
\label{app:proofs}

\begin{proof}[Proof of \pref{lem:optimal_b}]
Recall that $\lsh(\yh, y) = \max\crl*{0, 1 - \yh\cdot{}y}$, $\lsa(\yh, y)=\abs*{\yh - y}$, $\lsl(\yh, y)=-\yh\cdot{}y$.
Fix a sequence $\xr[n]$, and let $y_t=\eps_t$ where $\eps\in\pmo^n$ is a Rademacher sequence. By our hypothesis, we have
\[
\mc{B}(\xr[n])\geq{}\En_{\eps}\brk*{\sum_{t=1}^{n}\ls(\yh_t, \eps_t) - \inf_{f\in\F}\sum_{t=1}^{n}\ls(f(x_t), \eps_t)} \geq{} \En_{\eps}\brk*{- \inf_{f\in\F}\sum_{t=1}^{n}\ls(f(x_t), \eps_t)},
\]
where the second inequality follows from convexity of each loss with respect to $\eps_t$, and that $\yh_t$ cannot adapt to $\eps_t$. Now, since $f(x_t)\in\pmu$ and $y_t\in\pmo$, for each loss we will have 
{\small
\[
\En_{\eps}\brk*{- \inf_{f\in\F}\sum_{t=1}^{n}\ls(f(x_t), \eps_t)} = \En_{\eps}\brk*{- \inf_{f\in\F}\sum_{t=1}^{n}-f(x_t)\cdot{}\eps_t}.
\]
}The RHS is equal to $\RadH(\xr[n])$. Thus, our hypothesis implies $\RadH(\xr[n])\leq\B(\xr[n])$.

\end{proof}

\begin{proof}[Proof of \pref{prop:burkholder}]
We stress that this proof is meant to serve as a warmup exercise. See the proof of \pref{thm:alg_supervised_tight} for the correctness proof for the full \textsc{ZigZag} algorithm (\pref{alg:zigzag_supervised}), which is more computationally efficient and attains a stronger performance guarantee.

Recall that the relaxation is given by
\[
\Rel(\xr[t], \lpr[t]) = \En_{\er[t]}\burk\prn*{\sum_{s=1}^{t}\lp_sx_s, \sum_{s=1}^{t}\eps_s{}\ls'_{s}x_s}.
\]
We first show that the initial condition property is satisfied.
\paragraph{Initial Condition}
\begin{align*}
\intertext{The initial value of the online learning game is:}
&\sum_{t=1}^{n}\ls(\yh_t, y_t) - \inf_{f\in\F}\sum_{t=1}^{n}\ls(f(x_t), y_t) - \Dconstt\cdot{}\RadH(\xr[n], \lpr[n]).
\intertext{Linearizing as in \pref{eq:linearize_and_dualize} and expanding out $\RadH$, we have}
&\leq \sum_{t=1}^{n}\yh_t\lp_t + \nrm*{\sum_{t=1}^{n}\lp_tx_t} - \Dconstt\cdot{}\En_{\eps}\nrm*{\sum_{t=1}^{n}\eps_t\lp_tx_t}
\intertext{Now use property \ref{def:burkholder:1} of the function $\burk$:}
&\leq \sum_{t=1}^{n}\yh_t\lp_t + \En_{\eps}\burk\prn*{\sum_{t=1}^{n}\lp_tx_t, \sum_{t=1}^{n}\eps_t\lp_tx_t}.\\
&= \sum_{t=1}^{n}\yh_t\lp_t + \Rel(\xr[n], \lpr[n]).
\end{align*}
This establishes the initial condition.
\paragraph{Admissibility Condition}
First, observe that we have
\begin{align*}
&\sup_{x_t}\inf_{\yh_t}\sup_{\lp_t}\En_{\eps_t}\brk*{ \yh_t\lp_t + \Rel(x_{1:t}, \lp_{1:t}, \eps_{1:t})}\\
&=\sup_{x_t}\inf_{\yh_t}\sup_{\lp_t}\brk*{ \yh_t\lp_t + \En_{\eps_{1:t}}\burk\prn*{\sum_{s=1}^{t}\lp_sx_s, 
\sum_{s=1}^{t}\eps_t\lp_sx_s}
}.
\end{align*}
Define a function $G_t:\R\to\R$:
\[
G_{t}(\alpha) = \En_{\eps_{1:t}}\burk\prn*{\sum_{s=1}^{t-1}\lp_sx_s + \alpha{}x_t, 
\sum_{s=1}^{t-1}\eps_t\lp_sx_s + \eps_{t}\alpha{}x_t}.
\]
Zig-zag concavity (property \ref{def:burkholder:2} of $\burk$) implies that $G_t(\alpha)$ is concave in $\alpha$. With this definition, the above is equal to 
\begin{align*}
&=\sup_{x_t}\inf_{\yh_t}\sup_{\lp_t}\brk*{ \yh_t\lp_t + G_{t}(\lp_t)
}.
\intertext{Observe that the strategy prescribed in \pref{eq:strategy_warmup} is equivalent to $\yh_t = -G'_t(0)$. Moving to an upper bound by replacing the infimum with this choice of $\yh_t$, we have:}
&=\sup_{x_t}\sup_{\lp_t}\brk*{ -G'_t(0)\cdot{}\lp_t + G_{t}(\lp_t)
}.
\intertext{By concavity of $G_t$, this is upper bounded by:}
&\leq{}\sup_{x_t}G_{t}(0)\\ 
&=\Rel(x_{1:t-1}, \lp_{1:t-1}, \eps_{1:t-1}).
\end{align*}
Hence, $\Rel$ is an admissible relaxation, and if we play the strategy $\yh_t$ in \pref{eq:strategy_warmup} we will have
\[
\sum_{t=1}^{n}\ls(\yh_t, y_t) - \inf_{f\in\F}\sum_{t=1}^{n}\ls(f(x_t), y_t) - \Dconstt\cdot{}\RadH(\xr[n], \lpr[n]) \leq{} \Rel(\xr[n],\lpr[n])\leq{}\Rel(\xr[n-1],\lpr[n-1])\leq{}\ldots\leq{}\Rel(\emptyset).
\]
Finally, by property \ref{def:burkholder:3} of $\burk$, $\Rel(\emptyset)=\burk(0,0)\leq{}0$, and so the final value of the game is at most zero. This implies that the regret bound of $\RadH(\xr[n], \lpr[n])$ is achieved.
\end{proof}

\subsection{Proofs from \pref{sec:martingales1}}

\begin{proof}[Proof of \pref{thm:umd_equiv}]
For the case $p,q\in(1,\infty)$, we appeal to \pref{thm:umd_pq}.

Now consider the case $q=1$, and suppose $\UMD_{p}$ holds for $p\in(1,\infty)$ with $\Cconstt_{p}$. Then by \pref{thm:umd_pq}, $\Cconstt_{2}\leq{} 200\Cconstt_{p}$. Finally, by \pref{thm:umd_L1}, $\Cconstt_{1}\leq{}108\Cconstt_{2}\leq{}108\cdot{}200\Cconstt_{p}$.

For the converse direction, we appeal to \cite{pisier2011martingales}, Remark 8.2.4.

\end{proof}

\begin{proof}[Proof of \pref{thm:umd_upper}]
Fix some $C>0$ to be chosen later.
Define the minimax value as
\begin{equation}
\V=\dtri*{\sup_{x_t}\inf_{q_{t}\in\Delta([-B,+B])}\sup_{y_t\in\brk{-1,+1}}\En_{\yh_t\sim{}q_t}}_{t=1}^{n}\brk*{\sum_{t=1}^{n}\ls(\yh_t, y_t) - \inf_{f\in\F}\sum_{t=1}^{n}\ls(f(x_t), y_t) - C\En_{\eps}\sup_{\tau\leq{}n}\nrm*{\sum_{t=1}^{\tau}\eps_t\ls'(\yh_t, y_t)x_t}
}
\end{equation}
where $\dtri*{\star}_{t=1}^{n}$ denotes repeated application of the operator $\star$. If $\V\leq{}A$, then there is some randomized strategy making predictions in $[-B, +B]$ whose regret is bounded by $C\En\En_{\eps}\sup_{\tau\leq{}n}\nrm*{\sum_{t=1}^{\tau}\eps_t\ls'(\yh_t, y_t)x_t} + A$ --- see \cite{FosRakSri15} for a more detailed discussion of this principle.

In view of the inequality \pref{eq:linearize_and_dualize}, 
\begin{align*}
\V&\leq
\dtri*{\sup_{x_t}\inf_{q_{t}\in\Delta([-B,+B])}\sup_{y_t\in\brk{-1,+1}}\En_{\yh_t\sim{}q_t}}_{t=1}^{n}\brk*{\sum_{t=1}^{n}\ls'(\yh_t, y_t)\yh_t + \nrm*{\sum_{t=1}^{n}\ls'(\yh_t, y_t)x_t} - C\En_{\eps}\sup_{\tau\leq{}n}\nrm*{\sum_{t=1}^{\tau}\eps_t\ls'(\yh_t, y_t)x_t}.
}
\end{align*}
Using the (now standard) minimax theorem swap technique --- see \cite{FosRakSri15}\footnote{A word of caution: we use the assumption on the loss that there exists a minimizer for every label within some bounded domain exactly for this reason that we can now use minimax theorem restricting $\hat{y}_t$'s to be in bounded domain.}
--- the last expression is equal to
\begin{align*}
\dtri*{\sup_{x_t}\sup_{p_t\in\Delta(\brk{-1,+1})}\inf_{\yh_t\in\brk{-B,+B}}\En_{y_t\sim{}p_t}}_{t=1}^{n}\brk*{\sum_{t=1}^{n}\ls'(\yh_t, y_t)\yh_t + \nrm*{\sum_{t=1}^{n}\ls'(\yh_t, y_t)x_t} - C\En_{\eps}\sup_{\tau\leq{}n}\nrm*{\sum_{t=1}^{\tau}\eps_t\ls'(\yh_t, y_t)x_t}.
}
\end{align*}
Choose $\yh_t^{\star}=\argmin_{f} \En_{y_t\sim{}p_t}\brk*{\ls(f,y_t)}$. By our assumption on the loss,  the minimizer is obtained in $[-B,B]$ and $\En_{y_t\sim{}p_t}\brk*{\ls'(\yh^{\star}_t,y_t)}=0$. With this (sub)optimal choice, we obtain an upper bound of 
\begin{align*}
\dtri*{\sup_{x_t}\sup_{p_t\in\Delta(\brk{-1,+1})}\En_{y_t\sim{}p_t}}_{t=1}^{n}\brk*{\sum_{t=1}^{n}\ls'(\yh^{\star}_t, y_t)\yh^{\star}_t + \nrm*{\sum_{t=1}^{n}\ls'(\yh^{\star}_t, y_t)x_t} - C\En_{\eps}\sup_{\tau\leq{}n}\nrm*{\sum_{t=1}^{\tau}\eps_t\ls'(\yh^{\star}_t, y_t)x_t}.
}
\end{align*}
Since $\yh^{\star}_t$ is the population minimizer, we have $\En_{y_t\sim{}p_t}\brk*{\ls'(\yh^{\star}_t, y_t)\yh^{\star}_t} = \En_{y_t\sim{}p_t}\brk*{\ls'(\yh^{\star}_t, y_t)}\yh^{\star}_t=0$.
The proceeding expression is then equal to
\begin{align*}
&\dtri*{\sup_{x_t}\sup_{p_t\in\Delta(\brk{-1,+1})}\En_{y_t\sim{}p_t}}_{t=1}^{n}\brk*{\nrm*{\sum_{t=1}^{n}\ls'(\yh^{\star}_t, y_t)x_t} - C\En_{\eps}\sup_{\tau\leq{}n}\nrm*{\sum_{t=1}^{\tau}\eps_t\ls'(\yh^{\star}_t, y_t)x_t}
}\\
&\leq{}
\dtri*{\sup_{x_t}\sup_{p_t\in\Delta(\brk{-1,+1})}\En_{y_t\sim{}p_t}}_{t=1}^{n}\brk*{\sup_{\tau\leq{}n}\nrm*{\sum_{t=1}^{\tau}\ls'(\yh^{\star}_t, y_t)x_t} - C\En_{\eps}\sup_{\tau\leq{}n}\nrm*{\sum_{t=1}^{\tau}\eps_t\ls'(\yh^{\star}_t, y_t)x_t}
}.
\end{align*}
Observe that we may rewrite the above expression as 
\[
\sup_{\x}\sup_{P}\En_{\yr[n]\sim{}P}\brk*{\sup_{\tau\leq{}n}\nrm*{\sum_{t=1}^{\tau}\ls'(\yh^{\star}_t(p_{1:t}), y_t)\x_t(\yr[t-1])} - C\En_{\eps}\sup_{\tau\leq{}n}\nrm*{\sum_{t=1}^{\tau}\eps_t\ls'(\yh^{\star}_t(p_{1:t}), y_t)\x_t(\yr[t-1])}
},
\]
where $P=(p_1,\ldots,p_n)$ is a sequence of conditional distributions over $\yr[n]$, $\x$ is a sequence of mappings $\x_{t}:\Y^{t-1}\to{}\X$, and $\yh_t^{\star}(p_{1:t})$ is the minimizer policy described above. For any fixed choice for $P$ and $\x$, we have that  $(\ls'(\yh^{\star}_t(p_{1:t}), y_t)\x_t(\yr[t-1]))_{t\leq{}n}$ is a martingale difference sequence, because the choice of $\yh_t^{\star}$ guarantees $\En\brk*{\ls'(\yh^{\star}_t(p_{1:t}), y_t)\x_t(\yr[t-1])\mid{}\yr[t-1]}=0$. 

Therefore, if $\UMD_{1}$ holds with constant $\Cconstt_{1}$, we have (by choosing a uniform random sign sequence in \pref{def:umd}) that for any fixed $P$, $\x$,
\[
\En\sup_{\tau\leq{}n}\nrm*{\sum_{t=1}^{\tau}\ls'(\yh^{\star}_t(p_{1:t}), y_t)\x_t(\yr[t-1])}\leq{}\Cconstt_{1}\En\En_{\eps}\sup_{\tau\leq{}n}\nrm*{\sum_{t=1}^{\tau}\eps_t\ls'(\yh^{\star}_t(p_{1:t}), y_t)\x_t(\yr[t-1])}.
\] 
This implies that the inequality holds for the supremum over $P$ and $\x$, so we have
\begin{align*}
\mc{V}&\leq{}
\dtri*{\sup_{x_t}\sup_{p_t\in\Delta(\brk{-1,+1})}\En_{y_t\sim{}p_t}}_{t=1}^{n}\brk*{\Cconstt_{1}\En_{\eps}\sup_{\tau\leq{}n}\nrm*{\sum_{t=1}^{\tau}\eps_t\ls'(\yh^{\star}_t, y_t)x_t} - C\En_{\eps}\sup_{\tau\leq{}n}\nrm*{\sum_{t=1}^{\tau}\eps_t\ls'(\yh^{\star}_t, y_t)x_t}
}.
\intertext{Thus, if we take $C\geq{}\Cconstt_{1}$:}
&\leq{} 0.
\end{align*}

We have established that there exists a strategy $(\yh_t)$ guaranteeing
\[
\En\brk*{\sum_{t=1}^{n}\ls(\yh_t, y_t) - \inf_{f\in\F}\sum_{t=1}^{n}\ls(f(x_t), y_t)} \leq{} \Cconstt_1\En\En_{\eps}\sup_{\tau\leq{}n}\nrm*{\sum_{t=1}^{\tau}\eps_t\ls'(\yh_t, y_t)x_t}
\]
Treating $(\ls'(\yh_t, y_t)x_t)_{t\leq{}n}$ as a fixed sequence, we may now apply \pref{corr:concentration_maximal_exp} to remove the supremum over end times:
\[
\leq{} 4\Cconstt_1\En_{\eps}\nrm*{\sum_{t=1}^{n}\eps_t\ls'(\yh_t, y_t)x_t} + 5\Cconstt_{1}\max_{t\in\brk{n}}\nrm{x_t}\log(n).
\]
By the standard contraction argument for Rademacher complexity, since $\abs*{\ls'}\leq{}1$, 
\[
 \leq{} 4\Cconstt_1\En_{\eps}\nrm*{\sum_{t=1}^{n}\eps_tx_t} + 5\Cconstt_{1}\max_{t\in\brk{n}}\nrm{x_t}\log(n).
\]
Finally, recall that by \pref{thm:umd_equiv}, $\Cconstt_{1}\leq{}O(\Cconstt_{p})$.

\end{proof}

\begin{proof}[Proof of \pref{thm:umd_constants}]
Most of the proofs in this theorem use the following fact: If $(X_t)_{t\leq{}n}$ is a martingale difference sequence, its restriction to a subset of coordinates is also a martingale difference sequence. This allows one to prove the deterministic $\UMD$ property \eqref{eq:umdp} for complex spaces by building up from simpler spaces.
\begin{itemize}
\item $(\R, \abs{\cdot})$: \cite{burkholder1984boundary} shows that for all $p\in(1,\infty)$, $\Cconstt_p=p^{\star}-1$.
\item $(\R^{d}, \nrm{\cdot}_{p})$, for $p\in(1,\infty)$:
\begin{equation}
\En_{X}\nrm*{\sum_{t=1}^{n}\eps_tX_t}_{p}^{p} = \sum_{i\in\brk{d}}\En_{X}\abs*{\sum_{t=1}^{n}\eps_tX_t[i]}^{p} \leq{}  (p^{\star}-1)\sum_{i\in\brk{d}}\En_{X}\abs*{\sum_{t=1}^{n}X_t[i]}^{p} = (p^{\star}-1)\En_{X}\nrm*{\sum_{t=1}^{n}X_t}_{p}^{p}.
\end{equation}
The middle inequality here uses the $\UMD_p$ constant for the scalar case.
\item $(\R^{d}, \nrm{\cdot}_{p})$, for $p\in\crl*{1, \infty}$: We will start with $\ls_{\infty}$. Set $p=\log{}d$, and observe that for $\ls_{p}$, by \pref{thm:umd_pq}, $\ls_p$ has $\Cconstt_{2}=O(\Cconstt_{p})=O(p^{\star})$ (the second bound is from the previous example). Then we have, for any sequence of signs,
\begin{align*}
\En\nrm*{\sum_{t=1}^{n}\eps_tX_t}_{\infty}^{2} &\leq{} \En\nrm*{\sum_{t=1}^{n}\eps_tX_t}_{p}^{2} \\
&\leq{} O(p^{\star})\En\nrm*{\sum_{t=1}^{n}X_t}_{p}^{2} \\
&\leq{} O(p^{\star})\En\prn*{d^{1/p}\nrm*{\sum_{t=1}^{n}X_t}_{\infty}}^{2}.
\end{align*}
Since $d^{1/\log{}d}=O(1)$, the last expression is at most
\begin{align*}
&O(p^{\star})\En\nrm*{\sum_{t=1}^{n}X_t}_{\infty}^{2}.
\end{align*}
Finally, note that $p^{\star}=O(\log{}d)$.\\
The same argument works for the $\ls_{1}$ norm using $p=1+1/\log{}d$. Alternatively, the constant can be deduced from duality using \pref{thm:umd_dual}. That these constants are optimal follows from \cite{veraar2015analysis}, Proposition 4.2.19.
 \item $(\R^{d}, \nrm{\cdot}_{\mc{A}}/\nrm{\cdot}_{\mc{A}^{\star}})$. Let us focus on $\nrm{\cdot}_{\mc{A}^{\star}}$. Assume $\mc{A}=\crl*{a_1,\ldots,a_{N}}$. Observe that 
 \begin{align*}
 \nrm{x}_{\mc{A}^{\star}}&=\max\crl*{\tri*{y,x}\mid{}y\in\textrm{conv}(\mc{A})}\\
 &=\max\crl*{\sum_{i\in\brk{N}}\theta_i\tri*{a_i,x_i}\mid{}\theta\in\Delta(N)}
 \intertext{Since we assumed $\mc{A}$ is symmetric:}
 &= \nrm*{(\tri*{a_i,x_i})_{i\in\brk{N}})}_{\infty}\\
 &= \nrm*{Ax}_{\infty}\text{, where $A\in\R^{N\times{}d}$ is the matrix of elements of $\mc{A}$ stacked as rows.}
 \end{align*}
For any martingale difference sequence $(X_t)_{t\leq{}n}$,  $(AX_t)_{t\leq{}n}$ is also a martingale difference. Therefore, we can deduce the $\UMD_2$ property for $\nrm{\cdot}_{\mc{A}^{\star}}$ from our result for $\nrm{\cdot}_{\infty}$. The $\UMD_{2}$ property for $\nrm{\cdot}_{\mc{A}}$ follows from \pref{thm:umd_dual}.

\item $(\R^{d\times{}d}, \nrm{\cdot}_{S_{p}})$, for $p\in(1,\infty)$: \cite{veraar2015analysis} Theorem 5.2.10 and Proposition 5.5.5.
\item $(\R^{d\times{}d}, \nrm{\cdot}_{\sigma})$: $\Cconstt_{2}=O(\log^{2}d)$. We will build up from the Schatten $p$-norms in the same fashion as for the $\ls_p$ spaces. Let $p=\log{}d$. For any sequence of signs,
\begin{align*}
\En\nrm*{\sum_{t=1}^{n}\eps_tX_t}_{\sigma}^{2} &\leq{} \En\nrm*{\sum_{t=1}^{n}\eps_tX_t}_{S_{p}}^{2}. \\
\intertext{Using \pref{thm:umd_pq} to get $C_{2}\leq{}O((p^{\star})^{2})$ for $S_p$:}
&\leq{} O((p^{\star})^{2})\En\nrm*{\sum_{t=1}^{n}X_t}_{S_p}^{2} \\
&\leq{} O((p^{\star})^{2})\En\prn*{d^{1/p}\nrm*{\sum_{t=1}^{n}X_t}_{\sigma}}^{2}.
\end{align*}
Since $d^{1/\log{}d}=O(1)$, the preceding expression is at most
\begin{align*}
&O((p^{\star})^{2})\En\nrm*{\sum_{t=1}^{n}X_t}_{\sigma}^{2}.
\end{align*}
Once again, $p^{\star}\leq{}\log{}d$. The constant for $\nrm{\cdot}_{\Sigma}$ follows from \pref{thm:umd_dual}, since the trace norm is dual to the spectral norm.
\item $(\R^{d\times{}d}, \nrm{\cdot}_{p,q})$, for $p,q\in(1,\infty)$: For any sequence of signs, we apply the UMD property for $\ls_p$ row-wise:
\begin{align*}
\En\nrm*{\sum_{t=1}^{n}\eps_tX_t}_{p,q}^{p} &= \sum_{i\in\brk{d}}\En\nrm*{\sum_{t=1}^{n}\eps_t(X_t)_{i\cdot{}}}_{q}^{p} .
\intertext{We know $\ls_{q}$ has $\Cconstt_{q}\leq{}q^{\star}$. By \pref{thm:umd_pq}, this implies that $\Cconstt_{p}$ for $\ls_{q}$ has $\Cconstt_{p}\leq{}O(p^{\star}\cdot{}q^{\star})$.}
&\leq{} O(p^{\star}\cdot{}q^{\star})\sum_{i\in\brk{d}}\En\nrm*{\sum_{t=1}^{n}(X_t)_{i\cdot{}}}_{q}^{p} \\
&= O(p^{\star}\cdot{}q^{\star})\En\nrm*{\sum_{t=1}^{n}X_t}_{p,q}^{p}.
\end{align*}
\item $(\H, \nrm{\cdot}_{\mc{H}})$ for any Hilbert space $\mc{H}$: See \pref{ex:burkholder_hilbert}.
\end{itemize}

\end{proof}

\subsection{Proofs from \pref{sec:algorithms}}

\subsubsection{Proofs for \pref{alg:zigzag_supervised}}

\begin{proof}[Proof of \pref{thm:alg_supervised_tight}]~
We will show that the strategy achieves the regret bound
\begin{equation}
\label{eq:zz_reg}
\En_{\eps}\brk*{\sum_{t=1}^{n}\ls(\yh_t^{\eps_{1:t-1}}, y_t) - \inf_{f\in\F}\sum_{t=1}^{n}\ls(f(x_t), y_t) - \Psi_{\eta,p}\prn*{\beta{}^{p}\nrm*{\sum_{t=1}^{n}\eps_t\ls'(\yh_t^{\eps_{1:t-1}}, y_t)x_t}^{p}}}\leq{}0.
\end{equation}
Our proof technique is to define a relaxation
\[
\Rel(x_{1:t}, \lp_{1:t}, \eps_{1:t}) = \frac{\eta}{p}\burk_{p}\prn*{\sum_{s=1}^{t}\lp_sx_s, \sum_{t=1}^{t}\eps_s\lp_sx_s}.
\]
and show that the relaxation is admissible for the following game:
\begin{equation}
\dtri*{\sup_{x_t}\inf_{\yh_t}\sup_{\lp_t}\En_{\eps_t}}_{t=1}^{n}\brk*{\sum_{t=1}^{n}\yh_t\lp_t -\inf_{f\in\F}\sum_{t=1}^{n}f(x_t)\lp_t -\Psi_{\eta,p}\prn*{\beta{}^{p}\nrm*{\sum_{t=1}^{n}\eps_t\lp_tx_t}^{p}}
}.
\end{equation}
This relaxation is slightly generalized compared to \pref{def:relaxation} in that Rademacher sequence $(\eps_t)_{t\leq{}n}$ also appears as an argument. This is essential to accomplish the coupling of the algorithm's randomness and the regret functional $\RadH$.

With the game defined we can proceed to showing that the relaxation satisfies the admissibility and initial conditions, with one extra step of linearization in the initial condition.
\paragraph{Initial Condition}
In view of \pref{eq:linearize_and_dualize},
\begin{align*}
&\sum_{t=1}^{n}\ls(\yh_t, y_t) - \inf_{f\in\F}\sum_{t=1}^{n}\ls(f(x_t), y_t) - \Psi_{\eta,p}\prn*{\beta{}^{p}\nrm*{\sum_{t=1}^{n}\eps_t\ls'(\yh_t, y_t)x_t}^{p}}\\
&\leq \sum_{t=1}^{n}\yh_t\lp_t + \nrm*{\sum_{t=1}^{n}\lp_tx_t} -\Psi_{\eta,p}\prn*{\beta{}^{p}\nrm*{\sum_{t=1}^{n}\eps_t\lp_tx_t}^{p}}\\
&\leq{}\sum_{t=1}^{n}\yh_t\ls'_t + \Psi_{\eta,p}\prn*{\nrm*{\sum_{t=1}^{n}\lp_tx_t}^{p}} -\Psi_{\eta,p}\prn*{\beta{}^{p}\nrm*{\sum_{t=1}^{n}\eps_t\lp_tx_t}^{p}}\\
&=\sum_{t=1}^{n}\yh_t\ls'_t + \frac{\eta}{p}\prn*{\nrm*{\sum_{t=1}^{n}\lp_tx_t}^{p} -\beta{}^{p}\nrm*{\sum_{t=1}^{n}\eps_t\lp_tx_t}^{p}}\\
&\leq{}\sum_{t=1}^{n}\yh_t\ls'_t + \frac{\eta}{p}\burk_{p}\prn*{\sum_{t=1}^{n}\lp_tx_t, \sum_{t=1}^{n}\eps_t\lp_tx_t}\\
&= \sum_{t=1}^{n}\yh_t\lp_{t}  + \Rel(x_{1:n}, \lp_{1:n}, \eps_{1:n}).
\end{align*}
\paragraph{Admissibility Condition}
\begin{align*}
&\sup_{x_t}\inf_{\yh_t}\sup_{\lp_t}\En_{\eps_t}\brk*{ \yh_t\lp_t + \Rel(x_{1:t}, \lp_{1:t}, \eps_{1:t})}\\
&=\sup_{x_t}\inf_{\yh_t}\sup_{\lp_t}\En_{\eps_t}\brk*{ \yh_t\lp_t + \frac{\eta}{p}\burk_{p}\prn*{\sum_{s=1}^{t}\lp_sx_s, \sum_{t=1}^{t}\eps_s\lp_sx_s}}\\ 
&=\sup_{x_t}\inf_{\yh_t}\sup_{\lp_t}\brk*{ \yh_t\lp_t + \En_{\eps_t}\frac{\eta}{p}\burk_{p}\prn*{\sum_{s=1}^{t}\lp_sx_s, \sum_{t=1}^{t}\eps_s\lp_sx_s}}\\ 
&=\sup_{x_t}\inf_{\yh_t}\sup_{\lp_t}\brk*{ \yh_t\lp_t + G_{t}(\lp_t)}
\end{align*}
Pluggin in the strategy specified by \pref{alg:zigzag_supervised}, the last expression is at most
\begin{align*}
&\sup_{x_t}\sup_{\lp_t}\brk*{ -G_{t}'(0)\cdot{}\lp_t + G_{t}(\lp_t)}\\ 
&\leq{}\sup_{x_t}G_{t}(0)\\ 
&=\Rel(x_{1:t-1}, \lp_{1:t-1}, \eps_{1:t-1}).\\ 
\end{align*}
Finally, since $\burk_{p}$ is Burkholder we have $\Rel(\cdot)\propto\burk_{p}(0,0)\leq{}0$, and so the final value of the game is at most zero. This implies that \pref{eq:zz_reg} is achieved.
\end{proof}

\begin{proof}[Proof of \pref{lem:algorithm_supervised_doubling}]
In what follows we will leave the dependence of $\yh_{t}, x_{t}, \ls'_{t}$ on $\eps_{1:t-1}$ implicit for notational convenience. We will handle this dependence at the end of the proof.

Assume $N>1$. Otherwise, the algorithm's regret is bounded as $2\eta_{1}^{-(p'-1)}=4\eta_{0}^{-(p'-1)}$.

\begin{align*}
\En_{\eps}\brk*{\sum_{t=1}^{n}\ls(\yh_t^{\eps_{1:t-1}}, y_t) - \inf_{f\in\F}\sum_{t=1}^{n}\ls(f(x_t), y_t)} &\leq{}
\En_{\eps}\brk*{\sum_{i=1}^{N}\brk*{\sum_{t=s_i}^{s_{i+1}-1}\ls(\yh_t^{\eps_{1:t-1}}, y_t) - \inf_{f\in\F}\sum_{t=s_i}^{s_{i+1}-1}\ls(f(x_t), y_t)}}
\intertext{Using the regret bound for \pref{alg:zigzag_supervised} (note that that algorithm has an anytime regret guarantee) given by \pref{thm:alg_supervised_tight}:}
&\leq{}
\En_{\eps}\brk*{\frac{1}{p}\sum_{i=1}^{N}\brk*{\eta{}_i\beta_{p}^{p}\nrm*{\sum_{t=s_i}^{s_{i+1}-1}\eps_{t}\lp_tx_{t}}^{p} + \frac{1}{p'-1}\eta_i^{-(p'-1)}}}.
\intertext{Introducing a new supremum:}
&\leq{}
\En_{\eps}\brk*{\frac{1}{p}\sum_{i=1}^{N}\brk*{\eta{}_i\Phi(x_{s_i:s_{i+1}-1}, \lp_{s_i:s_{i+1}-1}, \eps_{s_i:s_{i+1}-1}) + \frac{1}{p'-1}\eta_i^{-(p'-1)}}}.
\end{align*}
The doubling condition implies that $\eta_{i}\Phi(x_{s_i:s_{i+1}-2}, \lp_{s_i:s_{i+1}-2}, \eps_{s_i:s_{i+1}-2})\leq{}\eta_{i}^{-(p'-1)}$. To use this fact, observe that since $\nrm{x_t}\leq{}1$, we have that for any $C>0$,
\begin{align*}
&\eta_{i}\Phi(x_{s_i:s_{i+1}-1}, \lp_{s_i:s_{i+1}-1}, \eps_{s_i:s_{i+1}-1}) \\
&=\eta_{i}\beta_{p}^{p}\sup_{s_i\leq{}a\leq{}b\leq{}s_{i+1}-1}\nrm*{\sum_{t=a}^{b}\eps_t\lp_tx_t}^p\\
&\leq{} \eta_{i}(1+1/C)^p\beta_{p}^{p}\sup_{s_i\leq{}a\leq{}b\leq{}s_{i+1}-2}\nrm*{\sum_{t=a}^{b}\eps_t\lp_tx_t}^p + \eta_{i}C^p\beta_{p}^{p}.
\intertext{For $C=p$:}
&\leq{} \eta_{i}e\Phi(x_{s_i:s_{i+1}-2}, \eps_{s_i:s_{i+1}-2})+ \eta_{i}p^p\beta_{p}^{p}.\\
&= e\eta_{i}^{-(p'-1)}+ \eta_{i}p^p\beta_{p}^{p}.
\end{align*}
Returning to the regret bound, we have
\begin{align*}
&\leq{}
\En_{\eps}\brk*{\frac{1}{p}\sum_{i=1}^{N}\brk*{e\eta_{i}^{-(p'-1)}+ \eta_{i}p^p\beta_{p}^{p} + \frac{1}{p'-1}\eta_i^{-(p'-1)}}}\\
&\leq{}
\En_{\eps}\brk*{e\sum_{i=1}^{N}\eta_{i}^{-(p'-1)} + p^{p}\beta_{p}^{p}\eta_{i}}
\end{align*}
We will deal with the left-hand term first.
We now observe that $\eta_{N-1}\Phi(x_{s_{N-1}:s_{N}}, \lp_{s_{N-1}:s_{N}}, \eps_{s_{N-1}:s_{N}})>\eta_{N-1}^{-(p'-1)}$. Rearranging further implies $\eta_{N-1}^{-(p'-1)}\leq{} \Phi(x_{s_{N-1}:s_{N}}, \lp_{s_{N-1}:s_{N}}, \eps_{s_{N-1}:s_{N}})^{1/p} \leq \Phi(x_{1:n}, \lp_{1:n}, \eps_{1:n})^{1/p}$. Finally, since $\eta_i^{-(p'-1)}=2\eta_{i-1}^{-(p'-1)}$, 
\[
\sum_{i=1}^{N}\eta_i^{-(p'-1)}=\eta_{0}^{-(p'-1)}\sum_{i=1}^{N}2^i \leq{} 2\cdot{}2^{N}\eta_{0}^{-(p'-1)}\leq{}4\Phi(x_{1:n},\lp_{1:n}, \eps_{1:n})^{1/p}= 4\beta_{p}\sup_{1\leq{}a\leq{}b\leq{}n}\nrm*{\sum_{t=a}^{b}\eps_t\lp_tx_t}.
\]
For the second term, observe that $\eta_i\leq\eta_0$ for all $i$, so
\[
\sum_{i=1}^{N}p^{p}\beta_{p}^{p}\eta_{i}\leq{} p^{p}\beta_{p}^{p}\eta_0\cdot{}N.
\]
Finally, by the invariant $2^{N-1}\eta_{0}^{-(p'-1)}\leq{}\Phi(x_{1:n}, \eps_{1:n})^{1/p}$ we established earlier,
\[
N\leq{}\log\prn*{\Phi(x_{1:n}, \lp_{1:n}, \eps_{1:n})^{1/p} \eta_{0}^{(p'-1)}}+1
\]
Putting everything together, the regret is bounded as
\begin{align*}
&\En_{\eps}\max\crl*{2e\beta_{p}\sup_{1\leq{}a\leq{}b\leq{}n}\nrm*{\sum_{t=a}^{b}\eps_t\lp_tx_t} + p^{p}\beta_{p}^{p}\eta_{0}\prn*{\log\prn*{\sup_{1\leq{}a\leq{}b\leq{}n}\nrm*{\sum_{t=a}^{b}\eps_t\lp_tx_t}\eta_{0}^{(p'-1)}}+1}, 4\eta_{0}^{-(p'-1)}}\\
&\leq{}
\En_{\eps}\brk*{2e\beta_{p}\sup_{1\leq{}a\leq{}b\leq{}n}\nrm*{\sum_{t=a}^{b}\eps_t\lp_tx_t} + p^{p}\beta_{p}^{p}\eta_{0}\prn*{\log\prn*{\sup_{1\leq{}a\leq{}b\leq{}n}\nrm*{\sum_{t=a}^{b}\eps_t\lp_tx_t}\eta_{0}^{(p'-1)}}+1}
 +  4\eta_{0}^{-(p'-1)}}
 \intertext{Using that $\nrm{x_t}\leq{}1$:}
 &\leq{}
2e\beta_{p}\En_{\eps}\sup_{1\leq{}a\leq{}b\leq{}n}\nrm*{\sum_{t=a}^{b}\eps_t\lp_tx_t} + p^{p}\beta_{p}^{p}\eta_{0}\log{}\prn*{n\cdot{}\eta_{0}^{(p'-1)}}
 +  4\eta_{0}^{-(p'-1)}.
 \intertext{For the choice $\eta_0=(\beta_p\cdot{}p)^{-p}$:}
  &\leq{}
2e\beta_{p}\En_{\eps}\sup_{1\leq{}a\leq{}b\leq{}n}\nrm*{\sum_{t=a}^{b}\eps_t\lp_tx_t} + \log{}\prn*{n}
 +  (p\cdot{}\beta_p)^{\frac{p}{p-1}}.
  \intertext{For the choice $\eta_0=1$:}
 &\leq{}
2e\beta_{p}\En_{\eps}\sup_{1\leq{}a\leq{}b\leq{}n}\nrm*{\sum_{t=a}^{b}\eps_t\lp_tx_t} + p^{p}\beta_{p}^{p}\log{}\prn*{n}
 +  4.
\end{align*}

Writing $x_{t}(\eps_{1:t-1})$ and $\ls'_{t}(\eps_{1:t-1})$ to make the adversary's dependence on the sequence $\eps$ explicit, the main term of interest in the above quantity is
\[
\En_{\eps}\sup_{1\leq{}a\leq{}b\leq{}n}\nrm*{\sum_{t=a}^{b}\eps_t\lp_t(\eps_{1:t-1})x_t(\eps_{1:t-1})}.
\]

It remains to remove the supremum and decouple the data sequences $(x_{t})$ and $(\ls'_{t})$ from the Rademacher sequence $\eps$. Since $\ls'_{t}x_{t}$ can only react to $\eps_{1:t-1}$, the sequence $(\eps_{t}\ls'_{t}x_{t})_{t\leq{}n}$ is a martingale difference sequence. Since $\nrm*{\sum_{t=a}^{b}\eps_t\lp_tx_t}\leq{}n$, we may apply \pref{corr:doob_p1} to arrive at an upper bound of
\[
\leq{} O\prn*{
\log(n)\En_{\eps}\sup_{1\leq{}b\leq{}n}\nrm*{\sum_{t=1}^{b}\eps_t\lp_t(\eps_{1:t-1})x_t(\eps_{1:t-1})}
}.
\]
Now observe that since \pref{alg:zigzag_supervised} uses a Burkholder function $\burk_{p}$ for $(\nrm{\cdot},p,\beta_{p})$, \pref{thm:umd_burkholder} and \pref{thm:umd_equiv} together imply that the $\UMD_{1}$ inequality \pref{eq:umd1} holds with constant $O(\beta_{p})$, therefore, the above is bounded as
\[
\leq{} O\prn*{
\beta_{p}\log(n)\En_{\eps}\En_{\eps'}\sup_{1\leq{}b\leq{}n}\nrm*{\sum_{t=1}^{b}\eps'_t\lp_t(\eps_{1:t-1})x_t(\eps_{1:t-1})}
}.
\]
Note that the variables $(x_{t})$ and $(\ls'_{t})$ no longer depend on the Rademacher sequence appearing in the sum. Lastly, we apply \pref{corr:doob_p1} once more to remove the last supremum and arrive at the bound,
\[
\leq{} O\prn*{
\beta_{p}\log^{2}(n)\En_{\eps}\En_{\eps'}\nrm*{\sum_{t=1}^{b}\eps'_t\lp_t(\eps_{1:t-1})x_t(\eps_{1:t-1})}
}.
\]
\end{proof}

\begin{proof}[Proof of \pref{ex:lp_alg}]
\pref{eq:zigzag_lp} is obtained by plugging the optimal UMD constant $p^{\star}-1$ into the bound for \pref{lem:algorithm_supervised_doubling}. For \pref{eq:zigzag_p2}, observe that for any sequence $z_t$ we have $\En_{\eps}\nrm*{\sum_{t=1}^{n}\eps_tz_t}_{2}\leq \sqrt{\En_{\eps}\nrm*{\sum_{t=1}^{n}\eps_tz_t}_{2}^{2}}= \sqrt{\En_{\eps}\sum_{t=1}^{n}\nrm*{z_t}_{2}^{2}}$. Applying this fact with the algorithm's bound for $p=2$ gives the regret bound
\begin{equation*}
O\prn*{\sqrt{\sum_{t=1}^{n}\nrm*{\lp_tx_t}_{2}^{2}}\cdot{}\log^{2}n + \log{}n
 }.\\
 \end{equation*}
For \pref{eq:zigzag_p1}, observe that with $p=1/\log{}d$ we have the regret bound
\begin{equation*}
O\prn*{\En_{\eps}\nrm*{\sum_{t=1}^{n}\eps_t\lp_tx_t}_{p}\cdot{}\log{}d\log^{2}n + \log^{2}d\log{}n
 }.
 \end{equation*}
 However for any $X$, $\nrm{X}_{p}\leq{}d^{1-1/p}\nrm{X}_{1}$. For our choice of $p=1+1/\log{}d$ we have $d^{1-1/p}=O(1)$.
 \begin{align*}
&\leq{}O\prn*{\En_{\eps}\nrm*{\sum_{t=1}^{n}\eps_t\lp_tx_t}_{1}\cdot{}\log{}d\log^{2}n + \log^{2}d\log{}n
 }\\
 &\leq{}O\prn*{\En_{\eps}\nrm*{\sum_{t=1}^{n}\eps_tx_t}_{1}\cdot{}\log{}d\log^{2}n + \log^{2}d\log{}n
 }\\
 &=O\prn*{\sum_{i\in\brk{d}}\En_{\eps}\abs*{\sum_{t=1}^{n}\eps_tx_t[i]}\cdot{}\log{}d\log^{2}n + \log^{2}d\log{}n
 }\\
  &\leq{}O\prn*{\sum_{i\in\brk{d}}\sqrt{\sum_{t=1}^{n}(x_t[i])^2}\cdot{}\log{}d\log^{2}n + \log^{2}d\log{}n
 }\\
   &=O\prn*{\sum_{i\in\brk{d}}\nrm{x_{1:n,i}}_{2}\cdot{}\log{}d\log^{2}n + \log^{2}d\log{}n
 }.
 \end{align*}
\end{proof}

\subsubsection{Simplified doubling trick}
In this section we derive a variant of the doubling trick given in \pref{lem:algorithm_supervised_doubling} which achieves an upper bound on $\RadH$ rather than $\RadH$ itself, but does so with improved dependence on constants and low-order terms. This strategy will be used as a subroutine in subsequent algorithms.

\begin{lemma}
\label{lem:algorithm_lp1_doubling}
Suppose we have an anytime regret minimization algorithm $(\yh_{t})$ that guarantees a regret bound of the form
\[
\sum_{t=1}^{n}\ls(\yh_t, y_t) - \inf_{f\in\F}\sum_{t=1}^{n}\ls(f(x_t), y_t) \leq{} \frac{1}{p}\brk*{\eta{}K^{p}\En_{\eps}\nrm*{\sum_{t=1}^{n}\eps_tx_{t}}^{p}
 + \frac{1}{p'-1}\eta^{-(p'-1)}
},
\]
where $p>1$ is fixed and $\eta$ is a parameter of the algorithm. Define 
\[
\Phi(x_{t_1:t_2}) = \beta_{p}^{p}\En_{\eps}\sup_{t_1\leq{}a\leq{}b\leq{}t_2}\nrm*{\sum_{t=a}^{b}\eps_tx_t}^p.
\]

Consider the following strategy
\begin{enumerate}
\item Choose $\eta_0<1$ arbitrary. Update with $\eta_{i}=2^{-\frac{1}{p'-1}}\eta_{i-1}$.
\item In phase $i$, which consists of all $t\in\crl*{s_{i},\ldots,s_{i+1}-1}$, play strategy $(\yh_t)$ with learning rate $\eta_{i}$.
\item Take $s_1=1$, $s_{N+1}=n+1$, and $s_{i+1}=\inf\crl*{\tau\mid{}\eta_{i}\Phi(x_{s_i:\tau})>\eta_{i}^{-(p'-1)}}$, where $N$ is the index of the last phase.
\end{enumerate}
This strategy achieves
\begin{align*}
\sum_{t=1}^{n}\ls(\yh_t, y_t) - \inf_{f\in\F}\sum_{t=1}^{n}\ls(f(x_t), y_t)
&\leq{} K\prn*{\En_{\eps}\sup_{1\leq{}a\leq{}b\leq{}n}\nrm*{\sum_{t=a}^{b}\eps_tx_t}^p}^{1/p} + \eta_{0}^{-(p'-1)}\\
&\leq{} C\cdot{}(p')^{2}\cdot{}K\prn*{\En_{\eps}\nrm*{\sum_{t=1}^{n}\eps_tx_t}^p}^{1/p} + \eta_{0}^{-(p'-1)}.
\end{align*}
\end{lemma}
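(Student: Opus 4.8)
The plan is to replay the doubling analysis of \pref{lem:algorithm_supervised_doubling} in the abstract setting where only an anytime guarantee of the stated form is assumed. The statement splits into two claims: the first inequality is the doubling argument itself, and the second is a maximal-inequality step that replaces the supremum over sub-intervals $[a,b]$ by the plain Rademacher complexity $\prn*{\En_\eps\nrm*{\sum_{t=1}^n\eps_tx_t}^p}^{1/p}$.

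\textbf{The first inequality.} Split the regret across the phases $i=1,\dots,N$ and apply the assumed anytime bound inside each phase (the base algorithm restarted at $s_i$ with learning rate $\eta_i$, run on the sub-sequence $x_{s_i:s_{i+1}-1}$), so that the regret is at most $\sum_{i=1}^N\frac{1}{p}\brk*{\eta_iK^p\En_\eps\nrm*{\sum_{t=s_i}^{s_{i+1}-1}\eps_tx_t}^p+\frac{1}{p'-1}\eta_i^{-(p'-1)}}$. Dominating $\En_\eps\nrm*{\sum_{t=s_i}^{s_{i+1}-1}\eps_tx_t}^p$ by the sub-interval supremum bounds the first summand in phase $i$ by $\frac{1}{p}\eta_i\Phi(x_{s_i:s_{i+1}-1})$, and the stopping rule gives $\eta_i\Phi(x_{s_i:s_{i+1}-1})\le\eta_i^{-(p'-1)}$ for every $i$: for $i<N$ because $s_{i+1}$ is the \emph{first} index at which the threshold is crossed (so at $\tau=s_{i+1}-1$ it is not), and for $i=N$ because $s_{N+1}=n+1$ is set without the threshold ever firing on $\crl*{s_N,\dots,n}$. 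Substituting, and using $p'-1=\frac{1}{p-1}$ so that $\frac{1}{p}\bigl(1+\frac{1}{p'-1}\bigr)=\frac{p'}{p(p'-1)}=1$, collapses the per-phase bound to exactly $\eta_i^{-(p'-1)}$; since $\eta_i=2^{-i/(p'-1)}\eta_0$, the total is at most $\sum_{i=1}^N\eta_i^{-(p'-1)}=\eta_0^{-(p'-1)}\sum_{i=1}^N2^i\le2^{N+1}\eta_0^{-(p'-1)}$. It remains to bound the phase count: for $N\ge2$, phase $N-1$ \emph{did} cross its threshold, so $\eta_{N-1}\Phi(x_{s_{N-1}:s_N})>\eta_{N-1}^{-(p'-1)}$, i.e. $\Phi(x_{s_{N-1}:s_N})>\eta_{N-1}^{-p'}$; taking $p$-th roots and using $p'/p=p'-1$ gives $\eta_{N-1}^{-(p'-1)}<\Phi(x_{s_{N-1}:s_N})^{1/p}\le\Phi(x_{1:n})^{1/p}$ by monotonicity of $\Phi$ in its index range, and $\Phi(x_{1:n})^{1/p}=K\prn*{\En_\eps\sup_{1\le a\le b\le n}\nrm*{\sum_{t=a}^b\eps_tx_t}^p}^{1/p}$ is exactly the quantity on the right. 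Since $\eta_{N-1}^{-(p'-1)}=2^{N-1}\eta_0^{-(p'-1)}$, this converts $2^{N+1}\eta_0^{-(p'-1)}$ into (a constant times) that quantity; the case $N=1$ contributes only $\eta_1^{-(p'-1)}=2\eta_0^{-(p'-1)}$, and combining the two cases yields the first inequality.

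\textbf{The second inequality.} Reduce sub-intervals to prefixes via $\sum_{t=a}^b\eps_tx_t=\sum_{t=1}^b\eps_tx_t-\sum_{t=1}^{a-1}\eps_tx_t$, which gives $\sup_{1\le a\le b\le n}\nrm*{\sum_{t=a}^b\eps_tx_t}\le2\sup_{1\le b\le n}\nrm*{\sum_{t=1}^b\eps_tx_t}$ and hence $\En_\eps\sup_{a\le b}\nrm*{\sum_{t=a}^b\eps_tx_t}^p\le2^p\,\En_\eps\sup_{b\le n}\nrm*{\sum_{t=1}^b\eps_tx_t}^p$. The partial sums $\bigl(\sum_{t=1}^b\eps_tx_t\bigr)_{b\le n}$ form a $\Bspace$-valued martingale, so $\nrm*{\sum_{t=1}^b\eps_tx_t}$ is a nonnegative submartingale and Doob's $L^p$ maximal inequality gives $\En_\eps\sup_{b\le n}\nrm*{\sum_{t=1}^b\eps_tx_t}^p\le(p')^p\,\En_\eps\nrm*{\sum_{t=1}^n\eps_tx_t}^p$. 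Taking $p$-th roots leaves the factor $2p'\le C(p')^2$, which is the stated bound.

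\textbf{Expected obstacle.} No single step is deep; the difficulty is the bookkeeping around the stopping rule and the corner cases — that the threshold inequality $\eta_i\Phi\le\eta_i^{-(p'-1)}$ must be read off at $\tau=s_{i+1}-1$ rather than $s_{i+1}$ for $i<N$, that the terminal phase $N$ needs its own justification, and that the geometric sum is pinned to $\Phi(x_{1:n})$ only through phase $N-1$, so $N=1$ genuinely must be treated separately. The identities $p'-1=\frac{1}{p-1}$ and $p'/p=p'-1$ are what make the per-phase constants telescope exactly, so it is worth verifying them at the outset.
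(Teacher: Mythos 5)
Your proof is correct and follows the paper's argument for the first inequality essentially verbatim: split regret by phase, apply the anytime guarantee with rate $\eta_i$ in phase $i$, dominate the $\ell^p$-Rademacher term by the sub-interval supremum $\Phi$, use the stopping rule to read off $\eta_i\Phi(x_{s_i:s_{i+1}-1})\le\eta_i^{-(p'-1)}$ (phase $i<N$ because the threshold first fires at $\tau=s_{i+1}$; phase $N$ because it never fires), telescope via $\frac{1}{p}(1+\frac{1}{p'-1})=1$, geometric-sum the $\eta_i^{-(p'-1)}$, and pin the exponent $N$ to $\Phi(x_{1:n})^{1/p}$ using the fact that phase $N-1$ did cross its threshold together with $p'/p=p'-1$; the case $N=1$ is handled separately. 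Your handling of the bookkeeping matches the paper's, including the implicit identification of $\beta_p$ with $K$ in the definition of $\Phi$ and the (shared) looseness in the absolute constant.

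For the second inequality you take a slightly different and arguably cleaner route than the paper. The paper treats $Z_b=\sup_{a\le b}\nrm{\sum_{t=a}^b\eps_tx_t}$ as a submartingale and applies Doob's $L^p$ maximal inequality twice (once over end times, once over start times), accumulating $(p')^{2p}$. You instead eliminate the variable start point up front via the triangle inequality $\sup_{1\le a\le b\le n}\nrm{\sum_{t=a}^b\eps_tx_t}\le 2\sup_{b\le n}\nrm{\sum_{t=1}^b\eps_tx_t}$, after which a single application of Doob to the prefix-sum submartingale yields a $2p'$ factor, subsumed by $C(p')^2$. The paper's double-Doob route gives a tighter dependence when $p'$ is very large (roughly $(p')^2$ vs. $2p'$ is actually in your favour here — your constant is better), while your triangle-inequality step is more elementary and avoids the slightly delicate check that $Z_b$ is in fact a submartingale. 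Both land at the stated bound.
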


\begin{proof}[Proof of \pref{lem:algorithm_lp1_doubling}]
We assume $N>1$. Otherwise, the algorithm's regret is bounded as $2\eta_{1}^{-(p'-1)}=4\eta_{0}^{-(p'-1)}$.
\begin{align*}
\sum_{t=1}^{n}\ls(\yh_t, y_t) - \inf_{f\in\F}\sum_{t=1}^{n}\ls(f(x_t), y_t) &\leq{}
\sum_{i=1}^{N}\brk*{\sum_{t=s_i}^{s_{i+1}-1}\ls(\yh_t, y_t) - \inf_{f\in\F}\sum_{t=s_i}^{s_{i+1}-1}\ls(f(x_t), y_t)}
\intertext{Using the assumed regret bound (note that that algorithm has an anytime regret guarantee):}
&\leq{}
\frac{1}{p}\sum_{i=1}^{N}\brk*{\eta{}_iK^{p}\En_{\eps}\nrm*{\sum_{t=s_i}^{s_{i+1}-1}\eps_{t}x_{t}}^{p} + \frac{1}{p'-1}\eta_i^{-(p'-1)}}
\intertext{Introducing a new supremum:}
&\leq{}
\frac{1}{p}\sum_{i=1}^{N}\brk*{\eta{}_i\Phi(x_{s_i:s_{i+1}-1}) + \frac{1}{p'-1}\eta_i^{-(p'-1)}}
\intertext{Using the invariant $\eta_{i}\Phi(x_{s_i:s_{i+1}-1})\leq{}\eta_{i}^{-(p'-1)}$:}
&\leq{}
\frac{1}{p}\prn*{1 + \frac{1}{p'-1}}\sum_{i=1}^{N}\eta_i^{-(p'-1)}\\
&=\sum_{i=1}^{N}\eta_i^{-(p'-1)}
\end{align*}
We now observe that $\eta_{N-1}\Phi(x_{s_{N-1}:s_{N}})>\eta_{N-1}^{-(p'-1)}$. Rearranging further implies $\eta_{N-1}^{-(p'-1)}\leq{} \Phi(x_{s_{N-1}:s_{N}})^{1/p} \leq \Phi(x_{1:n})^{1/p}$. Finally, we can check that $\eta_i^{-(p'-1)}=2\eta_{i-1}^{-(p'-1)}$, so $2^{N}\eta_{0}^{-(p'-1)}\leq{}\Phi(x_{1:n})^{1/p}$.
\[
\sum_{i=1}^{N}\eta_i^{-(p'-1)}=\eta_{0}^{-(p'-1)}\sum_{i=1}^{N}2^i \leq{} 2\cdot{}2^{N}\eta_{0}^{-(p'-1)}\leq{}\Phi(x_{1:n})^{1/p}= K\prn*{\En_{\eps}\sup_{1\leq{}a\leq{}b\leq{}n}\nrm*{\sum_{t=a}^{b}\eps_tx_t}^p}^{1/p}.
\]
This gives the first inequality. For the second we just apply Doob's maximal inequality.
In particular, let $Z_{b}=\sup_{1\leq{}a\leq{}b}\nrm*{\sum_{t=a}^{b}\eps_tx_t}$. Then $Z_{b}$ is a sub-martingale, so Doob's maximal inequality implies $\En_{\eps}\sup_{b\leq{}n}Z_b^{p}\leq{} (p')^{p}\En_{\eps}Z_{n}^{p}$. Applying Doob's inequality once more shows that $\En_{\eps}Z_{n}^{p}\leq{} (p')^p\En_{\eps}\nrm*{\sum_{t=1}^{n}\eps_tx_t}$, which gives the result.
\end{proof}


\subsubsection{Proofs for \pref{alg:zigzag_spectral}}

We do not know of an explicit $\burk$ function for the spectral norm. The approach we employ (\pref{alg:zigzag_spectral}) is to run many sub-algorithms for classes for which we \emph{do} have an efficient $\burk$ function (weighted Euclidean norms), then aggregate the predictions of these sub-algorithms with the multiplicative weights strategy.

Let $X_{t}=e_{i_t}\tens{}e_{j_t}$ be the incidence matrix for the entry $(i_t, j_t)$. Then we may write $F(x_t) = \tri*{F, X_t}$.

\begin{theorem}
\label{thm:spectral_regret}
Suppose $y_t\in\brk{-1, 1}$. The predictions $(\yh_t)$ produced by \pref{alg:zigzag_spectral}, for any well-behaved loss with $\ls(\yh, y)\leq{}1$ for $\abs{\yh}\leq{}1$, satisfy the regret bound,
\[
\En\brk*{\sum_{t=1}^{n}\ls(\yh_{t}, y_t) - \inf_{F\in\F}\sum_{t=1}^{n}\ls\prn*{F(x_t), y_t}} \leq{} O\prn*{\frac{\eta}{2}\tau^{2}\En_{\eps}\nrm*{\sum_{t=1}^{n}\eps_{t}X_t}_{\sigma}^{2} + \frac{\eta^{-1}}{2}  + \sqrt{nrd\log\prn{\tau{}n}}}.
\]
\end{theorem}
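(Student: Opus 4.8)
The plan is to realize \pref{alg:zigzag_spectral} as a two-level scheme: an outer multiplicative-weights aggregator over a family of \zigzag{} sub-algorithms, one per element of an $\eps$-net of the comparator's candidate singular subspaces, with each sub-algorithm facing only a low-dimensional problem for which \pref{ex:burkholder_hilbert}/\pref{ex:burkholder_pp} already supply an efficient Burkholder function. As usual, first linearize via \pref{eq:linearize_and_dualize}, so it suffices to bound regret for the linear loss with gradients $\ls'_t\in[-1,1]$; using the hypothesis $\ls(\yh,y)\le 1$ for $\abs{\yh}\le 1$ together with $y_t\in\pmo$, clip all predictions to $[-1,1]$, which does not increase regret (the comparator's relevant predictions $F(x_t)$ are likewise effectively bounded) and forces every per-round loss into $[0,1]$.

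First I would cover the comparator structure. Every $F\in\F$ has an SVD $F=\sum_{k\le r}\sigma_k u_k v_k^{\trn}$ with $u_k,v_k$ orthonormal and $\sum_k\sigma_k=\nrm{F}_{\Sigma}\le\tau$; fix an $\eps$-net $\mathcal{N}$ of the pair of Stiefel manifolds recording the left/right singular subspaces, at scale $\eps=1/(\tau n)$ in operator norm, so that $N:=\abs{\mathcal{N}}\le(C\tau n)^{O(rd)}$, i.e.\ $\log N=O(rd\log(\tau n))$. For each $(U,V)\in\mathcal{N}$ I would form the reduced instances $\hat{x}_t\in\R^{r}$ (depending on $(U,V)$) with $k$-th coordinate $u^{(k)\trn}X_t v^{(k)}=U_{i_tk}V_{j_tk}$, where $u^{(k)},v^{(k)}$ are the $k$-th columns of $U,V$ and $X_t=e_{i_t}\tens e_{j_t}$; note $\nrm{\hat{x}_t}_{\infty}\le 1$. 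The sub-algorithm for $(U,V)$ runs \zigzag{} over the scaled $\ls_1$-ball $\crl*{\sigma\in\R^r:\nrm{\sigma}_1\le\tau}$ on the stream $(\hat{x}_t)$ with predictions $\tri*{\sigma,\hat{x}_t}$, the shared learning rate $\eta$, and an efficient Burkholder function for the dual $\ls_\infty$ geometry (from \pref{ex:burkholder_pp} at $p=1+1/\log r$, or the $\ls_1$ construction of \pref{ex:zeta_l1}), whose $\UMD$ constant is $O(\log r)=O(\log d)$ by \pref{thm:umd_constants}. By \pref{thm:alg_supervised_tight}, together with the doubling of \pref{lem:algorithm_supervised_doubling} to pass from a power of the norm back to the norm, this sub-algorithm guarantees against every $\sigma$ in the ball a regret of $O\prn*{\tfrac\eta2\tau^2\,\En_{\eps}\nrm*{\sum_t\eps_t\ls'_t\hat{x}_t}_\infty^2+\tfrac{\eta^{-1}}2}$, up to factors polylogarithmic in $d$ and $n$; the decoupling of $\ls'_t$ and the instances from the sign sequence is carried out exactly as in the proof of \pref{lem:algorithm_supervised_doubling} (martingale decoupling via \pref{thm:umd_equiv}, then contraction since $\abs{\ls'_t}\le 1$).

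The crucial observation is that for any signs $\eps$ and any $(U,V)$,
\[
\nrm*{\sum_t\eps_t\hat{x}_t}_\infty=\max_{k\le r}\abs*{u^{(k)\trn}\prn*{\sum_t\eps_tX_t}v^{(k)}}\le\nrm*{\sum_t\eps_tX_t}_\sigma ,
\]
since $u^{(k)},v^{(k)}$ are unit vectors; hence each sub-algorithm's Rademacher term is at most $\En_{\eps}\nrm*{\sum_t\eps_tX_t}_\sigma^2$. Aggregating the $N$ sub-algorithms with multiplicative weights costs only $O(\sqrt{n\log N})=O(\sqrt{nrd\log(\tau n)})$, since per-round losses lie in $[0,1]$. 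Finally, taking $F^\star$ attaining $\inf_{F\in\F}\sum_t\ls(F(x_t),y_t)$, with singular subspaces $(U^\star,V^\star)$ and singular-value vector $\sigma^\star$ (so $\nrm{\sigma^\star}_1\le\tau$), and choosing $(U,V)\in\mathcal{N}$ within $\eps$ of $(U^\star,V^\star)$ in operator norm, one gets $\abs{\tri*{\sigma^\star,\hat{x}_t}-F^\star(x_t)}=\abs*{\big(U\,\mathrm{diag}(\sigma^\star)V^{\trn}-F^\star\big)_{i_tj_t}}\le\nrm{U\,\mathrm{diag}(\sigma^\star)V^{\trn}-F^\star}_\sigma\le 2\eps\tau$, so by $1$-Lipschitzness the $(U,V)$ sub-algorithm competes with $\inf_{F\in\F}\sum_t\ls(F(x_t),y_t)$ up to an additive $2\eps\tau n=O(1)$. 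Chaining the Hedge regret, this sub-algorithm's regret against $\sigma^\star$, the norm bound above, and the $O(1)$ discretization error yields the claimed estimate.

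I expect the main obstacle to be pinning down the sharp coefficient $\tau^2$ (rather than $\tau^2 r$) in the data-dependent term: a weighted-Euclidean reduction --- e.g.\ to a Frobenius ball on $\R^{r\times r}$, or a row-wise Euclidean ball --- loses a factor $r$ through $\nrm{\cdot}_F\le\sqrt r\,\nrm{\cdot}_\sigma$, and it is precisely the choice of the $\ls_1$-ball, whose dual Rademacher functional is an entrywise maximum of quadratic forms $u^{(k)\trn}Mv^{(k)}$ each bounded by $\nrm{M}_\sigma$ with no dimensional loss, combined with the fact that the \zigzag{}/$\burk$ machinery pays this functional rather than its naive $\ls_2$-type surrogate $\sqrt{\sum_t\nrm{\hat{x}_t}_\infty^2}$, that removes it. A secondary but routine difficulty is threading the coupling between the Rademacher randomness internal to each sub-algorithm and that appearing in the regret functional through the aggregation layer --- handled verbatim as in the proof of \pref{lem:algorithm_supervised_doubling} --- and absorbing the $O(\log d)$ $\UMD_2$ constant of the $\ls_1$ reduction (and the maximal-inequality $\log n$ factors) into the $O(\cdot)$, which is tracked as $\widetilde{O}(\cdot)$ in \pref{prop:spectral_final}.
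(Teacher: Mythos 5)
Your proposal takes a genuinely different route from the paper's, and it does not recover the theorem as stated. The paper factorizes $F=UV^{\trn}$ with $\nrm{U}_F,\nrm{V}_F\le\sqrt\tau$, builds an $\alpha$-net $\mc{V}$ only over the single factor $V$ (w.r.t.\ Frobenius norm), and for each $V\in\mc{V}$ runs a ZigZag sub-algorithm over the \emph{Euclidean} parameter $U$ using the Hilbert-space Burkholder function $\burk_2^{\ls_2}$, whose constant is exactly $1$. The Rademacher functional that appears is $\nrm*{\sum_t\eps_t X_t V}_F^2$, and the paper uses the operator-norm bound $\nrm{MV}_F\le\nrm{M}_\sigma\nrm{V}_F=\sqrt\tau\,\nrm{M}_\sigma$ — this incurs no dimensional factor whatsoever. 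Your worry that ``a weighted-Euclidean reduction loses a factor $r$ through $\nrm{\cdot}_F\le\sqrt r\,\nrm{\cdot}_\sigma$'' is precisely what the paper's construction sidesteps: the Frobenius norm is never applied to the full sign sum $M$, only to $MV$, and the factor $V$ carries the Frobenius bound.

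Concretely, the gap in your argument is the extra $\UMD$ constant your $\ls_1$-ball reduction pays. The dual geometry you use is $\ls_\infty^r$, whose $\UMD_2$ constant is $\Theta(\log r)$ (Theorem \ref{thm:umd_constants}); by Theorem \ref{thm:alg_supervised_tight} the sub-algorithm's data-dependent term is $\frac{\eta}{2}\beta^2\nrm{\cdot}_\infty^2$ with $\beta=O(\log r)$, so you get $O\prn*{\eta\tau^2\log^2 r\cdot\En_\eps\nrm*{\sum_t\eps_tX_t}_\sigma^2+\eta^{-1}+\sqrt{nrd\log(\tau n)}}$. The paper's proof achieves the stated bound with only absolute constants inside the $O(\cdot)$; your version matches the \emph{theorem} only after being absorbed into the $\widetilde O(\cdot)$ of Proposition \ref{prop:spectral_final}, i.e.\ you prove a strictly weaker statement than \pref{thm:spectral_regret}. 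A secondary issue is that you analyze a different algorithm than Algorithm \ref{alg:zigzag_spectral} as written — the paper's algorithm nets over Frobenius-ball factors $V$ and runs weighted-Euclidean sub-algorithms, not Stiefel pairs with $\ls_1$-ball sub-algorithms — so even if the rates matched, the proof would not be one of the claimed theorem about the claimed procedure. That said, your overall two-level architecture (net, Burkholder-driven sub-algorithms, Hedge aggregation, clipping via well-behavedness, $\eps\tau n=O(1)$ discretization error) is the same as the paper's, and your observation that the right way to avoid dimension loss is to produce a Rademacher functional that is entrywise dominated by $\nrm{M}_\sigma$ is the right intuition; the paper's particular realization of it is simply sharper.
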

\begin{proposition}
\label{prop:spectral_doubling}
Using the doubling trick as in \pref{lem:algorithm_lp1_doubling}, the regret of \pref{alg:zigzag_spectral} is bounded as 
\begin{equation}
\label{eq:spectral_doubling}
\widetilde{O}\prn*{\tau\sqrt{\En_{\eps}\nrm*{\sum_{t=1}^{n}\eps_{t}X_t}_{\sigma}^{2}}  + \sqrt{nrd\log\prn{\tau{}n}}}.
\end{equation}
\end{proposition}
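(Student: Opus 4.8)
The plan is to apply the doubling meta-strategy of \pref{lem:algorithm_lp1_doubling} with $p=2$ (hence $p'=2$) to \pref{alg:zigzag_spectral}, using its anytime guarantee from \pref{thm:spectral_regret} as the base regret bound. Reading the bound of \pref{thm:spectral_regret} as
\[
\frac{1}{2}\prn*{\eta\,K^{2}\En_{\eps}\nrm*{\sum_{t=1}^{n}\eps_{t}X_t}_{\sigma}^{2}+\eta^{-1}}+O\prn*{\sqrt{nrd\log(\tau n)}}
\]
with $K$ of order $\tau$, this is precisely the hypothesis of \pref{lem:algorithm_lp1_doubling} for the norm $\nrm{\cdot}_{\sigma}$, \emph{except} for the extra additive term $O(\sqrt{nrd\log(\tau n)})$, which does not depend on $\eta$. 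So essentially all that is needed is to re-run the short proof of \pref{lem:algorithm_lp1_doubling} while carrying this extra term along.

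First I would split regret over the $N$ doubling phases and apply \pref{thm:spectral_regret} with learning rate $\eta_i$ on phase $i$ (legitimate since \pref{alg:zigzag_spectral} is anytime). The $\eta$-dependent parts are handled verbatim as in \pref{lem:algorithm_lp1_doubling}: the doubling invariant $\eta_i\Phi(X_{s_i:s_{i+1}-1})\le\eta_i^{-1}$ together with $\eta_i^{-1}=2\eta_{i-1}^{-1}$ collapses $\sum_i\eta_i^{-1}$ into a geometric sum dominated by $\Phi(X_{1:n})^{1/2}\asymp\tau\,(\En_{\eps}\sup_{1\le a\le b\le n}\nrm*{\sum_{t=a}^{b}\eps_tX_t}_{\sigma}^{2})^{1/2}$, plus the first-phase cost $\eta_0^{-1}=O(1)$ for a fixed $\eta_0<1$. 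Two applications of Doob's maximal inequality, exactly as at the end of the proof of \pref{lem:algorithm_lp1_doubling} (with $p'=2$ the cost is an absolute constant), then replace the inner supremum by $\En_{\eps}\nrm*{\sum_{t=1}^{n}\eps_tX_t}_{\sigma}$, producing the first term of \pref{eq:spectral_doubling}.

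Next I would aggregate the leftover additive terms $\sum_{i=1}^{N}O(\sqrt{n_i\,rd\log(\tau n)})$, where $n_i$ is the length of phase $i$. Cauchy--Schwarz gives $O(\sqrt{N\cdot rd\log(\tau n)\sum_i n_i})=O(\sqrt{N\,nrd\log(\tau n)})$, and since $\nrm{X_t}_{\sigma}\le1$ forces $\Phi(X_{1:n})^{1/2}\le O(\tau n)$, while $\eta_{N-1}^{-1}=2^{N-1}\eta_0^{-1}\le\Phi(X_{1:n})^{1/2}$, we get $N=O(\log(\tau n))$. Hence this contribution is $\widetilde{O}(\sqrt{nrd\log(\tau n)})$ and is absorbed into the $\widetilde{O}(\cdot)$ of the claim. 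Combining the two contributions yields \pref{eq:spectral_doubling}.

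The hard part is not the $\eta$-tuning, which is a near black-box use of \pref{lem:algorithm_lp1_doubling}, but the two bookkeeping points the lemma as stated does not cover: (i) controlling how the non-$\eta$ additive term $\sqrt{nrd\log(\tau n)}$ degrades under restarts --- this needs the phase count $N=O(\log(\tau n))$, which in turn needs the crude a priori bound $\Phi(X_{1:n})^{1/2}\le O(\tau n)$; and (ii) checking that the phase-transition test $s_{i+1}=\inf\crl*{\tau:\eta_i\Phi(X_{s_i:\tau})>\eta_i^{-1}}$ is computable online, which holds because $\Phi$ is a deterministic function of the observed incidence matrices $X_1,\dots,X_\tau$ (an expectation over the auxiliary signs, not over the algorithm's own randomness). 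Both are routine once written out.
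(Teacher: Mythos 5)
Your proof is correct and takes exactly the route the paper points at: the paper gives no explicit argument for Proposition~\ref{prop:spectral_doubling}, leaving it at ``using the doubling trick as in \pref{lem:algorithm_lp1_doubling},'' and your write-up is a faithful fleshing-out of that --- apply \pref{lem:algorithm_lp1_doubling} with $p=p'=2$ to the $\eta$-dependent part of \pref{thm:spectral_regret} (with $K\asymp\tau$ and the spectral norm), and carry the $\eta$-independent $\sqrt{nrd\log(\tau n)}$ term through the phases via Cauchy--Schwarz and the bound $N=O(\log(\tau n))$ from $\Phi(X_{1:n})^{1/2}=O(\tau n)$. The only comment worth adding is that one can avoid the Cauchy--Schwarz aggregation entirely by keeping the $\eps$-net and the multiplicative-weights aggregator alive across phases and restarting only the $\eta$-dependent sub-algorithms, in which case the $\sqrt{nrd\log(\tau n)}$ term is paid once; this is a cosmetic improvement and your version reaches the same $\widetilde{O}$ bound.
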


\begin{proof}[Proof of \pref{prop:spectral_final}]
We begin with the bound from \pref{prop:spectral_doubling} and bound $\En_{\eps}\nrm*{\sum_{t=1}^{n}\eps_{t}X_t}_{\sigma}^{2}$ to get the result.

The first step is to apply concentration to remove the expectation over $\eps$. Observe that the spectral norm of each $X_t$ is bounded by $1$ (since each $X_t$ is an the indicator matrix).  Hence, by Theorem 6.1 of \cite{tropp2012user} we have that the probability of spectral norm $\nrm*{\sum_{t=1}^n  \epsilon_t X_t}_{\sigma}$ is larger than $t$ --- for any $t > \nrm*{\sum_t X_t X_t^\trn}$ --- has a sub-exponential tail. In particular, letting $\sigma^{2}=\max\left\{\nrm*{\sum_t X_t X_t^\trn}_\sigma, \nrm*{\sum_t X_t^\trn X_t}_\sigma\right\}$, we have that with probability at least $1-\delta$ over the draw of $\eps$,
$$
\nrm*{\sum_{t=1}^n  \epsilon_t X_t}_{\sigma}^2 \le O\prn*{\sigma^{2}\log^{2}(d/\delta)}.
$$ 
Since each $X_t$ is bounded this implies that 
\[\En_{\eps}\nrm*{\sum_{t=1}^n  \epsilon_t X_t}_{\sigma}^2 \le O\prn*{\sigma^{2}\log^{2}(nd)}.
\]
Returning to \pref{eq:spectral_doubling} and recalling the value of $\sigma^{2}$, this implies a regret bound
\begin{align*}
 \sum_{t=1}^n \ell_{\mathrm{hinge}}(\hat{y}_t , y_t) -& \inf_{\substack{F : \|F\|_{\mathrm{trace}}  \le \tau,\\ \mathrm{rank}(F) \le r}} \sum_{t=1}^{n}\ell_{\mathrm{hinge}}(\left<F,X_t\right>,y_t)\\
  & \le \widetilde{O}\prn*{\tau \sqrt{\max\left\{\nrm*{\sum_t X_t X_t^\trn}_\sigma, \nrm*{\sum_t X_t^\trn X_t}_\sigma\right\}}  + \sqrt{r d n}}.\\
  \intertext{Using that $X_{t}$ are incidence matrices and so $\sum_{t}X_{t}X_{t}^{\trn}$ and $\sum_{t}X_{t}^{\trn}X_{t}$ are diagonal, a straightforward calculation reveals:}
& \le \widetilde{O}\prn*{\tau \sqrt{\max\crl*{\max_{i}\abs*{\crl*{t\mid{}i_t=i}}, \max_{j}\abs*{\crl*{t\mid{}j_t=j}}}} + \sqrt{r d n }}\\
& = \widetilde{O}\prn*{\tau \sqrt{\max\crl*{N_{\mathrm{row}}, N_{\mathrm{col}}}} + \sqrt{r d n }}.
\intertext{Now, using that $\tau=\sqrt{r}d$ and that $N_{\mathrm{row}},N_{\mathrm{col}}\geq{}n/d$ by the pigeonhole principle,}
& \le \widetilde{O}\prn*{\sqrt{r}\cdot{}d\cdot{}\sqrt{\max\crl*{N_{\mathrm{row}}, N_{\mathrm{col}}}}}.
\end{align*}

\end{proof}

\begin{algorithm}
\caption{\textsc{SpectralZigZag}}\label{alg:zigzag_spectral}
\begin{algorithmic}[1]
\Procedure{SpectralZigZag}{$\eta$, rank $r$, trace norm bound $\tau{}$}\\
Let $\mc{V}$ be an $\alpha$-net for $\crl*{V\in\R^{d\times{}r}\mid{}\nrm{V}_{F}=\sqrt{\tau}}$ with respect to $\ls_2$, with $\alpha=1/(T\cdot\tau)$.\\
Let $\gamma=\sqrt{\log\abs{\V}/T}$.\\
Let $q_{1}=\mathrm{Uniform}(\V)$.
\For{each time $t$:}
\For{each $v\in\V$:}
\State Let $G_{t}^{v}(p) = \En_{\sigma_{t}}\frac{\eta{}\tau^{2}}{2}(1-\alpha)^{-1}\burk_{\ls_{2},2}\prn*{(\sum_{s=1}^{t-1}\lp^{v}_sX_s + pX_t)\cdot{}V_{v}, (\sum_{t=1}^{t-1}\eps_s\lp^v_sX_s + \sigma_tpX_t)\cdot{}V_{v}}$.
\State $f_{t}^{v}=-(G_{t}^{v})'(0)$.
\EndFor
\State Sample $v\sim{}q_{t}$ and play $\yh_t=\textrm{Clip}_{[-1,+1]}(f_t^{v})$.
\State Let $\lsb_t=(\ls(f_{t}^{v}, y_t))_{v\in\V}$.
\State Let $q_{t+1}[v] = \exp\prn*{-\gamma\sum_{s=1}^{t}\lsb_s[v]}/\mathcal{Z}$ for each $v\in\V$.\Comment{$\mc{Z}$ is the normalizing constant.}
\State Draw $\eps_{t}\in\pmo$.
\EndFor
\EndProcedure
\end{algorithmic}
\end{algorithm}

\begin{proof}[Proof of \pref{thm:spectral_regret}]

\begin{align*}
&\En\brk*{\sum_{t=1}^{n}\ls(\yh_{t}, y_t) - \inf_{f\in\F}\sum_{t=1}^{n}\ls\prn*{f(x_t), y_t}}\\
&=
\En\brk*{\sum_{t=1}^{n}\ls(\yh_{t}, y_t) - \inf_{U,V\in\R^{d\times{}r}:\nrm{U}_{F},\nrm{V}_{F}\leq{}\sqrt{\tau}}\sum_{t=1}^{n}\ls\prn*{\tri*{X_t,UV^{\trn}}, y_t}}\\
&=
\sum_{t=1}^{n}\En_{v\sim{}q_{t}}\ls(\mathbf{Clip}_{[1,+1]}(f_t^v), y_t) - \inf_{U,V\in\R^{d\times{}r}:\nrm{U}_{F},\nrm{V}_{F}\leq{}\sqrt{\tau}}\sum_{t=1}^{n}\ls\prn*{\tri*{X_t,UV^{\trn}}, y_t}
\intertext{Since $\ls$ is well-behaved, playing the clipping $f_{t}^{v}$ only reduces the learner's loss.}
&\leq
\sum_{t=1}^{n}\En_{v\sim{}q_{t}}\ls(f_t^v, y_t) - \inf_{U,V\in\R^{d\times{}r}:\nrm{U}_{F},\nrm{V}_{F}\leq{}\sqrt{\tau}}\sum_{t=1}^{n}\ls\prn*{\tri*{X_t,UV^{\trn}}, y_t}
\intertext{Let $\Reg$ denote the meta-algorithm $q_t$'s regret bound.}
&\leq
\min_{v\in\V}\sum_{t=1}^{n}\ls(f_t^v, y_t) - \inf_{U,V\in\R^{d\times{}r}:\nrm{U}_{F},\nrm{V}_{F}\leq{}\sqrt{\tau}}\sum_{t=1}^{n}\ls\prn*{\tri*{X_t,UV^{\trn}}, y_t} + \Reg\\
&\leq
\min_{v\in\V}\brk*{\sum_{t=1}^{n}\ls'(f_t^v, y_t)f_t^v - \inf_{U,V\in\R^{d\times{}r}:\nrm{U}_{F},\nrm{V}_{F}\leq{}\sqrt{\tau}}\sum_{t=1}^{n}\ls'(f_t^v, y_t)\tri*{X_t,UV^{\trn}}
} + \Reg\\
\intertext{Using the $\alpha$-net property of $\mc{V}$ and that the loss is $1$-Lipschitz:}
&\leq
\min_{v\in\V}\brk*{\sum_{t=1}^{n}\ls'(f_t^v, y_t)f_t^v - \inf_{v\in\V}\inf_{U\in\R^{d\times{}r}:\nrm{U}_{F}\leq{}\sqrt{\tau}}\sum_{t=1}^{n}\ls'(f_t^v, y_t)\tri*{X_t,UV_{v}^{\trn}}
} + \Reg + \tau{}T\alpha{}
\intertext{Since $\alpha=O(1/\tau{}T)$:}
&\leq
\min_{v\in\V}\brk*{\sum_{t=1}^{n}\ls'(f_t^v, y_t)f_t^v - \inf_{v\in\V}\inf_{U\in\R^{d\times{}r}:\nrm{U}_{F}\leq{}\sqrt{\tau}}\sum_{t=1}^{n}\ls'(f_t^v, y_t)\tri*{X_t,UV_{v}^{\trn}}
} + \Reg + 1\\
&\leq
\sup_{v\in\V}\brk*{\sum_{t=1}^{n}\ls'(f_t^v, y_t)f_t^v - \inf_{U\in\R^{d\times{}r}:\nrm{U}_{F}\leq{}\sqrt{\tau}}\sum_{t=1}^{n}\ls'(f_t^v, y_t)\tri*{X_t,UV^{\trn}_{v}}
} + \Reg + 1
\intertext{Using the sub-algorithm's regret-type bound (\pref{lem:spectral_subalg}):}
&\leq
\sup_{v\in\V}\brk*{\frac{\eta}{2}(1-\alpha)^{-1}\tau\En_{\eps}\nrm*{\sum_{t=1}^{n}\eps_{t}\ls'(f_t^v, y_t)X_tV}_{F}^{2} - \frac{\eta^{-1}}{2}
} + \Reg + 1
\intertext{By contraction:}
&\leq
\sup_{v\in\V}\brk*{\frac{\eta}{2}(1-\alpha)^{-1}\tau\En_{\eps}\nrm*{\sum_{t=1}^{n}\eps_{t}X_tV}_{F}^{2} - \frac{\eta^{-1}}{2}
} + \Reg + 1\\
\intertext{Using the definition of $\V$:}
&\leq
\frac{\eta}{2}(1-\alpha)^{-1}\tau^2\En_{\eps}\nrm*{\sum_{t=1}^{n}\eps_{t}X_t}_{\sigma}^{2} - \frac{\eta^{-1}}{2} + \Reg + 1.\\
&\leq
\frac{\eta}{2}\tau^2\En_{\eps}\nrm*{\sum_{t=1}^{n}\eps_{t}X_t}_{\sigma}^{2} - \frac{\eta^{-1}}{2} + \Reg + 1.
\end{align*}
Finally, observe that $q_{t}$ is generated with the standard multiplicative weights update strategy (e.g. \cite{hazan2016introduction}. Since each $f_t^{v}$ is clipped, the range of the losses seen by the algorithm are bounded by $1$. This implies 
\[
\Reg \leq{} O(\sqrt{n\log\abs{\V}}).
\]
We can find an $\alpha$-net for $\crl*{V\in\R^{d\times{}r}\mid{}\nrm{V}_{F}=\sqrt{\tau}}$ of size $O(\prn*{C\tau/\alpha}^{rd}) = O(\prn*{\tau^{2}n}^{rd})$, so we have
\[
\Reg \leq{} O(\sqrt{nrd\log\prn{\tau{}n}}).
\]

\end{proof}

\begin{lemma}
\label{lem:spectral_subalg}
Let $f_t^{v}$ be defined as in \pref{alg:zigzag_spectral} for some $v\in\V$. Then $f_t^v$ enjoys the regret-like bound
\begin{equation}
\sum_{t=1}^{n}\ls'(f_t^v, y_t)f_t^v + (1-\alpha)^{-1}\sqrt{\tau}\nrm*{\sum_{t=1}^{n}\ls'(f_t^v, y_t)X_tV}_{F}
\leq{} \frac{\eta}{2}(1-\alpha)^{-1}\tau\En_{\eps}\nrm*{\sum_{t=1}^{n}\eps_{t}\ls'(f_t^v, y_t)X_tV}_{F}^{2} + \frac{\eta^{-1}}{2}.
\end{equation}

\end{lemma}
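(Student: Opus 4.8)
The plan is to observe that, for each fixed $v\in\V$, the prediction rule $f_t^v$ of \pref{alg:zigzag_spectral} is nothing but \textsc{ZigZag} (\pref{alg:zigzag_supervised}) run in the Hilbert space $(\R^{d\times r},\nrm*{\cdot}_F)$ against the data stream $Z_t:=X_tV_v$, with exponent $p=2$, Burkholder function $\burk_{\ls_{2},2}$, and comparator the Frobenius ball of radius $\sqrt{\tau}$ (inflated by $(1-\alpha)^{-1}$ to absorb the discretization of the net $\V$ of the sphere $\nrm*{V}_F=\sqrt{\tau}$). The one structural simplification over the general analysis is that $\burk_{\ls_{2},2}(a,b)=\nrm*{a}_F^2-\nrm*{b}_F^2$ is not merely concave but \emph{affine} along every zig-zag direction $z\mapsto(a+z,b\pm z)$; hence the $\En_{\sigma_t}$-average in $G_t^v$ annihilates the cross term carrying $\sigma_t$, and the supergradient is deterministic: $\prn*{G_t^v}'(0)=\lambda\tri*{\sum_{s<t}\ls'(f_s^v,y_s)Z_s,\,Z_t}$ with $\lambda$ the effective step read off from $G_t^v$. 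So $f_t^v$ is ordinary online gradient descent with potential $\tfrac1{2\lambda}\nrm*{\cdot}_F^2$, and the drawn $\eps_t$ never affect the predictions --- in particular all the $\ls'(f_t^v,y_t)$ are deterministic, so the $\En_\eps$ in the statement will just be a quadratic-variation term.

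With that in hand I would prove the bound directly (it is also the degenerate case of \pref{thm:alg_supervised_tight}). Write $A_t:=\sum_{s\le t}\ls'(f_s^v,y_s)Z_s$, so $A_t-A_{t-1}=\ls'(f_t^v,y_t)Z_t$ and $f_t^v=-\lambda\tri*{A_{t-1},Z_t}$. Then $\sum_t\ls'(f_t^v,y_t)f_t^v=-\lambda\sum_t\tri*{A_{t-1},A_t-A_{t-1}}$, and the parallelogram identity $2\tri*{A_{t-1},A_t-A_{t-1}}=\nrm*{A_t}_F^2-\nrm*{A_{t-1}}_F^2-\nrm*{A_t-A_{t-1}}_F^2$ telescopes to
\[\sum_{t=1}^n\ls'(f_t^v,y_t)f_t^v=-\frac\lambda2\nrm*{A_n}_F^2+\frac\lambda2\sum_{t=1}^n\ls'(f_t^v,y_t)^2\nrm*{Z_t}_F^2.\]
The second term on the left of the claimed inequality is the linearized regret against the inflated ball, $(1-\alpha)^{-1}\sqrt{\tau}\,\nrm*{A_n}_F=-\inf_{\nrm*{U}_F\le(1-\alpha)^{-1}\sqrt{\tau}}\sum_t\ls'(f_t^v,y_t)\tri*{Z_t,U}$, and I would absorb it into $\tfrac\lambda2\nrm*{A_n}_F^2$ by AM--GM, $cr\le\tfrac\lambda2 r^2+\tfrac{c^2}{2\lambda}$ with $r=\nrm*{A_n}_F$ and $c=(1-\alpha)^{-1}\sqrt{\tau}$ --- which is exactly the $p=2$ case of the variational identity \pref{eq:var}. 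The $\nrm*{A_n}_F^2$ terms cancel, and orthogonality of the Rademacher signs identifies $\sum_t\ls'(f_t^v,y_t)^2\nrm*{Z_t}_F^2=\En_\eps\nrm*{\sum_t\eps_t\ls'(f_t^v,y_t)Z_t}_F^2$; substituting the value of $\lambda$ collapses $\tfrac\lambda2$ and $\tfrac{c^2}{2\lambda}$ to the two stated constants.

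I do not expect a genuine obstacle here: the sub-algorithm is plain online gradient descent, and its correctness is the instance of \pref{prop:burkholder} in which the zig-zag-concave potential $\nrm*{\cdot}_F^2-\nrm*{\cdot}_F^2$ happens to be zig-zag-affine, so the usual supergradient-versus-concavity step is an equality. The only thing requiring care is bookkeeping: tracking how the $\tau$- and $(1-\alpha)^{-1}$-scalings propagate through $\lambda$ (and hence how $\tfrac\lambda2$ and $\tfrac{c^2}{2\lambda}$ land on the stated $\tfrac\eta2(1-\alpha)^{-1}\tau$ and $\tfrac{\eta^{-1}}2$), and keeping straight that $\V$ nets only the sphere $\nrm*{V}_F=\sqrt{\tau}$, which is why the comparator radius carries the $(1-\alpha)^{-1}$ and why this lemma states a slightly stronger inequality than the one actually invoked in the proof of \pref{thm:spectral_regret}.
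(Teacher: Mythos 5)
Your proof is correct and follows essentially the same underlying argument as the paper's: both hinge on the AM--GM step to absorb $(1-\alpha)^{-1}\sqrt{\tau}\nrm*{A_n}_F$ into the quadratic, followed by the step-by-step admissibility/telescoping cancellation of $\nrm*{A_n}_F^2$. The paper proves this last step by plugging in the Burkholder function $\burk_2^{\ls_2}(x,y)=\nrm{x}^2-\nrm{y}^2$ and invoking the generic admissibility recursion of Theorem~\ref{thm:alg_supervised_tight}, whereas you unroll that recursion and observe that for $p=2$ the Burkholder potential is zig-zag \emph{affine} (not merely concave), so the supergradient step is exact, the $\sigma_t$-average annihilates the coupling term, and the sub-algorithm reduces to plain OGD; the parallelogram identity then replaces the abstract admissibility telescoping. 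This is a clean and correct specialization rather than a genuinely different route, and it makes the role of the coordinate-wise orthogonality $\sum_t (\ls'_t)^2\nrm*{Z_t}_F^2 = \En_\eps\nrm*{\sum_t \eps_t\ls'_t Z_t}_F^2$ explicit where the paper leaves it implicit inside the Burkholder inequality.

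One caveat worth making concrete rather than waving at: if you actually plug in $\lambda=\eta\tau^2(1-\alpha)^{-1}$ as read off from \pref{alg:zigzag_spectral}, the variance coefficient you get is $\tfrac{\eta}{2}(1-\alpha)^{-1}\tau^2$, not $\tfrac{\eta}{2}(1-\alpha)^{-1}\tau$ as stated, and $\tfrac{c^2}{2\lambda}=\tfrac{(1-\alpha)^{-1}}{2\eta\tau}$ rather than $\tfrac{\eta^{-1}}{2}$. This is a discrepancy in the paper itself (the algorithm's prefactor $\eta\tau^2/2$ is inconsistent with the constants in the lemma and in the paper's own proof, which tacitly uses $\eta\tau/2$; even then the low-order term picks up an extra $(1-\alpha)^{-1}$), so it is not a defect of your argument, but your phrasing \enquote{substituting the value of $\lambda$ collapses\ldots to the two stated constants} would not survive a literal check against the algorithm box as printed, and should be flagged as a typo correction rather than asserted.
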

\begin{proof}[Proof of \pref{lem:spectral_subalg}]
\begin{align*}
&\sum_{t=1}^{n}\ls'(f_t^v, y_t)f_t^v + (1-\alpha)^{-1}\sqrt{\tau}\nrm*{\sum_{t=1}^{n}\ls'(f_t^v, y_t)X_tV}_{F}
- \frac{\eta}{2}(1-\alpha)^{-1}\tau\En_{\eps}\nrm*{\sum_{t=1}^{n}\eps_{t}\ls'(f_t^v, y_t)X_tV}_{F}^{2} - \frac{\eta^{-1}}{2}
\intertext{Using the AM-GM inequality:}
&\leq{}
\sum_{t=1}^{n}\ls'(f_t^v, y_t)f_t^v + \frac{\eta}{2}(1-\alpha)^{-1}\tau\nrm*{\sum_{t=1}^{n}\ls'(f_t^v, y_t)X_tV}_{F}^{2}
- \frac{\eta}{2}(1-\alpha)^{-1}\tau\En_{\eps}\nrm*{\sum_{t=1}^{n}\eps_{t}\ls'(f_t^v, y_t)X_tV}_{F}^{2}
\intertext{Using that $\burk_{2}^{\ls_2}$ is Burkholder:}
&\leq{}
\sum_{t=1}^{n}\ls'(f_t^v, y_t)f_t^v + \frac{\eta}{2}(1-\alpha)^{-1}\tau\En_{\eps}\burk_{2}^{\ls_{2}}\prn*{\sum_{t=1}^{n}\ls'(f_t^v, y_t)X_tV, \sum_{t=1}^{n}\eps_{t}\ls'(f_t^v, y_t)X_tV}
\intertext{Repeating the same step-by-step admissibility proof as in \pref{alg:zigzag_supervised}:}
&\leq{} 0.
\end{align*}

\end{proof}

\subsection{Proofs from \pref{sec:martingales2}}

\begin{proof}[Proof of \pref{thm:NScond}]
We shall first show that $\ref{thm:NScond:2}$ implies $\ref{thm:NScond:1}$, specifically for constant $B = 2C$. We can write down the minimax value for the proposed regret bound and check if it indeed is achievable. To this end, note that 
\begin{align*}
\V&=\dtri*{\sup_{x_t}\inf_{\yh_t}\sup_{y_t\in\brk{-1,+1}}}_{t=1}^{n}\brk*{\sum_{t=1}^{n}\ls(\yh_t, y_t) - \inf_{f\in\F}\sum_{t=1}^{n}\ls(f(x_t), y_t) - 2 C\En_{\eps} \sup_{f \in \F}\sum_{t=1}^{n}\eps_t f(x_t)}\\
&=\dtri*{\sup_{x_t}\sup_{p_t\in\Delta\brk{-1,+1}} \inf_{\yh_t} \En_{y_t \sim p_t}}_{t=1}^{n}\sup_{f \in\F}\brk*{ \sum_{t=1}^{n}(\ls(\yh_t, y_t) - \ls(f(x_t), y_t)) - 2C\En_{\eps} \sup_{f \in \F}\sum_{t=1}^{n}\eps_t f(x_t)}\\
&\le \dtri*{\sup_{x_t} \sup_{p_t\in\Delta\brk{-1,+1}} \inf_{\yh_t}\En_{y_t \sim p_t}}_{t=1}^{n}\sup_{f \in\F}\brk*{ \sum_{t=1}^{n}\ls'(\yh_t, y_t)(\yh_t - f(x_t)) - 2C\En_{\eps} \sup_{f \in \F}\sum_{t=1}^{n}\eps_t f(x_t)}\\
\intertext{setting $\yh^*_t$ to be minimizer of $\En \ell(\hat{y}_t,y_t)$, we have}
&\le \dtri*{\sup_{x_t} \sup_{p_t\in\Delta\brk{-1,+1}} \inf_{\yh_t}\En_{y_t \sim p_t}}_{t=1}^{n}\sup_{f \in\F}\brk*{ \sum_{t=1}^{n}\ls'(\yh^*_t, y_t)(\yh^*_t - f(x_t)) - 2C\En_{\eps} \sup_{f \in \F}\sum_{t=1}^{n}\eps_t f(x_t)}\\
&= \dtri*{\sup_{x_t} \sup_{p_t\in\Delta\brk{-1,+1}} \inf_{\yh_t}\En_{y_t \sim p_t}}_{t=1}^{n}\sup_{f \in\F}\brk*{ \sum_{t=1}^{n}- \ls'(\yh^*_t, y_t) f(x_t) - 2C\En_{\eps} \sup_{f \in \F}\sum_{t=1}^{n}\eps_t f(x_t)}\\
&= \dtri*{\sup_{x_t} \sup_{p_t\in\Delta\brk{-1,+1}} \En_{y_t \sim p_t}}_{t=1}^{n}\sup_{f \in\F}\brk*{ \sum_{t=1}^{n} (\En_{y'_t \sim p_t}\ls'(\yh^*_t, y'_t) - \ls'(\yh^*_t, y_t)) f(x_t) - 2C\En_{\eps} \sup_{f \in \F}\sum_{t=1}^{n}\eps_t f(x_t)}\\
&\le \dtri*{\sup_{x_t} \sup_{p_t\in\Delta\brk{-1,+1}} \En_{y_t, y'_t \sim p_t}}_{t=1}^{n}\sup_{f \in\F}\brk*{ \sum_{t=1}^{n} (\ls'(\yh^*_t, y'_t) - \ls'(\yh^*_t, y_t)) f(x_t) - 2C\En_{\eps} \sup_{f \in \F}\sum_{t=1}^{n}\eps_t f(x_t)}\\
&= \dtri*{\sup_{x_t} \sup_{p_t\in\Delta\brk{-1,+1}} \En_{y_t, y'_t \sim p_t} \En_{\epsilon'_t}}_{t=1}^{n}\sup_{f \in\F}\brk*{ \sum_{t=1}^{n} \epsilon'_t (\ls'(\yh^*_t, y'_t) - \ls'(\yh^*_t, y_t)) f(x_t) - 2C\En_{\eps} \sup_{f \in \F}\sum_{t=1}^{n}\eps_t f(x_t)}\\
&\le \dtri*{\sup_{x_t}  \En_{\epsilon'_t}}_{t=1}^{n}\sup_{f \in\F}\brk*{ \sum_{t=1}^{n} 2 \epsilon'_t f(x_t) - 2C\En_{\eps} \sup_{f \in \F}\sum_{t=1}^{n}\eps_t f(x_t)}\\
&= \sup_{\x} \En_{\epsilon'} \sup_{f \in\F}\brk*{ \sum_{t=1}^{n} 2 \epsilon'_t f(\x_t(\epsilon'_{1:t-1})) - 2 C\En_{\eps} \sup_{f \in \F}\sum_{t=1}^{n}\eps_t \x_t(\epsilon'_{1:t-1})}.\\
\end{align*}
However by $\ref{thm:NScond:2}$, we have that the above is bounded by $0$ and so we can conclude that the minimax strategy does attain the regret bound proposed in \ref{thm:NScond:1}. 

Now to prove that \ref{thm:NScond:1} implies \ref{thm:NScond:2} (with constant $B$), notice that  we have an algorithm that guarantees regret bound:
\[
\sum_{t=1}^{n}\ls(\hat{y}_t, y_t) - \inf_{f\in\F}\sum_{t=1}^{n}\ls(f(x_t), y_t) \leq{} B\En_{\eps}\sup_{f \in \F} \sum_{t=1}^{n}\eps_tf(x_t)
\]
Assume now that the adversary at time first provides input instance $\x_t(\epsilon_{1:t-1})$ where $\x$ is any arbitrary $\X$ valued binary tree. Also assume that $y_t$ is picked to be $\epsilon_t$ a draw of a coin flip. In this case, we have from the regret bound that
\[
\sum_{t=1}^{n}\ls(\hat{y}_t, \epsilon_t) - \inf_{f\in\F}\sum_{t=1}^{n}\ls(f(\x_t(\eps_{1:t-1})), \epsilon_t) \leq{} B\En_{\eps'}\sup_{f \in \F} \sum_{t=1}^{n}\eps'_tf(\x_t(\eps_{1:t-1}))
\]
Taking expectation we find that,
\[
\En_{\epsilon}\left[\sum_{t=1}^{n}\ls(\hat{y}_t, \epsilon_t) - \inf_{f\in\F}\sum_{t=1}^{n}\ls(f(\x_t(\eps_{1:t-1})), \epsilon_t)\right] \leq{} B \En_{\eps, \eps'}\sup_{f \in \F} \sum_{t=1}^{n}\eps'_tf(\x_t(\eps_{1:t-1}))
\]
Now notice that irrespective of what $\hat{y}_t$ the algorithm picks, $\En_{\epsilon_t}\ls(\hat{y}_t, \epsilon_t) = 1$. Hence,
\[
\En_{\epsilon}\left[\sup_{f\in\F}\sum_{t=1}^{n}(1 -\ls(f(\x_t(\eps_{1:t-1})), \epsilon_t))\right] \leq{} B \En_{\eps, \eps'}\sup_{f \in \F} \sum_{t=1}^{n}\eps'_tf(\x_t(\eps_{1:t-1}))
\]
However note that when $y \in \{\pm1\}$ and $a \in  [-1,1]$, we have that $\ell(a,y) = |a - y| = 1 - ay$. Hence from above we conclude that,
\[
\En_{\epsilon}\left[\sup_{f\in\F}\sum_{t=1}^{n}\epsilon_t f(\x_t(\eps_{1:t-1}))\right] \leq{} B \En_{\eps, \eps'}\sup_{f \in \F} \sum_{t=1}^{n}\eps'_tf(\x_t(\eps_{1:t-1}))
\]
Since the above is true for any choice of $\x$ by adversary, we have shown that \ref{thm:NScond:1} implies \ref{thm:NScond:2} with constant $B$.
\end{proof}

\begin{proof}[Proof of \pref{ex:umd_general_kernel}]
Let $\x$ be some $\X$-valued tree. Observe that by the reproducing property,
\[
\En_{\sigma}\sup_{f\in\F}\sum_{t=1}^{n}\sigma_{t}f(\x_t(\sigma)) = \En_{\sigma}\nrm*{\sum_{t=1}^{n}\sigma_{t}K(\cdot,\x_t(\sigma))}_{\H},
\]
and likewise $\En_{\sigma,\eps}\sup_{f\in\F}\sum_{t=1}^{n}\eps_{t}f(\x_t(\sigma)) = \En_{\sigma,\eps}\nrm*{\sum_{t=1}^{n}\eps_{t}K(\cdot,\x_t(\sigma))}_{\H}$.

Since $\H$ is a Hilbert space the deterministic UMD property for power $2$ is trivial. For any fixed sequence $\eps\in\pmo^n$,
\[
\En_{\sigma}\nrm*{\sum_{t=1}^{n}\sigma_{t}K(\cdot,\x_t(\sigma))}_{\H}^{2} = \En_{\sigma}\nrm*{\sum_{t=1}^{n}\eps_{t}\sigma_tK(\cdot,\x_t(\sigma))}_{\H}^2.
\]
By Corollary \ref{corr:umd_L1}, this implies there is some $C$ such that
\[
\En_{\sigma}\sup_{\tau\leq{}n}\nrm*{\sum_{t=1}^{\tau}\sigma_{t}K(\cdot,\x_t(\sigma))}_{\H} = C\En_{\sigma}\sup_{\tau\leq{}n}\nrm*{\sum_{t=1}^{\tau}\eps_{t}\sigma_tK(\cdot,\x_t(\sigma))}_{\H}.
\]
Now suppose $\eps$ is drawn uniformly at random. For a fixed draw of $\sigma$, \pref{corr:concentration_maximal_exp} implies that the RHS enjoys the bound
\begin{align*}
\En_{\eps}\sup_{\tau\leq{}n}\nrm*{\sum_{t=1}^{\tau}\eps_{t}K(\cdot,\x_t(\sigma))}_{\H} 
&\leq{} 2\En_{\eps}\nrm*{\sum_{t=1}^{n}\eps_tK(\cdot,\x_t(\sigma))}_{\H} + 5\max_{t\in\brk{n}}\nrm{K(\cdot,\x_t(\sigma))}_{\H}\log(n)\\
&\leq{} 2\En_{\eps}\nrm*{\sum_{t=1}^{n}\eps_tK(\cdot, \x_t(\sigma))}_{\H} + 5B\log(n).
\end{align*}
\end{proof}
\subsubsection{Polynomials}
Suppose we receive data $x_1,\ldots,x_n\in\R^d$ and want to compete with a class $\F$ of homogeneous polynomials of degree $k$. Any homogeneous degree $k$ polynomial $f$ may be represented via a coefficient tensor $M$ in $(\R^d)^{\tens{}k}$ via
\[
f(x) = \tri*{M, x^{\tens{}k}}.
\]
We may take $M$ to be symmetric, so that $M_{1,\ldots,k}=M_{\pi(1),\ldots,\pi(k)}$ for any permutation.
We may thus work with a class $\M\subseteq{}(\R^d)^{\tens{}k}$ of symmetric tensors, then take $\F=\crl*{x\mapsto\tri*{M,x^{\tens{}k}}\mid{}M\in\M}$. Our task is then to decide which norm to place on $\M$. Following, e.g., \cite{adamczak2015concentration,wang2016operator}, we define a class of general tensor norms. Let $\mc{J}=\crl*{J_1,\ldots,J_{N}}$ be a partition of $\brk{k}$. For some $\alpha\in\brk{d}^{k}$ and $J\subseteq{}\brk{k}$, let $\alpha_{J}=(\alpha_{i})_{i\in{}J}$. We then define
\begin{equation}
\nrm*{M}_{\mc{J}}=\sup\crl*{
\sum_{\alpha\in\brk{d}^{k}}M_{\alpha}\prod_{l=1}^{N}x_{\alpha_{J_{l}}}^{l}\mid{}\nrm*{x^{l}}_{2}\leq{}1\;\forall{}l\in\brk{N}
},
\end{equation}
where $x^{l}\in(\R^{d})^{\tens{}\abs{J_l}}$. Under this notation we have $\nrm{M}_{\crl{1},\crl{2}}$ as the spectral norm and $\nrm{M}_{\crl{1,2}}$ as the Frobenius norm when $k=2$ and $M$ is a matrix. In general, $\nrm{M}_{\crl{1},\crl{2},\ldots,\crl{k}}$ is called the \emph{injective tensor norm}.
\begin{example}[Homogeneous Polynomials]
\label{ex:polynomials}
Consider homogeneous polynomials of degree $2k$, and let $\M$ be the unit ball of the norm $(\nrm{\cdot}_{\crl{1,\ldots,k},\crl{k+1,\ldots,2k}})_{\star}$ in $(\R^{d})^{\tens{}2k}$. Then there exist $K_1, K_2$ such that
\[
\En_{\sigma}\sup_{f\in\F}\sum_{t=1}^{n}\sigma_{t}f(\phi_{t}(\sigma_{1:t-1}))\leq{} K_1k^{2}\log^{2}(d)\En_{\sigma,\eps}\sup_{f\in\F}\sum_{t=1}^{n}\eps_{t}f(\phi_{t}(\sigma_{1:t-1})) + K_{2}k^{2}\log^{2}(d)\log(n).
\]

\end{example}
\begin{proof}[Proof of of Example \ref{ex:polynomials}]
Fix an $\X$-valued tree $\x$. Then we have
\[
\En_{\sigma}\sup_{f\in\F}\sum_{t=1}^{n}\sigma_{t}f(\x_t(\sigma)) = \En_{\sigma}\sup_{M\in\M}\sum_{t=1}^{n}\sigma_{t}\tri*{M,\x_t(\sigma)^{\tens{}2k}} = \En_{\sigma}\nrm*{\sum_{t=1}^{n}\sigma_{t}\x_t(\sigma)^{\tens{}2k}}_{\crl{1,\ldots,k},\crl{k+1,\ldots,2k}}
\]
For some tensor $T\in(\R^{d})^{\tens{}2k}$, we can define its flattening $\overline{T}$ into a $\R^{d^k\times{}d^k}$ matrix and verify that in fact
\[
\nrm{T}_{\crl{1,\ldots,k},\crl{k+1,\ldots,2k}} = \max_{u,v\in\R^{d^k}\mid{}\nrm{u}_2,\nrm{v}_2\leq{}1}\sum_{\alpha\in\brk{d}^k,\beta\in\brk{d}^k}T_{\alpha,\beta}u_{\alpha}v_{\beta} = \tri*{u,\overline{T}v}=\nrm{\overline{T}}_{\sigma},
\]
so in fact this is the spectral norm of the flattened matrix. Let $\mathbf{X}_{t}\in\R^{d^k\times{}d^k}$ be the flattening of $(\x_t)^{\tens{}2k}$.
Then 
\[
\En_{\sigma}\sup_{f\in\F}\sum_{t=1}^{n}\sigma_{t}f(\x_t(\sigma)) = \En_{\sigma}\nrm*{\sum_{t=1}^{n}\sigma_t\mathbf{X}_{t}(\sigma)}_{\sigma},
\]
so we can prove the desired inequality by applying the UMD inequality for the spectral norm. Recall from \pref{thm:umd_constants} that the UMD inequality for the spectral norm has a constant of order $\log^{2}(\mathrm{dim})$, which for this application translates into a constant of order $O(k^{2}\log^{2}(d))$. We finally apply \pref{corr:concentration_maximal_exp} as in \pref{ex:umd_general_kernel} to get the result.
\end{proof}

\subsubsection{Low-rank experts}
In this section we prove \pref{thm:low_rank}. The proof relies on the following key lemma, which is proven using the one-sided UMD property for scalars.
\begin{lemma}
\label{lem:low_rank_regret}
There exists a strategy $(\yh_t)$ for the experts setting that guarantees
\begin{equation}
\label{eq:low_rank_rad}
\sum_{t=1}^{n}\ls(\yh_t, y_t) - \inf_{f\in\F}\sum_{t=1}^{n}\ls(f(x_t), y_t) \leq{} O\prn*{\En_{\eps}\nrm*{\sum_{t=1}^{n}\eps_{t}x_{t}}_{\infty}^{\log{}d}}^{1/\log{}d}.
\end{equation}
\end{lemma}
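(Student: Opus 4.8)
The plan is to prove this non-constructively, through the minimax/symmetrization machinery of \cite{FosRakSri15}, following the template of the proofs of \pref{thm:NScond} and \pref{thm:umd_upper}; the only new ingredient is that at the decoupling step one feeds in Hitczenko's \emph{one-sided} inequality (\pref{thm:hitczenko_scalar}) instead of a two-sided $\UMD$ inequality. Since there is no efficient $\burk$ function available for the one-sided property, a constructive route is out of reach, so the minimax route is the natural one. Concretely, one writes the minimax value
\[
\V \;=\; \dtri*{\sup_{x_t}\inf_{q_t\in\Delta([-B,B])}\sup_{y_t\in\brk{-1,+1}}\En_{\yh_t\sim q_t}}_{t=1}^{n}\brk*{\,\sum_{t=1}^{n}\ls(\yh_t,y_t)-\inf_{w\in\Delta_d}\sum_{t=1}^{n}\ls(\tri*{w,x_t},y_t)-C\prn*{\En_{\eps}\nrm*{\sum_{t=1}^{n}\eps_tx_t}_{\infty}^{\log d}}^{1/\log d}\,},
\]
and aims to show $\V\le 0$ for an absolute constant $C$; by the standard principle this certifies a randomized strategy attaining the claimed bound.

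To bound $\V$ I would proceed exactly as in the proof of \pref{thm:NScond}: linearize the loss via $\ls(\yh_t,y_t)-\ls(\tri*{w,x_t},y_t)\le \ls'_t(\yh_t-\tri*{w,x_t})$; apply the minimax theorem to swap $\inf_{\yh_t}$ with $\sup_{p_t\in\Delta(\brk{-1,+1})}$ (legitimate by well-behavedness of $\ls$, which supplies the needed compactness); take $\yh^{\star}_t$ to be the population minimizer of $\En_{y_t\sim p_t}\ls(\cdot,y_t)$, so $\En_{y_t\sim p_t}\ls'(\yh^{\star}_t,y_t)=0$ and the term $\sum_t \yh^{\star}_t\ls'(\yh^{\star}_t,y_t)$ drops out; and symmetrize each $\En_{y_t}$ with a ghost draw $y'_t$ and a Rademacher sign $\eps_t$. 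Using $\abs{\ls'(\yh^{\star}_t,y'_t)-\ls'(\yh^{\star}_t,y_t)}\le 2$, the identity $\sup_{w\in\Delta_d}\tri*{w,v}\le\nrm{v}_{\infty}$ (valid since $\Delta_d$ lies in the unit $\ell_1$ ball), and the Rademacher contraction principle to drop the remaining $\ls'$ factors, this reduces the claim to showing that every $\X$-valued tree $\x$ satisfies the one-sided $\log d$-moment $\ell_\infty$ inequality
\[
\En_{\eps}\nrm*{\sum_{t=1}^{n}\eps_t\x_t(\eps_{1:t-1})}_{\infty}\;\le\;\frac{C}{2}\prn*{\En_{\eps,\eps'}\nrm*{\sum_{t=1}^{n}\eps'_t\eps_t\x_t(\eps_{1:t-1})}_{\infty}^{\log d}}^{1/\log d}.
\]

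This inequality I would establish by lifting \pref{thm:hitczenko_scalar} coordinatewise at exponent $p=\log d$: bound $\nrm{v}_\infty\le\nrm{v}_p$ and pull the expectation inside via Jensen, apply Hitczenko's scalar bound to each real-valued tree $\eps\mapsto\x_t(\eps_{1:t-1})[i]$ (which contributes a factor $K^p$ and replaces $\eps_t$ by $\eps'_t\eps_t$, still an i.i.d.\ Rademacher sequence conditionally on $\eps$), sum over $i\in\brk{d}$, and reassemble the $\ell_\infty$ norm on the right using $\nrm{v}_p\le d^{1/p}\nrm{v}_\infty$ together with $d^{1/\log d}=e$; this yields the inequality with absolute constant $eK$. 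This is precisely the step where the one-sided property pays off: the two-sided $\UMD_{\log d}$ constant is $\Theta(\log d)$ by \cite{burkholder1984boundary}, whereas Hitczenko's $K$ is absolute, which is why the rate in \pref{thm:low_rank} loses only an additive, not a multiplicative, $\log d$ factor.

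The step I expect to be the main obstacle is the bookkeeping of where the inner Rademacher expectation and the $1/\log d$ power sit. Unlike the linear target $\RadH$ of \pref{thm:NScond} — where substituting the tree-valued realized data into the bound and averaging over the symmetrization signs produces a genuine \emph{joint} expectation against which the decoupling inequality applies verbatim — here the concavity of $x\mapsto x^{1/\log d}$ makes Jensen's inequality move in the unhelpful direction when one passes between $\En_{\eps}\brk{(\En_{\eps'}\nrm{\cdot}_\infty^{\log d})^{1/\log d}}$ and $(\En_{\eps,\eps'}\nrm{\cdot}_\infty^{\log d})^{1/\log d}$. I would resolve this by carrying the subtracted functional as the \emph{unrooted} joint moment $\En_{\eps,\eps'}\nrm{\sum_{t}\eps'_t\eps_t\x_t(\eps_{1:t-1})}_{\infty}^{\log d}$ throughout the reduction, taking the $\log d$-th root only at the very end; on a fixed data sequence — which is the regime relevant to the low-rank experts application, where the adversary's sequence may be taken oblivious — this coincides with the quantity in the lemma, so the argument closes. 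The remaining points (the contraction removing $\ls'_t$, the concentration/maximal steps if one wishes to pass back to the non-sequential quantity) are routine.
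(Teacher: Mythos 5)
You have the right skeleton — the minimax reduction following \pref{thm:NScond}, the linearization, the population-minimizer and ghost-sample symmetrization, the replacement of the simplex by the $\ell_1$ ball, and the coordinatewise lift of Hitczenko at $p=\log d$ with $d^{1/\log d}=O(1)$. You also correctly identify the exact obstruction: after symmetrization the subtracted functional sits under an outer $\En_{\eps'}$, and the concavity of $x\mapsto x^{1/\log d}$ means Jensen moves the wrong way between $\En_{\eps'}\bigl(\En_{\eps}\nrm{\cdot}_\infty^{\log d}\bigr)^{1/\log d}$ and $\bigl(\En_{\eps,\eps'}\nrm{\cdot}_\infty^{\log d}\bigr)^{1/\log d}$. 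Up to that point you are tracking the paper's argument faithfully.

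The gap is in how you propose to resolve the obstruction. ``Carrying the unrooted joint moment throughout and taking the $\log d$-th root only at the very end'' does not work: if you make the subtracted functional in $\V$ the unrooted $C\En_\eps\nrm{\sum_t\eps_t x_t}_\infty^{\log d}$, then $\V\le 0$ yields a bound of the form $\Reg\le C\En_\eps\nrm{\cdot}_\infty^{\log d}$, and one cannot legally replace the right-hand side by its $\log d$-th root after the fact (that strictly strengthens the inequality whenever the moment exceeds $1$, which is the typical regime). And the retreat to an oblivious adversary is not available either: the lemma must hold per-sequence against an adaptive adversary for \pref{thm:low_rank} to apply, so during the minimax reduction the data necessarily become a genuine $\X$-valued tree $\x$, for which the inner and outer Rademacher expectations do not decouple. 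The paper's resolution, which you are missing, is a two-step device. First, the subtracted functional is not $\bigl(C\En_\eps\nrm{\cdot}_\infty^p\bigr)^{1/p}$ but its \emph{variational majorant} $\Psi_{\eta,p}\bigl(C\En_\eps\nrm{\cdot}_\infty^p\bigr)=\frac{1}{p}\bigl(\eta\,C\En_\eps\nrm{\cdot}_\infty^p + \frac{1}{p'-1}\eta^{1-p'}\bigr)$ from \pref{eq:var}, which is \emph{linear} in $\En_\eps\nrm{\cdot}_\infty^p$; this makes the entire minimax reduction commute with the outer $\En_{\eps'}$ with no Jensen issue, and the one-sided inequality \pref{corr:lp_one_sided} then closes the argument in exactly the unrooted form $\frac{\eta}{p}\bigl[2\nrm{\cdot}_p^p - C\En_\eps\nrm{\cdot}_\infty^p\bigr]\le 0$. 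Second, having certified a regret bound of the form $\Psi_{\eta,p}\bigl(\cdot\bigr)$ for every fixed $\eta$, the doubling scheme of \pref{lem:algorithm_lp1_doubling} tunes $\eta$ online (using $p'=O(1)$ so the penalty is only a constant factor), which is what converts the linear-in-the-$p$-th-moment bound into the stated $\bigl(\En_\eps\nrm{\cdot}_\infty^{\log d}\bigr)^{1/\log d}$ rate. Without these two ingredients the proof does not close.
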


With this lemma, we need one more fact to prove \pref{thm:low_rank}, which is a corollary of John's theorem about the volume of a minimum-volume enclosing ellipsoid.
\begin{lemma}[\cite{hazan2016online}, Lemma 12]
\label{lem:john_ellipsoid}
 Let $K$ be a symmetric convex set in $\R^{d}$. There exists a positive semidefinite matrix $\Xi$ such that for all $x\in{}K$,
\begin{equation}
\tri*{x, \Xi{}x} \leq{} \sup_{f\in{}K^{\star}}\abs{\tri{f,x}}^{2}\leq{}d\cdot{}\tri*{x, \Xi{}x}.
\end{equation}
\end{lemma}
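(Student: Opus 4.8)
The plan is to read $\Xi$ off from the John ellipsoid of $K$. The starting observation is that $\sup_{f\in K^\star}|\langle f,x\rangle| = \|x\|_K$, the Minkowski gauge of $K$ (i.e.\ the norm whose unit ball is $K$), so, taking square roots and using positive homogeneity, the claimed inequality for $x\in K$ will follow from the stronger statement that there is a positive semidefinite $\Xi$ with
\[
\|x\|_\Xi \;\le\; \|x\|_K \;\le\; \sqrt{d}\,\|x\|_\Xi \qquad\text{for all }x,\quad\text{where } \|x\|_\Xi := \sqrt{\langle x,\Xi x\rangle}.
\]
Geometrically, writing $\mathcal{E}=\{x:\|x\|_\Xi\le 1\}$ for the origin-centered ellipsoid defined by $\Xi$, these two inequalities say exactly $K\subseteq\mathcal{E}$ and $\mathcal{E}\subseteq\sqrt{d}\,K$; so it is enough to produce an origin-centered ellipsoid sandwiched as $\frac{1}{\sqrt d}\mathcal{E}\subseteq K\subseteq\mathcal{E}$.

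I would first dispense with degeneracy: if $K$ fails to be full-dimensional, replace $\mathbb{R}^d$ by $V=\mathrm{span}(K)$ and prove the statement inside $V$ with $\dim V\le d$ in place of $d$ (which only strengthens the conclusion), then extend the resulting matrix by zero on $V^\perp$ to get a positive semidefinite $\Xi$ on all of $\mathbb{R}^d$. So assume $K$ is a symmetric convex body with nonempty interior. John's theorem supplies the \emph{unique} ellipsoid $\mathcal{E}$ of minimal volume containing $K$; since $K=-K$, the ellipsoid $-\mathcal{E}$ is also a minimal-volume enclosing ellipsoid, so by uniqueness $-\mathcal{E}=\mathcal{E}$, i.e.\ $\mathcal{E}$ is centered at the origin and hence of the form $\{x:\langle x,\Xi x\rangle\le 1\}$ for a symmetric positive definite $\Xi$. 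By construction $K\subseteq\mathcal{E}$, and the symmetric version of John's theorem additionally gives the matching containment $\frac{1}{\sqrt d}\mathcal{E}\subseteq K$ (equivalently, via polar duality, the maximal-volume inscribed ellipsoid $\mathcal{E}'$ of $K$ satisfies $K\subseteq\sqrt{d}\,\mathcal{E}'$).

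It then remains only to unwind $\frac{1}{\sqrt d}\mathcal{E}\subseteq K\subseteq\mathcal{E}$ into gauge inequalities: $K\subseteq\mathcal{E}$ gives $\|x\|_\Xi\le\|x\|_K$ for all $x$, and $\mathcal{E}\subseteq\sqrt d\,K$ gives $\frac{1}{\sqrt d}\|x\|_K\le\|x\|_\Xi$ for all $x$; squaring these and recalling $\|x\|_K=\sup_{f\in K^\star}|\langle f,x\rangle|$ yields $\langle x,\Xi x\rangle\le\sup_{f\in K^\star}|\langle f,x\rangle|^2\le d\,\langle x,\Xi x\rangle$ for every $x$, in particular for $x\in K$, which is the claim. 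I do not anticipate a genuine obstacle: the only points needing care are the centering argument (uniqueness of the John ellipsoid together with $K=-K$) and the reduction to the full-dimensional case, both routine; the single substantive ingredient is John's theorem itself, which I would cite rather than reprove, and it is precisely its symmetric form that delivers the factor $\sqrt{d}$ rather than the $d$ of the general case.
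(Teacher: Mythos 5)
Your proof is correct. The paper states this lemma purely as a citation to Hazan et al.\ (2016), Lemma 12, and gives no proof, so there is no in-paper argument to compare against; your derivation --- identify $\sup_{f\in K^{\star}}\abs{\tri{f,x}}$ with the Minkowski gauge $\nrm{x}_{K}$ of $K$ (via the bipolar theorem), take $\Xi$ to define the minimal-volume circumscribed ellipsoid $\mathcal{E}$ of $K$, center it at the origin by uniqueness together with $K=-K$, and invoke the symmetric form of John's theorem to get $\tfrac{1}{\sqrt d}\mathcal{E}\subseteq K\subseteq\mathcal{E}$, hence $\nrm{x}_{\Xi}\le\nrm{x}_{K}\le\sqrt d\,\nrm{x}_{\Xi}$ --- is precisely the standard route and is what the cited source uses. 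Your handling of the degenerate case (restrict to $\mathrm{span}(K)$, where the constant only improves, then extend $\Xi$ by zero on the orthogonal complement) is necessary and correctly done, since the statement only asserts the inequalities for $x\in K$. The one technicality worth flagging: the identity $\sup_{f\in K^{\star}}\abs{\tri{f,x}}=\nrm{x}_{K}$ via the bipolar theorem strictly requires $K$ to be closed; if one insists on an arbitrary symmetric convex set, pass to its closure first (this changes neither $K^{\star}$ nor the values of the gauge on $K$ itself), and in the paper's actual application $K$ is the intersection of the $\ell_{\infty}$ ball with a subspace, which is already closed.
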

Applying \pref{lem:john_ellipsoid} to the intersection of the $\ls_{\infty}$ ball and $\mathrm{span}(\xr[n])$ gives a Euclidean approximation to the $\ls_{\infty}$ norm in terms of the rank of $X_{1:n}$.
\begin{corollary}
\label{corr:infinity_norm_low_rank}

There exists some positive semidefinite $\Xi\in\R^{d\times{}d}$ such that for all $S\in\mathrm{span}(\xr[n])$,
\begin{equation}
\tri*{S, \Xi{}S} \leq{} \nrm*{S}_{\infty}^{2} \leq{}\rank(X_{1:n})\cdot{}\tri*{S, \Xi{}S}.
\end{equation}
\end{corollary}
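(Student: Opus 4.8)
The plan is to apply \pref{lem:john_ellipsoid} not in the ambient space $\R^{d}$ but in the subspace $V := \mathrm{span}(\xr[n])$, whose dimension is exactly $r := \rank(X_{1:n})$; working inside $V$ is precisely what turns the factor $d$ from John's theorem into the factor $r$ we want. Concretely, I would set $K := \{S \in V : \nrm{S}_{\infty} \le 1\}$, the slice of the $\ell_{\infty}$ ball by $V$. This is a bounded, symmetric, convex body that is full-dimensional inside $V$, so after identifying $V$ with $\R^{r}$ we may invoke \pref{lem:john_ellipsoid} to obtain a positive semidefinite operator $\Xi_{0}$ on $V$ with $\tri{S, \Xi_{0}S} \le \sup_{f \in K^{\star}}\abs{\tri{f,S}}^{2} \le r\tri{S,\Xi_{0}S}$ for all $S \in K$, where $K^{\star}$ is the polar of $K$ taken inside $V$.

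The next step is to recognize the middle quantity as $\nrm{S}_{\infty}^{2}$. The support function of the polar $K^{\star}$ equals the Minkowski gauge of $K^{\star\star}=K$ (valid since $K$ is closed, convex, symmetric), i.e. $\sup_{f \in K^{\star}}\abs{\tri{f,S}} = \inf\{\alpha > 0 : S \in \alpha K\}$ for $S \in V$; and because $K$ is literally the unit ball of $\nrm{\cdot}_{\infty}$ intersected with $V$, this gauge coincides with $\nrm{S}_{\infty}$ for every $S \in V$. Substituting gives $\tri{S,\Xi_{0}S} \le \nrm{S}_{\infty}^{2} \le r\tri{S,\Xi_{0}S}$ for all $S \in K$, and since all three expressions are $2$-homogeneous in $S$, the inequality extends from $S \in K$ to every $S \in V$ by scaling.

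Finally I would lift $\Xi_{0}$ back to a $d \times d$ matrix: letting $P$ be the orthogonal projection of $\R^{d}$ onto $V$, define $\Xi := P^{\trn}\Xi_{0}P$ (equivalently, the extension of $\Xi_{0}$ by zero on $V^{\perp}$), which is positive semidefinite on $\R^{d}$ and agrees with $\Xi_{0}$ on $V$. Since $\mathrm{span}(\xr[n]) = V$, every $S$ in the hypothesis lies in $V$, so $\tri{S,\Xi{}S} = \tri{S,\Xi_{0}S}$ and the claimed chain of inequalities follows, with $\mathbf{D}$-free constant $r = \rank(X_{1:n})$.

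I do not expect a serious obstacle; the only care needed is the bookkeeping in restricting John's theorem to the subspace $V$ — ensuring $K$ is full-dimensional there so the lemma genuinely applies, taking the polar and gauge consistently inside $V$, and invoking homogeneity to upgrade the bound from $K$ to all of $V$ — together with the trivial extension of $\Xi_{0}$ to the ambient space.
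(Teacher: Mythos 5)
Your proof is correct and takes essentially the same route as the paper, which justifies the corollary with the single sentence ``applying \pref{lem:john_ellipsoid} to the intersection of the $\ell_\infty$ ball and $\mathrm{span}(\xr[n])$.'' Your write-up simply makes the bookkeeping explicit --- restricting John's theorem to the $r$-dimensional subspace $V$, identifying $\sup_{f\in K^{\star}}\abs{\tri{f,S}}$ with the gauge of $K$ (which is $\nrm{S}_{\infty}$ on $V$), upgrading from $K$ to all of $V$ by $2$-homogeneity, and extending $\Xi_0$ by zero on $V^{\perp}$ --- all of which match the paper's intended argument.
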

We can now proceed to the proof of the main theorem.
\begin{proof}[Proof of \pref{thm:low_rank}]
By \pref{lem:low_rank_regret}, there exists a strategy whose regret is bounded by
\[
O\prn*{\En_{\eps}\nrm*{\sum_{t=1}^{n}\eps_{t}x_{t}}_{\infty}^{\log{}d}}^{1/\log{}d}.
\]

We now complete the upper bound using concentration. Let $Z=\nrm*{\sum_{t=1}^{n}\eps_{t}x_{t}}_{\infty}$. Then we can write $\prn*{\En_{\eps}\nrm*{\sum_{t=1}^{n}\eps_{t}x_{t}}_{\infty}^{\log{}d}}^{1/\log{}d}$ as $\prn*{\En{}Z^{\log{}d}}^{1/\log{}d}$, where the expectation is over the sequence $\eps$. We will upper bound this quantity in terms of the rank. First observe that by \pref{corr:infinity_norm_low_rank}, there exists a PSD matrix $\Xi$ such that 
\[
\En_{\eps}\nrm*{\sum_{t=1}^{n}\eps_{t}x_{t}}_{\infty} \leq {} \sqrt{\rank(X_{1:n})}\En_{\eps}\nrm*{\sum_{t=1}^{n}\eps_{t}x_{t}}_{\Xi},
\]
where $\nrm{x}_{\Xi}=\tri*{x,\Xi{}x}$.

Observe that since $\nrm{\cdot}_{\Xi}$ is Euclidean,
\[
\En_{\eps}\nrm*{\sum_{t=1}^{n}\eps_{t}x_{t}}_{\Xi} \sqrt{\En_{\eps}\nrm*{\sum_{t=1}^{n}\eps_{t}x_{t}}_{\Xi}^{2}} = \sqrt{\sum_{t=1}^{n}\nrm*{x_t}^{2}_{\Xi}}\leq{}\sqrt{\sum_{t=1}^{n}\nrm*{x_t}^{2}_{\infty}}\leq{}\sqrt{n},
\]
where the second-to-last inequality uses \pref{corr:infinity_norm_low_rank}. This establishes that
\[
\En{}Z\leq{} \sqrt{\rank(X_{1:n})n}.
\]
Now, since $\nrm{x_t}_{\infty}\leq{}1$, \pref{lem:concentration_rademacher} implies that with probability at least $1-\delta$ over the draw of $\eps$,
\[
Z \leq{} O\prn*{ \En{}Z + \log(1/\delta)}.
\]
By the law of total expectation, this establishes that for all $\delta>0$,
\[
\prn*{\En_{\eps}\nrm*{\sum_{t=1}^{n}\eps_{t}x_{t}}_{\infty}^{\log{}d}}^{1/\log{}d} 
\leq{} O\prn*{\prn*{(\sqrt{\rank(X_{1:n})n} + \log(1/\delta))^{\log{}d} + n^{\log{}d}\delta}^{1/\log{}d}
}.
\]
Taking $\delta=n^{-\log{}d}$, the above quantity is bounded by
\[
O\prn*{\prn*{(\sqrt{\rank(X_{1:n})n} + \log(n)\log(d))^{\log{}d}}^{1/\log{}d}},
\]
which is further bounded as 
\[
O\prn*{\sqrt{\rank(X_{1:n})n} + \log(n)\log(d)}.
\]

\end{proof}

\begin{proof}[Proof of \pref{thm:low_rank_approx}]
This result is proven from the same starting point as in \pref{thm:low_rank}. Recall from \pref{lem:low_rank_regret} that there is a strategy whose regret is bounded by
\[
O\prn*{\En_{\eps}\nrm*{\sum_{t=1}^{n}\eps_{t}x_{t}}_{\infty}^{\log{}d}}^{1/\log{}d}.
\]
Suppose $\rank_{\gamma}(X_{1:n})=r$. Then there exist matrices $X'_{1:n}\in\R^{d\times{}n}$ and $Z_{1:n}\in\R^{d\times{}n}$ such that
\[
X_{1:n} = X'_{1:n} + Z_{1:n},
\]
with $\rank(X'_{1:n})=r$ and $\nrm{Z}_{\infty}\leq{}\gamma$. Using $x'_{t}$ to denote the $t$th column of $X'_{1:n}$ and $z_{t}$ to denote the $t$th column of $Z_{1:n}$, triangle inequality implies
\begin{align*}
\prn*{\En_{\eps}\nrm*{\sum_{t=1}^{n}\eps_{t}x_{t}}_{\infty}^{\log{}d}}^{1/\log{}d} = 
\prn*{\En_{\eps}\nrm*{X_{1:n}\eps}_{\infty}^{\log{}d}}^{1/\log{}d} &\leq{} O\prn*{\prn*{\En_{\eps}\nrm*{X'_{1:n}\eps}_{\infty}^{\log{}d}}^{1/\log{}d} + \prn*{\En_{\eps}\nrm*{Z_{1:n}\eps}_{\infty}^{\log{}d}}^{1/\log{}d}}\\
&= O\prn*{\prn*{\En_{\eps}\nrm*{\sum_{t=1}^{n}\eps_tx'_t}_{\infty}^{\log{}d}}^{1/\log{}d}} + O\prn*{\prn*{\En_{\eps}\nrm*{\sum_{t=1}^{n}\eps_{t}z_t}_{\infty}^{\log{}d}}^{1/\log{}d}}
\end{align*}
Since the loss matrix in the first term has rank $r$, this term can be bounded exactly as in \pref{thm:low_rank}. We now show how to bound the second term. First, observe that since $\nrm{Z_{1:n}}_{\infty}\leq{}\gamma$, the standard estimate on the maximum of $d$ subgaussian random variables gives
\[
\En_{\eps}\nrm*{\sum_{t=1}^{n}\eps_{t}z_t}_{\infty}\leq{} O(\gamma\sqrt{n\log{}d}).
\]
\pref{lem:concentration_rademacher} implies that with probability at least $1-\delta$ over the draw of $\eps$
\[
\nrm*{\sum_{t=1}^{n}\eps_{t}z_t}_{\infty}\leq{} O(\gamma\sqrt{n\log{}d} + \gamma\log(1/\delta)).
\]
Applying the law of total expectation (and recalling that $\gamma\leq{}1$), this implies that for all $\delta>0$
\[
\prn*{\En_{\eps}\nrm*{\sum_{t=1}^{n}\eps_{t}z_t}_{\infty}^{\log{}d}}^{1/\log{}d} \leq{} O\prn*{
\prn*{
(\gamma\sqrt{n\log{}d} + \gamma\log(1/\delta))^{\log{}d} + n^{\log{}d}\delta
}^{1/\log{}d}
}
\]
Taking $\delta=n^{-\log{}d}$, the above is finally bounded as
\[
O(\gamma\sqrt{n\log{}d} + \gamma\log{}n\log{}d).
\]

\end{proof}

\begin{proof}[Proof of \pref{thm:experts_max_norm}]
This proof follows the same structure as \pref{thm:low_rank} and \pref{thm:low_rank_approx}. Starting from \pref{lem:low_rank_regret}, we have that there is a strategy whose regret is bounded by
\[
O\prn*{\En_{\eps}\nrm*{\sum_{t=1}^{n}\eps_{t}x_{t}}_{\infty}^{\log{}d}}^{1/\log{}d}.
\]
Observe that $\En_{\eps}\nrm*{\sum_{t=1}^{n}\eps_{t}x_{t}}_{\infty} = \En_{\eps}\nrm*{X_{1:n}\eps}_{\infty}$. From the definition of the max norm, there exist $U\in\R^{d\times{}d}$, $V\in\R^{n\times{}d}$ such that $X_{1:n}=UV^{\trn}$ and $\nrm{U}_{\infty,2}\nrm{V}_{\infty,2}=\nrm{X_{1:n}}_{\mathrm{max}}$. With this observation, we have
\[
\En_{\eps}\nrm*{X_{1:n}\eps}_{\infty} = \En_{\eps}\nrm*{UV^{\trn}\eps}_{\infty} = \En_{\eps}\nrm*{U\sum_{t=1}^{n}v_t\eps_t}_{\infty},
\]
where $v_{t}$ denotes the $t$th row of $V$. Now, observe that
\[
\nrm{U}_{\infty,2}= \max_{i\in\brk{d}}\nrm{u_{i}}_{2} = \max_{i\in\brk{d}}\max_{x:\nrm{x}_2\leq{}1}\tri*{u_{i},x} = \max_{x:\nrm{x}_2\leq{}1}\nrm*{Ux}_{\infty} = \nrm{U}_{2\to\infty},
\]
so $\nrm{\cdot}_{\infty,2}$ is actually the $2\to\infty$ operator norm. This implies that
\[
\En_{\eps}\nrm*{U\sum_{t=1}^{n}v_t\eps_t}_{\infty} \leq{} \nrm{U}_{\infty,2}\cdot{}\En_{\eps}\nrm*{\sum_{t=1}^{n}v_t\eps_t}_{2}.
\]
Proceeding with the standard Euclidean calculation for Rademacher complexity (e.g. \cite{kakade09complexity}), and using that $\nrm{v_t}_{2}\leq{}\nrm{V}_{\infty,2}\;\forall{}t$, the above implies that 
\[
\En_{\eps}\nrm*{\sum_{t=1}^{n}\eps_{t}x_{t}}_{\infty}\leq{}\nrm{U}_{\infty,2}\nrm{V}_{\infty,2}\sqrt{n}=\nrm{X_{1:n}}_{\mathrm{max}}\sqrt{n}.
\]

Once again, we appeal to \pref{lem:concentration_rademacher}, which implies that with probability at least $1-\delta$ over the draw of $\eps$,
\[
\nrm*{\sum_{t=1}^{n}\eps_{t}x_t}_{\infty}\leq{} O(\nrm{X}_{\mathrm{max}}\cdot\sqrt{n} + \log(1/\delta)).
\]
Again using the law of total expectation, this implies that for all $\delta>0$
\[
\prn*{\En_{\eps}\nrm*{\sum_{t=1}^{n}\eps_{t}x_t}_{\infty}^{\log{}d}}^{1/\log{}d} \leq{} O\prn*{
\prn*{
(\nrm{X}_{\mathrm{max}}\cdot\sqrt{n} + \log(1/\delta))^{\log{}d} + n^{\log{}d}\delta.
}^{1/\log{}d}
}
\]
Taking $\delta=n^{-\log{}d}$, we have
\[
O(\nrm{X}_{\mathrm{max}}\cdot\sqrt{n} + \log{}n\log{}d).
\]

\end{proof}

We now focus on proving \pref{lem:low_rank_regret}. The structure of this proof will follow that of \pref{thm:NScond}, which gives an upper bound on regret in terms of $\RadH$ whenever the one-sided UMD inequality holds. To achieve the desired bound in this framework, we will need the following corollary of Hitczenko's decoupling inequality \pref{thm:hitczenko_scalar}.

\begin{corollary}[One-sided UMD inequality for $\ls_{p}$ norms]
\label{corr:lp_one_sided}
There exists some constant $K$ such that for all $p\geq{}1$,
\begin{equation}
\En_{\eps}\nrm*{\sum_{t=1}^{n}\eps_{t}\x_{t}(\eps)}_{p}^{p} \leq{} K^{p}\En_{\eps,\eps'}\nrm*{\sum_{t=1}^{n}\eps'_t\eps_{t}\x_{t}(\eps)}_{p}^{p},
\end{equation}
where $\x$ is any $\X$-valued tree.
\end{corollary}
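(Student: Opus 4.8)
The plan is to deduce this vector-valued decoupling inequality from Hitczenko's scalar inequality (\pref{thm:hitczenko_scalar}) by applying it separately to each coordinate of $\x$, in exactly the same way that the deterministic $\UMD_{p}$ property for $\ls_{p}$ is built up from the scalar case in the proof of \pref{thm:umd_constants}. The mechanism that makes this work is that the functional $v\mapsto\nrm{v}_{p}^{p}$ is additive across coordinates, $\nrm{v}_{p}^{p}=\sum_{i}\abs{v[i]}^{p}$, and that restricting an $\R^{d}$-valued tree to a single coordinate yields a scalar-valued tree with respect to the same dyadic filtration, so Hitczenko's theorem applies verbatim to each.

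Concretely, first I would, for each coordinate $i$, introduce the scalar tree $\x^{(i)}$ defined by $\x^{(i)}_{t}(\eps)=\bigl(\x_{t}(\eps)\bigr)[i]$; since $\x_{t}$ is a map $\pmo^{t-1}\to\X\subseteq\R^{d}$, each $\x^{(i)}_{t}$ is a legitimate map $\pmo^{t-1}\to\R$, so predictability is preserved trivially. Then I would apply \pref{thm:hitczenko_scalar} to each $\x^{(i)}$ with the same exponent $p$ and the universal constant $K$, giving
\[
\En_{\eps}\abs*{\sum_{t=1}^{n}\eps_{t}\x^{(i)}_{t}(\eps)}^{p}\leq K^{p}\,\En_{\eps,\eps'}\abs*{\sum_{t=1}^{n}\eps'_{t}\eps_{t}\x^{(i)}_{t}(\eps)}^{p}.
\]
Finally, summing over $i\in\brk{d}$ and interchanging the finite sum with the expectations (valid termwise) yields
\[
\En_{\eps}\nrm*{\sum_{t=1}^{n}\eps_{t}\x_{t}(\eps)}_{p}^{p}=\sum_{i\in\brk{d}}\En_{\eps}\abs*{\sum_{t=1}^{n}\eps_{t}\x^{(i)}_{t}(\eps)}^{p}\leq K^{p}\sum_{i\in\brk{d}}\En_{\eps,\eps'}\abs*{\sum_{t=1}^{n}\eps'_{t}\eps_{t}\x^{(i)}_{t}(\eps)}^{p}=K^{p}\,\En_{\eps,\eps'}\nrm*{\sum_{t=1}^{n}\eps'_{t}\eps_{t}\x_{t}(\eps)}_{p}^{p},
\]
which is exactly the claimed bound.

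There is no genuine obstacle here: all of the probabilistic content lives in \pref{thm:hitczenko_scalar}, and the reduction is the same tensorization trick used throughout \pref{sec:martingales1}. The only points requiring a word of care are (i) that $K$ must be taken as the constant from \pref{thm:hitczenko_scalar} that is \emph{independent of $p$} --- this is essential because the corollary is later invoked with $p=\log d$ in \pref{lem:low_rank_regret}, and a $p$-dependent constant would ruin that application --- and (ii) if one wanted the statement for an infinite-dimensional $\ls_{p}$ sequence space, one would additionally invoke monotone convergence to pass from partial sums over coordinates to the full sum; for the low-rank experts application $d$ is finite, so this refinement is not needed.
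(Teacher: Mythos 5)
Your proof is correct and takes exactly the paper's approach: the paper's own proof is the one-line remark ``Simply apply Theorem~\ref{thm:hitczenko_scalar} coordinate-wise,'' and you have spelled out precisely the coordinatewise reduction that this refers to, including the observation (also used in the proof of Theorem~\ref{thm:umd_constants}) that $\nrm{\cdot}_{p}^{p}$ is additive across coordinates and that restricting an $\R^{d}$-valued tree to a coordinate yields a scalar-valued tree.
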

\begin{proof}[Proof of \pref{corr:lp_one_sided}]
Simply apply \pref{thm:hitczenko_scalar} coordinate-wise.
\end{proof}
	With this inequality, we proceed to prove \pref{lem:low_rank_regret}.
\begin{proof}[Proof of \pref{lem:low_rank_regret}]
Let $p=\log{}d$. Recall that we have defined
\[
\Psi_{\eta,p}(x) = \frac{1}{p}\prn*{\eta{}x + \frac{1}{p'-1}\eta^{1-p'}}.
\]
We first will prove that there is a strategy $(\yh_{t})$ that achieves 
\[
\sum_{t=1}^{n}\ls(\yh_t, y_t) - \inf_{f\in\F}\sum_{t=1}^{n}\ls(f(x_t), y_t) \leq{} \Psi_{\eta,p}\prn*{C\En_{\eps}\nrm*{\sum_{t=1}^{n}\eps_tx_{t}}_{\infty}^{p}}
\]
for some $C>0$. This portion of the proof will closely follow \pref{thm:NScond}.
Fix $C$ to be decided later and define
\begin{align*}
\V&=\dtri*{\sup_{x_t}\inf_{\yh_t}\sup_{y_t\in\brk{-1,+1}}}_{t=1}^{n}\brk*{\sum_{t=1}^{n}\ls(\yh_t, y_t) - \inf_{f\in\F}\sum_{t=1}^{n}\ls(f(x_t), y_t) -  \Psi_{\eta,p}\prn*{C\En_{\eps}\nrm*{\sum_{t=1}^{n}\eps_tx_{t}}_{\infty}^{p}}
}.
\intertext{Observe that the regret bound we desired is achievable if there is a value for $C$ such that $\mc{V}\leq{}0$.}
\mc{V}&=\dtri*{\sup_{x_t}\sup_{p_t\in\Delta\brk{-1,+1}} \inf_{\yh_t} \En_{y_t \sim p_t}}_{t=1}^{n}\sup_{f \in\F}\brk*{ \sum_{t=1}^{n}(\ls(\yh_t, y_t) - \ls(f(x_t), y_t)) - \Psi_{\eta,p}\prn*{C\En_{\eps}\nrm*{\sum_{t=1}^{n}\eps_tx_{t}}_{\infty}^{p}}
}\\
&\le \dtri*{\sup_{x_t} \sup_{p_t\in\Delta\brk{-1,+1}} \inf_{\yh_t}\En_{y_t \sim p_t}}_{t=1}^{n}\sup_{f \in\F}\brk*{ \sum_{t=1}^{n}\ls'(\yh_t, y_t)(\yh_t - f(x_t)) - \Psi_{\eta,p}\prn*{C\En_{\eps}\nrm*{\sum_{t=1}^{n}\eps_tx_{t}}_{\infty}^{p}}
}\\
\intertext{Setting $\yh^*_t$ to be minimizer of $\En \ell(\hat{y}_t,y_t)$, we have}
&\le \dtri*{\sup_{x_t} \sup_{p_t\in\Delta\brk{-1,+1}} \inf_{\yh_t}\En_{y_t \sim p_t}}_{t=1}^{n}\sup_{f \in\F}\brk*{ \sum_{t=1}^{n}\ls'(\yh^*_t, y_t)(\yh^*_t - f(x_t)) - \Psi_{\eta,p}\prn*{C\En_{\eps}\nrm*{\sum_{t=1}^{n}\eps_tx_{t}}_{\infty}^{p}}
}\\
&= \dtri*{\sup_{x_t} \sup_{p_t\in\Delta\brk{-1,+1}} \inf_{\yh_t}\En_{y_t \sim p_t}}_{t=1}^{n}\sup_{f \in\F}\brk*{ \sum_{t=1}^{n}- \ls'(\yh^*_t, y_t) f(x_t) - \Psi_{\eta,p}\prn*{C\En_{\eps}\nrm*{\sum_{t=1}^{n}\eps_tx_{t}}_{\infty}^{p}}
}\\
&= \dtri*{\sup_{x_t} \sup_{p_t\in\Delta\brk{-1,+1}} \En_{y_t \sim p_t}}_{t=1}^{n}\sup_{f \in\F}\brk*{ \sum_{t=1}^{n} (\En_{y'_t \sim p_t}\ls'(\yh^*_t, y'_t) - \ls'(\yh^*_t, y_t)) f(x_t) - \Psi_{\eta,p}\prn*{C\En_{\eps}\nrm*{\sum_{t=1}^{n}\eps_tx_{t}}_{\infty}^{p}}
}\\
&\le \dtri*{\sup_{x_t} \sup_{p_t\in\Delta\brk{-1,+1}} \En_{y_t, y'_t \sim p_t}}_{t=1}^{n}\sup_{f \in\F}\brk*{ \sum_{t=1}^{n} (\ls'(\yh^*_t, y'_t) - \ls'(\yh^*_t, y_t)) f(x_t) - \Psi_{\eta,p}\prn*{C\En_{\eps}\nrm*{\sum_{t=1}^{n}\eps_tx_{t}}_{\infty}^{p}}
}\\
&= \dtri*{\sup_{x_t} \sup_{p_t\in\Delta\brk{-1,+1}} \En_{y_t, y'_t \sim p_t} \En_{\epsilon'_t}}_{t=1}^{n}\sup_{f \in\F}\brk*{ \sum_{t=1}^{n} \epsilon'_t (\ls'(\yh^*_t, y'_t) - \ls'(\yh^*_t, y_t)) f(x_t) - \Psi_{\eta,p}\prn*{C\En_{\eps}\nrm*{\sum_{t=1}^{n}\eps_tx_{t}}_{\infty}^{p}}
}\\
&\le \dtri*{\sup_{x_t}  \En_{\epsilon'_t}}_{t=1}^{n}\sup_{f \in\F}\brk*{ \sum_{t=1}^{n} 2 \epsilon'_t f(x_t) - \Psi_{\eta,p}\prn*{C\En_{\eps}\nrm*{\sum_{t=1}^{n}\eps_tx_{t}}_{\infty}^{p}}
}\\
&= \sup_{\x} \En_{\epsilon'}\sup_{f\in\F}\brk*{\sum_{t=1}2\eps'_{t}f(\x_{t}(\eps'))
 - \Psi_{\eta,p}\prn*{C\En_{\eps}\nrm*{\sum_{t=1}^{n}\eps_t\x_{t}(\eps')}_{\infty}^{p}}
}.
\intertext{Using that the simplex $\Delta_{d}$ is a subset of the $\ls_{1}$ ball:} 
&\leq{} \sup_{\x} \En_{\epsilon'}\brk*{2\nrm*{\sum_{t=1}\eps'_{t}\x_{t}(\eps')}_{\infty}
 - \Psi_{\eta,p}\prn*{C\En_{\eps}\nrm*{\sum_{t=1}^{n}\eps_t\x_{t}(\eps')}_{\infty}^{p}}
}.
\intertext{Using \pref{eq:var}, this is upper bounded by}
&= \sup_{\x} \En_{\epsilon'}\frac{\eta}{p}\brk*{2\nrm*{\sum_{t=1}\eps'_{t}\x_{t}(\eps')}_{\infty}^{p}
 - C\En_{\eps}\nrm*{\sum_{t=1}^{n}\eps_t\x_{t}(\eps')}_{\infty}^{p}
}.
\intertext{We can replace the left $\ls_{\infty}$ norm with the $\ls_{p}$ norm as an upper bound: }
&\leq{} \sup_{\x} \En_{\epsilon'}\frac{\eta}{p}\brk*{2\nrm*{\sum_{t=1}\eps'_{t}\x_{t}(\eps')}_{p}^{p}
 - C\En_{\eps}\nrm*{\sum_{t=1}^{n}\eps_t\x_{t}(\eps')}_{\infty}^{p}
}.
\intertext{We now apply the one-sided UMD property for the $\ls_{p}$ norm \pref{corr:lp_one_sided}:}
&\leq{} \sup_{\x} \En_{\eps,\epsilon'}\frac{\eta}{p}\brk*{2K^{p}\nrm*{\sum_{t=1}\eps_{t}\x_{t}(\eps')}_{p}^{p}
 - C\nrm*{\sum_{t=1}^{n}\eps_t\x_{t}(\eps')}_{\infty}^{p}
}.
\end{align*}
Finally, since $p=\log{}d$, there is some constant $A$ such that $\nrm{x}_{p}\leq{}A\nrm{x}_{\infty}$ pointwise. Therefore, if we take $C=O(K)^{p}$, the expression is bounded by zero.

Now, to achieve the final theorem's bound, simply using the doubling trick given in \pref{lem:algorithm_lp1_doubling} on top of the strategy described above. Since $p'=O(1)$, the doubling strategy will guarantee a regret bound of
\[
\sum_{t=1}^{n}\ls(\yh_t, y_t) - \inf_{f\in\F}\sum_{t=1}^{n}\ls(f(x_t), y_t) \leq{} O\prn*{K\prn*{\En_{\eps}\nrm*{\sum_{t=1}^{n}\eps_tx_{t}}_{\infty}^{p}}^{1/p}}.
\]
\end{proof}

\begin{proof}[Proof of \pref{thm:hitczenko_scalar}]
This theorem is an immediate corollary of \citep{hitczenko1994domination}, Theorem 1.1. We will spend a moment to explain this in detail, as that theorem is stated in terms of \emph{tangent sequences}, which are a concept that otherwise does not appear in the present paper.

Given an adapted sequence $(Z_t)_{t\leq{}n}$, we define its \emph{decoupled tangent sequence} $(Z'_{t})_{t\leq{}n}$ as follows: At time $t$, conditioned on $Z_{1:t-1}$, sample $Z'_{t}$ as an i.i.d. copy of $Z_{t}$ under the conditional distribution $\Pr(Z_{t}\mid{}Z_{1},\ldots,Z_{t-1})$. Then $(Z'_{t})_{t\leq{}n}$ satisfies
\begin{enumerate}
\item Identical conditional distribution: $\Pr(Z'_{t}\mid{}Z_{1},\ldots,Z_{t-1}) = \Pr(Z_{t}\mid{}Z_{1},\ldots,Z_{t-1})$
\item Conditional independence: $\Pr(Z'_{1},\ldots,Z'_{n}\mid{}Z_{1},\ldots,Z_{n}) = \prod_{t=1}^{n}\Pr(Z'_{t}\mid{}Z_{1},\ldots,Z_{n})$
\end{enumerate}

With this definition, \citep{hitczenko1994domination}, Theorem 1.1 is stated as follows:

\noindent \emph{There is some universal constant $K$ such that for any adapted sequence $(Z_t)$ and its decoupled tangent sequence $(Z'_t)$, for any $1\leq{}p<\infty$, 
\begin{equation}
\label{eq:scalar_decoupling}
\En\abs*{\sum_{t=1}^{n}Z_t}^{p}\leq{} K^{p} \abs*{\sum_{t=1}^{n}Z'_t}^{p}.
\end{equation}
}

We now show how to conclude \pref{thm:hitczenko_scalar} from this result. Observe that for a Paley-Walsh martingale $(\eps_{t}\x_{t}(\eps_{t:t-1}))_{t=1}^{n}$, its decoupled tangent sequence is given by $(\eps'_{t}\x_{t}(\eps_{t:t-1}))_{t=1}^{n}$, where $\eps'$ is an independent sequence of Rademacher random variables. Furthermore, this sequence is distributed identically to $(\eps'_{t}\eps_{t}\x_{t}(\eps_{t:t-1}))_{t=1}^{n}$. Therefore \pref{thm:hitczenko_scalar} follows from specializing \pref{eq:scalar_decoupling} to Paley-Walsh martingales.
\end{proof}

\subsubsection{Empirical covering number bounds}

\begin{proof}[Proof of \pref{thm:general_cover} and \pref{thm:general_chaining}]
\pref{thm:NScond} proves that when the one-sided UMD-property \pref{eq:UMDp} holds, there exists a strategy whose regret is bounded as
\[
C\En_{\eps}\sup_{f \in \F} \sum_{t=1}^{n}\eps_tf(x_t).
\]
Since this quantity is the statistical Rademacher complexity, we may apply the classical covering number bound \cite[Proposition 12.3]{StatNotes2012}:
\[
\En_{\eps}\sup_{f \in \F} \sum_{t=1}^{n}\eps_tf(x_t)\leq{} O\prn*{\inf_{\alpha>0}\crl*{\alpha{}n
 + \sqrt{\log{}\mc{N}_1(\Delta_d,\alpha,\xr[n])n}}}.	
\]

Likewise, the classical Dudley entropy integral bound \cite[Theorem 12.4]{StatNotes2012} yields:
\[
\En_{\eps}\sup_{f \in \F} \sum_{t=1}^{n}\eps_tf(x_t) \leq{}O\prn*{
\inf_{\alpha>0}\crl*{\alpha\cdot{}n + \int_{\alpha}^{1}\sqrt{\log{}\mc{N}_2(\F,\delta,\xr[n])n}d\delta}}.
\]

\end{proof}

\begin{proof}[Proof of \pref{thm:cover_bound} and \pref{thm:cover_chaining}]
By \pref{lem:low_rank_regret}, there exists a strategy whose regret is bounded by
\[
O\prn*{\En_{\eps}\nrm*{\sum_{t=1}^{n}\eps_{t}x_{t}}_{\infty}^{\log{}d}}^{1/\log{}d}.
\]
Observe that 
\[
\En_{\eps}\nrm*{\sum_{t=1}^{n}\eps_{t}x_{t}}_{\infty} = \En_{\eps}\sup_{f\in\Delta_d}\sum_{t=1}^{n}\eps_{t}f(x_{t}).
\]
We prove the theorem by appealing to the following classical empirical process bounds \cite[Proposition 12.3, Theorem 12.4]{StatNotes2012}. For \pref{thm:cover_bound}:
\[
\En_{\eps}\sup_{f\in\Delta_d}\sum_{t=1}^{n}\eps_{t}f(x_{t}) \leq{} O\prn*{\inf_{\alpha>0}\crl*{\alpha{}n
 + \sqrt{\log{}\mc{N}_1(\Delta_d,\alpha,\xr[n])n}}}.
\]
For \pref{thm:cover_chaining}:
\[
\En_{\eps}\sup_{f\in\Delta_d}\sum_{t=1}^{n}\eps_{t}f(x_{t}) \leq{} O\prn*{\inf_{\alpha>0}\crl*{\alpha{}n + \int_{\alpha}^{1}\sqrt{\log{}\mc{N}_2(\Delta_d,\delta,\xr[n])n}d\delta}}.
\]

To show the final bound, proceed with the concentration argument used in the proof of \pref{thm:low_rank}.

\end{proof}

\section{UMD spaces and martingale inequalities}
\label{app:martingale}

\subsection{Stopping inequalities}

Let $(Z_t)$ be a martingale. For two stopping times $\tau_1,\tau_2$, we define its stopped version as $Z_{t}^{\tau_1:\tau_2}$ via 
\[
dZ_{t}^{\tau_1:\tau_2}=dZ_{t}\ind\crl*{t>\tau_1}\ind\crl*{t\leq{}\tau_{2}}.
\]
\begin{proposition}[\cite{veraar2015analysis}, Proposition 3.1.14]
\label{prop:stopping_time}
For any $p\in[1,\infty)$,
\[
\En\nrm*{Z_{n}^{\tau_1:\tau_2}}^{p}\leq{}2^{p}\En\nrm*{Z_{n}}^{p}.
\]
\end{proposition}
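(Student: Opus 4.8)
The plan is to reduce $Z_n^{\tau_1:\tau_2}$ to a difference of two stopped versions of $(Z_t)$ and then apply the conditional Jensen inequality (equivalently, use that $\nrm{Z_t}^p$ is a submartingale).

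First I would record the algebraic identity $Z_n^{\tau_1:\tau_2} = Z_{n\wedge\tau_2} - Z_{n\wedge\tau_1}$. To obtain it, note that we may assume $\tau_1 \le \tau_2$ without loss of generality: on $\crl{t \le \tau_2}$ one has $\ind\crl{t > \tau_1} = \ind\crl{t > \tau_1 \wedge \tau_2}$, so replacing $\tau_1$ by $\tau_1 \wedge \tau_2$ does not change the difference sequence $dZ_t^{\tau_1:\tau_2}$. Then $\ind\crl{\tau_1 < t \le \tau_2} = \ind\crl{t \le \tau_2} - \ind\crl{t \le \tau_1}$, and summing $dZ_t$ against each indicator telescopes to $Z_{n\wedge\tau_i} - Z_0$, so the two $Z_0$ terms cancel and the identity follows.

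The core step is then the following: for any stopping time $\sigma$, optional stopping gives $Z_{n\wedge\sigma} = \En\brk*{Z_n \mid \mathcal{F}_{n\wedge\sigma}}$ (the martingale is finite and discrete-time, so the only hypothesis needed is $Z_n \in L_p(\Bspace)$), and conditional Jensen applied to the convex map $\nrm{\cdot}^p$, followed by taking expectations, yields $\En\nrm{Z_{n\wedge\sigma}}^p \le \En\nrm{Z_n}^p$. Applying this with $\sigma = \tau_1$ and $\sigma = \tau_2$, together with the triangle inequality and the elementary bound $(a+b)^p \le 2^{p-1}(a^p + b^p)$ valid for $a,b \ge 0$ and $p \ge 1$, gives
\[
\En\nrm{Z_n^{\tau_1:\tau_2}}^p \le 2^{p-1}\prn*{\En\nrm{Z_{n\wedge\tau_2}}^p + \En\nrm{Z_{n\wedge\tau_1}}^p} \le 2^{p}\,\En\nrm{Z_n}^p,
\]
which is the claimed inequality.

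I expect the only genuinely delicate point to be the bookkeeping in the second paragraph — in particular the reduction to $\tau_1 \le \tau_2$ and checking that the $Z_0$ term really drops out of the telescoping sum. Once the identity $Z_n^{\tau_1:\tau_2} = Z_{n\wedge\tau_2} - Z_{n\wedge\tau_1}$ is in hand, the remainder is the standard fact that composing a vector-valued martingale with the convex function $\nrm{\cdot}^p$ produces a submartingale, so no nontrivial estimates are involved; alternatively one could invoke the corresponding optional-stopping statement directly from the martingale references the paper already cites.
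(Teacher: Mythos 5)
The paper does not prove this proposition; it is imported by citation from Hytönen--van Neerven--Veraar--Weis (Prop.\ 3.1.14), so there is no in-paper argument to compare against. Your proof is correct and is in fact the standard one. The reduction to $\tau_1\le\tau_2$ via replacement of $\tau_1$ by $\tau_1\wedge\tau_2$ is valid (the product of indicators $\ind\crl{t>\tau_1}\ind\crl{t\le\tau_2}$ is unchanged, and $\tau_1\wedge\tau_2$ is again a stopping time), the telescoping gives $Z_n^{\tau_1:\tau_2}=Z_{n\wedge\tau_2}-Z_{n\wedge\tau_1}$, and the combination of optional stopping, conditional Jensen for the convex map $\nrm{\cdot}^p$, triangle inequality, and $(a+b)^p\le 2^{p-1}(a^p+b^p)$ delivers exactly the constant $2^p$. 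No gaps.
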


\begin{theorem}[Doob's Maximal Inequality]
For any martingale $(Z_t)_{t\geq{}1}$ taking values in $(\Bspace, \nrm{\cdot})$ and any $p\in(1,\infty]$,
\begin{equation}
\label{eq:doob1}
\En\sup_{\tau\leq{}n}\nrm*{\sum_{t=1}^{\tau}dZ_t}^{p}\leq{} (p')^{p}\En\nrm*{\sum_{t=1}^{n}dZ_t}^{p}.
\end{equation}
Furthermore
\begin{equation}
\label{eq:doob2}
\Pr\prn*{\sup_{\tau\leq{}n}\nrm*{\sum_{t=1}^{\tau}dZ_t}>\lambda}\leq{}\frac{1}{\lambda}\En\nrm*{\sum_{t=1}^{n}dZ_t}\quad\forall{}\lambda>0.
\end{equation}
More generally, \pref{eq:doob1} and \pref{eq:doob2} hold when the sequence $(\nrm*{\sum_{t=1}^{\tau}Z_t})_{\tau\geq{}1}$ is replaced by any non-negative submartingale $(F_{\tau})_{\tau\geq{}1}$.
\end{theorem}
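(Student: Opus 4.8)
The plan is to deduce both displayed inequalities from their classical scalar analogues for non-negative submartingales, so I will first prove the statement in the ``more generally'' form and then specialize. The reduction rests on a single observation: if $(Z_\tau)_{\tau\geq1}$ is a $\Bspace$-valued martingale (with $Z_0 = 0$, so $Z_\tau = \sum_{t=1}^\tau dZ_t$), then $F_\tau := \nrm{Z_\tau}$ is a non-negative real-valued submartingale with respect to the same filtration, since the conditional Jensen inequality applied to the convex map $\nrm{\cdot}$ gives $\En\brk{\nrm{Z_{\tau+1}}\mid\mathcal F_\tau}\geq\nrm{\En\brk{Z_{\tau+1}\mid\mathcal F_\tau}}=\nrm{Z_\tau}$. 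Likewise $\nrm{F_n}_{L^p}<\infty$ whenever each $dZ_t\in L^p$, because $F_\tau\leq\sum_{t\leq n}\nrm{dZ_t}$ and there are only finitely many terms. Thus it suffices to establish \pref{eq:doob1} and \pref{eq:doob2} for an arbitrary non-negative submartingale $(F_\tau)_{\tau\geq1}$.

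For the weak-type inequality \pref{eq:doob2}, fix $\lambda>0$ and let $\tau=\inf\crl{t\leq n:F_t>\lambda}$, with $\tau=n$ if the set is empty; this is a bounded stopping time, so optional stopping for submartingales yields $\En F_n\geq\En F_\tau$. Since $F_\tau>\lambda$ on the event $\crl{\sup_{t\leq n}F_t>\lambda}$ while $F_\tau\geq0$ always, we get $\En F_n\geq\lambda\,\Pr\prn{\sup_{t\leq n}F_t>\lambda}$. Running the same argument with ``$\geq\lambda$'' in place of ``$>\lambda$'' and keeping the indicator inside gives the sharpened form $\lambda\,\Pr\prn{\sup_{t\leq n}F_t\geq\lambda}\leq\En\brk{F_n\ind\crl{\sup_{t\leq n}F_t\geq\lambda}}$, which I will use for the $L^p$ bound.

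For \pref{eq:doob1} with $p\in(1,\infty)$, set $M_n=\sup_{\tau\leq n}F_\tau$; by the layer-cake formula, the sharpened weak bound, and Fubini,
\[
\En M_n^p=\int_0^\infty p\lambda^{p-1}\Pr(M_n\geq\lambda)\,d\lambda\leq\int_0^\infty p\lambda^{p-2}\En\brk{F_n\ind\crl{M_n\geq\lambda}}\,d\lambda=\frac{p}{p-1}\,\En\brk{F_nM_n^{p-1}}.
\]
Hölder's inequality with exponents $p$ and $p'=p/(p-1)$ bounds the right side by $p'\,(\En F_n^p)^{1/p}(\En M_n^p)^{1/p'}$; since $\En M_n^p<\infty$ I may divide through by $(\En M_n^p)^{1/p'}$ to conclude $(\En M_n^p)^{1/p}\leq p'(\En F_n^p)^{1/p}$, which is \pref{eq:doob1}. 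The case $p=\infty$ is immediate: $F_\tau\leq\En\brk{F_n\mid\mathcal F_\tau}\leq\nrm{F_n}_{L^\infty}$ almost surely for every $\tau\leq n$, hence $\sup_{\tau\leq n}F_\tau\leq\nrm{F_n}_{L^\infty}$. I do not expect a genuine obstacle, as this is a textbook argument; the only points demanding care are the measurability of the crossing time $\tau$ and the a priori finiteness $\En M_n^p<\infty$ needed to cancel the factor in the Hölder step, both routine in this finite-horizon discrete-time setting.
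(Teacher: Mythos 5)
The paper states Doob's Maximal Inequality in \pref{app:martingale} as a classical result with no proof and no citation (in contrast to the neighboring facts, which are explicitly attributed to \cite{veraar2015analysis} or \cite{pisier2011martingales}); it is used as a black box elsewhere in the appendix. Your proof is correct and is the standard textbook argument: the reduction from a $\Bspace$-valued martingale to a nonnegative real submartingale via conditional Jensen applied to the convex map $\nrm{\cdot}$; the weak-type bound \pref{eq:doob2} via optional stopping at the first crossing time; the sharpened form $\lambda\,\Pr(M_n\geq\lambda)\leq\En[F_n\ind\{M_n\geq\lambda\}]$ (which is what the layer-cake step actually needs, and which you correctly obtain by keeping the indicator inside and using $\mathcal F_\tau$-measurability of the crossing event); and the $L^p$ bound \pref{eq:doob1} via layer cake, Fubini, and H\"older, with the $p=\infty$ case handled separately. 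The only routine caveat you flag — finiteness of $\En M_n^p$ before dividing in the H\"older step — is indeed handled by the finite-horizon bound $M_n\leq\sum_{t\leq n}\nrm{dZ_t}$ (or, in the general submartingale case, a truncation $M_n\wedge K$ and monotone convergence, which you may wish to spell out since $(F_\tau)$ is not a priori bounded). Since the paper offers no proof to compare against, there is nothing further to reconcile.
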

\begin{corollary}
\label{corr:doob_p1}
If $(F_n)$ is a non-negative submartingale and $F_n\leq{}A$ almost surely then for all $\eta>0$,
\[
\En\brk*{\max_{\tau\leq{}n}F_{\tau}}\leq{}(\log{}A+
\log{}\eta)\cdot{}\En\brk*{F_n} + \frac{1}{\eta}.
\]
\end{corollary}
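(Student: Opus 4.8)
The plan is to prove this from the layer-cake (distribution function) representation of the expectation combined with the submartingale form of Doob's weak-type maximal inequality \pref{eq:doob2}. Write $M := \max_{\tau\leq{}n}F_{\tau}$.

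The first step, which is where the hypothesis enters, is to upgrade the assumption $F_n\leq{}A$ a.s.\ to the pathwise bound $M\leq{}A$ a.s. This follows from the submartingale property: for every $\tau\leq{}n$ we have $F_{\tau}\leq{}\En\brk*{F_n\mid{}\mc{F}_{\tau}}\leq{}A$ almost surely, and maximizing over $\tau$ gives $M\leq{}A$. Next I would write $\En\brk*{M}=\int_{0}^{\infty}\Pr(M>\lambda)\,d\lambda=\int_{0}^{A}\Pr(M>\lambda)\,d\lambda$ and split the integral at $\lambda=1/\eta$ (restricting to the regime $A\eta\geq{}1$, which is all the applications need; the complementary case follows from $\En\brk*{M}\leq{}A$ by an elementary estimate). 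On $[0,1/\eta]$, bound $\Pr(M>\lambda)\leq{}1$, contributing at most $1/\eta$. On $[1/\eta,A]$, apply \pref{eq:doob2} in its submartingale form, $\Pr(M>\lambda)\leq{}\frac{1}{\lambda}\En\brk*{F_n}$, and integrate: $\int_{1/\eta}^{A}\frac{1}{\lambda}\En\brk*{F_n}\,d\lambda=\En\brk*{F_n}\log(A\eta)=\prn*{\log{}A+\log\eta}\En\brk*{F_n}$. Adding the two contributions gives the claim.

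The argument is short, so there is no real obstacle; the point worth emphasizing is the backward propagation of boundedness in the first step. Without a pathwise bound the tail integral $\int\frac{1}{\lambda}\En\brk*{F_n}\,d\lambda$ diverges logarithmically, so $M\leq{}A$ is exactly what supplies the finite cutoff and hence the $\log{}A$ factor. An alternative that postpones the use of $M\leq{}A$ is to invoke the sharper localized inequality $\Pr(M>\lambda)\leq{}\frac{1}{\lambda}\En\brk*{F_n\ind\crl*{M>\lambda}}$, apply Fubini to obtain $\En\brk*{F_n\log^{+}(\eta{}M)}$, and only then bound $M\leq{}A$; this yields the same estimate and would generalize to an $\En\brk*{F_n\log^{+}F_n}$-type bound if needed.
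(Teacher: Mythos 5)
Your proof is correct and takes essentially the same route as the paper: layer-cake representation, truncation of the integral at $A$, split at $1/\eta$, trivial bound on $[0,1/\eta]$, and Doob's weak-type inequality on $[1/\eta,A]$. The one thing you make explicit that the paper leaves implicit is the backward propagation of $F_n\leq{}A$ to $\max_{\tau\leq{}n}F_\tau\leq{}A$ via the submartingale property (the paper just writes $\int_0^\infty=\int_0^A$ without comment); you also correctly flag that the split-at-$1/\eta$ step is only literally valid when $A\eta\geq{}1$, a caveat the paper silently ignores.
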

\begin{proof}[Proof of \pref{corr:doob_p1}]
\begin{align*}
\En\brk*{\max_{\tau\leq{}n}F_{\tau}}&=\int_{0}^{\infty}\Pr\prn*{\max_{\tau\leq{}n}F_{\tau}>\lambda}d\lambda\\
&=\int_{0}^{A}\Pr\prn*{\max_{\tau\leq{}n}F_{\tau}>\lambda}d\lambda\\
&\leq{}\int_{1/\eta}^{A}\Pr\prn*{\max_{\tau\leq{}n}F_{\tau}>\lambda}d\lambda + \frac{1}{\eta}\\
&\leq{}\En\brk*{F_n}\int_{\frac{1}{\eta}}^{A}\frac{1}{\lambda}d\lambda + \frac{1}{\eta}\\
&=\prn*{\log{}A+\log\eta}\cdot{}\En\brk*{F_n} + \frac{1}{\eta}.
\end{align*}
\end{proof}

\subsection{UMD inequalities}

\begin{theorem}[\cite{veraar2015analysis}, Theorem 4.2.7]
\label{thm:umd_pq}
Suppose $(\Bspace, \nrm{\cdot})$ is such that the deterministic UMD inequality 
\[
\En\nrm*{\sum_{t=1}^{n}\eps_tdZ_t}^{p}\leq{} \Cconstt_{p}^p\En\nrm*{\sum_{t=1}^{n}dZ_t}^p
\]
holds for $p\in(1,\infty)$. Then the determinstic UMD inequality
\[
\En\nrm*{\sum_{t=1}^{n}\eps_tdZ_t}^{q}\leq{} \Cconstt_{q}^q\En\nrm*{\sum_{t=1}^{n}dZ_t}^q
\]
holds for any $q\in(1,\infty)$, with
\[
\Cconstt_{q}\leq{} 100\prn*{\frac{q}{p} + \frac{q'}{p'}}\Cconstt_{p}.
\]

\end{theorem}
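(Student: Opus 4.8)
The claim is a transference-of-exponents statement for the martingale transform, with explicit control on the constant, so the plan is to run the classical Calder\'on--Zygmund / Marcinkiewicz extrapolation while making sure the constants come out polynomially in $q$ and linearly in $\Cconstt_p$. Throughout, fix a sign pattern $\er[n]\in\pmo^{n}$ and, for a $\Bspace$-valued martingale $(f_{\tau})=\prn*{\sum_{t\le\tau}dZ_{t}}$, write $Tf$ for its transform $\prn*{\sum_{t\le\tau}\eps_{t}dZ_{t}}$; the hypothesis is $\nrm*{Tf_{n}}_{L^{p}(\Bspace)}\le\Cconstt_{p}\nrm*{f_{n}}_{L^{p}(\Bspace)}$, uniformly in $n$, in the martingale, and in $\er$. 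Since $(\eps_{t}dZ_{t})$ is again a martingale difference sequence the reverse inequality holds with the same constant, so $T$ is a uniformly $L^{p}(\Bspace)$-bounded operator with $T^{2}=I$ on martingales. I will also use the elementary facts that $T$ is its own adjoint for the $\tri*{\cdot,\cdot}$ pairing of $L^{q}(\Bspace)$ with $L^{q'}(\Bspace^{\star})$, and that this turns the $\UMD_p$ inequality for $\Bspace$ into the $\UMD_{p'}$ inequality for $\Bspace^{\star}$ with the same constant.

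The first and main step is to upgrade the strong $(p,p)$ bound to a weak-type $(1,1)$ bound: for every $\lambda>0$,
\[
\Pr\prn*{\sup_{\tau\le n}\nrm*{Tf_{\tau}}>\lambda}\ \le\ \frac{A}{\lambda}\,\En\nrm*{f_{n}},
\]
with $A$ depending only on $\Cconstt_{p}$, ideally \emph{linearly}. This is Gundy's decomposition: split $f=f'+f''+f'''$ at level $\lambda$, where $f'$ has martingale differences of norm $O(\lambda)$, $f''$ is a sum of the ``large'' differences with $\sum_{t}\En\nrm*{df''_{t}}\le O(\En\nrm*{f_{n}})$, and $f'''$ is predictable with $\sum_{t}\En\nrm*{df'''_{t}}\le O(\En\nrm*{f_{n}})$. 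Since $T$ commutes with this decomposition, one estimates $Tf''$ and $Tf'''$ by Doob's maximal inequality \pref{eq:doob2} and Chebyshev directly in $L^{1}$ with absolute constants, while $Tf'$ is handled by the hypothesis together with \pref{prop:stopping_time} and Doob \pref{eq:doob1}. The delicate point — and where I expect the real work to be — is to feed the strong $(p,p)$ bound into the estimate for $Tf'$ so that the resulting constant is \emph{linear} in $\Cconstt_{p}$ rather than $\Cconstt_{p}^{p}$: the naive bound $\Pr\prn*{\nrm*{Tf'_{n}}>\lambda/4}\le 4^{p}\lambda^{-p}\Cconstt_{p}^{p}\En\nrm*{f'_{n}}^{p}$ is too lossy, and one must instead decompose $f'$ further by a dyadic stopping of its accumulated norm and sum a geometric series, so that the hypothesis is invoked only on scales where it produces a genuine linear gain.

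With the weak $(1,1)$ bound in hand, the second step is interpolation. For $q\in(1,p]$, Marcinkiewicz interpolation between the weak $(1,1)$ estimate and the strong $(p,p)$ estimate gives $\nrm*{Tf_{n}}_{L^{q}(\Bspace)}\le c\,(q' + \tfrac{p}{p-q})\,\Cconstt_{p}\nrm*{f_{n}}_{L^{q}(\Bspace)}$; keeping track of the Marcinkiewicz numerical constant (which degrades like $q'=q/(q-1)$ as $q\to1$) and using the strong bound directly when $q$ is near $p$ yields a bound of the form $\lesssim\prn*{\tfrac{q'}{p'}}\Cconstt_{p}$. For $q\in[p,\infty)$ I pass to the dual space: $\Bspace$ being $\UMD_{p}$ with constant $\Cconstt_{p}$ makes $\Bspace^{\star}$ be $\UMD_{p'}$ with the same constant, so the previous paragraph applied to $\Bspace^{\star}$ at the exponent $q'\in(1,p']$ bounds the transform on $L^{q'}(\Bspace^{\star})$ by $\lesssim\prn*{\tfrac{q}{p}}\Cconstt_{p}$, and dualizing via the self-adjointness of $T$ transfers this back to $L^{q}(\Bspace)$.

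Combining the two ranges gives $\nrm*{Tf_{n}}_{L^{q}(\Bspace)}\le 100\prn*{\tfrac{q}{p}+\tfrac{q'}{p'}}\Cconstt_{p}\nrm*{f_{n}}_{L^{q}(\Bspace)}$ for every $q\in(1,\infty)$, uniformly in $\er[n]$, which is exactly the $\UMD_{q}$ inequality with $\Cconstt_{q}\le 100\prn*{\tfrac{q}{p}+\tfrac{q'}{p'}}\Cconstt_{p}$; the absolute constant $100$ absorbs the product of the Calder\'on--Zygmund and Marcinkiewicz numerical factors. As a softer but quantitatively weaker alternative, one could instead invoke Burkholder's geometric characterization \pref{thm:umd_burkholder}: a zig-zag concave function certifying $\UMD_{p}$ with constant $\Cconstt_{p}$ already certifies the \emph{$p$-independent} $\zeta$-convexity of $\Bspace$ with constant $O(\Cconstt_{p})$, and $\zeta$-convexity certifies $\UMD_{q}$ for every $q$; this yields finiteness of $\Cconstt_{q}$ at once but not transparently the explicit dependence above, so I would use the hard-analysis route for the stated bound.
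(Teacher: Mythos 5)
This statement is cited from \cite{veraar2015analysis} (Theorem 4.2.7) and the paper gives no proof of it, so there is no in-paper argument for yours to match; you are reproving a textbook extrapolation theorem. Your skeleton — decompose, get a weak-type endpoint, Marcinkiewicz-interpolate for $q\in(1,p]$, then dualize using \pref{thm:umd_dual} for $q\in[p,\infty)$ — is the classical Burkholder--Gundy scheme, and the duality step is exactly right. The gap is precisely where you flag one, and your proposed repair does not close it. Running the $L^p$ hypothesis through the good part of a Gundy decomposition via Chebyshev yields a weak $(1,1)$ constant of order $\Cconstt_p^{\,p}$, not $\Cconstt_p$; feeding $A_1 \sim \Cconstt_p^{\,p}$ and $A_p = \Cconstt_p$ into Marcinkiewicz gives $A_q \lesssim C(q)\,\Cconstt_p^{\,p/q}$ for $q<p$, which is strictly superlinear in $\Cconstt_p$ and hence not the stated bound. ``Decompose $f'$ by a dyadic stopping of its accumulated norm and sum a geometric series'' is too vague to do the job: applying the $L^p$ hypothesis level by level and summing does not obviously recover a linear dependence, and indeed the geometric series one is tempted to write diverges.

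The mechanism that actually yields a constant \emph{linear} in $\Cconstt_p$ is Burkholder's good-$\lambda$ (distributional) inequality rather than a weak $(1,1)$ estimate obtained from a decomposition. One proves, for $\gamma,\delta>0$ and $\lambda>0$,
\[
\Pr\prn*{\sup_{\tau\le n}\nrm*{Tf_\tau} > (1+\gamma)\lambda,\ \sup_{\tau\le n}\nrm*{f_\tau} \le \delta\lambda}\ \le\ \prn*{\frac{2\delta\Cconstt_p}{\gamma}}^{p}\,\Pr\prn*{\sup_{\tau\le n}\nrm*{Tf_\tau} > \lambda},
\]
by stopping $f$ at the first time $\sup_\tau\nrm*{Tf_\tau}$ exceeds $\lambda$, applying the $L^p$ hypothesis to the shifted martingale (whose running maximum is $O(\delta\lambda)$ on the event $\{\sup_\tau\nrm*{f_\tau}\le\delta\lambda\}$), and finishing with \pref{eq:doob2}. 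The crucial structural difference from your route is that the right-hand side is a small multiple of $\Pr(\sup_\tau\nrm*{Tf_\tau}>\lambda)$ rather than of $\lambda^{-1}\En\nrm*{f_n}$: this lets you choose $\delta\sim\gamma/\Cconstt_p$ so that $(1+\gamma)^q(2\delta\Cconstt_p/\gamma)^p<1$, and integrating the distributional inequality then gives $\nrm*{\sup_\tau\nrm*{Tf_\tau}}_{L^q}\lesssim\delta^{-1}\nrm*{\sup_\tau\nrm*{f_\tau}}_{L^q}$ directly, with no Marcinkiewicz step and no endpoint loss. Tuning $\gamma$ at scale $p/q$ and using Doob to pass from $\sup_\tau\nrm*{f_\tau}$ back to $\nrm*{f_n}$ produces $\Cconstt_q\lesssim\tfrac{q'}{p'}\Cconstt_p$ for $q\le p$; combined with your duality argument this gives the stated $\Cconstt_q\le100(\tfrac{q}{p}+\tfrac{q'}{p'})\Cconstt_p$. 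In short: replace ``Gundy $\to$ weak $(1,1)$ $\to$ Marcinkiewicz'' by the good-$\lambda$ iteration, which is what the cited reference does; the Gundy decomposition alone pays a power of $\Cconstt_p$ that interpolation cannot recover.
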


\begin{theorem}[\cite{pisier2011martingales}, Theorem 8.23]
\label{thm:umd_L1}
Suppose that the deterministic UMD inequality
\[
\sup_{n}\En\nrm*{\sum_{t=1}^{n}\eps_tdZ_t}^2\leq{} \Cconstt_{2}^{2}\sup_{n}\En\nrm*{\sum_{t=1}^{n}dZ_t}^2
\]
holds for any sign sequence. Then the $L_{1}$ UMD inequality
\[
\En\sup_{n}\nrm*{\sum_{t=1}^{n}\eps_tdZ_t}\leq{} 54\Cconstt_2\En\sup_{n}\nrm*{\sum_{t=1}^{n}dZ_t}
\]
holds as well.
\end{theorem}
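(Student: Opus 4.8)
The plan is to derive the $L_1$--maximal UMD inequality from the deterministic $L_2$ one by combining the \emph{Davis decomposition} of the martingale with a Burkholder-type \emph{good-$\lambda$ inequality} for the sign transform; the $L_2$--UMD hypothesis enters only in the latter step. Fix a $\Bspace$-valued martingale $(Z_t)$ with difference sequence $(dZ_t)$ and a sign sequence $(\eps_t)$, and write $(\eps\cdot Z)_k=\sum_{t\le k}\eps_t dZ_t$. Because the $\eps_t$ are deterministic, $(\eps\cdot Z)_k$ is again a martingale. By monotone convergence it suffices to bound $\En\max_{k\le N}\nrm{(\eps\cdot Z)_k}$ for each finite $N$, and after truncating the martingale at a large level (by stopping, which preserves the martingale property) we may assume everything below is finite; write $Z^\ast=\max_{k\le N}\nrm{Z_k}$, and note $\nrm{dZ_t}\le 2Z^\ast$.

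First I would apply the Davis decomposition: $dZ_t=dG_t+dH_t$, where $(dG_t)$ and $(dH_t)$ are martingale difference sequences for the same filtration, $G$ has predictably bounded increments $\nrm{dG_t}\le 2\,(\nrm{dZ})^\ast_{t-1}$ with $(\nrm{dZ})^\ast_{t-1}:=\max_{s<t}\nrm{dZ_s}$, and $H$ has $L_1$-summable increments, $\En\sum_t\nrm{dH_t}\le c\,\En(\nrm{dZ})^\ast_N\le 2c\,\En Z^\ast$ for an absolute constant $c$. The $H$-part is then trivial: by the triangle inequality (signs do nothing to it), $\max_{k\le N}\nrm{\sum_{t\le k}\eps_t dH_t}\le\sum_t\nrm{dH_t}$, contributing at most $2c\,\En Z^\ast$. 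Since $G^\ast:=\max_k\nrm{G_k}\le Z^\ast+\sum_t\nrm{dH_t}$ and $\max_t\nrm{dG_t}\le 2(\nrm{dZ})^\ast_N\le 4Z^\ast$, it remains to show $\En\max_k\nrm{(\eps\cdot G)_k}=O\prn*{\Cconstt_2\En Z^\ast}$.

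The key is a good-$\lambda$ inequality for $G$: writing $M:=G^\ast\vee 2(\nrm{dZ})^\ast_N$ (which dominates every increment of $G$), there is an absolute constant $\kappa$ such that for all $\lambda>0$ and $0<\delta\le\tfrac12$,
\[
\Pr\bigl(\max_k\nrm{(\eps\cdot G)_k}>2\lambda,\ M\le\delta\lambda\bigr)\le \kappa\,\Cconstt_2^{2}\,\delta^{2}\,\Pr\bigl(\max_k\nrm{(\eps\cdot G)_k}>\lambda\bigr).
\]
To prove it, set $\sigma=\inf\{k:\nrm{(\eps\cdot G)_k}>\lambda\}$ and $\tau=\inf\{k:\nrm{(\eps\cdot G)_k}>2\lambda\ \text{or}\ \nrm{G_k}>\delta\lambda\ \text{or}\ 2(\nrm{dZ})^\ast_k>\delta\lambda\}\wedge N$; these are stopping times precisely because the predictable bound on $dG$ lets us stop on the size of the \emph{next} increment. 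Consider the stopped-started martingale $G^{\sigma:\tau}$ in the sense of \pref{prop:stopping_time}; its sign transform is $(\eps\cdot G)^{\sigma:\tau}$. On the event above, the stopping rule forces $\nrm{(\eps\cdot G)^{\sigma:\tau}_N}\ge\lambda/2$, while $\nrm{G^{\sigma:\tau}_N}\le 4\delta\lambda$ holds pointwise and $G^{\sigma:\tau}_N$ vanishes on $\{\max_k\nrm{(\eps\cdot G)_k}\le\lambda\}$; so Chebyshev followed by the deterministic $L_2$--UMD inequality applied to $G^{\sigma:\tau}$ yields the display. Integrating over $\lambda>0$ in the standard way, with $\delta$ chosen of order $\Cconstt_2^{-1}$ so that $2\kappa\Cconstt_2^2\delta^2<1$, gives $\En\max_k\nrm{(\eps\cdot G)_k}=O\prn*{\Cconstt_2\En M}=O\prn*{\Cconstt_2\En Z^\ast}$, and adding the $H$-contribution completes the bound.

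The main obstacle is the good-$\lambda$ step. One must arrange the stopping times so that they remain adapted --- which is exactly why one works with $G$ rather than with $Z$ itself, since $dZ_{t+1}$ is not $\mc{F}_t$-measurable --- and so that the $L_2$--UMD constant appears only to the first power in the final estimate; this works because the good-$\lambda$ factor is $O\prn*{\Cconstt_2^2\delta^2}$, costing just one power of $\Cconstt_2$ once $\delta$ is tuned. Extracting the precise numerical constant $54$ then requires replacing the crude Chebyshev step by a sharper weak-type argument; for that one may invoke \cite{pisier2011martingales}, Theorem 8.23 directly, of which the above is the skeleton.
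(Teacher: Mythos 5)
The paper does not prove this statement --- it is quoted directly from \cite{pisier2011martingales}, Theorem 8.23 --- so there is no internal argument to compare against. Your sketch (Davis decomposition into a part $G$ with predictably bounded jumps plus an $L_1$-summable part $H$, then a Burkholder good-$\lambda$ inequality for the sign transform of $G$ in which the fixed-sign $L_2$ hypothesis is applied to a stopped-started martingale) is the standard route from a fixed-sign $L_p$ transform inequality to an $L_1$ maximal one, and it is sound in outline, modulo the explicit constant which you rightly disclaim. A few points worth tightening if you fleshed this out: as you define them, $\sigma$ and $\tau$ are adapted stopping times with no appeal to predictability needed (each event in the infimum is $\mc{F}_k$-measurable); what predictability of $\nrm{dG_t}\le 4(\nrm{dZ})^\ast_{t-1}$ (the usual Davis constant is $4$, not $2$) actually buys you is the pointwise overshoot bound $\nrm{G^{\sigma:\tau}_N}\le 4\delta\lambda$ on $\{\sigma\le\tau\}$, combined with stopping on $2(\nrm{dZ})^\ast_k>\delta\lambda$, and it is this bound that makes the Chebyshev plus $L_2$--UMD step close. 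You should also note explicitly that $G^{\sigma:\tau}\equiv 0$ when $\sigma>\tau$, and that integrating the good-$\lambda$ inequality needs $\En\max_k\nrm{(\eps\cdot G)_k}<\infty$, which is exactly what your opening truncation step is there to supply.
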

\begin{corollary}
\label{corr:umd_L1}
If deterministic UMD inequality
\[
\En\nrm*{\sum_{t=1}^{n}\eps_tdZ_t}^{2}\leq{} \Cconstt_2^2\En\nrm*{\sum_{t=1}^{n}dZ_t}^2
\]
holds for any sign sequence, then the $L_{1}$ UMD inequality
\[
\En\sup_{n}\nrm*{\sum_{t=1}^{n}\eps_tdZ_t}\leq{} 108\Cconstt_2\En\sup_{n}\nrm*{\sum_{t=1}^{n}dZ_t}
\]
holds as well.
\end{corollary}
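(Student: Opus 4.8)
The plan is to deduce Corollary~\ref{corr:umd_L1} directly from Theorem~\ref{thm:umd_L1} by showing that the hypothesis of the corollary --- the deterministic $\UMD_2$ inequality with the $\En$ on the inside --- implies the hypothesis of the theorem --- the same inequality with $\sup_n$ on the inside. In other words, the only gap between the corollary and the theorem is the position of the supremum over horizons, and we close it at the cost of a factor of $2$, turning the constant $54$ into $108$.

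Concretely, first I would note that both hypotheses range over all finite martingale difference sequences and all fixed sign patterns, so it suffices to fix a sequence $(dZ_t)_{t=1}^{N}$ and a sign pattern $(\eps_t)_{t=1}^{N}$ and bound $\sup_{n\leq N}\En\nrm*{\sum_{t=1}^{n}\eps_t dZ_t}^2$ by $\Cconstt_2^2\,\sup_{n\leq N}\En\nrm*{\sum_{t=1}^{n}dZ_t}^2$. For each fixed $n$, the truncated sequence $(dZ_t)_{t=1}^{n}$ is itself a martingale difference sequence, so the corollary's hypothesis gives $\En\nrm*{\sum_{t=1}^{n}\eps_t dZ_t}^2 \leq \Cconstt_2^2\,\En\nrm*{\sum_{t=1}^{n}dZ_t}^2 \leq \Cconstt_2^2\,\sup_{m}\En\nrm*{\sum_{t=1}^{m}dZ_t}^2$. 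Taking $\sup$ over $n$ on the left gives exactly the premise of Theorem~\ref{thm:umd_L1}. Applying that theorem then yields $\En\sup_n\nrm*{\sum_{t=1}^{n}\eps_t dZ_t}\leq 54\,\Cconstt_2\,\En\sup_n\nrm*{\sum_{t=1}^{n}dZ_t}$, which is already stronger than the claimed bound with constant $108$; so in fact the factor of $2$ slack is not even needed and the corollary follows with constant $54$. I would state it with $108$ to match the paper (perhaps the extra factor is there to absorb a standard truncation/limiting argument when passing from finite $N$ to $\sup$ over all $n$, or simply to be safe), but the mechanism is the one just described.

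There is essentially no obstacle here --- this is a bookkeeping corollary. The only point requiring a word of care is the interchange of the $\sup_n$ and the hypothesis: one must observe that restricting a martingale difference sequence to an initial segment is again a martingale difference sequence (a fact already invoked in the proof of Theorem~\ref{thm:umd_constants}), which is what legitimizes applying the inside-$\En$ inequality for each fixed $n$ before taking the supremum. If one wants to be scrupulous about the infinite-horizon supremum in the conclusion of Theorem~\ref{thm:umd_L1}, a monotone convergence / Fatou argument on $\En\sup_{n\leq N}(\cdots)$ as $N\to\infty$ closes the gap, and this is where the harmless extra factor of $2$ can be spent if desired.
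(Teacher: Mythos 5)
Your proof is correct, and in fact sharper than the statement: the reduction from the pointwise-in-$n$ hypothesis to the $\sup_n$ hypothesis of Theorem~\ref{thm:umd_L1} is exactly right (the truncated sequence is again a martingale difference sequence, so the inside-$\En$ inequality holds for each fixed $n$, and taking suprema gives the premise of the theorem verbatim), and this yields the conclusion with constant $54\Cconstt_2$ rather than the stated $108\Cconstt_2$. The paper gives no explicit proof of the corollary --- it is stated directly after the cited Pisier theorem --- and this is evidently the intended route, so you have reconstructed it. Your remark that no extra factor is actually incurred is accurate: the monotone-convergence step for passing from finite $N$ to the full supremum is exact and costs nothing, so the $108$ in the paper is simply slack (or a conservative transcription); it does not reflect a genuine loss in the argument.
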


\begin{theorem}[\cite{veraar2015analysis}, Proposition 4.2.17]
\label{thm:umd_dual}
If $(\Bspace, \nrm{\cdot})$ is $\UMD_p$ with constant $\Cconstt_p$, then $(\Bspace^{\star}, \nrm{\cdot}_{\star})$ is $\UMD_{p'}$ with constant $\Cconstt_{p'}=\Cconstt_{p}$.
\end{theorem}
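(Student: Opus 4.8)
The plan is the classical duality argument: realize the $\UMD_{p'}$ inequality for $\Bspace^{\star}$ by testing against $\Bspace$-valued martingales and then invoking the $\UMD_{p}$ inequality for $\Bspace$ (\pref{def:umd}). Fix a $\Bspace^{\star}$-valued martingale difference sequence $(Y_t)_{t=1}^{n}$, adapted to a filtration $(\mathcal{F}_t)_{t=0}^{n}$ that we may take with $\mathcal{F}_0$ trivial, together with a fixed sign vector $(\eta_t)_{t=1}^{n}\in\pmo^{n}$. We must show $\nrm*{\sum_{t=1}^{n}\eta_tY_t}_{L_{p'}(\Bspace^{\star})}\leq\Cconstt_p\nrm*{\sum_{t=1}^{n}Y_t}_{L_{p'}(\Bspace^{\star})}$. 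By the pointwise duality $\nrm{w^{\star}}_{\star}=\sup_{\nrm{w}\leq1}\tri*{w^{\star},w}$, Hölder's inequality for the scalar spaces $L_p,L_{p'}$, and a measurable selection (using separability of $\Bspace$),
\[
\nrm*{\sum_{t=1}^{n}\eta_tY_t}_{L_{p'}(\Bspace^{\star})}=\sup\crl*{\En\tri*{\sum_{t=1}^{n}\eta_tY_t,\,Z}\ :\ Z\in L_p(\Bspace),\ \nrm{Z}_{L_p(\Bspace)}\leq1},
\]
and since $\sum_t\eta_tY_t$ is $\mathcal{F}_n$-measurable we may restrict to $\mathcal{F}_n$-measurable $Z$.

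First I would fix such a $Z$ and introduce the $\Bspace$-valued martingale $M_t=\En\brk*{Z\mid\mathcal{F}_t}$, so that $M_n=Z$ and $M_0=\En Z$ is a deterministic vector. The key computation exploits the orthogonality of martingale increments under the bilinear pairing: $\En\tri*{Y_t,dM_s}=0$ whenever $s\neq t$ (condition on $\mathcal{F}_{\max\crl{s,t}-1}$, which annihilates one of the two factors) and $\En\tri*{Y_t,M_0}=0$ (as $\En Y_t=0$ and $M_0$ is deterministic), whence
\[
\En\tri*{\sum_{t=1}^{n}\eta_tY_t,\,Z}=\sum_{t=1}^{n}\eta_t\,\En\tri*{Y_t,dM_t}=\En\tri*{\sum_{t=1}^{n}Y_t,\ \widehat M_n},\qquad \widehat M_n:=M_0+\sum_{s=1}^{n}\eta_s\,dM_s,
\]
where $\widehat M$ is the martingale transform of $M$ by the signs $\eta$. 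Hölder's inequality then gives $\En\tri*{\sum_t\eta_tY_t,Z}\leq\nrm*{\sum_tY_t}_{L_{p'}(\Bspace^{\star})}\cdot\nrm*{\widehat M_n}_{L_p(\Bspace)}$, and the $\UMD_{p}$ property of $\Bspace$ applied to the transform $M\mapsto\widehat M$ yields $\nrm*{\widehat M_n}_{L_p(\Bspace)}\leq\Cconstt_p\nrm*{M_n}_{L_p(\Bspace)}=\Cconstt_p\nrm{Z}_{L_p(\Bspace)}\leq\Cconstt_p$. Taking the supremum over $Z$ shows $\Bspace^{\star}$ is $\UMD_{p'}$ with constant at most $\Cconstt_p$; applying this again to $\Bspace^{\star}$, whose conjugate exponent is $p$, together with the isometric embedding $\Bspace\hookrightarrow\Bspace^{\star\star}$ and reflexivity of UMD spaces, forces equality of the best constants.

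I expect the main obstacle to be a bookkeeping subtlety in the final step. \pref{def:umd} states the $\UMD_{p}$ inequality for martingale \emph{difference} sequences, i.e. for transforms of martingales started at $0$, whereas the argument above needs it for the transform of $M$, whose initial value $M_0$ is in general nonzero. These two formulations hold with the \emph{same} constant --- this is standard (essentially Burkholder's): conditioning on $\mathcal{F}_0$ reduces it to checking that the $\UMD_p$ constant is unchanged by an additive deterministic shift of the martingale. Using the difference form verbatim instead would only give $\nrm*{\widehat M_n-M_0}_{L_p}\leq\Cconstt_p\nrm*{M_n-M_0}_{L_p}\leq2\Cconstt_p\nrm{Z}_{L_p}$, hence the weaker bound $\Cconstt_{p'}\leq2\Cconstt_p$; removing this factor of $2$ to obtain the sharp constant asserted in the theorem is exactly where the initial-value form of $\UMD_p$ enters. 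The remaining ingredients --- the duality identity for $\nrm{\cdot}_{L_{p'}(\Bspace^{\star})}$ and the vanishing of the off-diagonal terms --- are routine.
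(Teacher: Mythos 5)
The paper states this result as a citation to \cite{veraar2015analysis}, Proposition 4.2.17, and does not reproduce a proof, so there is no in-paper argument to compare against; what you have written is the standard duality argument used in that reference, and it is correct. The one place you invoke a ``standard'' fact without spelling it out --- that the difference-sequence form of Definition~\ref{def:umd} implies the martingale-transform form (with a possibly nonzero deterministic initial value $M_0$) at the \emph{same} constant --- is indeed true, and can be made precise by the usual randomization device: enlarge the probability space by an independent Rademacher variable $\sigma$, observe that $(\sigma M_0,\,\sigma\,dM_1,\ldots,\sigma\,dM_n)$ is a bona fide martingale difference sequence with respect to the enlarged filtration, apply \eqref{eq:umdp} with signs $(+1,\eta_1,\ldots,\eta_n)$, and note that $\sigma$ factors out of both $L_p$ norms by homogeneity. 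With that gap closed, your argument is complete; the reverse inequality via the isometric embedding $\Bspace\hookrightarrow\Bspace^{\star\star}$ (reflexivity is not even needed for that direction) correctly upgrades $\Cconstt_{p'}\leq\Cconstt_p$ to equality of the best constants.
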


\subsection{Concentration for Rademacher complexity}
\begin{lemma}[\cite{bartlett2005local}, Theorem A.2]
\label{lem:concentration_rademacher}
With probability at least $1-\delta$ over the draw of $\eps$,
\begin{align*}
\nrm*{\sum_{t=a}^{b}\eps_ty_t} &\leq{} \En_{\eps}\nrm*{\sum_{t=a}^{b}\eps_ty_t} + \sqrt{\En_{\eps}\nrm*{\sum_{t=a}^{b}\eps_ty_t}\cdot{}2\max_{t\in\brk{n}}\nrm{y_t}\log(1/\delta)} + \frac{\max_{t\in\brk{n}}\nrm{y_t}\log(1/\delta)}{3}\\
&\leq{} 2\En_{\eps}\nrm*{\sum_{t=a}^{b}\eps_ty_t} + \max_{t\in\brk{n}}\nrm{y_t}\log(1/\delta).
\end{align*}
\end{lemma}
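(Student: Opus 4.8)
The final statement, \pref{lem:concentration_rademacher}, is a restatement of Talagrand's concentration inequality for the supremum of a centered, bounded empirical process (in the form due to Bousquet), specialized to a Rademacher sum, and I would prove it along exactly those lines. Write $Z := \nrm*{\sum_{t=a}^{b}\eps_t y_t}$ and $R := \max_{t\in\brk{n}}\nrm{y_t}$. The first step is to exhibit $Z$ as such a supremum: since $\Bspace$ is separable I can choose a countable set $\mc{W}\subseteq\crl*{w\in\Bspace^{\star}\mid\nrm{w}_{\star}\le 1}$ that norms the (countably many) vectors in the rational linear span of $\crl{y_a,\ldots,y_b}$, so that $Z=\sup_{w\in\mc{W}}\sum_{t=a}^{b}\tri*{w,\eps_t y_t}$ almost surely. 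The random vectors $\xi_t := \eps_t y_t$ for $t=a,\ldots,b$ are independent and symmetric, so each summand $\tri*{w,\xi_t}$ has mean zero and satisfies $\abs{\tri*{w,\xi_t}}\le\nrm{w}_{\star}\nrm{y_t}\le R$; i.e.\ $R$ is an envelope for the process.

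Next I would apply Bousquet's version of Talagrand's inequality to $Z=\sup_{w\in\mc{W}}\sum_{t=a}^{b}\tri*{w,\xi_t}$: with probability at least $1-\delta$,
\[
Z \le \En_{\eps}Z + \sqrt{2v\log(1/\delta)} + \frac{R\log(1/\delta)}{3}, \qquad v = \sup_{w\in\mc{W}}\sum_{t=a}^{b}\En\tri*{w,\xi_t}^{2} + 2R\,\En_{\eps}Z .
\]
The third step is to control the weak-variance term: for every $w$ one has $\tri*{w,y_t}^{2}\le R\,\abs{\tri*{w,y_t}}$, and a symmetrization/comparison argument bounds $\sup_{w}\sum_{t}\abs{\tri*{w,y_t}}$ by an absolute constant times $\En_{\eps}Z$, giving $v \le c\,R\,\En_{\eps}Z$. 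Substituting this bound into the previous display and absorbing $c$ into constants yields the first (Bernstein-type) inequality of the lemma. For the second, coarser inequality I would then apply AM--GM, $\sqrt{2v\log(1/\delta)}\le \tfrac12\En_{\eps}Z + R\log(1/\delta)$, and collect terms.

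The only genuinely delicate step is the third one: bounding the variance proxy $v$ by $R\,\En_{\eps}Z$ rather than by the much larger $\sum_{t}\nrm{y_t}^{2}$. This is precisely the feature that distinguishes Talagrand's inequality from the elementary bounded-differences (McDiarmid) bound, which would only produce a sub-Gaussian tail with variance proxy $\sum_{t}\nrm{y_t}^{2}$ --- far too weak here --- and it is the reason the proof must route through the Talagrand/Bousquet machinery (equivalently, through the weak self-bounding property of $Z/(2R)$ and the associated Bernstein-type inequality of Boucheron--Lugosi--Massart). Every other step is routine bookkeeping, and in a write-up one would simply cite \citep{bartlett2005local}, Theorem A.2, for the packaged statement.
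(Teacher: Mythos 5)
The paper does not prove this lemma; it simply cites \citep{bartlett2005local}, Theorem A.2, so the only thing to check here is whether your re-derivation via Bousquet's inequality is sound. Your first two steps are fine --- realize $Z=\nrm{\sum_{t=a}^b\eps_t y_t}$ as a countable supremum $\sup_{w\in\mc{W}}\sum_t\tri{w,\eps_t y_t}$, note the summands are independent, mean zero, and bounded by $R=\max_t\nrm{y_t}$, and invoke Bousquet's form of Talagrand's inequality with $v=\sup_w\sum_t\En\tri{w,\eps_t y_t}^2 + 2R\,\En_\eps Z$. The problem is your ``third step,'' the claim that a symmetrization argument gives $\sup_w\sum_t\abs{\tri{w,y_t}}\leq c\,\En_\eps Z$ for an absolute constant $c$, and hence $v\lesssim R\,\En_\eps Z$.

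That claim is false. Take $\Bspace=(\R,\abs{\cdot})$, $a=1$, $b=n$, and $y_t=1$ for all $t$. Then $\mc{W}=[-1,1]$, so $\sup_w\sum_t\abs{\tri{w,y_t}}=\sup_{\abs{w}\leq 1}\sum_t\abs{w}=n$, whereas $\En_\eps Z=\En\abs{\sum_{t=1}^n\eps_t}\asymp\sqrt{n}$. The quantities differ by a factor of $\sqrt{n}$, so no absolute constant $c$ works, and the weak-variance term $\sup_w\sum_t\tri{w,y_t}^2$ (which in this example equals $n$) genuinely survives in $v$. Concretely, Bousquet's inequality as you have set it up only yields
\[
Z \leq \En_\eps Z + \sqrt{2\log(1/\delta)\Bigl(\sup_{w\in\mc{W}}\sum_{t=a}^b\tri{w,y_t}^2 + 2R\,\En_\eps Z\Bigr)} + \tfrac{R\log(1/\delta)}{3},
\]
which is not the stated lemma. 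To make the argument rigorous you would either have to cite a version of the inequality in which the variance proxy is genuinely $R\,\En_\eps Z$ (and verify that such a statement applies to Rademacher sums of this form), or else keep the $\sup_w\sum_t\tri{w,y_t}^2$ term and check that it suffices for the downstream applications (Lemma~\ref{lem:concentration_maximal} and Corollary~\ref{corr:concentration_maximal_exp}), which it does not obviously do. As written, the ``only genuinely delicate step'' you flagged is in fact the step that fails.

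Your closing remark --- that in a write-up one would just cite \citep{bartlett2005local} --- happens to match exactly what the paper does, so there is no conflict with the paper's approach. But the sketch you gave does not constitute a correct proof of the lemma.
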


\begin{lemma}
\label{lem:concentration_maximal}
For any fixed sequence $y_1,\ldots,y_n$, with probability at least $1-\delta$ over the draw of $\eps$,
\[
\sup_{1\leq{}a\leq{}b\leq{}n}\nrm*{\sum_{t=a}^{b}\eps_ty_t} \leq{} 4\En_{\eps}\nrm*{\sum_{t=1}^{n}\eps_ty_t} + 2\max_{t\in\brk{n}}\nrm{y_t}\log(n/\delta).
\]
\end{lemma}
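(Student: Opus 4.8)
\textbf{Proof proposal for \pref{lem:concentration_maximal}.}
The plan is to reduce the two-sided supremum over sub-intervals to a one-sided supremum over prefixes, and then to control the prefix sums by a union bound over the $n$ endpoints using the pointwise concentration bound \pref{lem:concentration_rademacher}. Write $S_b = \sum_{t=1}^{b}\eps_t y_t$ for $b\in\crl{0,1,\ldots,n}$ (so $S_0 = 0$). Since $\sum_{t=a}^{b}\eps_t y_t = S_b - S_{a-1}$, the triangle inequality gives, deterministically,
\[
\sup_{1\leq a\leq b\leq n}\nrm*{\sum_{t=a}^{b}\eps_t y_t} \leq 2\max_{0\leq b\leq n}\nrm*{S_b} = 2\max_{1\leq b\leq n}\nrm*{S_b}.
\]
So it suffices to show that with probability at least $1-\delta$, $\max_{b\leq n}\nrm{S_b} \leq 2\En_\eps\nrm{S_n} + \max_{t\in\brk{n}}\nrm{y_t}\log(n/\delta)$.

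First I would record the monotonicity $\En_\eps\nrm{S_b}\leq \En_\eps\nrm{S_n}$ for every $b\leq n$: the sequence $(S_b)_{b\leq n}$ is a martingale with respect to the dyadic filtration, so $\nrm{S_b} = \nrm{\En_\eps[S_n\mid \mathcal F_b]} \leq \En_\eps[\nrm{S_n}\mid\mathcal F_b]$ by Jensen (the conditional expectation is an $L_1$-contraction), and taking expectations gives the claim; equivalently, $(\nrm{S_b})$ is a nonnegative submartingale. Next, fix $b\in\brk n$ and apply \pref{lem:concentration_rademacher} to the interval $[1,b]$ with confidence parameter $\delta/n$: with probability at least $1-\delta/n$,
\[
\nrm*{S_b} \leq 2\En_\eps\nrm*{\textstyle\sum_{t=1}^{b}\eps_t y_t} + \max_{t\in\brk n}\nrm{y_t}\log(n/\delta) \leq 2\En_\eps\nrm*{S_n} + \max_{t\in\brk n}\nrm{y_t}\log(n/\delta),
\]
where the second step uses the monotonicity just established. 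A union bound over $b\in\brk n$ shows this inequality holds simultaneously for all $b$ with probability at least $1-\delta$. On that event, combining with the first display yields
\[
\sup_{1\leq a\leq b\leq n}\nrm*{\sum_{t=a}^{b}\eps_t y_t} \leq 2\prn*{2\En_\eps\nrm*{S_n} + \max_{t\in\brk n}\nrm{y_t}\log(n/\delta)} = 4\En_\eps\nrm*{\sum_{t=1}^{n}\eps_t y_t} + 2\max_{t\in\brk n}\nrm{y_t}\log(n/\delta),
\]
which is the claimed bound.

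There is no serious obstacle here; the only point requiring a moment's care is the monotonicity $\En_\eps\nrm{S_b}\leq\En_\eps\nrm{S_n}$, which is what lets us replace the $b$-dependent Rademacher complexity coming out of \pref{lem:concentration_rademacher} by the single target quantity $\En_\eps\nrm{S_n}$ before taking the union bound. (Alternatively one could union bound \pref{lem:concentration_rademacher} directly over all $O(n^2)$ pairs $(a,b)$ and bound $\En_\eps\nrm{S_b - S_{a-1}}\leq 2\En_\eps\nrm{S_n}$, at the cost of a slightly larger logarithmic factor; or invoke Doob's maximal inequality for the submartingale $(\nrm{S_b})$, but that gives only a weak-type tail and not the clean $\log(n/\delta)$ dependence, so the union-bound route is preferable.)
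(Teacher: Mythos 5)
Your proof is correct, and it takes a somewhat different route from the paper's. The paper applies \pref{lem:concentration_rademacher} to each of the fewer-than-$n^2$ pairs $(a,b)$ at confidence $\delta/n^2$, giving $\sup_{a,b}\nrm{\sum_{t=a}^b\eps_ty_t}\leq 2\sup_{a,b}\En_\eps\nrm{\sum_{t=a}^b\eps_ty_t}+2\max_t\nrm{y_t}\log(n/\delta)$ (using $\log(n^2/\delta)\leq 2\log(n/\delta)$), and then invokes \pref{prop:stopping_time} (the $L_1$ stopping-time inequality $\En\nrm{Z_n^{\tau_1:\tau_2}}\leq 2\En\nrm{Z_n}$) to replace the inner supremum of expectations by $2\En_\eps\nrm{\sum_{t=1}^n\eps_ty_t}$. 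You instead first reduce to prefix sums deterministically (paying a factor of $2$ via the triangle inequality), union-bound over only $n$ prefixes, and use the submartingale/Jensen monotonicity $\En_\eps\nrm{S_b}\leq\En_\eps\nrm{S_n}$ in place of \pref{prop:stopping_time}. Both arrive at exactly the stated constants; yours is a touch more elementary in that it avoids the stopping-time proposition. One small inaccuracy in your closing parenthetical: union-bounding directly over $O(n^2)$ pairs does \emph{not} incur a larger logarithmic factor — one gets $\log(n^2/\delta)=2\log n+\log(1/\delta)\leq 2\log(n/\delta)$, which is if anything slightly tighter than the $2\log(n/\delta)$ your prefix reduction produces — so that route (the paper's) is equally clean on the constants.
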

\begin{corollary}
\label{corr:concentration_maximal_exp}
\[
\En_{\eps}\sup_{1\leq{}a\leq{}b\leq{}n}\nrm*{\sum_{t=a}^{b}\eps_ty_t} \leq{} 4\En_{\eps}\nrm*{\sum_{t=1}^{n}\eps_ty_t} + 5\max_{t\in\brk{n}}\nrm{y_t}\log(n).
\]
\end{corollary}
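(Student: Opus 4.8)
The plan is to obtain the expectation bound by integrating the high-probability tail bound supplied by Lemma~\ref{lem:concentration_maximal}. Abbreviate $R = \sup_{1\leq a\leq b\leq n}\nrm*{\sum_{t=a}^{b}\eps_ty_t}$, $A = \En_{\eps}\nrm*{\sum_{t=1}^{n}\eps_ty_t}$, and $L = \max_{t\in\brk{n}}\nrm{y_t}$; note $R\leq nL<\infty$, so every integral below is finite. Lemma~\ref{lem:concentration_maximal} asserts that $\Pr\prn*{R > 4A + 2L\log(n/\delta)}\leq\delta$ for every $\delta\in(0,1)$.

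First I would rewrite this as a direct bound on $\Pr(R>u)$. Setting $\delta = n\exp(-(u-4A)/(2L))$, which lies in $(0,1]$ exactly when $u \geq u_0 := 4A + 2L\log n$, the lemma gives $\Pr(R>u)\leq n\exp(-(u-4A)/(2L))$ for all $u\geq u_0$.

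Next I would apply the layer-cake identity $\En R = \int_0^\infty \Pr(R>u)\,du$, split the integral at $u_0$, bound the contribution of $[0,u_0]$ by $u_0$ using the trivial estimate $\Pr(R>u)\leq1$, and evaluate the tail: the substitution $w = (u-4A)/(2L)$ turns $\int_{u_0}^{\infty} n\exp(-(u-4A)/(2L))\,du$ into $2Ln\int_{\log n}^{\infty} e^{-w}\,dw = 2L$. This yields $\En R \leq u_0 + 2L = 4A + 2L\log n + 2L$.

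Finally I would absorb the additive $2L$ into the logarithmic term: for $n\geq 2$ one has $2L \leq (2/\log 2)\,L\log n \leq 3L\log n$, so $\En R \leq 4A + 5L\log n$, which is the claim; and the case $n=1$ is immediate since then $R = \nrm{y_1} = A$ while $\log n = 0$. There is no genuine obstacle here---the whole content is this tail integration---and the only point requiring care is keeping $\delta$ inside its admissible range $(0,1)$, which is exactly why the integral is split at $u_0$ rather than at $0$.
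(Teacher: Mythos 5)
Your proof is correct and is exactly the argument the paper intends (it leaves the corollary without proof, but the layer-cake integration of the tail bound from Lemma~\ref{lem:concentration_maximal} is the only natural route, and mirrors the proof of Corollary~\ref{corr:doob_p1} nearby). The change of variables, the split at $u_0 = 4A + 2L\log n$ to keep $\delta\le 1$, the evaluation of the tail integral to $2L$, and the absorption of $2L$ into $3L\log n$ for $n\ge 2$ with the $n=1$ case handled separately are all sound.
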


\begin{proof}[Proof of \pref{lem:concentration_maximal}]
Consider $Z=\nrm*{\sum_{t=a}^{b}\eps_{t}y_t}$ for fixed $a,b$ and a fixed sequence $y_1,\ldots,y_n$. 
Applying \pref{lem:concentration_rademacher}  and taking a union bound over all possible pairs $(a,b)$, of which there are strictly less than $n^{2}$, we have that with probability at least $1-\delta$,
\begin{align*}
\sup_{1\leq{}a\leq{}b\leq{}n}\nrm*{\sum_{t=a}^{b}\eps_ty_t}
&\leq{} 2\sup_{1\leq{}a\leq{}b\leq{}n}\En_{\eps}\nrm*{\sum_{t=a}^{b}\eps_ty_t} + 2\max_{t\in\brk{n}}\nrm{y_t}\log(n/\delta).
\intertext{By \pref{prop:stopping_time}:}
&\leq{} 4\En_{\eps}\nrm*{\sum_{t=1}^{n}\eps_ty_t} + 2\max_{t\in\brk{n}}\nrm{y_t}\log(n/\delta).
\end{align*}
\end{proof}

\section{Burkholder/Bellman functions}
\label{app:burkholder}

\subsection{Elementary design of $\burk$ functions}
The following construction for the scalar case does not obtain optimal constants, but should give the reader a taste of how one can construct a $\burk$ function from first principles.
\begin{theorem}[Elementary Scalar $\burk$ Function]
\label{thm:burkholder_elementary}
Let $k\geq{}4$ be an even integer. Then the function 
\[
\burk(x,y) = \frac{k}{2}\prn*{x^{k} - 2\binom{k}{2}x^{k-2}y^{k} - \frac{1}{k-2}\binom{k}{2}^{-1}\prn*{4\binom{k}{2}\binom{k-2}{2}}^{k-2}y^{k}}.
\]
is Burkholder for $\abs{\cdot}^{k}$, with UMD constant
\[
\Cconstt_{k}\leq{} \alpha{}k^{4}
\]
for some constant $\alpha$.
\end{theorem}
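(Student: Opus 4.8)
The plan is to verify the three conditions of \pref{def:burkholder} directly with exponent $p=k$, each of which reduces to an elementary polynomial inequality dispatched by a single weighted AM--GM (Young) step. Throughout, abbreviate $C_k=\tfrac{1}{k-2}\binom{k}{2}^{-1}\bigl(4\binom{k}{2}\binom{k-2}{2}\bigr)^{k-2}$, so that (reading the middle term as $2\binom{k}{2}x^{k-2}y^2$) $\burk(x,y)=\tfrac{k}{2}\bigl(x^{k}-2\binom{k}{2}x^{k-2}y^{2}-C_k y^{k}\bigr)$; since $k$ is even we have $x^{k}=|x|^{k}$ and $y^{k}=|y|^{k}\ge 0$, which we use freely. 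The condition $\burk(0,0)=0\le 0$ is immediate, and $\burk$ is a polynomial, hence smooth, so zig-zag concavity can be checked on the second derivative pointwise.

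For zig-zag concavity, recall that for a smooth scalar function concavity of $z\mapsto\burk(x+z,y+\epsilon z)$ for both $\epsilon\in\{\pm1\}$ is equivalent to $\burk_{xx}+\burk_{yy}+2|\burk_{xy}|\le 0$. First I would compute the Hessian entries --- all monomial sums of degree $k-2$ --- and note the decisive cancellation: the positive $\tfrac{k^2(k-1)}{2}x^{k-2}$ coming from $x^{k}$ in $\burk_{xx}$ is more than offset by $-2k\binom{k}{2}x^{k-2}=-k^2(k-1)x^{k-2}$ coming from the $x^{k-2}y^2$ term in $\burk_{yy}$, leaving net coefficient $-\tfrac{k^2(k-1)}{2}$. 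After factoring out $\tfrac{k^2(k-1)}{2}>0$ and bounding the mixed term by its absolute value, the inequality to prove becomes
\[
-x^{k-2}-(k-2)(k-3)\,x^{k-4}y^{2}-C_k y^{k-2}+4(k-2)\,|x|^{k-3}|y|\le 0 .
\]
The only dangerous term is $4(k-2)|x|^{k-3}|y|$; applying Young's inequality with weights $\tfrac{k-3}{k-2}$ and $\tfrac{1}{k-2}$ to write $|x|^{k-3}|y|=\bigl(|x|^{k-2}\bigr)^{(k-3)/(k-2)}\bigl(|y|^{k-2}\bigr)^{1/(k-2)}$, and tuning the free scale so that at most $\tfrac12 x^{k-2}$ is produced, the left side is at most $-\tfrac12 x^{k-2}-(k-2)(k-3)x^{k-4}y^{2}+\bigl(4(8(k-3))^{k-3}-C_k\bigr)y^{k-2}$. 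Since $4\binom{k}{2}\binom{k-2}{2}=k(k-1)(k-2)(k-3)$, a routine check gives $C_k\ge 4(8(k-3))^{k-3}$ for every even $k\ge 4$ (indeed $C_k/\bigl(4(8(k-3))^{k-3}\bigr)=\tfrac{k-3}{2}\bigl(\tfrac{k(k-1)(k-2)}{8}\bigr)^{k-3}\ge 1$), which closes this case.

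For the lower-bound condition I must show $\burk(x,y)\ge x^{k}-\Cconstt_k^{k}y^{k}$, i.e. $\bigl(\tfrac{k}{2}-1\bigr)x^{k}-k\binom{k}{2}x^{k-2}y^{2}+\bigl(\Cconstt_k^{k}-\tfrac{k}{2}C_k\bigr)y^{k}\ge 0$. Apply Young's inequality to $x^{k-2}y^{2}=(x^{k})^{(k-2)/k}(y^{k})^{2/k}$, choosing the scale so that the $x^{k}$ contribution is exactly $\bigl(\tfrac{k}{2}-1\bigr)x^{k}$; a direct computation forces the scale to be $1/(k(k-1))$ and yields an induced $y^{k}$ term equal to $(k(k-1))^{k/2}y^{k}$. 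Hence the condition holds as soon as $\Cconstt_k^{k}\ge\tfrac{k}{2}C_k+(k(k-1))^{k/2}$, and I would take $\Cconstt_k$ to be the $k$-th root of the right-hand side. Finally, since $\tfrac{k}{2}C_k=\bigl(k(k-1)(k-2)(k-3)\bigr)^{k-2}/\bigl((k-1)(k-2)\bigr)\le k^{4(k-2)}$ and $(k(k-1))^{k/2}\le k^{k}\le k^{4(k-2)}$ for $k\ge 4$, we get $\Cconstt_k^{k}\le 2k^{4(k-2)}$, so $\Cconstt_k\le 2k^{4}$, which is the claimed bound $\Cconstt_k\le\alpha k^{4}$ with $\alpha=2$.

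There is no deep obstacle here --- the whole argument is elementary calculus plus two applications of Young's inequality --- so the step I expect to take the most care is the bookkeeping: choosing the Young parameters in each step so that the offending $x$-power is controlled with slack $\tfrac12$ while the induced $y$-power picks up a coefficient small enough to be absorbed into the single constant $C_k$, and then tracking that this same $C_k$ is simultaneously large enough for zig-zag concavity and small enough that the resulting $\Cconstt_k$ is $O(k^{4})$. Fortunately these requirements are not in tension: the value of $C_k$ needed for concavity is of order $k^{k}$ (up to an exponential-in-$k$ constant), whereas its actual value is of order $k^{4(k-2)}$, so the proof goes through with generous room.
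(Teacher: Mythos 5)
Your proof is correct and carries out essentially the same strategy as the paper's: each Burkholder condition is reduced to a single scaled Young (weighted AM--GM) step, once for the Hessian/zig--zag check and once for the lower bound $\burk(x,y)\ge x^k-\Cconstt_k^k y^k$. Your reading of the middle term as $x^{k-2}y^2$ is the right one (the paper's own proof works with $\wt\burk(x,y)=x^k-Cx^{k-2}y^2-By^k$ and scales by $k/2$, so the displayed $y^k$ exponent in the theorem statement is a typo), and your check that $\burk(0,0)=0$ and the $\alpha=2$ bookkeeping both go through.

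A few places where your execution is in fact slightly cleaner than what is in the paper. You check zig--zag concavity through the scalar criterion $\burk_{xx}+\burk_{yy}+2|\burk_{xy}|\le0$; the paper computes $G''(0)$ for $G(t)=\wt\burk(x+ht,y+\eps ht)$, which is the same thing, but the displayed $G''(0)$ there has a small slip in the cross coefficient: the $\eps x^{k-3}y$ term should carry $2(k-2)$, not $2\binom{k-2}{2}$ (the two agree only at $k=5$). Your $4(k-2)|x|^{k-3}|y|$ after normalizing by $\tfrac{k^2(k-1)}{2}$ is the correct value. You also introduce a free scale $\nu$ in the Young step and tune it so the $x^{k-2}$ contribution is controlled at $\tfrac12 x^{k-2}$; the paper applies Young once with the constants bundled into $a$ and no free scale. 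With the correct cross coefficient, the paper's unscaled split would demand a larger $B$ at $k=4$ than the theorem states; your scaled split absorbs the cross term with the stated $C_k$ and hence actually verifies the theorem's constants rather than merely matching the paper's intermediate requirement. Finally, you keep the non-positive $-(k-2)(k-3)x^{k-4}y^2$ term rather than discarding it; as it turns out you do not need it (your check $C_k\ge4(8(k-3))^{k-3}$ closes the case alone), but keeping it costs nothing and is harmless.

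So: same approach, correctly executed, with two small improvements in the computation (correct Hessian cross term and a scaled Young split) and an explicit $\alpha=2$.
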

\begin{proof}
Let $\wt{\burk}(x,y) = x^{k} - Cx^{k-2}y^{2} - By^{k}$. We will show that $\wt{\burk}$ is Burkholder for an appropriate choice of constants $B$ and $C$.

Fix $h\in\R$ and let $G(t) = \wt{\burk}(x+ht, y+\eps{}ht)$ for $\eps\in\pmo$. By direct calculation we have
\begin{align*}
G''(0) &= 2h^{2}\brk*{ 
\binom{k}{2}x^{k-2}
- C\prn*{\binom{k-2}{2}x^{k-4}y^{2} + 2\binom{k-2}{2}\eps{}x^{k-3}y + x^{k-2}}
-B\binom{k}{2}y^{k-2}
}
\intertext{Since $k$ is even, $x^{k-4}y^{2}$ is a square; we will simply drop this term.}
&\leq{}
2h^{2}\brk*{
\binom{k}{2}x^{k-2}
- C\prn*{2\binom{k-2}{2}\eps{}x^{k-3}y + x^{k-2}}
-B\binom{k}{2}y^{k-2}
}\\
&\leq{}
2h^{2}\brk*{
\binom{k}{2}x^{k-2}
+2C\binom{k-2}{2}\abs{x}^{k-3}\abs{y} - Cx^{k-2}
-B\binom{k}{2}y^{k-2}
}
\end{align*}
By Young's inequality, we have \[
2C\binom{k-2}{2}\abs{x}^{k-3}\abs{y}=\underbrace{\prn*{2C\binom{k-2}{2}\abs{y}}}_{a}\cdot{}\underbrace{\abs{x}^{k-3}}_{b}\leq{} \frac{1}{k-2}\prn*{(2C\binom{k-2}{2})^{k-2}y^{k-2} + (k-3)x^{k-2} },
\] 
where we have applied $a\cdot{}b\leq{}\frac{1}{k-2}a^{k-2}+\frac{k-3}{k-2}b^{\frac{k-2}{k-3}}$.

Returning to $G''(0)$, we now have
\begin{align*}
G''(0)&\leq{}
2h^{2}\brk*{
\prn*{\binom{k}{2} + \frac{k-3}{k-2} -C}x^{k-2}
+\prn*{\frac{1}{k-2}\prn*{2C\binom{k-2}{2}}^{k-2}- B\binom{k}{2}}y^{k-2}
}.
\intertext{In particular, we can take $C\geq{}2\binom{k}{2}$ and $B\geq{}\frac{1}{k-2}\prn*{2C\binom{k-2}{2}}^{k-2}\binom{k}{2}^{-1}$.}
&\leq{} 0.
\end{align*}
This certifies that $G$ is zig-zag concave. To see the upper bound property, observe by that Young's inequality,
\[
x^{k} - Cx^{k-2}y^{2} - By^{k} \geq{} \frac{2}{k}x^{k} - \prn*{\frac{2}{k}C^{\frac{k}{2}} + B}y^{k}.
\]
Hence, if we take $\burk(x,y) = \frac{k}{2}\wt{\burk}(x,y)$, we have
\[
\burk(x,y) \geq{} x^{k} - \prn*{C^{\frac{k}{2}} + \frac{k}{2}B}y^{k}
.\]

\end{proof}

\subsection{$\burk$ functions for $p=1$}

\begin{definition}[$(1,1)$ Weak Type Burkholder Function]
\label{def:burkholder_weak}
A function $\burk:\Bspace\times{}\Bspace\to\R$ is $(\nrm{\cdot}, \beta)$ Burkholder for weak type if
\begin{enumerate}
\item\label{def:burkholder_weak:1} $\burk{}(x, x') \geq{} \ind\crl*{\nrm*{x}\geq{}1} - \beta\nrm*{x'}$.
\item\label{def:burkholder_weak:2} $\burk{}$ is \emph{zig-zag concave}: $z\mapsto{}\burk{}(x+\eps{}z, x'+z)$ is concave for all $x,x'\in\X$ and $\eps\in\pmo$.
\item\label{def:burkholder_weak:3} $\burk{}(0,0)\leq{}0$.
\end{enumerate}
\end{definition}
\begin{lemma}
\label{lem:burkholder_weak_p1}
Suppose we are given a weak type Burkholder function $\burk{}_{\nrm{\cdot}, \mathrm{weak}}$ for $(\nrm{\cdot}, \beta)$. Then for all arguments $x,y$ with $\nrm{x},\nrm{y}\leq{}B$, the following function is Burkholder for $(\nrm{\cdot}, 1, C\beta\log(B/\eps))$ up to additive slack $\eps$:
\begin{equation}
\burk{}_{\nrm{\cdot}, 1}(x,y) \defeq{} \eps\sum_{k=1}^{N}\burk{}_{\nrm{\cdot},\mathrm{weak}}(x/\lambda_k, y/\lambda_k),
\end{equation}
where $N=\ceil{B/\eps}$ and $\lambda_k=k\eps$.
\end{lemma}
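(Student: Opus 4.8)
The plan is to check, one by one, the three requirements of \pref{def:burkholder} (with $p=1$ and constant $\Dconstt_1 = C\beta\log(B/\eps)$) for $\burk_{\nrm{\cdot},1}(x,y)=\eps\sum_{k=1}^{N}\burk_{\nrm{\cdot},\mathrm{weak}}(x/\lambda_k,y/\lambda_k)$, in each case transferring the corresponding property of the weak type function $\burk_{\nrm{\cdot},\mathrm{weak}}$ through the fact that $\burk_{\nrm{\cdot},1}$ is a positively weighted sum of \emph{rescaled} copies of it. The normalization $\burk_{\nrm{\cdot},1}(0,0)=\eps\sum_{k=1}^{N}\burk_{\nrm{\cdot},\mathrm{weak}}(0,0)\le 0$ is immediate from the corresponding requirement for $\burk_{\nrm{\cdot},\mathrm{weak}}$. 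For zig-zag concavity, note that for each fixed $k$ the map $z\mapsto \burk_{\nrm{\cdot},\mathrm{weak}}\prn*{\tfrac{x+\eps z}{\lambda_k},\tfrac{y+z}{\lambda_k}}$ equals $w\mapsto \burk_{\nrm{\cdot},\mathrm{weak}}\prn*{\tfrac{x}{\lambda_k}+\eps w,\tfrac{y}{\lambda_k}+w}$ under the substitution $w=z/\lambda_k$, and since $\lambda_k=k\eps>0$ this is an affine reparametrization (positive slope) of a function concave in $w$ by the zig-zag concavity of $\burk_{\nrm{\cdot},\mathrm{weak}}$; hence each summand is concave in $z$, and a nonnegative linear combination of concave functions is concave. (The sign flip stays on the same coordinate as in the weak type definition; if one prefers the argument ordering of \pref{def:burkholder} literally, swap the two slots of $\burk_{\nrm{\cdot},1}$.)

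The substantive step, and the source of both the additive $\eps$ slack and the $\log(B/\eps)$ factor, is the lower bound. Applying the pointwise lower bound of the weak type definition to the rescaled arguments and using homogeneity of the norm gives $\burk_{\nrm{\cdot},\mathrm{weak}}(x/\lambda_k,y/\lambda_k)\ge \ind\crl*{\nrm{x}\ge \lambda_k}-\tfrac{\beta}{\lambda_k}\nrm{y}$; multiplying by $\eps$ and summing, and using $\eps/\lambda_k=1/k$, yields $\burk_{\nrm{\cdot},1}(x,y)\ge \eps\sum_{k=1}^{N}\ind\crl*{\nrm{x}\ge k\eps}-\beta\nrm{y}\sum_{k=1}^{N}\tfrac1k$. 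The first term is the right-endpoint Riemann sum for $\int_{0}^{\nrm{x}}1\,d\lambda=\nrm{x}$: since $N\eps=\ceil{B/\eps}\eps\ge B\ge\nrm{x}$, a layer-cake computation shows $\eps\sum_{k=1}^{N}\ind\crl*{\nrm{x}\ge k\eps}=\eps\lfloor \nrm{x}/\eps\rfloor\ge \nrm{x}-\eps$ (the right-endpoint rule underestimates a nondecreasing integrand by at most one mesh width). The second term is controlled by the harmonic sum $\sum_{k=1}^{N}\tfrac1k\le 1+\log N = O(\log(B/\eps))$ using $N\le B/\eps+1$. Combining the two estimates gives $\burk_{\nrm{\cdot},1}(x,y)\ge \nrm{x}-C\beta\log(B/\eps)\nrm{y}-\eps$ for all $x,y$ with $\nrm{x}\le B$, which is exactly the lower bound of \pref{def:burkholder} up to the claimed additive slack $\eps$.

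I do not anticipate a genuine obstacle: the argument is essentially a continuous-to-discrete ``layer cake'' decomposition of the norm combined with the scale invariance of concavity. The only points that need care are (i) arranging that the Riemann-sum error is exactly one mesh width $\eps$ — this is what forces the use of right endpoints $\lambda_k=k\eps$, the choice $N=\ceil{B/\eps}$, and the hypothesis $\nrm{x}\le B$ — and (ii) bookkeeping the argument convention: the weak type function carries the sign flip and the indicator on its \emph{first} coordinate, and the rescaling substitution propagates both of these to $\burk_{\nrm{\cdot},1}$ unchanged, so the resulting object is Burkholder for $(\nrm{\cdot},1,C\beta\log(B/\eps))$ with that same convention (equivalently, after swapping its two arguments, in the convention of \pref{def:burkholder}). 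One should also note in passing that the bound $\log(B/\eps)$ is only meaningful when $B/\eps$ is bounded below by an absolute constant, which is the regime of interest (small $\eps$), and that the additive $\eps$ contributes only a one-time loss when $\burk_{\nrm{\cdot},1}$ is used as a relaxation.
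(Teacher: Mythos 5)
Your argument is correct and matches the paper's proof step for step: the paper also uses the layer-cake decomposition $\nrm{x}\leq\eps+\eps\sum_{k=1}^{N}\ind\crl{\nrm{x}\geq\lambda_k}$, applies the weak-type lower bound termwise to trade each indicator for $\burk_{\nrm{\cdot},\mathrm{weak}}(x/\lambda_k,y/\lambda_k)+\tfrac{\beta}{\lambda_k}\nrm{y}$, and bounds the resulting harmonic sum by $O(\log N)=O(\log(B/\eps))$, while zig-zag concavity and $\burk_{\nrm{\cdot},1}(0,0)\leq 0$ are inherited directly from the weak-type function as you say. One small remark: the parenthetical worry about swapping slots is unnecessary, since the zig-zag conditions in \pref{def:burkholder} and \pref{def:burkholder_weak} are already equivalent via the substitution $z\mapsto -z$ (both place the indicator/norm on the first coordinate), so no rearrangement is needed to reconcile conventions.
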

\begin{proof}[Proof of \pref{lem:burkholder_weak_p1}]
Let $V(x,y) = \nrm{x} - C'\beta\log(B/\eps)\nrm{y} - \eps$. We will show that $\burk{}(x,y)\geq{}V(x,y)$ when $\nrm{x},\nrm{y}\leq{}B$.
\begin{align*}
V(x,y) &= \nrm{x} - C'\beta\log(B/\eps)\nrm{y} - \eps\\
&\leq{} \eps + \eps\sum_{k=1}^{N}\ind\crl*{\nrm{x}\geq{}\lambda_k} - C'\beta\log(B/\eps)\nrm{y} - \eps\\
&\leq{} \eps\sum_{k=1}^{N}\brk*{\burk{}_{\nrm{\cdot},\mathrm{weak}}(x/\lambda_k, y/\lambda_k) + \frac{\beta}{\lambda_k}\nrm{y}} - C'\beta\log(B/\eps)\nrm{y}\\
&= \burk{}_{\nrm{\cdot}, 1}(x,y) + \eps\sum_{k=1}^{N}\frac{\beta}{\lambda_k}\nrm{y} - C'\beta\log(B/\eps)\nrm{y}\\
&= \burk{}_{\nrm{\cdot}, 1}(x,y) + \beta\nrm{y}\sum_{k=1}^{N}\frac{1}{k} - C'\beta\log(B/\eps)\nrm{y}\\
&\leq \burk{}_{\nrm{\cdot}, 1}(x,y) + C\beta\nrm{y}\log(N) - C'\beta\log(B/\eps)\nrm{y}
\intertext{For sufficiently large $C'$:}
&\leq{} \burk{}_{\nrm{\cdot}, 1}(x,y).
\end{align*}
It can be seen immediately that $\burk{}_{\nrm{\cdot}, 1}(x,y)$ is zig-zag concave and has $\burk{}_{\nrm{\cdot}, 1}(0,0)\leq{}0$.
\end{proof}
\subsubsection{$\zeta$-Convexity}
\begin{definition}
Say $(\Bspace, \nrm{\cdot})$ is $\zeta$-convex if there exists $\zeta:\Bspace\times{}\Bspace\to\R$ such that
\begin{enumerate}
\item $\zeta$ is biconvex.
\item
$
\zeta(x,y) \leq{} \nrm{x+y}\quad\text{if } \nrm{x}=\nrm{y}=1,
$
\end{enumerate}
\end{definition}

Given a such a function $\zeta$, we can construct a ``canonical'' function $u$ which satisfies some additional properties
\begin{definition}
\label{def:zeta_canonical}
\[
u(x,y) \defeq \left\{
\begin{array}{ll}
\max\crl*{\zeta(x,y), \nrm{x+y}},& \max\crl*{\nrm{x}, \nrm{y}}<1\\
\nrm{x+y},& \max\crl*{\nrm{x}, \nrm{y}}\geq1.
\end{array}
\right.
.
\]
Then $u$ is biconvex, has $\zeta(0,0)\leq{}u(0,0)$, and satisfies
\[
u(x,y) \leq{} \nrm{x+y}\quad\text{if } \max\crl*{\nrm{x}, \nrm{y}}\geq{}1.
\]
Also, $u(x,y) = u(-x, -y)$.
\end{definition}
\begin{assumption}
\label{ass:regularity}
$u(x, -x)\leq{}0$.
\end{assumption}
The $\zeta$ function given in \pref{ex:zeta_l1} satisfies this condition. More generally, most $\zeta$ functions can be made to satisfy this property with a slight blowup in the UMD constant they imply (c.f. \cite[Lemma 8.5]{burkholder1986martingales}).

 By \cite[8.6]{burkholder1986martingales} \pref{ass:regularity} implies $u(x,y) \leq{} u(0,0) + \nrm{x+y}$.
The following argument due to \citep{burkholder1986martingales} shows how to create a $\burk$ function from the function $u$.
\begin{theorem}
\label{thm:zeta_burkholder}
Suppose $\nrm{\cdot}$ is $\zeta$-convex and $u$ satisfies \pref{ass:regularity}. Then this space is UMD with weak type estimate
\[
\Pr\prn*{\nrm*{\sum_{t=1}^{n}dZ_t}\geq{}1}\leq{} \frac{2}{u(0,0)}\En\nrm*{\sum_{t=1}^{n}\eps_tdZ_t}
\]
for any martingale difference sequence $(dZ_t)$.
Furthermore, the function
\[
\burk(x,y) = 1 - \frac{u(x+y, y-x)}{u(0,0)}
\]
is weak-type Burkholder for $(\nrm{\cdot}, \frac{2}{\zeta(0,0)})$, in the sense of \pref{def:burkholder_weak}.

\end{theorem}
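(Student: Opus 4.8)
The plan is to check that $\burk(x,y)=1-u(x+y,y-x)/u(0,0)$ is a weak-type Burkholder function in the sense of \pref{def:burkholder_weak}, and then to obtain the weak-type estimate — and hence the UMD property — by running the admissibility argument of \pref{prop:burkholder} on the process $\bigl(\burk(S_t,R_t)\bigr)_t$, where $S_t=\sum_{s\le t}dZ_s$ and $R_t=\sum_{s\le t}\eps_s dZ_s$.

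First I would verify the three properties of $\burk$. Property \ref{def:burkholder_weak:3} is immediate since $\burk(0,0)=1-u(0,0)/u(0,0)=0$. Property \ref{def:burkholder_weak:2} follows from biconvexity of $u$: the map $z\mapsto u\bigl(x+x'+(1+\eps)z,\;x'-x+(1-\eps)z\bigr)$ perturbs only the first coordinate of $u$ when $\eps=1$ and only the second when $\eps=-1$, hence is convex in $z$, so $z\mapsto\burk(x+\eps z,x'+z)$ is concave. For Property \ref{def:burkholder_weak:1} I split on $\nrm{x}$. If $\nrm{x}\ge 1$ then $\nrm{x+x'}+\nrm{x'-x}\ge\nrm{2x}\ge 2$, so $\max\{\nrm{x+x'},\nrm{x'-x}\}\ge 1$ and \pref{def:zeta_canonical} gives $u(x+x',x'-x)=\nrm{2x'}=2\nrm{x'}$, whence $\burk(x,x')=1-2\nrm{x'}/u(0,0)\ge\ind\{\nrm{x}\ge1\}-\beta\nrm{x'}$ with $\beta=2/u(0,0)\le 2/\zeta(0,0)$, using $u(0,0)\ge\zeta(0,0)$. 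If $\nrm{x}<1$ I instead use the a priori bound $u(a,b)\le u(0,0)+\nrm{a+b}$ recorded just before the theorem (which follows from \pref{ass:regularity} by \cite[8.6]{burkholder1986martingales}) to get $u(x+x',x'-x)\le u(0,0)+2\nrm{x'}$, so $\burk(x,x')\ge-2\nrm{x'}/u(0,0)=\ind\{\nrm{x}\ge1\}-\beta\nrm{x'}$. Thus $\burk$ is weak-type Burkholder for $(\nrm{\cdot},2/u(0,0))$, a fortiori for $(\nrm{\cdot},2/\zeta(0,0))$.

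Next I would derive the weak-type estimate. Fix a martingale difference sequence $(dZ_t)_{t\le n}$ adapted to $(\mathcal{F}_t)$ and fixed signs $(\eps_t)$. Putting $z_t=\eps_t dZ_t$, so that $dZ_t=\eps_t z_t$, $S_t=S_{t-1}+\eps_t z_t$, $R_t=R_{t-1}+z_t$, and $\En[z_t\mid\mathcal{F}_{t-1}]=0$, zig-zag concavity of $\burk$ at base point $(S_{t-1},R_{t-1})$ with sign $\eps_t$, together with conditional Jensen, gives $\En[\burk(S_t,R_t)\mid\mathcal{F}_{t-1}]\le\burk(S_{t-1},R_{t-1})$; thus $\bigl(\burk(S_t,R_t)\bigr)_t$ is a supermartingale and $\En\burk(S_n,R_n)\le\burk(0,0)\le 0$. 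Since Property \ref{def:burkholder_weak:1} gives $\ind\{\nrm{S_n}\ge1\}-\tfrac{2}{u(0,0)}\nrm{R_n}\le\burk(S_n,R_n)$, taking expectations yields $\Pr\bigl(\nrm*{\sum_{t=1}^{n}dZ_t}\ge1\bigr)\le\tfrac{2}{u(0,0)}\En\nrm*{\sum_{t=1}^{n}\eps_t dZ_t}$, as claimed. (Equivalently, one may stop the submartingale $V_t=u(S_t+R_t,R_t-S_t)$ at $\sigma=\inf\{t:\nrm{S_t}\ge1\}$: optional stopping gives $u(0,0)\le\En V_{\sigma\wedge n}$, on $\{\sigma\le n\}$ the term equals $2\nrm{R_\sigma}$ by \pref{def:zeta_canonical}, on $\{\sigma>n\}$ one bounds $V_n\le u(0,0)+2\nrm{R_n}$, and since $\nrm{R_t}$ is a submartingale this rearranges to the same bound.) Finally, the UMD property of $(\Bspace,\nrm{\cdot})$ follows from this weak-type $(1,1)$ inequality for $\pm1$ transforms by the standard self-improvement (\citep{burkholder1984boundary,veraar2015analysis}).

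The step I expect to be the crux is identifying the correct argument pair for $u$ — namely $(x+y,\,y-x)$, i.e. tracking $(S_t+R_t,\,R_t-S_t)$ — so that three requirements hold simultaneously: each martingale increment perturbs only one coordinate of $u$ (so biconvexity yields the super/submartingale property), the two coordinates sum to $2R_t$ (so the ``large'' case evaluates to $2\nrm{R_t}$ via \pref{def:zeta_canonical}), and $\nrm{S_t}\ge1$ forces one coordinate to have norm at least $1$. Once the right pairing is fixed, everything reduces to conditional Jensen and the recorded properties of $u$.
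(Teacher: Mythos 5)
Your proof is correct and matches the paper's argument in substance: you verify the three properties of $\burk$ directly from the biconvexity and stated inequalities for $u$, then derive the weak-type estimate by the standard supermartingale/peeling argument with $\burk(S_t,R_t)$. The one place you deviate in organization is in verifying Property~\ref{def:burkholder_weak:1}: the paper avoids a case split on $\nrm{x}$ by observing that $A := 2\nrm{y}-u(x+y,y-x)+u(0,0)\ge 0$ (from $u(a,b)\le u(0,0)+\nrm{a+b}$) and then applying the pointwise Markov-type bound $\ind\{A\ge u(0,0)\}\le A/u(0,0)$ to the chain $\ind\{\nrm{x}\ge1\}\le\ind\{\max\{\nrm{x+y},\nrm{y-x}\}\ge1\}\le\ind\{2\nrm{y}\ge u(x+y,y-x)\}$; your split into $\nrm{x}\ge1$ versus $\nrm{x}<1$ uses exactly the same two facts about $u$ and gives the same conclusion, just more explicitly. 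Both routes rest on the same underlying properties of $u$ from Definition~\ref{def:zeta_canonical} and Assumption~\ref{ass:regularity}, so this is essentially the same proof.
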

\begin{proof}[Proof of \pref{thm:zeta_burkholder}]
For the weak type estimate, we will start with the base function
\[
V(x,y) = \ind\crl*{\nrm{x}\geq{}1} - 
\frac{2}{u(0,0)}\nrm{y}.
\]
We will now show that $V(x,y)\leq{}\burk(x,y)$. First, observe that
\[	
\ind\crl*{\nrm{x}\geq{}1}=\ind\crl*{\nrm{(x+y) + (x-y)}\geq{}2}\leq{}\ind\crl*{\max\crl{\nrm{x+y}, \nrm{y-x}}\geq{}1}\leq{} \ind\crl*{2\nrm{y}\geq{}u(x+y, y-x)},
\]
where the last inequality follows from the additional property of $u$ from \pref{def:zeta_canonical}. We have now established
\begin{align*}
V(x,y)
&\leq{}\ind\crl*{2\nrm{y}\geq{}u(x+y, y-x)} - \frac{2}{u(0,0)}\nrm{y}\\
&=\ind\crl*{2\nrm{y}-u(x+y, y-x) + u(0,0) \geq{} u(0,0)} - \frac{2}{u(0,0)}\nrm{y}
\intertext{By the second additional property of $u$ from \pref{def:zeta_canonical}, $2\nrm{y}-u(x+y, y-x) + u(0,0)\geq{}0$, and so we may apply Markov's inequality}
&\leq{}\frac{2\nrm{y}-u(x+y, y-x) + u(0,0)}{u(0,0)} - \frac{2}{u(0,0)}\nrm{y}\\
&= \burk(x,y).
\end{align*}
Observe that $\burk(0,0) =0$ and, since $u$ is biconvex, $-u(x+y, y-x)$ is zig-zag concave, and so $\burk$ is itself zig-zag concave. We can now prove that the UMD property holds with constant $\frac{2}{u(0,0)}\leq{}\frac{2}{\zeta(0,0)}$ using the standard step-by-step peeling argument with $\burk$.
\end{proof}

\begin{example}[$\ls_1^{d}$ \cite{osekowski2016umd}]
\label{ex:zeta_l1}
Define
\[
z(x,y) = \left\{
\begin{array}{ll}
\frac{a\tri*{x,y}}{2} - \frac{1}{2a}, & \nrm{x+y} + \nrm{x-y} \leq{} 2/a\\
\frac{\nrm{x+y}}{2}\log\prn*{\frac{a}{2}\prn*{\nrm{x+y} + \nrm{x-y}}} - \frac{\nrm{x-y}}{2}, & \nrm{x+y} + \nrm{x-y} > 2/a
\end{array}
\right.
.
\]
Then define
\[
\zeta(x,y) = \frac{2}{\log(3a)}\prn*{1 + \sum_{i=1}^{d}z(x_i, y_i)}.
\]
For $a\geq{}d\log{}d$ the $\zeta$-convexity properties are satisfied and the bound $\zeta(0,0)\leq{}\frac{2}{\log{}d + \log(2\log{}d)}\prn*{1 - \frac{1}{2\log{}d}}$ is achieved.

\end{example}

\end{document}